\documentclass{article}
\PassOptionsToPackage{numbers,sort&compress}{natbib}

\usepackage{neurips_20261}

\makeatletter
\@preprinttrue
\nolinenumbers
\makeatother

\usepackage{amsmath, amssymb, amsthm, mathtools}
\usepackage{bm}
\usepackage{float}
\usepackage{afterpage}

\usepackage{cancel}                 
\usepackage{placeins} 
\usepackage{graphicx}
\usepackage{multirow}
\usepackage{array}
\usepackage{xcolor}
\usepackage{booktabs}

\AtBeginDocument{
\hypersetup{
    colorlinks=true,
    linkcolor=red,
    citecolor=blue,
    urlcolor=blue
}}

\usepackage{enumitem}
\usepackage{pifont}
\newcommand{\cmark}{\ding{51}}
\newcommand{\xmark}{\ding{55}}

\usepackage{fancyhdr}
\newtheorem{theorem}{Theorem}
\newtheorem{lemma}{Lemma}
\newtheorem{proposition}{Proposition}
\newtheorem{corollary}{Corollary}[section]

\newtheorem{definition}{Definition}
\newtheorem{assumption}{Assumption}

\newtheorem{remark}{Remark}

\usepackage[colorlinks=true, linkcolor=blue!70!black,
            citecolor=blue!70!black, urlcolor=blue!70!black]{hyperref}

\usepackage{algorithm}
\usepackage{algpseudocode}

\usepackage{tcolorbox}
\tcbuselibrary{most}

\newtcolorbox{keybox}{
  colback=blue!4!white, colframe=blue!55!black,
  boxrule=0.6pt, arc=3pt,
  left=6pt, right=6pt, top=4pt, bottom=4pt
}

\definecolor{boxbg}{RGB}{245, 245, 245}      
\definecolor{boxborder}{RGB}{80, 80, 80}     
\newtcbtheorem[number within=section]{boxtheorem}{Theorem}{
    enhanced, breakable,
    colback=boxbg, colframe=boxborder,
    fonttitle=\bfseries,
    separator sign none, description delimiters none,
    attach boxed title to top left={yshift=-2mm, xshift=4mm},
    boxed title style={colback=boxborder, colframe=boxborder,
                       sharp corners, size=small}
}{thm}

\usepackage{titlesec}
\titleformat{\section}{\large\bfseries}{\thesection.}{0.5em}{}
\titleformat{\subsection}{\normalsize\bfseries}{\thesubsection.}{0.5em}{}

\usepackage{color}

\DeclareMathOperator{\softmax}{softmax}

\newcommand{\R}{\mathbb{R}}
\newcommand{\E}{\mathbb{E}}

\newcommand{\Kop}{\mathcal{K}_{\theta}}     
\newcommand{\kap}{\kappa_{\theta}}           
\newcommand{\Id}{\mathbf{I}_d}              
\newcommand{\Idn}{\mathbf{I}_n}             
\newcommand{\T}{\mathbb{T}}                 
\newcommand{\Z}{\mathbb{Z}}                 
\newcommand{\norm}[1]{\left\|#1\right\|}
\newcommand{\abs}[1]{\left|#1\right|}

\newcommand{\had}{\odot}                    
\newcommand{\diag}{\mathrm{diag}}

\newcommand{\sigmoid}{\sigma}
\newcommand{\Attn}{\mathrm{Attn}}
\newcommand{\MHA}{\mathrm{MHA}}
\newcommand{\RNN}{\mathrm{RNN}}
\newcommand{\LSTM}{\mathrm{LSTM}}
\newcommand{\GRU}{\mathrm{GRU}}
\newcommand{\SSM}{\mathrm{SSM}}
\newcommand{\Mamba}{\mathrm{Mamba}}

\title{ITNet: A Learnable Integral Transform That Subsumes Convolution, Attention, and Recurrence}

\author{%
  Ashim Dhor$^{1}$ \quad Rasel Mondal$^{1}$ \quad Pin-Yu Chen$^{2}$ \\[2pt]
  $^{1}$Indian Institute of Science Education and Research Bhopal \\[2pt]
  $^{2}$IBM Research \\[2pt]
  \texttt{ashimdhor2003@gmail.com, raselmondal61@gmail.com, pin-yu.chen@ibm.com} 
}

\makeatletter
\@submissionfalse
\makeatother
\date{}   

\begin{document}
\maketitle

\thispagestyle{fancy}
\fancyhead{}  
\fancyfoot{}
\fancyfoot[L]{\footnotesize \textcolor{black}{Preprint.}}
\renewcommand{\headrulewidth}{0pt}
\renewcommand{\footrulewidth}{0pt}

\begin{abstract}
Convolutional networks, recurrent networks, and transformers each encode different inductive biases - locality, sequential memory, and content - dependent pairwise interaction - and have remained mathematically distinct since their inception. We show that this fragmentation reflects not a fundamental diversity in how signals should be processed, but rather incomplete views of a single underlying mathematical object: a learnable integral transform. We introduce the \textbf{Integral Transform Network (ITNet)}, a unified architecture built around a learnable kernel that depends jointly on positions and features. This kernel is implemented as a small neural network, specifically an MLP, that models pairwise interactions, enabling the model to adapt its behavior from data. We show that convolution, self-attention (including multi-head), and autoregressive recurrence (including LSTM, GRU, S4, and Mamba) arise as special cases under appropriate parameterizations, and that ITNet is a universal approximator of continuous operators. To make this practical, we develop tiled kernel fusion, importance-weighted Monte Carlo integration, and learned low-rank factorization, enabling efficient and scalable computation. A single ITNet architecture with a shared operator and lightweight modality-specific encoders matches or exceeds specialized baselines on ImageNet-1K , GLUE, ModelNet40, VQA\,v2 and NLVR2. The results demonstrate that a single learned interaction mechanism can recover the behavior of all three architectural families from data.
\end{abstract}

\section{Introduction}
\label{sec:intro}

The history of deep learning is, in large part, a history of architecture design.
Convolutional networks~\cite{lecun1989backpropagation} encode the bias  patterns in images are \emph{local} and \emph{translation-invariant}.
Long Short-Term Memory networks (LSTM)~\cite{hochreiter1997long} encode a different bias: sequential data carries temporal dependencies that must be selectively remembered or forgotten through learned gating.
Transformers~\cite{vaswani2017attention} encode yet another: relationships between sequence elements are best captured by \emph{content-dependent pairwise similarity} between learned projections, allowing every position to attend to every other simultaneously. 

Each was a profound contribution, and each fundamentally reshaped its domain. Yet each was designed for a specific class of data, in isolation from the others. The result is that modern deep learning possesses three dominant architectural families that address the same fundamental problem - transforming structured signals into semantically meaningful representations - through entirely different mathematical lenses. The practical consequence is that practitioners must make an \emph{a priori} architectural choice before seeing any data.
Images suggest CNNs~\citep{lecun1989backpropagation, he2016deep}; text suggests Transformers~\cite{vaswani2017attention}; time series suggest RNNs or state-space models~\cite{hochreiter1997long, gu2022efficiently, gu2023mamba}; irregular point clouds fall outside all three paradigms~\cite{qi2017pointnetpp, wang2019dgcnn}; and multimodal data requires stitching together components that were never designed to coexist~\cite{chen2020uniter, li2022blip}.
This fragmentation suggests that our current mathematical understanding of how to transform structured signals remains incomplete.

We study whether a single operation can unify convolution, self-attention, and recurrence as exact special cases. We show the answer is yes: a learnable integral transform, defined in Eq.~\ref{eq:itnet_operator}, whose kernel depends jointly on positions and features at both endpoints.
The operator aggregates information across all positions through a learned, content and position-dependent interaction function, while retaining a residual connection for stability. The kernel is implemented as a small neural network that receives absolute positions, relative geometry, and feature content at both the query and key locations, enabling it to model a wide range of interaction patterns. The key novelty is that interaction patterns are not hard-coded (e.g., locality in CNNs or dot-product attention in Transformers), but learned directly from data within a unified formulation. This allows a single operator to adaptively recover local, global, and sequential behaviors depending on the task, rather than requiring separate architectural designs. We call this the \textbf{Integral Transform Network (ITNet)}.

By conditioning on content at both endpoints, the kernel learns locality, position-sensitivity, and normalization as \emph{emergent behaviors}. Empirically (\S\ref{sec:experiments}), it exhibits convolution-like behavior on images, attention-like behavior on text, and geometry-aware interactions on point clouds. The kernel form $\kappa(x,y,u(x),u(y))$ originates from GNO~\cite{li2020neural}. However, prior work has not: (i) shown exact subsumption of CNNs, Transformers, and RNNs, (ii) developed scalable implementations (tiled fusion, Monte Carlo, low-rank), or (iii) demonstrate strong performance across vision, language, and multimodal tasks within a single architecture.

We prove four results (proof sketches in \S\ref{sec:theory}; full proofs in Appendices~\ref{app:proof_conv}--\ref{app:proof_uat}):
\begin{enumerate}
\item \textbf{Convolution} (Theorem~\ref{convolution}): $\kappa_\theta = w_\theta(x-y)\mathbf{I}_d$ recovers convolution exactly, including multi-channel, depthwise, dilated, strided, and grouped variants.

\item \textbf{Self-attention} (Theorem~\ref{attention}): Softmax-normalized dot-product kernel recovers scaled dot-product self-attention exactly, including multi-head attention.

\item \textbf{Recurrence} (Theorem~\ref{thm:rnn1}): Causal kernel ($\kappa_\theta=0$ for $y>x$) recovers Recurrent Neural Networks (RNNs), LSTM, Gated Recurrent Units (GRUs)~\cite{cho2014learning}, S4~\cite{gu2022efficiently}, and Mamba~\cite{gu2023mamba}.

\item \textbf{Universal approximation} (Theorem~\ref{thm:universal}): ITNet uniformly approximates any continuous operator. Moreover, $\mathrm{Conv} \subsetneq \mathrm{ITNet} \supsetneq \mathrm{Attn}$ and $\mathrm{RNN} \subsetneq \mathrm{ITNet}$.
\end{enumerate}

We develop three scalable strategies to make the operator practical: 
(i) tiled kernel fusion with optimal input/output (IO) complexity, 
(ii) importance-weighted Monte Carlo (MC) approximation, and 
(iii) learned low-rank factorization for linear-time computation.
A single ITNet architecture, with a shared core operator and lightweight modality-specific encoders, achieves strong performance across diverse domains, including ImageNet-1K~\cite{imagenet15russakovsky} (vision), GLUE~\cite{wang2018glue} (language understanding), ModelNet40~\cite{sun2022benchmarking} (3D geometry), and VQA\,v2~\cite{goyal2017vqa} and NLVR2~\cite{suhr2019nlvr2} (multimodal reasoning). 
Across these tasks, ITNet matches or exceeds specialized architectures while using a unified design, demonstrating that a single learned interaction operator can generalize across modalities without domain-specific architectural bias.
A detailed discussion of related work is provided in Appendix~\ref{sec:related}.


\section{Theoretical Foundations of Integral Transform Networks (ITNet)}
\label{sec:theory}

We define the ITNet operator (\S\ref{sec:operator_def}), prove convolution, self-attention, and recurrence as exact special cases (\S\ref{sec:special_cases}) and establish universal operator approximation (\S\ref{sec:uat}).
The complete notation used throughout the paper is summarized in Appendix~\ref{app:notation} in Table~\ref{tab:notation1}, \ref{tab:notation2} \&~\ref{tab:notation3}.

\subsection{The ITNet Operator}
\label{sec:operator_def}

Let $\Omega \subseteq \R^s$ denote the input domain (e.g., $s{=}2$ for images, $s{=}1$ for sequences), equipped with a positive finite measure $\mu$ that defines how inputs are aggregated. Let $u : \Omega \to \R^d$ represent the signal, where $d$ is the feature dimension. We work in the space $\mathcal{U} = C(\Omega, \R^d)$ of continuous $\R^d$-valued functions on $\Omega$, with the uniform norm $\|u\|_\infty = \sup_{x \in \Omega} \|u(x)\|_2$.

\begin{definition}[ITNet operator]
\label{def:itnet_operator}
The \emph{ITNet operator} $\Kop : \mathcal{U} \to \mathcal{U}$ is defined by
\begin{equation}
\boxed{
(\Kop[u])(x) \;=\; \int_\Omega \kap\!\bigl(x,\,y,\,u(x),\,u(y)\bigr)\,u(y)
\;d\mu(y)
\;+\;
W_\theta\,u(x),
\qquad x \in \Omega,
}
\label{eq:itnet_operator}
\end{equation}
where $\kap : \R^s \times \R^s \times \R^d \times \R^d \to \R^{d \times d}$ is a learnable matrix-valued kernel parameterised by $\theta$, and $W_\theta \in \R^{d \times d}$ is a learnable residual matrix. The kernel receives query position $x$, key position $y$, and their features $u(x), u(y)$. The integral aggregates transformed features from all $y \in \Omega$; the residual ensures the operator can represent the identity ($\kap = 0$, $W_\theta = \Id$).
\end{definition}


\textbf{Kernel parameterization:}
The kernel $\kap$ is a 2-layer MLP (GELU~\citep{hendrycks2016gaussian}, width $w_\kappa{=}128$, $d^2$ outputs reshaped to $\R^{d \times d}$). Its input concatenates seven groups capturing position and feature interactions. In raw form, the input is:
\begin{equation}
z_{xy}^{\mathrm{raw}} = [\,x;\; y;\; x{-}y;\;
\|x{-}y\|_2;\; u(x);\; u(y);\;
u(x) \had u(y)\,]
\;\in\; \R^{3s + 1 + 3d},
\label{eq:kernel_input_raw}
\end{equation}
where the three positional groups contribute $3s$ dimensions (absolute $x, y \in \R^s$ and relative $x{-}y \in \R^s$), the scalar distance adds $1$, and the three feature groups contribute $3d$ (query $u(x)$, key $u(y)$, and Hadamard product $u(x) \had u(y)$, each in $\R^d$). To enable the kernel MLP to represent high-frequency spatial functions, we lift each positional group through the random Fourier feature map~\citep{tancik2020fourier} $\gamma : \R^s \to \R^{2L_f}$ (Eq.~\ref{eq:fourier_features}), where $L$ is the number of Fourier frequencies, replacing $x \mapsto \gamma(x)$, $y \mapsto \gamma(y)$, $x{-}y \mapsto \gamma(x{-}y)$. The MLP input is :
\begin{equation}
z_{xy} = [\,\gamma(x);\; \gamma(y);\;
\gamma(x{-}y);\; \|x{-}y\|_2;\;
u(x);\; u(y);\; u(x) \had u(y)\,]
\;\in\; \R^{6L_f + 1 + 3d},
\label{eq:kernel_input}
\end{equation}

with $L{=}64$, $\sigma{=}10$, yielding input dimension $385 + 3d$.
The positional groups enable translation-invariant patterns (via $x{-}y$), distance-based decay (via $\|x{-}y\|_2$), and position-specific behavior (via absolute $x, y$). The feature groups - especially the Hadamard product $u(x) \had u(y)$ - provide $d$-dimensional elementwise interaction richer than the rank-1 dot product in standard attention.
By the universal approximation theorem~\citep{hornik1989multilayer,cybenko1989approximation}, any continuous kernel on the compact domain $\mathcal{D} = \{(x, y, u(x), u(y)) : x, y \in \Omega,\, u \in \mathcal{U}_c\}$ can be approximated to arbitrary precision.

We normalize $\mu(\Omega) = 1$ in all experiments: $\mu(\{x_j\}) = 1/n$ for discrete domains, $\mu = \mathrm{vol}(\Omega)^{-1} \cdot \lambda$ for continuous ones, ensuring the integral and residual terms operate at comparable scales.

\textbf{Multi-head form:}
Following~\cite{vaswani2017attention}, the multi-head ITNet operator splits feature dimension into $H$ heads of dimension $d_h = d/H$, applies independent kernels $\kap^{(h)}$ per head, and recombines via output projection $W^O \in \R^{d \times d}$:
\begin{equation}
(\Kop^\mathrm{MH}[u])(x)
= W^O
\begin{bmatrix}
\int_\Omega \kap^{(1)}(x, y, u^{(1)}(x), u^{(1)}(y))
\, u^{(1)}(y) \, d\mu(y) \\
\vdots \\
\int_\Omega \kap^{(H)}(x, y, u^{(H)}(x), u^{(H)}(y))
\, u^{(H)}(y) \, d\mu(y)
\end{bmatrix}
+ W_\theta\, u(x),
\label{eq:multihead}
\end{equation}
where $u^{(h)}(x) = u(x)[(h{-}1)d_h{+}1 : hd_h]$ denotes the $h^{th}$ head's feature slice. Each head's kernel operates on $d_h$-dimensional features, so the per-pair cost is $O(d_h^2) = O(d^2/H^2)$; summing over $H$ heads gives $O(d^2/H)$ per pair - a factor-$H$ reduction relative to a single-head $d \times d$ kernel. All theorems below (Theorems~\ref{convolution} - \ref{thm:universal}) apply per-head; the output projection $W^O$ linearly combines independent head contributions and does not affect the special-case proofs.

A deep ITNet model stacks $L$ operator layers ($\ell = 1,2,\dots,L$) with pre-normalization and position-wise feed-forward networks, following the standard pre-norm Transformer layout~\citep{xiong2021nystromformer}:
\begin{equation}
z^{(\ell)} = \Kop^{(\ell)}[\mathrm{LN}(u^{(\ell-1)})]
  + u^{(\ell-1)}, \qquad
u^{(\ell)} = \mathcal{F}_\theta^{(\ell)}(
  \mathrm{LN}(z^{(\ell)})) + z^{(\ell)},
\label{eq:itnet_stack}
\end{equation}
where $\mathrm{LN}$ denotes layer normalization~\citep{ba2016layer} and $\mathcal{F}_\theta^{(\ell)}$ is a two-layer FFN with GELU~\citep{hendrycks2016gaussian} activation and expansion factor 4. The kernel MLP output layer is initialized as $W_2 = \epsilon \Id$ ($\epsilon{=}10^{-3}$) so each layer begins as an approximate identity. A schematic of the ITNet architecture is shown in Figure~\ref{fig:itnet_layer}. Full architectural details, initialization schemes, and hyperparameter are in
Appendix~\ref{app:impl_details}.

\subsection{Unification Theorems}
\label{sec:special_cases}

Each theorem identifies a specific kernel parameterization under which the ITNet operator reduces \emph{exactly} to a classical architecture. Full proofs are in Appendices~\ref{app:proof_conv}--\ref{app:proof_rnn}.

\begin{theorem}[Convolution]
\label{convolution}
If the kernel $\kappa_{\theta}$ depends only on relative position, 
$\kappa_{\theta}(x,y,u(x),u(y)) = w_{\theta}(x-y)\,\mathbf{I}_d$,
then the ITNet operator reduces to convolution:
\begin{equation}
(\mathcal{K}_{\theta}[u])(x)
= \int_{\Omega} w_{\theta}(x-y)\,u(y)\,d\mu(y)
+ W_{\theta}u(x)
= (w_{\theta} \ast u)(x) + W_{\theta}u(x).
\end{equation}
\end{theorem}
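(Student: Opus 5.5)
The proof is a direct substitution into Definition~\ref{def:itnet_operator}, followed by a check that the result matches the standard definition of convolution with respect to the measure $\mu$. I would first insert $\kap(x,y,u(x),u(y)) = w_\theta(x-y)\,\Id$ into Eq.~\eqref{eq:itnet_operator}. Since $\Id u(y) = u(y)$ and $w_\theta(x-y)$ is a scalar, the integrand becomes $w_\theta(x-y)\,u(y)$, and the first equality follows. The residual term $W_\theta u(x)$ does not involve the kernel, so it passes through unchanged. Before doing this, I would note that measurability and integrability are automatic: $u$ is continuous, $w_\theta$ is a continuous MLP output, $\Omega$ is compact, and $\mu$ is finite, so the integral is well defined and continuous in $x$.

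The second equality, identifying the integral with $(w_\theta * u)(x)$, is where some care is needed. I would fix the convention by defining convolution relative to $(\Omega,\mu)$ as $(w*u)(x) := \int_\Omega w(x-y)\,u(y)\,d\mu(y)$. This agrees with the classical definition in the following cases:
\begin{itemize}
\item \emph{Continuous case.} When $\mu = \mathrm{vol}(\Omega)^{-1}\lambda$, this is the usual Lebesgue convolution of $w_\theta$ with $u\,\mathbf{1}_\Omega$, up to the constant factor, which can be absorbed into $w_\theta$.
\item \emph{Periodic case.} When $\Omega = \T^s$, it is the circular convolution.
\item \emph{Discrete case.} When $\mu(\{x_j\}) = 1/n$ on a grid $x_j = jh$, it is the discrete sum $\frac1n\sum_j w_\theta(x_i - x_j)\,u(x_j)$. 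This is a standard convolution with filter taps $w_\theta(kh)/n$. Any finite filter $\{c_k\}$ is realized by choosing $w_\theta(kh) = n c_k$ at the lattice offsets; such a $w_\theta$ exists by interpolation, or approximately via the MLP universality already invoked in the kernel parameterization paragraph.
\end{itemize}
Finite support of $w_\theta$ gives a local receptive field; for a discrete filter this means vanishing outside a window. The cross-correlation versus convolution sign convention only amounts to reflecting $w_\theta$.

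I expect the main obstacle to be not the algebra but the interpretation. The stated kernel $w_\theta(x-y)\Id$ yields only a shared scalar filter across channels, which is a depthwise convolution with tied weights. To justify the claims about multi-channel, depthwise, grouped, dilated, and strided variants made in the introduction, I would use the following generalizations:
\begin{itemize}
\item \emph{Full multi-channel convolution.} Generalize to a matrix-valued $W_\theta(x-y)\in\R^{d\times d}$.
\item \emph{Depthwise and grouped convolution.} Use diagonal or block-diagonal $W_\theta(x-y)$.
\item \emph{Dilation.} Support $w_\theta$ on the dilated lattice.
\item \emph{Stride.} Evaluate the output only at $x\in s\Z^s$.
\end{itemize}
Each of these follows by the same substitution, since the argument never used the scalar or identity structure beyond pulling the kernel out of the integrand linearly.
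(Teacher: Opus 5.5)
Your proposal is correct and is essentially the paper's proof. You substitute the kernel, use $[w_\theta(x-y)\,\Id]\,u(y) = w_\theta(x-y)\,u(y)$, and identify the integral with convolution. In the discrete case you rescale the taps to cancel the quadrature weight: your factor $n$ against $\mu(\{x_j\})=1/n$ plays the role of the paper's $1/h^s$ against the atomic weights $h^s$. You then obtain the multi-channel, depthwise, grouped, dilated and strided variants from structured matrix-valued kernels, exactly as the paper's extension subsections do.

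The only real difference is the setting used for well-definedness. The paper's appendix moves to $\Omega=\R^s$ with Lebesgue measure and $L^1$ data and invokes Young's inequality, which strictly gives existence only for almost every $x$. You instead stay in the main text's compact, finite-measure, continuous setting, where boundedness gives existence at every $x$. Your explicit convention $(w*u)(x) := \int_\Omega w(x-y)\,u(y)\,d\mu(y)$ also makes the second equality of the statement precise.
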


\begin{proof}[Proof sketch]
Substituting the kernel form into Eq.~\eqref{eq:itnet_operator} gives
$\int_{\Omega} w_{\theta}(x-y)\mathbf{I}_d u(y)\,d\mu(y) + W_{\theta}u(x)
= \int_{\Omega} w_{\theta}(x-y)u(y)\,d\mu(y) + W_{\theta}u(x)$,
which is the definition of continuous convolution. The residual term is a
pointwise linear map ($1\times1$ convolution). Full proof in Appendix~\ref{app:proof_conv}.
\end{proof}

\begin{remark}
Discrete, depthwise, dilated, strided, grouped, and transposed convolutions are recovered by imposing corresponding structural constraints on the kernel $\kappa_\theta$; see Appendix~\ref{app:proof_conv} for details. Boundary handling under compact support is discussed in Appendix~\ref{app:boundary_handling}.
\end{remark}

\begin{theorem}[Self-attention]
\label{attention}
Let $W_Q, W_K \in \R^{d_k \times d}$ and $W_V \in \R^{d \times d}$. 
Define the kernel:
\begin{equation}
\kap(x,y,u(x),u(y))
=
\frac{
\exp\!\bigl((W_Q u(x))^{\top}(W_K u(y))/\sqrt{d_k}\bigr)
}{
\int_{\Omega}
\exp\!\bigl((W_Q u(x))^{\top}(W_K u(z))/\sqrt{d_k}\bigr)\, d\mu(z)
}
\cdot W_V.
\label{eq:attn_kernel}
\end{equation}
With $W_\theta = 0$, substituting Eq.~\eqref{eq:attn_kernel} into Eq.~\eqref{eq:itnet_operator} yields:
\begin{equation}
(\Kop[u])(x)
= \int_{\Omega}
\alpha(x,y)\, W_V u(y)\, d\mu(y),
\quad
\alpha(x,y) =
\frac{\exp(Q(x)^{\top}K(y)/\sqrt{d_k})}{Z(x)},
\label{eq:attn_result}
\end{equation}
where $Q(x)=W_Q u(x)$, $K(y)=W_K u(y)$, and $Z(x) = \int_{\Omega}
\exp(Q(x)^{\top}K(z)/\sqrt{d_k})\, d\mu(z).$
\end{theorem}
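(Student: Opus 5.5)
The plan is direct substitution of the kernel in Eq.~\eqref{eq:attn_kernel} into Definition~\ref{def:itnet_operator}, followed by a check that the discrete case gives the standard softmax attention formula. There are two preliminary points.

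First, I would confirm that the kernel is well defined. The map $z \mapsto \exp(Q(x)^\top K(z)/\sqrt{d_k})$ is continuous and strictly positive on $\Omega$, because $u \in \mathcal{U} = C(\Omega,\R^d)$ and the linear maps $W_Q, W_K$ preserve continuity. If $\Omega$ is compact (or $u$ is bounded, as on the compact domain $\mathcal{D}$ used for the kernel parameterization), the integrand is also bounded. Since $\mu$ is a positive finite measure, this gives $0 < Z(x) < \infty$. Positivity of $Z(x)$ needs $\mu(\Omega) > 0$, which holds under the normalization $\mu(\Omega)=1$. So the quotient is a well-defined scalar multiple of the matrix $W_V \in \R^{d\times d}$, and it has the correct type $\R^{d\times d}$ for $\kap$.

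Second, I would note that $\kap$ depends on the whole function $u$ through $Z(x)$, not only on the pair $(u(x),u(y))$. I would handle this by treating the normalizer as a functional of $u$ that is fixed when the integral over $y$ is evaluated. This is the step that needs care. It is the only place where the formula departs from the literal signature $\kap(x,y,u(x),u(y))$, and I would state it explicitly rather than hide it.

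The computation itself is short. With $W_\theta=0$ the residual term drops out. The scalar $\alpha(x,y)=\exp(Q(x)^\top K(y)/\sqrt{d_k})/Z(x)$ commutes with matrix multiplication, so
\[
\kap(x,y,u(x),u(y))\,u(y)=\alpha(x,y)\,W_V u(y).
\]
Integrating in $y$ gives Eq.~\eqref{eq:attn_result}. I would also observe that $\int_\Omega \alpha(x,y)\,d\mu(y)=1$ and $\alpha>0$, so $\alpha(x,\cdot)$ is a probability density with respect to $\mu$.

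Finally, I would specialize to $\Omega=\{x_1,\dots,x_n\}$ with $\mu(\{x_j\})=1/n$. The factors $1/n$ in the numerator integral and in $Z(x_i)$ cancel. This gives
\[
(\Kop[u])(x_i)=\sum_j \softmax_j\!\bigl(q_i^\top k_j/\sqrt{d_k}\bigr)\,W_V u_j,
\]
which is exactly scaled dot-product attention with $V=UW_V^\top$. The multi-head case follows by applying this per head as in Eq.~\eqref{eq:multihead}, with $W^O$ serving as the output projection.
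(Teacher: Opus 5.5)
Your proposal is correct and follows essentially the same route as the paper's proof: direct substitution with $W_\theta=0$, a check that $0<Z(x)<\infty$ on compact $\Omega$ (the paper makes this quantitative via Cauchy--Schwarz, $e^{-M}\mu(\Omega)\le Z(x)\le e^{M}\mu(\Omega)$), the normalization $\int_\Omega \alpha(x,y)\,d\mu(y)=1$, and cancellation of the $1/n$ weights in the discrete case. You also point out that $Z(x)$ makes $\kappa_\theta$ depend on all of $u$ rather than only on $(u(x),u(y))$; the paper passes over this subtlety without comment, so that remark is worth keeping.
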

\textit{Discrete case:} For $\Omega=\{x_1,\dots,x_n\}$, Eq.~\eqref{eq:attn_result} reduces to $\mathrm{softmax}(QK^\top/\sqrt{d_k})V$, with any uniform scaling absorbed into $W_V$.

\textit{Positional encoding (PE):} In ITNet, positional information is provided directly to the kernel via $(x,y)$ and their relative geometry, enabling position-aware interactions without encoding schemes.

\textit{Proof:} Substituting Eq.~\eqref{eq:attn_kernel} into Eq.~\eqref{eq:itnet_operator} yields Eq.~\eqref{eq:attn_result}. The normalization term $Z(x)$ is strictly positive on compact $\Omega$. The discrete case follows directly. Full derivation is in Appendix~\ref{app:proof_attn}.
\qed

\begin{remark}
Self-attention is recovered as a restricted kernel with bilinear interactions and softmax normalization; multi-head, linear, causal, and windowed variants follow similarly (Appendix~\ref{app:proof_attn}).
\end{remark}


\begin{theorem}[Recurrence as a special case]
\label{thm:rnn1}
Let $\Omega = [0,T]$ with Lebesgue measure $\mu = \lambda$ and let the kernel satisfy the causal constraint $\kappa_\theta(x,y,\cdot,\cdot) = 0$ for $y > x$. Exact recovery is obtained for linear and affine-in-input systems; nonlinear recurrent architectures are recovered through explicit unrolling constructions or approximated arbitrarily well via Theorem~\ref{thm:universal}.

\noindent\textbf{(a) Linear continuous-time system.}
For the linear system $\dot{h}(t) = Ah(t) + B_\theta u(t)$ with output $y(t) = C_\theta h(t) + D_\theta u(t)$, define the causal kernel
\begin{equation}
\kappa_\theta(t, s, u(t), u(s))
\;=\;
\mathbf{1}_{s \leq t}\,
C_\theta\, e^{A(t-s)}\, B_\theta,
\qquad W_\theta = D_\theta.
\label{eq:rnn_kernel}
\end{equation}
Then $(\Kop[u])(t) = \int_0^t C_\theta e^{A(t-s)} B_\theta\, u(s)\, ds + D_\theta u(t) = \RNN(u)(t)$.

\noindent\textbf{(a$'$) Nonlinear system (general $F_\theta$).}
For $\dot{h} = F_\theta(h, u)$ with Lipschitz $F_\theta$, the kernel generalises to
$\kappa_F(t,s;u) = \mathbf{1}_{s \leq t} \cdot C_\theta\, \Phi_F(t, s; u) \cdot \tilde{B}_\theta(s; u)$,
where $\Phi_F(t,s;u) \in \R^{n \times n}$ is the nonlinear sensitivity matrix (Alekseev's formula~\citep{alekseev1961estimation}) and $\tilde{B}_\theta(s;u) = \partial G_\theta / \partial u |_{(h(s), u(s))}$ is the input-to-state Jacobian. Exact recovery is obtained for linear and affine-in-input systems. Structured recurrent architectures LSTM, GRU, and Mamba admit causal kernel constructions, while general nonlinear recurrent operators are approximated arbitrarily well via Theorem~\ref{thm:universal} (detailed in Appendix~\ref{app:proof_rnn}, \S\ref{sec:proof_nonlinear}).

\noindent\textbf{(b) Discrete RNN.}
Discretising to $T$ time steps with $\mu(\{t_k\}) = 1$, the causal kernel $\kappa(t,s) = \mathbf{1}_{s \leq t} \cdot C_\theta W_h^{t-s} W_u$ yields $(\Kop[u])(t) = \sum_{s=1}^{t} C_\theta W_h^{t-s} W_u u_s + D_\theta u_t = C_\theta h_t + D_\theta u_t$, recovering $h_t = W_h h_{t-1} + W_u u_t$.

\noindent\textbf{(c) LSTM.}
The LSTM cell state unrolls as $c_t = \sum_{s=1}^{t} [\prod_{\tau=s+1}^{t} f_\tau] \odot i_s \odot \tilde{c}_s$, where $f_\tau, i_s$ are forget and input gates. The ITNet kernel $\kappa_{\mathrm{LSTM}}(t,s) = \mathbf{1}_{s \leq t} \cdot W_y \mathrm{diag}(o_t)\mathrm{diag}\!\bigl(\tanh'(c_t)\bigr) \mathrm{diag}\!\left(\prod_{\tau=s+1}^{t} f_\tau\right) \mathrm{diag}(i_s) W_c$ is content-dependent through the gates and recovers the LSTM output.

\noindent\textbf{(d) S4, Mamba, GRU.}
Linear SSMs use kernel $\mathbf{1}_{s \leq t} \cdot Ce^{A(t-s)}B$ (causal convolution). Mamba uses
$\kappa_{\mathrm{Mamba}}(t,s) = \mathbf{1}_{s \leq t} \cdot C(u_t) \prod_{\tau=s+1}^{t} \bar{A}(u_\tau) \cdot \bar{B}(u_s)$,  content-dependent through input-dependent $\bar{A}, \bar{B}, C$. GRU follows with retention factors $(1 - z_\tau)$. Explicit constructions in Appendix~\ref{app:proof_rnn}.

\noindent\textbf{(e) Strictness.}
A bidirectional operator $S(u)(t) = \int_0^T e^{-|t-s|/\ell} u(s)\, ds$ is representable by ITNet but not by any causal recurrent system (Appendix~\ref{app:proof_rnn}, \S\ref{sec:strictness1_rnn}).
\end{theorem}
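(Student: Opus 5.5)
The plan is to handle the parts one at a time. Each of (a), (b) and (d)-linear is a pure substitution argument that rests on an explicit solution formula. The nonlinear parts (a$'$), (c), (d)-Mamba/GRU use a content-dependent kernel built from the unrolled recurrence. Part (e) is a separate impossibility argument. Throughout, $\kappa_\theta$ is allowed to depend on the whole input trajectory only through the arguments the statement permits; for the gated models we check that this dependence is expressible by the kernel.

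\textbf{Linear cases.} For (a), I take $h(0)=0$ and invoke variation of constants: $h(t)=\int_0^t e^{A(t-s)}B_\theta u(s)\,ds$. This follows by differentiating under the integral (Leibniz rule) and using uniqueness for Lipschitz linear ODEs. Substituting the kernel \eqref{eq:rnn_kernel} into \eqref{eq:itnet_operator} with $\mu=\lambda$ on $[0,T]$, the indicator truncates the integral to $[0,t]$. We obtain $C_\theta h(t)+D_\theta u(t)$, since $W_\theta=D_\theta$. For (b), I induct on $t$: the claim $h_t=\sum_{s\le t}W_h^{t-s}W_u u_s$ holds at $t=1$ with $h_0=0$, and $h_t=W_hh_{t-1}+W_uu_t$ gives the inductive step. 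With counting measure, the ITNet sum is then exactly $C_\theta h_t+D_\theta u_t$. The S4 part of (d) is (a) with structured $A$, and its discretization is (b) with $W_h=\bar A$.

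\textbf{Content-dependent recurrences.} For Mamba, I unroll $h_t=\bar A(u_t)h_{t-1}+\bar B(u_t)u_t$ in the same inductive way to get $y_t=C(u_t)\sum_{s\le t}\prod_{\tau=s+1}^t\bar A(u_\tau)\bar B(u_s)u_s$. This matches $\kappa_{\mathrm{Mamba}}$ term by term. The GRU case is handled analogously: $h_t=(1-z_t)\odot h_{t-1}+z_t\odot\tilde h_t$ unrolls with retention products $\prod(1-z_\tau)$. For the LSTM, the cell-state unrolling in (c) follows the same way from $c_t=f_t\odot c_{t-1}+i_t\odot\tilde c_t$. The output $h_t=o_t\odot\tanh(c_t)$ is nonlinear in $c_t$, so I plan to write $\tanh(c_t)=\mathrm{diag}(\tanh(c_t)/c_t)\,c_t$, reading $\tanh'$ as this secant ratio when the kernel is evaluated. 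Similarly, $\tilde c_s=\tanh(W_c[\cdot])$ must be written as a diagonal secant factor times $W_c u_s$. This makes the kernel depend on gates and states, which are functions of the prefix $u_{1:t}$. Hence exactness holds in the sense of an input-dependent kernel, which is what the statement claims.

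For (a$'$), I would apply the Alekseev nonlinear variation-of-constants formula. It compares $h$ to the solution of the unforced system, $\dot h=F(h,0)$, and writes $F(h,u)=F(h,0)+\tilde B(s;u)u$ via the mean value theorem in $u$. The result expresses the difference as $\int_0^t\Phi_F(t,s;u)\tilde B(s;u)u(s)\,ds$. This step is the main obstacle. Recovery is exact only when the unforced trajectory contributes nothing (zero initial state with $F(0,0)=0$, or affine-in-input $F$), and the Jacobian must be replaced by an integrated (secant) Jacobian for exactness. I would state exactness under these hypotheses and otherwise defer to Theorem~\ref{thm:universal}.

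\textbf{Strictness (e).} $S$ is an ITNet operator with $\kappa=e^{-|t-s|/\ell}\mathbf I_d$ and $W_\theta=0$. For non-representability, I argue that any causal system's output at time $t$ depends only on $u|_{[0,t]}$. Take $u\equiv0$ on $[0,t_0]$ and a bump $u\ge0$ supported in $(t_0,T]$. Then $S(u)(t_0)>0$, whereas every causal operator maps this input to the same output at $t_0$ as it maps $u\equiv0$. That output equals $S(0)(t_0)=0$, which is a contradiction.
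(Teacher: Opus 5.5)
Your skeleton matches the paper's. You use variation of constants and substitution for (a), (b) and S4, unrolling for Mamba, GRU and LSTM, Alekseev plus Theorem~\ref{thm:universal} for (a$'$), and a causality witness for (e). Where you deviate, you actually repair the paper.

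\begin{itemize}
\item \emph{Alekseev.} The paper takes $\Phi_F$ to be the Jacobian of the \emph{forced} flow $\dot z=F_\theta(z,u(t))$ along $h(\cdot)$, which is not the object in Alekseev's formula. Already for $\dot h=-h^3+u$ with $h_0=0$, that choice violates $h=\int_0^t\Phi_F u\,ds$. Your comparison with the unforced flow, $\Phi_F(t,s;u)=\partial_\xi\psi_0(t;s,h(s))$, is the correct one.
\item \emph{LSTM.} The paper's kernel with the derivative $\tanh'(c_t)$ is only a linearization. For scalar $t=1$, $i_1=\tfrac12$, $W_cu_1=2$, it returns $\tanh'(c_1)\,i_1W_cu_1\approx0.799$ instead of $\tanh(c_1)\approx0.448$ (dropping the common factor $W_yo_1$). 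So ``the same steps as Mamba'' do not give equality, and your secant ratios are genuinely needed. Note that your kernel therefore differs from the one displayed in (c).
\item \emph{Strictness.} Your continuous bump in (e) also fixes the paper's witness $\delta(s-t_*)e_1$, which is not an element of $C([0,T],\R^d)$.
\end{itemize}

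\textbf{The genuine gap is your factorization of the candidate.} Under the paper's own LSTM definition, $\tilde c_s=\tanh(W_c[h_{s-1};u_s]+b_c)$. Writing $\tilde c_s$ as a diagonal secant factor times a linear image of $u_s$ then fails whenever a component of the input part vanishes while the hidden-state or bias part does not.

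With $b_c\neq0$, no kernel whatsoever can work. Every ITNet operator satisfies $\Kop[0]=0$, since the integrand is $\kappa\,u$ and the residual is $W_\theta u$. But for $u\equiv0$ one gets $c_1=\sigma(b_i)\odot\tanh(b_c)\neq0$, hence $\LSTM(0)\neq0$ for generic $W_y$.

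The same objection applies to the GRU candidate $\tilde h_s=\tanh(W_h[r_s\odot h_{s-1};u_s]+b)$, which you dismiss as ``analogous''. You must state explicitly the hypothesis the paper uses tacitly:
\begin{itemize}
\item the candidate is $\tanh(W_cu_s)$, with no bias and no $h_{s-1}$ term;
\item $h_0=c_0=0$;
\item the gates may keep biases and recurrent inputs, since they enter only as multiplicative factors inside the kernel.
\end{itemize}
You should then prove (c) and the GRU claim under that hypothesis.

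\textbf{The same zero-input obstruction governs (a$'$).} Your parenthetical lists ``$h_0=0$ with $F(0,0)=0$'' and ``affine-in-input $F$'' as alternatives, but they are conjunctive. Take $F(h,u)=f(h)+B_\theta u$ with $f(0)\neq0$ and $h_0=0$. The free response $C_\theta\psi_0(t;0,0)=\RNN(0)(t)$ is then nonzero, and no $\Kop$ can produce it. Exactness therefore needs all of the following:
\begin{itemize}
\item $h_0=0$ and $F_\theta(0,0)=0$;
\item either affine input dependence (so the statement's pointwise $\tilde B_\theta$ coincides with your secant one) or your secant $\tilde B$;
\item $F_\theta\in C^1$, not merely Lipschitz, since both $\Phi_F$ and the mean-value Jacobian require it.
\end{itemize}

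Finally, be clear about what ``exact'' buys once $\kappa$ may depend on the prefix $u|_{[0,t]}$. Any causal $R$ with $R(u)(t)=0$ whenever $u|_{[0,t]}=0$ is reproduced by $\kappa_u(t,s)=\mathbf{1}_{s\le t}\,R(u)(t)\,u(s)^\top/\int_0^t\|u(r)\|_2^2\,dr$ (with $\kappa_u=0$ on zero prefixes and $W_\theta=0$). So $\Kop[0]=0$ is the only real obstruction. The substance of (c)--(d) lies in the explicit structured formulas, and that is exactly where the hypothesis on the candidate enters.
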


\textit{Proof sketch.}
\textbf{(a):} Substituting~\eqref{eq:rnn_kernel} into~\eqref{eq:itnet_operator} and applying $\mathbf{1}_{s \leq t}$ restricts integration to $[0,t]$:
\begin{equation}
(\Kop[u])(t)
= \int_0^t C_\theta e^{A(t-s)} B_\theta\, u(s)\, ds + D_\theta u(t)
= C_\theta h(t) + D_\theta u(t),
\end{equation}
where the second equality uses the variation of constants formula $h(t) = \int_0^t e^{A(t-s)} B_\theta u(s)\, ds$ (with $h_0 = 0$).
\textbf{(a$'$):} Alekseev's nonlinear variation of constants formula replaces $e^{A(t-s)}$ with the trajectory-dependent sensitivity matrix $\Phi_F(t,s;u)$; exact when $F_\theta$ is affine in $u$, otherwise $\varepsilon$-approximate via Theorem~\ref{thm:universal}.
\textbf{(b)--(d):} Direct unrolling of each discrete recurrence yields causal kernels. \textbf{(e):} The witness $S$ uses future information ($s > t$), which any causal system cannot access.  See Appendix~\ref{app:proof_rnn} for complete derivations.
\qed

\begin{remark}
The hidden state $h(t)$ is a deterministic functional of $u$ defined by the recurrence and appears only for notational convenience; the resulting operator is a well-defined map $u \mapsto \RNN(u)$ on the input signal space.
\end{remark}

\subsection{Universal Operator Approximation}
\label{sec:uat}

\begin{assumption}
\label{ass:regularity}
(i) $\Omega \subset \R^s$ is compact with $\mu(\Omega) > 0$; (ii) $F : \mathcal{U}_c \to C(\Omega, \R^d)$ is a continuous operator on compact $\mathcal{U}_c \subset C(\Omega, \R^d)$; (iii) the kernel MLP uses a non-polynomial activation.
\end{assumption}

\begin{theorem}[Universal operator approximation]
\label{thm:universal}
Under Assumption~\ref{ass:regularity}, for any $\varepsilon > 0$ there exist parameters $\theta$ and depth $L$ such that
\begin{equation}
\sup_{u \in \mathcal{U}_c}
\bigl\|F(u) - \mathcal{K}_\theta^{(L)}[u]\bigr\|_\infty
< \varepsilon.
\label{eq:uat}
\end{equation}
\end{theorem}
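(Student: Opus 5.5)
The proof will follow the encoder–processor–decoder strategy used in neural operator universality results: compress $u$ into a finite vector, approximate $F$ in finite dimensions, then decode back to a function. Each stage has to be realized by ITNet layers, using the flexibility of the kernel $\kap$ together with the residual $W_\theta$ and the position-wise FFNs in Eq.~\eqref{eq:itnet_stack}. To keep the statement clean, we read $\mathcal{K}^{(L)}_\theta$ as the full stack of Eq.~\eqref{eq:itnet_stack}, including FFNs and lifting/projection to a wider channel dimension, as is standard for neural operators.

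\textbf{Step 1 (finite-dimensional reduction).} Since $\mathcal{U}_c$ is compact, it is equicontinuous by Arzel\`a--Ascoli. We therefore choose finitely many points $y_1,\dots,y_N\in\Omega$ and a continuous partition of unity $\{\phi_j\}$ on $\Omega$ so that the map $P u=\sum_j \phi_j(\cdot)\, u(y_j)$ satisfies $\sup_{u\in\mathcal{U}_c}\|u-Pu\|_\infty<\delta$. Because $F$ is uniformly continuous on the compact set $\mathcal{U}_c\cup P\mathcal{U}_c$, it suffices to approximate $u\mapsto F(Pu)$. This map depends on $u$ only through the vector $(u(y_1),\dots,u(y_N))\in\R^{dN}$, which ranges over a compact set.

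Applying Arzel\`a--Ascoli again, now to the compact image $F(\mathcal{U}_c)$, we write $F(Pu)(x)\approx\sum_{k=1}^M \psi_k(x)\,g_k(u(y_1),\dots,u(y_N))$. Here the $\psi_k$ form a partition of unity and $g_k$ is $F(Pu)$ evaluated at the $k$-th node. Each $g_k$ is a continuous function on a compact subset of $\R^{dN}$.

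\textbf{Step 2 (realizing the encoder with one ITNet layer).} We need a layer whose output at every $x$ contains the vector $(u(y_j))_j$, or at least continuous features that determine it. Point evaluation is not an integral against $\mu$, so we use the kernel's position inputs to approximate it. We take $\kap(x,y,\cdot,\cdot)$ to be a bump $\rho_\eta(y-y_j)/\int\rho_\eta(\cdot-y_j)\,d\mu$ routed to output channel block $j$. This gives $\int\kap\,u\,d\mu\to u(y_j)$ uniformly over $\mathcal{U}_c$ as $\eta\to0$, again by equicontinuity. Because the kernel MLP has a non-polynomial activation, the classical theorem of~\citep{hornik1989multilayer,cybenko1989approximation} lets it approximate this continuous bump kernel uniformly on the compact domain $\mathcal{D}$. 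The resulting error in the integral is at most $\|\kap-\kap^{\mathrm{MLP}}\|_\infty\,\mu(\Omega)\sup\|u\|$.

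When $\mu$ has atoms the construction is simpler: a discrete $\mu$ lets us select $y_j$ exactly. For a continuous $\mu$ we also need $\mu(B(y_j,\eta))>0$, which is the main place where $\mu(\Omega)>0$ and a support assumption on $\mu$ are used. I expect this step to be the main obstacle. Making evaluation rigorous for a general measure $\mu$, without assuming full support, may force the $y_j$ to be chosen in $\mathrm{supp}\,\mu$ and $F$ to be treated as determined by $u$ on that support. A further complication is that the kernel multiplies $u(y)$ linearly, so the channel-widening has to be handled carefully. Note that the layer output at $x$ need not depend on $x$ for this step.

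\textbf{Step 3 (processor and decoder).} After Step 2, every position carries the same vector $v\approx(u(y_j))_j$, and also its own position $x$, which we inject through a second ITNet layer whose kernel depends only on position, or through the Fourier features. A position-wise FFN, which is a universal approximator, then computes $\sum_k\psi_k(x)g_k(v)$ to accuracy $\varepsilon/3$ uniformly on the compact set of inputs. The identity parts of the residual stream carry $v$ forward unchanged.

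\textbf{Step 4 (error budget).} The total error is
\[
\|F(u)-F(Pu)\|_\infty+\text{encoder error}+\text{FFN error}.
\]
Choosing $\delta$, $\eta$ and the MLP widths successively makes each term smaller than $\varepsilon/3$. Lipschitz continuity of the downstream FFN on a compact neighbourhood controls how the encoder error propagates. The resulting network has depth $L=2$ or $3$, and this completes the proof of Eq.~\eqref{eq:uat}.
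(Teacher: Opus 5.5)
Your route is genuinely different from the paper's, and it is the sounder of the two. The paper's full proof (Appendix~\ref{app:proof_uat}) actually establishes a \emph{single-layer} claim. Lemma~\ref{lem:chen_chen} asserts that every continuous $F$ lies within $\varepsilon/2$ of $u\mapsto\int_\Omega\kappa^*(x,y,u(x),u(y))\,u(y)\,d\mu(y)+W^*u(x)$. The rest is MLP approximation of $\kappa^*$ with the perturbation bound $\delta\,C_u\,\mu(\Omega)$, so depth, LayerNorm and the FFNs play no role.

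That lemma cannot be true. Such an operator maps $u=0$ to $0$, so it fails for the constant operator $F\equiv e_1$ on $\mathcal{U}_c=\{0\}$. Its sketch also silently replaces the coefficients $K_j(x,\xi)$, which depend on \emph{all} sensor values, by a kernel of $(x,y,u(x),u(y))$ alone.

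Your construction supplies exactly the missing mechanism. An integral layer computes localized averages $\approx u(y_j)$ and broadcasts them to every position. The position-wise FFN then applies a nonlinearity to many samples jointly, which is what ``matching the structure of Chen--Chen'' in the main-text Step~3 actually requires. The price is that you need depth, FFNs and channel widening, whereas the paper claims one layer at width $d$.

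Several points still need repair.

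First, $F$ is only defined on $\mathcal{U}_c$, and in general $P\mathcal{U}_c\not\subset\mathcal{U}_c$, so $F(Pu)$ is undefined. Extend $F$ continuously to all of $C(\Omega,\R^d)$ by Dugundji's extension theorem before invoking uniform continuity on the compact set $\mathcal{U}_c\cup P\mathcal{U}_c$.

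Second, your support worry points to a defect of the statement, not a technicality. By induction over Eq.~\eqref{eq:itnet_stack}, the output at $x$ depends only on $u(x)$ and $u|_{\mathrm{supp}\,\mu}$. Suppose $x_0\notin\mathrm{supp}\,\mu$, and choose a continuous $\phi$ with $\phi(x_0)=1$ and $\phi=0$ on $\mathrm{supp}\,\mu\cup\{x_1\}$. Then $\mathcal{U}_c=\{c\,\phi\,e_1: c\in[0,1]\}$ and $F(u)\equiv u(x_0)$ force an error of at least $1/2$ at $x_1$. Add $\mathrm{supp}\,\mu=\Omega$ as a hypothesis; it holds for the uniform atomic measures and for Lebesgue measure on the boxes the paper uses.

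Third, in Eq.~\eqref{eq:itnet_stack} the kernel integral and the FFN act on $\mathrm{LN}(\cdot)$, not on the raw stream, so your Step~2 integrates the wrong signal. Write a constant channel once (via an FFN bias) and keep one channel identically zero. On such vectors $\mathrm{LN}$ (with $\gamma=\mathbf{1}$) is injective with a continuous inverse, so the content-dependent kernel and the FFN can undo it.

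This also shows the widening is essential rather than cosmetic. For $d=1$, $\mathrm{LN}$ of a scalar is the constant $\beta$, and the unlifted stack reduces to $u\mapsto u+c(x)$. So state explicitly that you are proving the theorem for a lifted architecture.

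Finally, fix the FFN \emph{before} choosing $\eta$ and the kernel-MLP width. It should approximate a Tietze extension of $(x,v)\mapsto\sum_k\psi_k(x)g_k(v)$ on a compact neighbourhood of the exact inputs. Its Lipschitz constant is then available when you bound the propagated encoder error.
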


\begin{proof}
The proof proceeds in three steps.

\emph{Step 1 (Discretization):} Standard quadrature on compact $\Omega$ approximates the integral using $M$ quadrature points (samples), with error at most $\delta$~\cite{davis2007methods}.
\newline
\emph{Step 2 (Kernel universality):} By the MLP universal approximation theorem~\cite{hornik1989multilayer, cybenko1989approximation}, $\kappa_\theta$ can approximate any continuous kernel on domain $\mathcal{D} = \{(x,y,u(x),u(y)) : x,y \in \Omega, u \in \mathcal{U}_c\}$ to within $\delta_1$.
\newline
\emph{Step 3 (Operator approximation):} The discretised ITNet matches the structure of~\cite{chen1995universal}, which guarantees that any continuous operator on a compact set can be uniformly approximated. Choosing $M$, MLP width, and $L$ sufficiently large ensures total error $< \varepsilon$. Full proof in Appendix~\ref{app:proof_uat}.
\end{proof}

\begin{corollary}[Strict Expressiveness]
\label{cor:ordering}
The following strict containments hold:
\[
\mathrm{CNN} \subsetneq \mathrm{ITNet}, \quad
\mathrm{Attn} \subsetneq \mathrm{ITNet}, \quad
\mathrm{RNN} \subsetneq \mathrm{ITNet}, \quad
\mathrm{CNN} \cup \mathrm{Attn} \cup \mathrm{RNN} \subsetneq \mathrm{ITNet}.
\]
\end{corollary}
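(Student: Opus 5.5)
Each containment splits into two parts: inclusion, which comes straight from the unification theorems, and strictness, which needs an explicit ITNet operator lying outside the smaller class. For inclusion, I would take $\mathrm{CNN}$, $\mathrm{Attn}$ and $\mathrm{RNN}$ to mean the operator classes realised by convolution with a pointwise linear term, scaled dot-product (multi-head) self-attention, and causal recurrent systems (linear, LSTM, GRU, S4, Mamba). Then Theorem~\ref{convolution}, Theorem~\ref{attention} and Theorem~\ref{thm:rnn1}(a)--(d) each give a kernel $\kap$ and residual $W_\theta$ under which $\Kop$ coincides exactly with a member of the class. Where an exact kernel is not built, Theorem~\ref{thm:universal} supplies approximation to arbitrary accuracy. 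So each class sits inside ITNet, read as the closure of realisable operators in the uniform norm on compact $\mathcal{U}_c$. I would fix this convention at the outset, because the containments are only meaningful at that level.

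For strictness I would give one witness per class, using a structural invariant the class preserves and ITNet breaks.
\begin{itemize}
\item \textbf{CNN.} Every convolution-plus-residual operator commutes with translations on $\Omega=\T^s$, or on $\R^s$ away from boundaries, and is linear in $u$. The kernel $\kap=\exp(-\|x\|_2^2)\,\Id$ uses absolute position, so the resulting operator is not translation-equivariant. A feature-dependent kernel gives a second witness, since the operator is then nonlinear in $u$.
\item \textbf{RNN.} Theorem~\ref{thm:rnn1}(e) already provides the witness: the bidirectional operator $S$ depends on $u(s)$ for $s>t$, while causal systems do not.
\item \textbf{Attn.} Attention, in any head count, is permutation-equivariant in the discrete case. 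Without positional encoding it is therefore insensitive to the positions $x,y$ themselves. The position-only kernel $\kap=w(x-y)\Id$ with nonconstant $w$ is not permutation-equivariant, so it cannot be any attention operator.
\item \textbf{Attn, second witness.} To cover attention with positional encodings added to $u$, I would use a cleaner invariant. For every $u$, attention output at $x$ is a convex combination of the values $W_Vu(y)$, plus $W_\theta u(x)$ if a residual is allowed. Take the kernel $\kap=\Id$ on a two-point domain and a constant input $u\equiv c$. The output is $c+W_\theta c$ with a fixed scaling, and this does not fit the attention form for all $c$ once $W_\theta$ is fixed. I would check this with explicit numbers.
\end{itemize}

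For the union, I need one operator outside all three classes simultaneously. I would build a kernel that mixes the three violations:
\[
\kap(x,y,u(x),u(y)) = \bigl(e^{-|x-y|}+\|x\|_2^2\bigr)\,\bigl(1+u_1(x)u_1(y)\bigr)\,\Id,
\]
with $W_\theta=0$, on $\Omega=[0,T]$. This operator depends on the future of $u$, so it is not an RNN. It depends on absolute position, so it is neither a CNN nor a position-free attention operator. Its nonlinearity in $u$ is unnormalised, so the row weights are not forced to sum to one, and it is not a softmax attention operator.

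The main obstacle is the attention strictness once positional encodings are allowed. Attention with learned positional embeddings is itself highly expressive, and with enough heads and layers it is arguably universal, so separating it from ITNet demands care. I would handle this by stating the corollary at the single-layer level, where the convex-hull invariant is rigorous: attention output minus residual lies in the convex hull of the values $\{W_Vu(y)\}$. The witness is a kernel whose output leaves this hull for some $u\in\mathcal{U}_c$, for instance $\kap=2\Id$ on a normalised measure. The remaining care is to rule out matching it by rescaling $W_V$, using a second input $u$ that forces contradictory choices of $W_V$.
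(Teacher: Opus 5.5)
Your skeleton matches the paper's. Inclusions come from Theorems~\ref{convolution}--\ref{thm:rnn1}, and strictness comes from class invariants: causality via the bidirectional $S$, permutation equivariance as in the paper's $T_{\mathrm{pos}}$ argument, and linearity and translation equivariance for convolution. Your smooth absolute-position kernel is cleaner than the paper's Dirac-delta witness. Your single union witness supplies something the paper's proof omits, since three separate strict inclusions do not imply $\mathrm{CNN}\cup\mathrm{Attn}\cup\mathrm{RNN}\subsetneq\mathrm{ITNet}$.

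\textbf{The gap.} The step you flag as the main obstacle fails, because both proposed attention witnesses are themselves attention operators. With $W_Q=0$ every logit vanishes, so $Z(x)=\mu(\Omega)=1$, $\alpha\equiv 1$, and attention computes $W_V\int_\Omega u\,d\mu$. Hence a constant kernel $\kappa\equiv A$ on a normalised measure, with residual $W_\theta$, is exactly attention with $W_V=A$ and the same residual. So $\kappa=\Id$ on two points and $\kappa=2\Id$ both lie in $\mathrm{Attn}$. No second input can force contradictory choices of $W_V$, because $W_V=2\Id$ works for every $u$. The convex-hull invariant cannot detect a constant rescaling, since the hull is that of $\{W_Vu(y)\}$ with $W_V$ free.

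\textbf{The right invariant.} What normalisation really fixes is that the row mass $\int_\Omega\alpha(x,y)\,d\mu(y)$ equals $1$ at \emph{every} $x$. A witness therefore needs $x$-dependent mass $\int_\Omega\kappa(x,y)\,d\mu(y)$, such as the paper's unnormalised Gaussian $G_\ell$ on $[0,1]^s$ or $(1+\|x\|_2^2)\Id$. On a constant input $u\equiv v$ its output varies with $x$, whereas PE-free attention with residual returns the $x$-independent vector $(W_V+W_\theta)v$. The same constant-input test is the rigorous way to exclude your union witness from $\mathrm{Attn}$, in place of the appeal to an ``unnormalised nonlinearity''.

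\textbf{Positional encodings.} Neither this test nor permutation equivariance survives additive positional encodings, because $u+e$ is no longer constant or position-blind. Covering that case needs a new argument, which neither you nor the paper supplies. One route is to show that an attention layer agreeing with a linear operator on a sufficiently rich $\mathcal{U}_c$ must use content-independent row-stochastic weights; $x$-dependent row mass then separates again.

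\textbf{Simplest repair.} The class in Theorem~\ref{attention} is PE-free, so drop the second witness and keep the first. For that witness, use $n\ge 3$ points (say $0,1,2$ with $w(1)\neq w(2)$, transposing the first two) or an asymmetric $w$. On two points with even $w$ the swap is an isometry and $w(x-y)\Id$ is permutation-equivariant. This is also why the paper's own two-point $T_{\mathrm{pos}}$ computation does not go through.
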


\noindent\textit{Proof.} The inclusions follow from Theorems~\ref{convolution}--\ref{thm:rnn1}. Strictness is established via explicit counterexample operators (Appendices~\ref{app:proof_conv}--\ref{app:proof_uat}) that lie outside each classical family but are representable by ITNet. \qed

\begin{theorem}[Kernel Recovery Under Data Symmetry]
\label{thm:recovery}
Let the data distribution $\mathcal{D}$ be translation-invariant: $\tau_{\delta}: u(x) \mapsto u(x-\delta)$ satisfies $\mathcal{D} \circ \tau_{\delta} = \mathcal{D}$ for all $\delta$. Decompose $\kappa_{\theta} = \kappa_{\theta}^{\mathrm{TI}} + \kappa_{\theta}^{\perp}$ where $\kappa_{\theta}^{\mathrm{TI}}(x,y,\cdot,\cdot) = \bar{\kappa}_{\theta}(x-y,\cdot,\cdot)$ is the translation-invariant component. Then under gradient flow,
\begin{equation}
\left\|\frac{\partial \mathcal{L}}{\partial \kappa_{\theta}^{\perp}}\right\|_F = 0 \quad \text{at every iterate},
\end{equation}
so gradient flow converges to a translation-invariant kernel, recovering the convolutional special case (Theorem~\ref{convolution}).
\end{theorem}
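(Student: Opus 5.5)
The plan is to read the statement as an invariance property of gradient flow on the space of kernels, and to prove it with a symmetry argument. It needs two caveats made explicit. First, the translation action must be well defined. I would take $\Omega=\T^s$ (the torus, periodic boundary) or $\Omega=\R^s$ with Lebesgue $\mu$, so that $\mu$ is translation invariant. Second, the identity $\partial\mathcal{L}/\partial\kappa_\theta^{\perp}=0$ can only hold \emph{at every iterate} along a trajectory that starts in the translation-invariant subspace. So I would prove that this subspace is invariant under the flow and that the flow started there stays there; any limit is then translation invariant. For a general non-invariant initialization I would instead show that $\kappa_\theta^{\perp}$ receives no gradient signal that is coherent across translations, by the averaging argument below.

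\textbf{Step 1 (group action on kernels).} For $\delta\in\R^s$ I would define $(T_\delta\kappa)(x,y,a,b)=\kappa(x+\delta,y+\delta,a,b)$. The fixed-point set of all $T_\delta$ is exactly the set of kernels of the form $\bar\kappa(x-y,a,b)$. On the torus, $P\kappa=\int_{\T^s}T_\delta\kappa\,d\delta$ is the orthogonal projection onto this set in the $L^2$ inner product used to define $\partial\mathcal{L}/\partial\kappa$. I would then set $\kappa^{\mathrm{TI}}=P\kappa$ and $\kappa^{\perp}=(I-P)\kappa$.

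\textbf{Step 2 (equivariance of the operator and invariance of the loss).} A change of variables $y\mapsto y-\delta$ in Eq.~\eqref{eq:itnet_operator}, using invariance of $\mu$, gives
\[
\mathcal{K}_{T_\delta\kappa}[u]=\tau_{-\delta}\,\mathcal{K}_{\kappa}[\tau_\delta u],
\]
and likewise through the stack \eqref{eq:itnet_stack}, since LN and the FFN are pointwise. I would assume the per-sample loss is translation invariant, for example a pooled classification loss, or an equivariant target paired with a translation-invariant pointwise loss. Combining this with $\mathcal{D}\circ\tau_\delta=\mathcal{D}$ gives $\mathcal{L}(T_\delta\kappa)=\mathcal{L}(\kappa)$. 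Differentiating in $\kappa$, and using that $T_\delta$ is unitary on $L^2$, yields $\nabla\mathcal{L}(T_\delta\kappa)=T_\delta\nabla\mathcal{L}(\kappa)$.

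\textbf{Step 3 (vanishing of the orthogonal gradient).} If $\kappa$ is translation invariant, Step 2 gives $T_\delta\nabla\mathcal{L}(\kappa)=\nabla\mathcal{L}(\kappa)$ for all $\delta$. Hence $\nabla\mathcal{L}(\kappa)$ lies in the range of $P$, so $(I-P)\nabla\mathcal{L}=\partial\mathcal{L}/\partial\kappa^{\perp}=0$. The translation-invariant subspace is therefore invariant under the vector field $-\nabla\mathcal{L}$. Assuming $\nabla\mathcal{L}$ is locally Lipschitz (true for smooth activations on compact data), uniqueness of solutions of the gradient flow ODE implies that a trajectory started in this subspace stays in it. The projected gradient is thus zero at every iterate, and every limit point is of the form $\bar\kappa(x-y,\cdot,\cdot)$. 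For the final clause I would note that reaching the pure Theorem~\ref{convolution} form $w_\theta(x-y)\Id$ additionally requires the content dependence to vanish; I would state that as a further restriction rather than claim it.

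\textbf{Main obstacle.} The hard step is reconciling the function-space argument with the actual parameterization. Gradient flow runs on $\theta$, not on $\kappa$, and the MLP of Eq.~\eqref{eq:kernel_input} sees the absolute positions $\gamma(x),\gamma(y)$. So the set of $\theta$ giving translation-invariant kernels is not obviously invariant under the parameter flow: the pushforward of the $\theta$-gradient is $J J^\top\nabla_\kappa\mathcal{L}$, where $J=\partial\kappa/\partial\theta$, and this vector can leave the translation-invariant subspace. The first thing I would try is to require the parameterization to be equivariant. That means $T_\delta$ must lift to an action on $\theta$, for example through a phase shift of the Fourier features on the torus, which makes $J$ commute with the action so the Step 3 argument transfers. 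Failing that, I would prove the result for the nonparametric kernel flow, or for the flow restricted to the weights on the relative-position inputs, and state this restriction explicitly.
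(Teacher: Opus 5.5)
Your argument is correct and uses the same symmetry idea as the paper, but the decisive step is different, and yours is the version that actually holds.

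\textbf{The paper's step.} The paper averages the kernel gradient over translations and notes that the average depends only on $x-y$. It then asserts, citing loss invariance and Schur orthogonality, that the gradient \emph{equals} its translation average at every iterate, for arbitrary $\kappa_\theta$.

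\textbf{Your step.} Invariance $\mathcal{L}\circ T_\delta=\mathcal{L}$ yields only your equivariance $\nabla\mathcal{L}(T_\delta\kappa)=T_\delta\nabla\mathcal{L}(\kappa)$. That guarantees a translation-invariant gradient only at translation-invariant kernels. So your invariant-subspace argument via ODE uniqueness is what the symmetry really gives. The paper tacitly falls back to it in its last sentence (``if the kernel is initialized in (or near) the TI subspace, it remains there'').

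\textbf{What the paper leaves unaddressed.} The paper works in kernel space throughout and identifies $\nabla_\theta\mathcal{L}$ with $\partial\mathcal{L}/\partial\kappa_\theta$. It never addresses the absolute inputs $\gamma(x),\gamma(y)$. It also omits the need for a translation-invariant readout, since labels are unchanged under $\tau_\delta$. Finally, it does not note that a content-dependent $\bar\kappa_\theta(x-y,\cdot,\cdot)$ is not the kernel $w_\theta(x-y)\mathbf{I}_d$ of Theorem~\ref{convolution}. Your caveats on each point are warranted. The paper's route is shorter, but it claims more than it proves.

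\textbf{Refinement 1.} Your fallback for non-invariant initialization (``no gradient signal coherent across translations'') amounts, as stated, to $P(I-P)\nabla\mathcal{L}=0$. That holds for every vector and constrains nothing; it is exactly the step the paper misuses. Drop it.

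\textbf{Refinement 2.} Your equivariant lift works if $\mathbf{B}$ has integer entries, so that $\gamma$ descends to $\T^s$, with $\|x-y\|_2$ read as torus distance. With the paper's Gaussian $\mathbf{B}$ on a bounded domain there is no exact symmetry. With integer $\mathbf{B}$, translation by $\delta$ rotates each $(\sin,\cos)$ pair of $\gamma(x)$ and $\gamma(y)$ by $2\pi b_k^\top\delta$, where $b_k$ is the $k$-th row of $\mathbf{B}$. This is an orthogonal action on the first-layer weight blocks multiplying $\gamma(x)$ and $\gamma(y)$. Its fixed set is exactly where those blocks vanish, up to zero rows of $\mathbf{B}$. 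So your two fallbacks coincide, and the clean parametric statement is: under the population loss, the absolute-position weights receive zero gradient whenever they are zero.
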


\textit{Proof Sketch.} Translation invariance of $\mathcal{D}$ implies $\mathcal{L}(\theta) = \mathbb{E}[\ell(\mathcal{K}_{\theta}[\tau_{\delta}u], y)]$. Averaging the gradient over all translations projects onto the translation-invariant subspace, annihilating the $\kappa_{\theta}^{\perp}$ component. 
Full proof in Appendix~\ref{app:proof_recovery}.
\qed


\textbf{Complexity and Efficient Approximation:}
Computing Eq.~\eqref{eq:itnet_operator} exactly costs $O(n^2 d^2)$ time. Two approximations reduce this (\S\ref{sec:implementation}): Monte Carlo sampling ($M \ll n$ keys per query, $O(nMd^2)$, unbiased gradients) and low-rank kernel factorization ($\kap \approx \Phi_\theta^\top \Psi_\theta$ with rank $r \ll d$, $O(ndr)$, linear in $n$). In practice we use tiled kernel fusion for $n \leq 512$ and MC or low-rank beyond. The full complexity comparison with CNNs, Transformers, and Mamba is in Table~\ref{tab:complexity} (Appendix~\ref{app:impl_details}).


\section{Efficient and Scalable Implementation}
\label{sec:implementation}
The ITNet operator is computationally intensive due to its MLP-based kernel and non-bilinear structure. 
We address this with strategies: tiled kernel fusion (\S\ref{sec:tiling}), Monte Carlo sampling (\S\ref{sec:mc_impl}), low-rank factorization (\S\ref{sec:lowrank_impl}), and modality-specific encoders (\S\ref{sec:encoders}).

\subsection{Tiled Kernel Fusion}
\label{sec:tiling}

Following \cite{dao2022flashattention}, we tile the computation into blocks that fit in on-chip Static Random-Access Memory (SRAM), fusing kernel MLP evaluation, matrix-vector product, and integral accumulation into a single Triton kernel~\cite{tillet2019triton}. The $n \times n$ kernel matrix is never materialized; only a $B_q \times B_k$ tile resides in SRAM, where $B_q$ and $B_k$ denote the query and key tile sizes, respectively. Algorithm~\ref{alg:tiled_full} describes the tiled forward pass. Phase~1 auto-tunes block sizes $(B_q^*, B_k^*)$ satisfying $(B_q + B_k) \cdot d \leq S_\mathrm{SRAM}$, where $S_\mathrm{SRAM}$ is the available on-chip memory capacity. Phase~2 executes tiled integration: for each query tile, $\kappa_\theta$ is evaluated and $K_{pq} \cdot U_j[q] \cdot \omega_j$ is accumulated in SRAM, where $K_{pq} = \kappa_\theta(X_i[p], X_j[q], U_i[p], U_j[q])$ is the
kernel evaluation at tile indices $(p, q)$ and $\omega_j = \mu(\{x_j\})$ is the quadrature weight (typically $1/n$), reducing High Bandwidth Memory (HBM) reads. During backpropagation, we recompute $\kappa_\theta$ tile-by-tile rather than storing the kernel matrix~\cite{chen2016training}; this increases FLOPs by $\approx 3\times$ but reduces peak memory from $O(n^2 d^2)$ to $O(nd)$. The complete forward algorithm and backward procedure are in Appendix~\ref{app:tiled_forward} and~\ref{app:backward}, respectively.

\subsection{Monte Carlo Stochastic Integration}
\label{sec:mc_impl}

When $n$ is large, we use an importance-weighted Monte Carlo estimator: for each query $x_i$, sample $M \ll n$ keys from a learnable proposal $p_\phi(y \mid x_i)$, with parameters $\phi$ independent of $\theta$:
\begin{equation}
\widehat{\mathcal{K}}_\theta[u](x_i)
= \frac{1}{M} \sum_{m=1}^{M}
\frac{\kappa_\theta(x_i, y_m, u(x_i), u(y_m))\, u(y_m)}{p_\phi(y_m \mid x_i)}
+ W_\theta u(x_i),
\quad y_m \sim p_\phi(\cdot \mid x_i),
\label{eq:is_estimator_clean}
\end{equation}
where $\mu(\Omega) = 1$ (our normalization convention from \S\ref{sec:operator_def}) absorbs the domain volume. The estimator is unbiased: $\mathbb{E}_{y_1,\ldots,y_M}[\widehat{\mathcal{K}}_\theta[u](x_i)] = \mathcal{K}_\theta[u](x_i)$, with variance minimized when the proposal matches the optimal distribution
$p^*(y \mid x_i) \propto
\| \kappa_\theta(x_i, y, u(x_i), u(y))\, u(y) \|_2$. The resulting complexity is $O(nMd^2)$ with $M \ll n$. 
To ensure unbiased gradients, we decouple computation and sampling: (i)~$p_\phi$ is parameterized independently of $\kappa_\theta$, so $\nabla_\theta$ treats sampled positions and importance weights as constants; (ii)~$\phi$ is trained via an auxiliary cross-entropy loss that drives $p_\phi$ toward a self-normalized approximation of $p^*$. Given $M$ sampled keys, define the empirical target
$\hat{p}^*(y_m \mid x_i)
= \| \kappa_\theta(x_i, y_m, u(x_i), u(y_m))\, u(y_m) \|_2
\,/\, Z_i$,
where $Z_i$ normalizes over the $M$ samples:
\begin{equation}
\hspace{-.2cm}
\mathcal{L}_{\mathrm{prop}}(\phi)
\;=\;
- \sum_{i=1}^{n} \sum_{m=1}^{M}
\hat{p}^*(y_m \mid x_i)
\;\log p_\phi(y_m \mid x_i),
\hspace{.1cm}
Z_i = \sum_{m'=1}^{M}
\bigl\| \kappa_\theta(x_i, y_{m'}, u(x_i), u(y_{m'}))\, u(y_{m'})
\bigr\|_2.
\label{eq:proposal_loss_clean}
\end{equation}
This is the cross-entropy $H(\hat{p}^*, p_\phi)$ - a biased but consistent estimator of $\mathrm{KL}(p^* \| p_\phi) + \mathrm{const}$ that avoids computing the intractable normalization constant of $p^*$. The gradient $\nabla_\phi \mathcal{L}_{\mathrm{prop}}$ treats $\hat{p}^*$ as a fixed target (stop-gradient on $\theta$). The total objective is:
$\mathcal{L}_{\mathrm{total}}
= \mathcal{L}_{\mathrm{task}}
+ \lambda\, \mathcal{L}_{\mathrm{prop}},$ where $\lambda$ = 0.1.
At evaluation, sampling is replaced with deterministic $k$-means anchors in position space. This design separates \emph{what to compute} ($\theta$) from \emph{where to sample} ($\phi$), ensuring unbiased training. A formal analysis of estimator variance and the optimal proposal distribution is provided in Appendix~\ref{app:mc_variance}.

\subsection{Learned Low-Rank Kernel Factorization}
\label{sec:lowrank_impl}
For long sequences, factorize $\kappa_\theta \approx \Phi_\theta(x,u(x))^\top \Psi_\theta(y,u(y))$ with rank $r\ll d$. The integral decouples:
$(\mathcal{K}_\theta[u])(x_i) \approx \Phi_\theta(x_i,u_i)^\top \underbrace{\sum_{j=1}^n \omega_j\,\Psi_\theta(x_j,u_j)\,u_j}_{Z} + W_\theta u_i,$
where $Z$ is computed once in $O(nrd)$, each query in $O(rd)$, total $O(nrd)$. Error satisfies $\|\kappa_\theta - \Phi_\theta^\top \Psi_\theta\|_F \leq \|\kappa_\theta\|_* / \sqrt{r}$; $r=32$ gives $<1\%$ relative error on ImageNet-1K; a formal error bound is given in Appendix~\ref{app:rank_bound}.

\subsection{Modality-Specific Encoders}
\label{sec:encoders}


The ITNet operator acts on a signal $u : \Omega \to \R^d$, where the encoder defines the domain $\Omega$ and initial features $u^{(0)}$. 
Positional inputs are encoded via Fourier features $\gamma(\cdot)$.
\newline
\textbf{Image encoder ($s{=}2$):}
An image $I \in \R^{H \times W \times C}$ is divided into $P{\times}P$ patches ($P{=}16$), yielding $N_p = HW/P^2$ tokens with positions $x_{ij} \in [0,1]^2$. Each token is formed by combining a linear patch embedding with Fourier positional features, followed by a projection to $\R^d$. A learnable \texttt{[CLS]} token is added for global aggregation. A uniform measure over patches is used.\newline
\textbf{Text encoder ($s{=}1$):}
A sequence $(t_1,\ldots,t_n)$ is embedded via a token embedding matrix and Fourier positional encoding of normalized indices $k/n$. Relative positions are provided directly to the kernel, enabling distance-aware interactions without explicit positional biases. A causal mask is applied for autoregressive settings. The measure is uniform over tokens.\newline
\textbf{Point clouds encoder ($s{=}3$):}
A point cloud $\{(p_k, f_k)\}_{k=1}^n$ with $p_k \in \R^3$ is mapped to features via a linear embedding (or positional encoding for coordinates-only inputs). The kernel receives raw coordinates and pairwise distances $\|p_k - p_j\|_2$, enabling implicit neighborhood modeling without fixed radius constraints. Outputs are aggregated via pooling for classification.\newline
\textbf{Multimodal encoder (image + text):}
Image patches and text tokens are combined into a joint domain $\Omega_{\mathrm{img}} \cup \Omega_{\mathrm{txt}}$. Text positions are embedded in the 2-D space to separate modalities, and features are augmented with modality embeddings. A single ITNet processes all positions, allowing intra- and cross-modal interactions to be learned directly without explicit fusion modules.
(see Appendix~\ref{app:encoders} for additional variants and Appendix~\ref{app:impl_details} for implementation details).

\section{Experiments and Results}
\label{sec:experiments}
We evaluate ITNet across four modalities: image classification (ImageNet-1K~\cite{imagenet15russakovsky}), natural language understanding (GLUE~\cite{wang2018glue}), 3-D point cloud classification (ModelNet40~\cite{sun2022benchmarking}), and multimodal reasoning (VQA\,v2~\cite{goyal2017vqa}, NLVR2~\cite{suhr2019nlvr2}). 
We consider three model scales: a small model (ITNet-S: 22M parameters; 12 layers, width 384, 6 heads), a base model (ITNet-B: 86M; 12 layers, width 768, 12 heads), and a large model (ITNet-L: 307M; 24 layers, width 1024, 16 heads). Full model configurations are provided in Table~\ref{tab:app_configs}. Results for ITNet-(S, B, L) across all benchmarks are reported as mean $\pm$ standard deviation over 3 runs with different random initializations.

\textbf{ImageNet-1K}~\cite{imagenet15russakovsky}:
We report top-1 accuracy on the validation set (training details in Appendix~\ref{app:train_imagenet}). As shown in Table~\ref{tab:imagenet}, ITNet consistently improves over both convolutional and transformer baselines across scales, indicating that jointly modeling content and position provides a more effective inductive bias for visual representation learning.


\begin{table}[h]
\centering
\caption{ImageNet-1K top-1 accuracy. ITNet results are mean $\pm$ std over 3 random seeds.}
\label{tab:imagenet}
\scriptsize
\renewcommand{\arraystretch}{0.72}
\begin{tabular}{lcccc}
\toprule
\textbf{Method} & \textbf{Params} & \textbf{GFLOPs} & \textbf{Top-1 (\%)} & \textbf{Kernel type} \\
\midrule
ResNet-50~\cite{he2016deep} & 25M & 4.1 & 79.8 & Position \\
ConvNeXt-T~\cite{liu2022convnet} & 28M & 4.5 & 82.1 & Position, local \\
ConvNeXt-B~\cite{liu2022convnet} & 89M & 15.4 & 83.8 & Position, local \\
DeiT-S~\cite{touvron2021deit} & 22M & 4.6 & 79.8 & Content \\
DeiT-B~\cite{touvron2021deit} & 86M & 17.5 & 83.4 & Content \\
Swin-T~\cite{liu2021swin} & 28M & 4.5 & 81.3 & Content, local \\
Swin-B~\cite{liu2021swin} & 88M & 15.4 & 83.5 & Content, local \\
Swin-V2-B~\cite{liu2022swinv2} & 88M & 15.6 & 84.2 & Content, local \\
DeiT-III-B~\cite{touvron2022deitiii} & 86M & 17.5 & 83.8 & Content \\
ConvNeXt-V2-B~\cite{woo2023convnext} & 88M & 16.1 & 84.2 & Position, local \\
BiFormer-B~\cite{zhu2023biformer} & 87M & 16.5 & 84.4 & Content (dynamic routing) \\
EfficientVMamba-B~\cite{pei2025efficientvmamba} & 85M & 15.8 & 84.0 & SSM-based \\
\midrule
ITNet-S & 22M & 4.8 & 81.4$\pm$0.2 & Content + position \\
ITNet-B & 86M & 17.9 & 83.9$\pm$0.1 & Content + position \\
ITNet-L & 307M & 61.6 & \textbf{85.8}$\pm$0.1 & Content + position \\
\bottomrule
\end{tabular}
\end{table}


\textbf{GLUE~\cite{wang2018glue}}:
We evaluate ITNet on the GLUE benchmark~\cite{wang2018glue}. ITNet-B pre-trained on BookCorpus and English Wikipedia using masked language modeling for 500K steps (sequence lengths 128 and 512), matching the BERT-base~\cite{devlin2019bert} pre-training setup. We then fine-tune on each task and report standard GLUE metrics, using task-specific evaluation protocols.
\begin{table}[t]
\centering
\caption{GLUE development-set scores. ITNet results are mean $\pm$ standard deviation over 3 random seeds. Models with comparable pre-training data (BookCorpus + Wikipedia, $\sim$16GB) are shown.}
\label{tab:glue}
\tiny
\renewcommand{\arraystretch}{0.72}
\resizebox{\columnwidth}{!}{%
\begin{tabular}{lcccccccccc}
\toprule
\textbf{Model} & \textbf{Params} & \textbf{CoLA} & \textbf{SST-2} & \textbf{MRPC} & \textbf{STS-B} & \textbf{QQP} & \textbf{MNLI} & \textbf{QNLI} & \textbf{RTE} & \textbf{Average} \\
\midrule
BERT-base~\cite{devlin2019bert} & 110M & 52.1 & 93.5 & 88.9 & 85.8 & 71.2 & 84.6 & 90.5 & 66.4 & 79.1 \\
BERT-large~\cite{devlin2019bert} & 335M & 60.5 & \textbf{94.9} & 89.3 & 86.5 & 72.1 & 86.7 & 92.7 & 70.1 & 81.6 \\
GPT~\cite{radford2018improving} & 117M & 45.4 & 91.3 & 75.7 & 80.0 & 68.5 & 82.1 & 88.1 & 56.0& 73.4 \\
ELECTRA-base~\cite{clark2020electra} & 110M & 59.7 & 93.4 & 86.7 & 87.7 & \textbf{79.1} & 85.8 & 92.7 & 73.1 & 82.3 \\
DeBERTa-base~\cite{he2020deberta} & 150M & 59.6 & 94.7 & 90.1 & 88.9 & 73.4 & 87.5 & 93.1 & 72.5 & 82.5 \\
RoBERTa-base~\cite{liu2019roberta}$^\dagger$ & 125M & \textbf{63.6} & 94.8 & 90.2 & \textbf{91.2} & 73.9 & 87.6 & 92.8 & \textbf{78.7} & \textbf{84.1}\\
\midrule
ITNet-S & 22M & 53.8$\pm$0.6 & 93.1$\pm$0.1 & 88.4$\pm$0.4 & 85.9$\pm$0.3 & 70.8$\pm$0.3 & 83.9$\pm$0.4 & 89.7$\pm$0.2 & 67.2$\pm$0.7 & 79.1$\pm$0.3 \\
ITNet-B & 86M & 57.4$\pm$0.4 & 94.2$\pm$0.1 & 90.1$\pm$0.3 & 87.9$\pm$0.2 & 72.7$\pm$0.2 & 86.1$\pm$0.3 & 91.8$\pm$0.2 & 71.6$\pm$0.5 & 81.5$\pm$0.2 \\
ITNet-L & 307M & 60.2$\pm$0.5 & 94.7$\pm$0.1 & \textbf{91.0}$\pm$0.2 & 89.3$\pm$0.2 & 74.3$\pm$0.2 & \textbf{87.8}$\pm$0.2 & \textbf{93.1}$\pm$0.1 & 74.3$\pm$0.4 & 83.1$\pm$0.2 \\
\bottomrule
\end{tabular}%
}
\parbox{\columnwidth} {\tiny {$^\dagger$RoBERTa uses 160GB pre-training data $10\times$ more than ITNet and BERT. CoLA: Corpus of Linguistic Acceptability; SST-2: Stanford Sentiment Treebank; MRPC: Microsoft Research Paraphrase Corpus; STS-B: Semantic Textual Similarity Benchmark; QQP: Quora Question Pairs; MNLI: Multi-Genre Natural Language Inference; QNLI: Question Natural Language Inference; RTE: Recognizing Textual Entailment; Metrics: Matthews correlation for CoLA, Spearman correlation for STS-B, F1 score for MRPC and QQP, and accuracy for the remaining tasks.
}}
\end{table}
As shown in Table~\ref{tab:glue}, ITNet achieves competitive performance on GLUE under the same pre-training budget. ITNet-B matches larger Transformer baselines, while ITNet-L approaches models trained with substantially more data. Gains are strongest on syntactically challenging tasks (CoLA, RTE), indicating improved modeling of long-range dependencies via explicit positional interactions, while performance on semantic tasks (STS-B, MRPC, QQP) remains comparable. Overall, jointly modeling content and position provides a stronger inductive bias for language understanding.

\textbf{ModelNet40~\cite{wu2015modelnet}:}
We evaluate ITNet on ModelNet40~\cite{wu2015modelnet} with standard preprocessing and report overall accuracy (OA). As shown in Table~\ref{tab:modelnet}, ITNet achieves strong performance, outperforming both content-only and geometry-focused baselines. A parameter-matched variant, ITNet-PC (Point Cloud), remains competitive, indicating that the gains are not solely due to model scale. These results suggest that jointly modeling feature content and geometric relationships is effective for capturing both global shape and local structure.

\begin{table}[h]
\centering
\begin{minipage}[t]{0.48\linewidth}
\centering
\small
\renewcommand{\arraystretch}{0.72}
\caption{ModelNet40 overall accuracy (\%). ITNet results are mean $\pm$ standard deviation over 3 random seeds. ITNet-PC: parameter-matched variant (L=6, d=128, H=4, 3.1M).}
\label{tab:modelnet}
\resizebox{\linewidth}{!}{
\begin{tabular}{lccc}
\toprule
\textbf{Method} & \textbf{Params} & \textbf{Local} & \textbf{OA (\%)} \\
\midrule
PointNet~\cite{qi2017pointnet} & 3.5M & \xmark & 89.2 \\
PointNet++~\cite{qi2017pointnetpp} & 1.5M & \cmark ($K{=}32$) & 91.9 \\
DGCNN~\cite{wang2019dgcnn} & 1.8M & \cmark ($k$-NN) & 92.9 \\
PCT~\cite{guo2021pct} & 2.9M & \xmark & 93.2 \\
PointMLP~\cite{ma2022pointmlp} & 13.2M & \cmark & 94.1 \\
PointNeXt-S~\cite{qian2022pointnext} & 1.4M & \cmark ($K{=}32$) & 93.2 \\
\midrule
ITNet-PC (no local) & 3.1M & \xmark & 92.7$\pm$0.2 \\
ITNet-PC & 3.1M & \cmark ($K{=}16$) & 93.5$\pm$0.2 \\
ITNet-S & 22M & \cmark ($K{=}16$) & 94.0$\pm$0.1 \\
ITNet-B & 86M & \cmark ($K{=}16$) & \textbf{94.6}$\pm$0.1 \\
\bottomrule
\end{tabular}
}
\end{minipage}
\hspace{0.02\linewidth}
\begin{minipage}[t]{0.48\linewidth}
\centering
\small
\caption{VQA\,v2 test-dev and NLVR2 accuracy (\%). ITNet results are mean $\pm$ standard deviation over 3 random seeds.}
\label{tab:multimodal}
\resizebox{\linewidth}{!}{
\begin{tabular}{lccc}
\toprule
\textbf{Method} & \textbf{Params} & \textbf{VQA\,v2} & \textbf{NLVR2} \\
\midrule
ViLT~\cite{kim2021vilt} & 86M & 71.3 & 75.9 \\
UNITER-B~\cite{chen2020uniter} & 110M & 72.8 & 79.5 \\
METER-CLIP~\cite{dou2022meter} & 185M & 77.6 & 81.5 \\
ALBEF~\cite{li2021albef} & 314M & 76.0 & 82.6 \\
BLIP (ViT-L)~\cite{li2022blip} & 385M & 78.3 & 83.5 \\
BLIP (ViT-B)~\cite{li2022blip} & 250M & 77.6 & 82.3 \\
\midrule
ITNet-S & 22M & 74.2$\pm$0.3 & 78.5$\pm$0.3 \\
ITNet-B & 86M & 78.4$\pm$0.2 & 82.1$\pm$0.2 \\
ITNet-L & 307M & \textbf{83.6}$\pm$0.2 & \textbf{84.1}$\pm$0.2 \\
\bottomrule
\end{tabular}
}
\end{minipage}
\end{table}

\textbf{VQA\,v2~\cite{goyal2017vqa} and \textbf{NLVR2}~\cite{suhr2019nlvr2}:}
We evaluate ITNet on VQA\,v2~\cite{goyal2017vqa} and NLVR2~\cite{suhr2019nlvr2} using a joint image-text domain. 
As shown in Table~\ref{tab:multimodal}, ITNet-B achieves competitive performance with specialized vision-language models despite fewer parameters and no dedicated image-text pre-training. 
This indicates that cross-modal interactions can be learned directly through a shared kernel, without explicit fusion mechanisms. 
Performance scales consistently with model size, supporting the effectiveness of the unified operator for multimodal reasoning.

\textbf{Ablations:}
We ablate kernel inputs across modalities (Table~\ref{tab:ablation_kernel_full}) and observe that performance is maximised only when content and position are jointly modelled, with complementary contributions that neither captures alone. Their relative importance is modality-dependent--spatial cues dominate in vision, while content interactions are more critical for language and 3D data--showing that the kernel adapts its inductive bias. Interaction structure is also key: removing the Hadamard term consistently degrades performance, showing that elementwise feature interactions capture relationships beyond independent features, while relative positional terms outperform absolute-only inputs, emphasizing the value of modeling relationships between positions. Single-group and constant-kernel variants perform substantially worse, confirming that neither pure content nor pure geometry is sufficient. Overall, ITNet's gains arise from learning unified, content and position-dependent interactions.
\begin{table}[h]
\centering
\caption{Kernel input ablation across all modalities (ITNet-B). \cmark = included, \xmark = excluded. Results: ImageNet-1K top-1 (\%), ModelNet40 OA (\%), GLUE avg, VQA\,v2 test-dev (\%), NLVR2 (\%).}
\label{tab:ablation_kernel_full}
\scriptsize
\renewcommand{\arraystretch}{0.78}
\setlength{\tabcolsep}{2.5pt}
\begin{tabular}{lccccccccccc}
\toprule
\textbf{Config} & $x,y$ & $x{-}y$ & $\|x{-}y\|$ & $u_x$ & $u_y$ & $u_x{\odot}u_y$ & \textbf{ImageNet-1K} & \textbf{ModelNet40} & \textbf{GLUE} & \textbf{VQA\,v2} & \textbf{NLVR2} \\
\midrule
Full (all 7 groups) & \cmark & \cmark & \cmark & \cmark & \cmark & \cmark & \textbf{83.9} & \textbf{94.6} & \textbf{81.5} & \textbf{78.4} & \textbf{82.1} \\
No Hadamard $u_x{\odot}u_y$ & \cmark & \cmark & \cmark & \cmark & \cmark & \xmark & 83.2 & 94.1 & 80.9 & 76.9 & 81.6 \\
No distance $\|x{-}y\|$ & \cmark & \cmark & \xmark & \cmark & \cmark & \cmark & 83.4 & 94.2 & 80.3 & 76.3 & 80.9 \\
No relative $x{-}y$ (nor dist.) & \cmark & \xmark & \xmark & \cmark & \cmark & \cmark & 83.1 & 93.6 & 80.1 & 76.1 & 80.8 \\
Content only & \xmark & \xmark & \xmark & \cmark & \cmark & \cmark & 82.3 & 93.6 & 79.8 & 75.2 & 80.5 \\
Content w/o Hadamard & \xmark & \xmark & \xmark & \cmark & \cmark & \xmark & 80.3 & 92.8 & 78.8 & 74.5 & 79.7 \\
Position only & \cmark & \cmark & \cmark & \xmark & \xmark & \xmark & 81.0 & 93.0 & 78.5 & 74.0 & 79.1 \\
Only absolute $x, y$ & \cmark & \xmark & \xmark & \xmark & \xmark & \xmark & 79.5 & 91.8 & 77.5 & 73.2 & 78.3 \\
Only relative $x{-}y$ & \xmark & \cmark & \xmark & \xmark & \xmark & \xmark & 76.6 & 88.8 & 74.8 & 71.0 & 76.1 \\
Only distance $\|x{-}y\|$ & \xmark & \xmark & \cmark & \xmark & \xmark & \xmark & 76.8 & 89.1 & 75.0 & 71.2 & 76.3 \\
Only $u_x$ & \xmark & \xmark & \xmark & \cmark & \xmark & \xmark & 79.8 & 92.5 & 78.5 & 74.0 & 79.1 \\
Only $u_x{\odot}u_y$ & \xmark & \xmark & \xmark & \xmark & \xmark & \cmark & 78.7 & 91.9 & 77.1 & 72.8 & 78.2 \\
None (constant kernel) & \xmark & \xmark & \xmark & \xmark & \xmark & \xmark & 75.7 & 87.7 & 74.0 & 70.0 & 75.0 \\
\bottomrule
\end{tabular}
\end{table}
Compared to two-stream variants with cross-attention or concatenation fusion, joint-domain ITNet achieves higher performance with fewer parameters, showing that cross-modal interactions are better learned within a shared kernel. Additional ablations (Appendix~\ref{app:ablations}) indicate that performance is robust to architectural choices, with gains saturating at moderate capacity (Tables~\ref{tab:ablation_kappa}, \ref{tab:ablation_layers}, \ref{tab:ablation_fourier}). For point clouds, positional encoding and local aggregation provide complementary benefits (Table~\ref{tab:ablation_pc}), while balanced modality weighting improves multimodal performance (Table~\ref{tab:ablation_measure_full}). 
Detailed system-level results across all modalities (Appendix~\ref{app:efficiency}, Tables~\ref{tab:wallclock}, \ref{tab:rank_sweep}, \ref{tab:baseline_comparison}, and \ref{tab:memory_breakdown}) show that while exact ITNet incurs modest overhead, Monte Carlo and low-rank variants achieve higher throughput and significantly lower memory.

\section{Discussion}
\label{discussion}

Our results support a central hypothesis: modeling interactions jointly over content and position is more expressive than modeling them separately. Across modalities, the ITNet benefits from conditioning on both feature similarity and geometry, suggesting that common inductive biases - locality in CNNs, content based attention in Transformers, and causality in sequence models are restricted instances of a more general interaction mechanism.
Ablations show that the relative importance of content and position is modality dependent: spatial structure dominates in vision, while content interactions are more critical in language and 3D data. Rather than fixing these biases, ITNet adapts through the learned kernel, enabling cross-domain generalization.
ITNet models multiplicative content–position interactions, allowing spatial relationships to depend on features. This unifies and extends positional encoding, attention, and convolution within a single operator. 
Multimodal results show that cross-modal interactions can emerge directly from a shared kernel over a joint domain, removing the need for explicit fusion modules while maintaining strong performance.

Despite these advantages, limitations remain and define important directions for future work. First, scaling ITNet to billion-parameter regimes introduces challenges in optimization stability and kernel evaluation cost; developing more efficient kernel parameterizations and training strategies is a key next step. Second, while the framework naturally supports causal structure, we have not yet evaluated it on autoregressive generation tasks, which provide the most direct test of the causal kernel; extending ITNet to long-context language modeling is an important future direction. 
Third, the joint-domain formulation increases training cost in multimodal settings due to end-to-end coupling; improving efficiency through modular or partially factorized training remains an open avenue.
\newline
\textit{Toward generative ITNet:}
The kernel formulation provides a path to autoregressive modeling by enforcing causality in the interaction function. In this setting, the operator admits efficient factorizations that reduce generation cost to linear time, matching state-space models while retaining the flexibility of attention. Extending to generative benchmarks would test a unified framework for bidirectional understanding and sequential generation.



\section{Conclusion}
\label{sec:conclusion}
We introduced ITNet, a neural architecture based on a learnable integral operator whose kernel depends jointly on positions and features. Within this framework, convolution, self-attention, and recurrence arise as exact special cases, and the resulting function class forms a universal operator approximator. Empirically, a single ITNet matches or exceeds specialized models across modalities, while scalable approximations make the operator practical. More broadly, our results suggest that the diversity of neural architectures reflects fixed assumptions about interaction structure. By learning the interaction rule directly, ITNet provides a unified view in which locality, global context, and sequential dynamics emerge from a common mechanism. This points toward general-purpose, modality-agnostic architectures where interaction patterns are learned rather than predefined.



\newpage
\bibliography{ref}



\newpage
\section*{Appendix Overview}
\label{app:overview}

This appendix is organized as follows for ease of navigation and reference.

\noindent\textbf{Appendix \ref{app:notation}: Notation Reference}
\begin{itemize}[leftmargin=1.8em, itemsep=0pt, topsep=2pt]
    \item Table~\ref{tab:notation1}: Notation for spaces, signals, ITNet operator, and kernel parameterization.
    \item Table~\ref{tab:notation2}: Notation for convolution, self-attention, and recurrence (Theorems~1--3).
    \item Table~\ref{tab:notation3}: Notation for universal approximation, kernel recovery, implementation, and backpropagation.
\end{itemize}

\noindent\textbf{Appendix \ref{sec:related}: Related Work} - comparison with prior methods and positioning of ITNet.

\noindent\textbf{Appendix \ref{app:proof_conv}: Proof of Theorem 1 (Convolution as a Special Case)}
\begin{itemize}[leftmargin=1.8em, itemsep=0pt, topsep=2pt]
    \item Definition~\ref{def:conv_continuous}: Continuous convolution.
    \item Definition~\ref{def:conv_discrete}: Discrete convolution on regular grid.
    \item Assumption~\ref{ass:regularity}: Kernel regularity conditions.
    \item Theorem~\ref{thm:conv}: Full statement (Parts a--d).
    \item Subsection~\ref{sec:proof_a_cnn}: Proof of Part (a) -- Continuous convolution.
        \begin{itemize}[leftmargin=1.8em, itemsep=0pt, topsep=0pt]
            \item Lemma~\ref{lem:scalar_id}: Scalar-identity-vector product.
            \item Lemma~\ref{lem:young}: Young's convolution inequality.
        \end{itemize}
    \item Subsection~\ref{sec:proof_b_cnn}: Proof of Part (b) -- Discrete convolution.
    \item Subsection~\ref{sec:proof_c}: Proof of Part (c) -- Strict inclusion.
        \begin{itemize}[leftmargin=1.8em, itemsep=0pt, topsep=0pt]
            \item Proposition~\ref{prop:conv_linear}: Convolution is linear.
            \item Proposition~\ref{prop:T_nonlinear}: Witness operator $T$ is nonlinear.
            \item Proposition~\ref{prop:conv_equivariant}: Convolution is translation-equivariant.
            \item Proposition~\ref{prop:T_not_equivariant}: $T$ is not translation-equivariant.
        \end{itemize}
    \item Subsection~\ref{sec:proof_d}: Proof of Part (d) -- With residual.
    \item Subsection~\ref{sec:ext_multichannel}: Multi-channel convolution.
    \item Subsection~\ref{sec:ext_depthwise}: Depthwise separable convolution.
    \item Subsection~\ref{sec:ext_dilated}: Dilated (atrous) convolution.
    \item Subsection~\ref{sec:ext_strided}: Strided convolution.
    \item Subsection~\ref{sec:ext_group}: Group convolution.
    \item Subsection~\ref{sec:ext_transposed}: Transposed convolution.
    \item Subsection~\ref{app:boundary_handling}: Boundary handling in the convolutional special case.
\end{itemize}

\noindent\textbf{Appendix \ref{app:proof_attn}: Proof of Theorem 2 (Self-Attention as a Special Case)}
\begin{itemize}[leftmargin=1.8em, itemsep=0pt, topsep=2pt]
    \item Definition~\ref{def:attn_continuous}: Continuous scaled dot-product attention.
    \item Definition~\ref{def:attn_discrete}: Discrete self-attention.
    \item Definition~\ref{def:mha}: Multi-head attention.
    \item Assumption~\ref{ass:reg}: Regularity conditions.
    \item Theorem~\ref{thm:attn1}: Full statement (Parts a--c).
    \item Subsection~\ref{sec:proof_a}: Proof of Part (a) -- Single-head continuous attention.
    \item Subsection~\ref{sec:discrete}: Discretization -- Recovering standard self-attention.
    \item Subsection~\ref{sec:proof_b}: Proof of Part (b) -- Multi-head attention.
    \item Subsection~\ref{sec:strict1}: Strictness Argument 1 (Unnormalized operators).
        \begin{itemize}[leftmargin=1.8em, itemsep=0pt, topsep=0pt]
            \item Lemma~\ref{lem:attn_bound}: Attention output bound.
        \end{itemize}
    \item Subsection~\ref{sec:strict2}: Strictness Argument 2 (Permutation equivariance).
    \item Subsection~\ref{sec:linear}: Linear attention as a special case (Proposition~\ref{prop:linear_attn}).
    \item Subsection~\ref{sec:causal}: Causal (masked) attention as a special case (Proposition~\ref{prop:causal_attn}).
\end{itemize}

\noindent\textbf{Appendix \ref{app:proof_rnn}: Proof of Theorem 3 (Recurrence as a Special Case)}
\begin{itemize}[leftmargin=1.8em, itemsep=0pt, topsep=2pt]
    \item Definition~\ref{def:rnn_continuous}: Continuous-time recurrent system.
    \item Definition~\ref{def:rnn_discrete}: Discrete-time RNN.
    \item Definition~\ref{def:lstm}: LSTM (Long Short-Term Memory).
    \item Definition~\ref{def:ssm}: Linear State Space Model (S4).
    \item Definition~\ref{def:mamba}: Selective SSM (Mamba).
    \item Definition~\ref{def:itnet_rnn}: ITNet operator (recurrent setting).
    \item Assumption~\ref{ass:reg_rnn}: Regularity for recurrent proofs.

    \item Theorem~\ref{thm:rnn}: Full statement (Parts a--e).

    \item Subsection~\ref{sec:proof_a_rnn}: Proof of Part (a) -- Linear continuous-time RNN.
    \item Subsection~\ref{sec:proof_nonlinear}: Proof of Part (a$'$) -- Nonlinear continuous-time RNN.

    \begin{itemize}[leftmargin=1.8em, itemsep=0pt, topsep=0pt]
            \item Subsubsection~\ref{rnn_st_a}: Proof Strategy A: Kernel Construction via Alekseev’s Formula
            \item Subsubsection~\ref{rnn_st_b}: Proof Strategy B: Universal Approximation Argument
        \end{itemize}
   
    \item Subsection~\ref{sec:proof_b_rnn}: Proof of Part (b) -- Discrete-time RNN.
    \item Subsection~\ref{sec:proof_c_rnn}: Proof of Part (c) -- Linear SSM (S4).
        \begin{itemize}[leftmargin=1.8em, itemsep=0pt, topsep=0pt]
            \item Remark~\ref{rem:ssm_conv}: Relationship to convolution.
        \end{itemize}
    \item Subsection~\ref{sec:proof_d_rnn}: Proof of Part (d) -- Selective SSM (Mamba).

    \item Subsection~\ref{sec:lstm_proof}: LSTM as ITNet (Proposition~\ref{prop:lstm}).
    \item Subsection~\ref{sec:gru_proof}: GRU as ITNet (Proposition~\ref{prop:gru}).

    \item Subsection~\ref{sec:strictness1_rnn}: Strictness Argument 1 -- Non-causal operators.
    \item Subsection~\ref{sec:strictness2_rnn}: Strictness Argument 2 -- Parallelism and state dimension.

    \item Subsection~\ref{sec:discretisation_rnn}: Discretization -- Recovering Euler and ZOH.
\end{itemize}

\noindent\textbf{Appendix \ref{app:proof_uat}: Proof of Theorem 4 (Universal Operator Approximation)}
\begin{itemize}[leftmargin=1.8em, itemsep=0pt, topsep=2pt]
    \item Definition~\ref{def:operator}: Continuous nonlinear operator.
    \item Definition~\ref{def:mlp}: MLP function class.
    \item Assumption~\ref{ass:uat}: Standing assumptions.
    \item Theorem~\ref{thm:uat}: Full statement.
    \item Theorem~\ref{sec:lemmas}: Auxiliary Lemmas.
    \begin{itemize}[leftmargin=1.8em, itemsep=0pt, topsep=0pt]
    \item Lemma~\ref{lem:mlp_uat}: MLP universal approximation.
    \item Lemma~\ref{lem:chen_chen}: Chen--Chen operator approximation (1995).
    \item Lemma~\ref{lem:kernel_approx}: Kernel approximation by MLP.
    \end{itemize}

    \item Subsection~\ref{sec:proof_uat_main}: Main proof (Steps 1--4).
    \item Subsection~\ref{sec:corollary}: Corollary -- Strict expressiveness ordering (Corollary~\ref{cor:ordering}).
    \item Subsection~\ref{sec:rate}: Quantitative approximation rate (Proposition~\ref{prop:rate}).
\end{itemize}

\noindent\textbf{Appendix \ref{app:proof_recovery}: Proof of Theorem 5 (Kernel Recovery Under Translation Symmetry)}
\begin{itemize}[leftmargin=1.8em, itemsep=0pt, topsep=2pt]
    \item Theorem~\ref{thm:recovery_full}: Full statement. Three-step proof (translation invariance $\to$ gradient averaging $\to$ orthogonality annihilation).
\end{itemize}

\noindent\textbf{Appendix \ref{app:impl_details}: Extended Implementation Details}
\begin{itemize}[leftmargin=1.8em, itemsep=0pt, topsep=2pt]
    \item Subsection~\ref{app:complexity}: Computational complexity comparison
     (Table~\ref{tab:complexity})
    \item Subsection~\ref{app:arch_spec}: Full architectural specification.

    \item Subsection~\ref{app:init}: Initialization scheme.
    \item Subsection~\ref{app:regularization}: Regularization and training stability.
    \item Subsection~\ref{app:optimizer}: Optimizer configuration (AdamW).
    \item Subsection~\ref{app:stats}: Statistical Reporting.
    \item Subsection~\ref{app:profiling}: Triton kernel profiling (Table~\ref{tab:profiling}).
    \item Subsection~\ref{app:io_complexity}: IO complexity of tiled ITNet (Proposition~\ref{prop:io_full}).
    \item Subsection~\ref{app:tiled_forward}: Tiled forward pass (Algorithm~\ref{alg:tiled_full}).
    \item Subsection~\ref{app:backward}: Tiled Backward Pass with Gradient Checkpointing (Algorithm~\ref{alg:tiled_backward})
    \item Subsection~\ref{app:mc_variance}: Monte Carlo Variance Analysis
    \item Subsection~\ref{app:rank_bound}: Low-Rank Approximation Error Bound
\end{itemize}

\noindent\textbf{Appendix \ref{app:backprop}: Backpropagation for the ITNet Operator}
\begin{itemize}[leftmargin=1.8em, itemsep=0pt, topsep=2pt]
    \item Subsection~\ref{app:bp_notation}: Setup and notation.
    \item Subsection~\ref{app:bp_forward}: Forward pass .
    \item Subsection~\ref{app:bp_upstream}: Upstream gradients .
    \item Subsection~\ref{app:bp_grad_kernel}: Gradient with respect to kernel outputs .
    \item Subsection~\ref{app:bp_grad_input}: Gradient with respect to input features .
    \item Subsection~\ref{app:bp_grad_theta}: Gradient with respect to kernel MLP parameters .
    \item Subsection~\ref{app:bp_grad_W}: Gradient with respect to residual matrix .
    \item Subsection~\ref{app:bp_conv}: Special case 1 -- Convolution.
    \item Subsection~\ref{app:bp_attn}: Special case 2 -- Self-attention .
    \item Subsection~\ref{app:bp_ssm}: Special case 3 -- Linear SSM / S4 .
    \item Subsection~\ref{app:bp_lstm}: Special case 4 -- LSTM .
    \item Subsection~\ref{app:bp_mamba}: Special case 5 -- Mamba .
    \item Subsection~\ref{app:bp_complexity}: Complexity comparison across special cases (Table~\ref{tab:bp_complexity}).
\end{itemize}

\noindent\textbf{Appendix \ref{app:encoders}: Extended Encoder Details}
\begin{itemize}[leftmargin=1.8em, itemsep=0pt, topsep=2pt]
    \item Subsection~\ref{app:enc_graph}: Graph encoder.
    \item Subsection~\ref{app:enc_multiscale}: Multi-scale image encoder.
    \item Subsection~\ref{app:design_principles}: Design principles (Principles 1--3).
\end{itemize}

\noindent\textbf{Appendix \ref{app:training}: Training Details}
\begin{itemize}[leftmargin=1.8em, itemsep=0pt, topsep=2pt]
    \item Subsection~\ref{app:train_imagenet}: ImageNet-1K training (Table~\ref{tab:hp_imagenet}).
    \item Subsection~\ref{app:train_glue}: GLUE pre-training and fine-tuning.
    \item Subsection~\ref{app:train_modelnet}: ModelNet40 training (Table~\ref{tab:hp_modelnet}).
    \item Subsection~\ref{app:train_vqa}: VQA v2 and NLVR2 fine-tuning.
\end{itemize}

\noindent\textbf{Appendix \ref{app:efficiency}: Detailed Efficiency Analysis}
\begin{itemize}[leftmargin=1.8em, itemsep=0pt, topsep=2pt]
    \item Table~\ref{tab:wallclock}: Wall-clock throughput and peak memory across all benchmarks.
    \item Table~\ref{tab:rank_sweep}: Rank sweep for low-rank mode.
    \item Table~\ref{tab:baseline_comparison}: Comparison with efficient attention baselines.
    \item Table~\ref{tab:memory_breakdown}: Memory breakdown for exact, MC, and low-rank modes.
\end{itemize}

\noindent\textbf{Appendix \ref{app:ablations}: Extended Ablations}
\begin{itemize}[leftmargin=1.8em, itemsep=0pt, topsep=2pt]
    \item Table~\ref{tab:ablation_kappa}: Kernel MLP width ablation.
    \item Table~\ref{tab:ablation_pc}: Point cloud encoder ablation.
    \item Table~\ref{tab:ablation_measure_full}: Multimodal measure ablation.
    \item Table~\ref{tab:ablation_fourier}: Fourier feature ablation.
    \item Table~\ref{tab:ablation_layers}: Number of ITNet layers ablation.
\end{itemize}

\noindent\textbf{Appendix \ref{app:broader_impact}: Broader Impact}

\noindent\textbf{How to use this appendix:}
Readers primarily interested in the theoretical unification should consult Appendices~\ref{app:proof_conv}, \ref{app:proof_attn}, \ref{app:proof_rnn}, \ref{app:proof_uat} and \ref{app:proof_recovery}. 
For implementation details and reproducibility, see Appendices~\ref{app:impl_details}, \ref{app:training}, \ref{app:efficiency}, and \ref{app:ablations}. 
The backpropagation derivation (Appendix~\ref{app:backprop}) is included for readers interested in implementing the ITNet operator from first principles. 


\newpage
\appendix
\section{Notation}
\label{app:notation}
Tables~\ref{tab:notation1},~\ref{tab:notation2} and \ref{tab:notation3} summarise all notation used in the proofs that follow.
\begin{table}[h]
\centering
\caption{Notation reference (Part 1): spaces, signals, ITNet operator, and kernel parameterization.}
\label{tab:notation1}
\renewcommand{\arraystretch}{1.20}
\footnotesize
\begin{tabular}{@{} c p{10.4cm} @{}}
\toprule
\textbf{Symbol} & \textbf{Meaning} \\
\midrule
\multicolumn{2}{@{}l}{\textit{Spaces, domains, and measures}} \\[1pt]
$s$ & Spatial dimension ($s{=}1$ for sequences, $s{=}2$ for images, $s{=}3$ for point clouds) \\
$d$ & Feature (channel) dimension; each point carries a vector in $\R^d$ \\
$n$ & Number of discrete positions (tokens, patches, points); also hidden state dim for RNNs \\
$\Omega \subseteq \R^s$ & Spatial domain (compact in most theorems) \\
$\mu$ & Positive finite Borel measure~\cite{royden1988real} on $\Omega$; Lebesgue for continuous, atomic for discrete \\
$\omega_j \coloneqq \mu(\{x_j\})$ & Measure weight at discrete position $x_j$ (typically $\omega_j = 1/n$ for uniform measure) \\
$T$ & Sequence length or time horizon; $\Omega = [0,T]$ in the recurrence setting \\
$h > 0$ & Grid spacing for discretization; $\Omega_h = h\Z^s$ \\
$\mathcal{U} = C(\Omega, \R^d)$ & Space of continuous $\R^d$-valued functions on $\Omega$; the signal space \\
$U_c$ & Compact subset of $\mathcal{U}$; input signal class in Theorem~4 \\
$\mathcal{V}$ & Set of nodes in a graph $\mathcal{G} = (\mathcal{V}, \mathcal{E})$ \\
$\norm{\cdot}_\infty$ & Supremum norm: $\norm{u}_\infty = \sup_{x \in \Omega} \norm{u(x)}_2$ \\
$\lambda$ & Lebesgue measure on $\R^s$ \\
\midrule
\multicolumn{2}{@{}l}{\textit{Input signal and positions}} \\[1pt]
$u : \Omega \to \R^d$ & Input signal (image, token sequence, point cloud, etc.) \\
$x, y, z \in \Omega$ & Query, key, and dummy integration positions, respectively \\
$u(x), u(y)$ & Feature vectors at positions $x$ and $y$ \\
$u_i \coloneqq u(x_i)$ & Shorthand for feature at discrete position $x_i$ \\
\midrule
\multicolumn{2}{@{}l}{\textit{ITNet operator (all theorems)}} \\[1pt]
$\Kop$ & ITNet operator: $(\Kop[u])(x) {=} \int_\Omega \kap(x,y,u(x),u(y))\,u(y)\,d\mu(y) + W_\theta\,u(x)$ \\
$\kap$ & Learnable kernel $\R^s {\times} \R^s {\times} \R^d {\times} \R^d \to \R^{d \times d}$; parameterised by an MLP \\
$W_\theta \in \R^{d \times d}$ & Pointwise residual (skip connection); $W_\theta {=} \Id$ recovers the identity \\
$\theta$ & Collective learnable parameters (kernel MLP weights and $W_\theta$) \\
$\Id,\; \Idn$ & Identity matrices of size $d {\times} d$ and $n {\times} n$ \\
\midrule
\multicolumn{2}{@{}l}{\textit{Kernel MLP parameterization (\S\ref{sec:operator_def}, \S\ref{sec:implementation})}} \\[1pt]
$z_{xy}$ & Kernel MLP input vector $\in \R^{6L_f+1+3d}$ (Eq.~\eqref{eq:kernel_input}) \\
$\gamma : \R^s \to \R^{2L_f}$ & Random Fourier feature map: $\gamma(x) = [\sin(2\pi \mathbf{B}x);\,\cos(2\pi \mathbf{B}x)]$ \\
$\mathbf{B} \in \R^{L \times s}$ & Fixed random frequency matrix; $\mathbf{B} \sim \mathcal{N}(0, \sigma^2 I)$, frozen after init \\
$L_f$ & Number of Fourier frequencies ($L_f{=}64$ in all experiments) \\
$L$ & Number of layers \\
$\sigma$ & Fourier bandwidth ($\sigma{=}10$); controls spatial frequency range \\
$w_\kappa$ & Kernel MLP hidden-layer width ($w_\kappa{=}128$) \\
$\ell_\kappa$ & Kernel MLP depth (number of layers; $\ell_\kappa{=}2$) \\
$\had$ & Element-wise (Hadamard) product \\
$W_1, b_1, W_2, b_2$ & Kernel MLP weights and biases; $W_1 {\in} \R^{w_\kappa \times (6L_f+1+3d)}$, $W_2 {\in} \R^{d^2 \times w_\kappa}$ \\
$\epsilon$ & LayerScale initialization factor ($\epsilon{=}10^{-3}$); $W_2 {=} \epsilon \Id$ at init \\

\bottomrule
\end{tabular}
\end{table}

\begin{table}[htbp]
\centering
\caption{Notation reference (Part 2): convolution, self-attention, and recurrence (Theorems~1--3).}
\label{tab:notation2}
\renewcommand{\arraystretch}{1.20}
\footnotesize
\begin{tabular}{@{} c p{10.4cm} @{}}
\toprule
\textbf{Symbol} & \textbf{Meaning} \\
\midrule
\multicolumn{2}{@{}l}{\textit{Multi-head form and deep stacking}} \\[1pt]
$H$ & Number of attention/ITNet heads \\
$d_h = d/H$ & Per-head feature dimension \\
$\kap^{(h)}$ & Kernel for head $h$: $\R^s {\times} \R^s {\times} \R^{d_h} {\times} \R^{d_h} \to \R^{d_h \times d_h}$ \\
$u^{(h)}(x)$ & $h$-th head's feature slice: $u(x)[(h{-}1)d_h{+}1 : hd_h]$ \\
$W^O \in \R^{d \times d}$ & Multi-head output projection \\
$\ell = 1,\ldots,L$ & Layer index in a deep ITNet model \\
$z^{(\ell)}, u^{(\ell)}$ & Intermediate and output features at layer $\ell$ (Eq.~\eqref{eq:itnet_stack}) \\
$\mathrm{LN}$ & Layer normalization \\
$\mathcal{F}_\theta^{(\ell)}$ & Position-wise feed-forward network at layer $\ell$ (2-layer, GELU, expansion~4) \\
\midrule
\multicolumn{2}{@{}l}{\textit{Convolution (Theorem 1)}} \\[1pt]
$w : \R^s \to \R$ & Scalar convolution filter; $w \in L^1(\R^s)$ \\
$(w \ast u)(x)$ & Continuous convolution: $\int_{\R^s} w(x{-}y)\,u(y)\,dy$ \\
$\mathcal{N} \subset \Z^s$ & Discrete filter neighborhood; e.g.\ $\{-1,0,1\}^2$ for $3 {\times} 3$ \\
$f_m \in \R$ & Discrete filter coefficient at offset $m \in \mathcal{N}$ \\
$(F \ast u)(x)$ & Discrete $k$-tap convolution: $\sum_{m \in \mathcal{N}} f_m\, u(x{-}mh)$ \\
\midrule
\multicolumn{2}{@{}l}{\textit{Self-attention (Theorem 2)}} \\[1pt]
$d_k,\; d_v$ & Query/key dimension and value dimension per head \\
$W_Q, W_K$ & Query and key projections $\in \R^{d_k \times d}$ \\
$W_V$ & Value projection $\in \R^{d \times d}$ (or $\R^{d_v \times d}$ per head) \\
$Q(x), K(y)$ & Query $W_Q u(x)$ and key $W_K u(y)$ vectors \\
$\alpha(x,y)$ & Attention weight: $\exp(Q(x)^\top K(y)/\sqrt{d_k})/Z(x)$; $\int \alpha\,d\mu {=} 1$ \\
$Z(x)$ & Partition function: $\int_\Omega \exp(Q(x)^\top K(z)/\sqrt{d_k})\,d\mu(z)$ \\
$\softmax$ & Row-wise softmax normalization \\
$\Attn(u)(x)$ & Single-head output: $\int_\Omega \alpha(x,y)\,W_V u(y)\,d\mu(y)$ \\
$\MHA(u)(x)$ & Multi-head output: $W_O[\mathrm{head}_1;\ldots;\mathrm{head}_H]$ \\
$\mathrm{PE}(x,y)$ & Positional encoding bias (sinusoidal, RoPE, or ALiBi) \\
\midrule
\multicolumn{2}{@{}l}{\textit{Recurrence (Theorem 3)}} \\[1pt]
$h(t) \in \R^n$ & Hidden state at time $t$ \\
$A \in \R^{n \times n}$ & State matrix (continuous-time dynamics) \\
$B \in \R^{n \times d}$ & Input-to-state matrix \\
$C \in \R^{d \times n}$ & State-to-output matrix \\
$D \in \R^{d \times d}$ & Feedthrough (skip) matrix \\
$\Phi(t,s)$ & State transition matrix; $= e^{A(t-s)}$ for linear systems \\
$W_h, W_u$ & Discrete RNN weights: $h_t = \phi(W_h h_{t-1} + W_u u_t + b)$ \\
$\phi(\cdot)$ & Nonlinear activation in RNN (e.g.\ $\tanh$); not to be confused with $\sigma$ \\
$f_t, i_t, o_t$ & LSTM forget, input, and output gates \\
$c_t,\; \tilde{c}_t$ & LSTM cell state and cell candidate \\
$z_t, r_t$ & GRU update and reset gates \\
$\bar{A}(t), \bar{B}(t)$ & Mamba discretised matrices (input-dependent via ZOH) \\
$\Delta(t)$ & Mamba step size: $\mathrm{softplus}(W_\Delta u_t + b_\Delta)$ \\
$C(u_t)$ & Mamba input-dependent output projection: $W_C u_t$ \\
$\sigma(\cdot)$ & Sigmoid activation: $\sigma(x) = (1+e^{-x})^{-1}$ \\
$\tanh(\cdot)$ & Hyperbolic tangent activation \\
\bottomrule
\end{tabular}
\end{table}

\begin{table}[htbp]
\centering
\caption{Notation reference (Part 3): universal approximation, kernel recovery, implementation, backpropagation, and general symbols.}
\label{tab:notation3}
\renewcommand{\arraystretch}{1.20}
\footnotesize
\begin{tabular}{@{} c p{10.4cm} @{}}
\toprule
\textbf{Symbol} & \textbf{Meaning} \\
\midrule
\multicolumn{2}{@{}l}{\textit{Universal approximation (Theorem 4) and kernel recovery (Theorem 5)}} \\[1pt]
$F : U_c \to C(\Omega, \R^d)$ & Target continuous operator to be approximated \\
$\varepsilon > 0$ & Approximation tolerance \\
$\kappa^*$ & Ideal continuous kernel from the Chen--Chen lemma \\
$\mathcal{D}$ & Compact evaluation domain $\{(x,y,u(x),u(y))\}$; or data distribution (context-dependent) \\
$\tau_\delta$ & Translation operator: $\tau_\delta u(x) = u(x - \delta)$ \\
$\kap^{\mathrm{TI}}$ & Translation-invariant component of $\kap$: depends on $x{-}y$ only \\
$\kap^{\perp}$ & Orthogonal complement: $\kap = \kap^{\mathrm{TI}} + \kap^{\perp}$ \\
$\mathcal{L}(\theta)$ & Population or empirical loss: $\E_{(u,y) \sim \mathcal{D}}[\ell(\Kop[u], y)]$ \\
$\ell(\cdot, \cdot)$ & Per-sample loss function (e.g.\ cross-entropy) \\
\midrule
\multicolumn{2}{@{}l}{\textit{Implementation (\S\ref{sec:implementation})}} \\[1pt]
$M$ & Number of Monte Carlo samples per query ($M \ll n$) \\
$p_\phi(y \mid x)$ & Learnable importance-sampling proposal; parameters $\phi$ disjoint from $\theta$ \\
$\phi$ & Proposal network parameters (separate from kernel $\theta$) \\
$\mathcal{L}_\mathrm{task},\; \mathcal{L}_\mathrm{prop}$ & Task loss and KL-divergence auxiliary proposal loss \\
$\Phi_\theta, \Psi_\theta$ & Low-rank factorization: $\kap \approx \Phi_\theta^\top \Psi_\theta$; rank $r \ll d$ \\
$r$ & Rank of the low-rank kernel factorization \\
$Z$ & Precomputed key aggregation: $Z = \sum_j \omega_j \Psi_\theta(x_j, u_j)\,u_j$ \\
$B_q, B_k$ & Query and key tile (block) sizes for tiled kernel fusion \\
$B_q^*, B_k^*$ & Optimal tile sizes selected by auto-tuning \\
$S_\mathrm{SRAM}$ & On-chip SRAM capacity (bytes); constraint: $(B_q{+}B_k)\cdot d \leq S_\mathrm{SRAM}$ \\
$K_{pq}$ & Kernel evaluation at tile indices $p, q$: $\kap(X_i[p], X_j[q], U_i[p], U_j[q])$ \\
\midrule
\multicolumn{2}{@{}l}{\textit{Backpropagation (Appendix~\ref{app:backprop})}} \\[1pt]
$g_i$ & Upstream gradient: $\partial \mathcal{L} / \partial (\Kop[u])(x_i) \in \R^d$ \\
$M_{ij}$ & Message from key $j$ to query $i$: $\omega_j \cdot K_{ij} \cdot u(x_j) \in \R^d$ \\
$K_{ij}$ & Kernel matrix at pair $(i,j)$: $\kap(x_i, x_j, u(x_i), u(x_j)) \in \R^{d \times d}$. Indices $i,j$ denote global positions, while $p,q$ denote tile-local indices; both refer to evaluations of the same kernel $\kap$. \\
$\delta^{(\ell)}$ & Backpropagation error signal at MLP layer $\ell$ \\
\midrule
\multicolumn{2}{@{}l}{\textit{General notation}} \\[1pt]
$\norm{A}_\mathrm{op},\; \norm{A}_F$ & Operator (spectral) norm and Frobenius norm of matrix $A$ \\
$\norm{A}_*$ & Nuclear norm: $\norm{A}_* = \sum_i \sigma_i(A)$ \\
$\sigma_i(A)$ & $i$-th singular value of matrix $A$ \\
$\delta_{ij}$ & Kronecker delta: $1$ if $i{=}j$, else $0$ \\
$\delta_x$ & Dirac delta at $x$: $\int f\,d\delta_x = f(x)$ \\
$\mathbf{1}_S$ & Indicator function of set $S$ \\
$[v]_i,\; [M]_{ij}$ & $i$-th component of vector $v$; $(i,j)$ entry of matrix $M$ \\
$[a ; b]$ & Concatenation of vectors $a$ and $b$ \\
$\mathrm{diag}(\cdot)$ & Diagonal matrix from a vector, or vector of diagonal entries \\
$\mathrm{vec}(\cdot)$ & Column-major vectorisation of a matrix \\
$\mathrm{tr}(\cdot)$ & Matrix trace \\
$\mathrm{BlockDiag}(\cdot)$ & Block-diagonal matrix from blocks \\
$\coloneqq$ & Defined as (definitional equality) \\
\bottomrule
\end{tabular}
\end{table}
\FloatBarrier

\section{Related Work}
\label{sec:related}

Our work sits at the intersection of four research streams: classical neural architecture families and their unification attempts, neural operator theory, efficient sequence modeling, and multimodal architectures. We discuss each in turn and clarify how ITNet relates to prior work.

\textbf{Classical Architectures.}
Convolutional networks~\citep{lecun1989backpropagation, krizhevsky2012imagenet} encode locality and translation equivariance through kernels that depend only on relative position. Despite many improvements~\citep{he2016deep, liu2022convnet, woo2023convnext, ding2022scaling}, this position-only structure remains unchanged. Transformers~\citep{vaswani2017attention} instead model global, content-dependent interactions via attention, but restrict interactions to a bilinear form with softmax normalization and require separate positional encodings~\citep{vaswani2017attention, su2024rope, press2022alibi}. Recurrent models~\citep{hochreiter1997long, cho2014learning} capture sequential dependencies through state evolution but are inherently causal and difficult to parallelize. Structured state-space models such as S4~\citep{gu2022efficiently} and Mamba~\citep{gu2023mamba} improve efficiency but retain constrained kernel structures.
ITNet provides a unified view in which convolution, attention, and recurrence arise as special cases of a single kernel-based operator.

\textbf{Efficient Sequence Models.}
A large body of work focuses on improving the efficiency of attention. Linear attention~\citep{katharopoulos2020transformers} and Performer~\citep{choromanski2021rethinking} approximate softmax attention via kernel factorization, while sparse variants~\citep{beltagy2020longformer, zaheer2020big, liu2021swin, xiong2021nystromformer} restrict attention patterns. FlashAttention~\citep{dao2022flashattention} improves efficiency without approximation through tiling. Other approaches such as Hyena~\citep{poli2023hyena} and MLP-Mixer~\citep{tolstikhin2021mlp} replace attention with structured alternatives. These methods improve efficiency but retain fixed interaction forms. In contrast, ITNet learns the interaction kernel directly and uses Monte Carlo or low-rank approximations for scalable computation.

\textbf{Neural Operator Learning.}
Neural operators~\citep{chen1995universal, li2020neural, li2020fourier, lu2021learning} study function-to-function mappings using kernel-based architectures, with foundational results establishing universal approximation of nonlinear operators. Methods such as the Graph Neural Operator (GNO)~\citep{li2020neural} introduce learnable integral kernels of the form $\int \kappa(x,y,u(x),u(y))\,u(y)\,d\mu(y)$ a signature mathematically identical to ITNet's operator. However, GNO was developed for PDE solving and evaluated only on scientific machine learning tasks ($n \approx 10^4$), without establishing connections to CNNs, Transformers, or RNNs. The Fourier Neural Operator (FNO)~\citep{li2020fourier} restricts the kernel to Fourier space, yielding efficient global convolution but losing content dependence and position-awareness. DeepONet~\citep{lu2021learning} decomposes the operator into branch and trunk networks, imposing a low-rank structure that is less general than ITNet's full kernel. Continuum attention~\citep{calvello2024continuum} formalises self-attention as a continuum integral operator but does not show that convolution or recurrence are also special cases. ITNet builds on this line of work by using a general, learnable kernel and showing that standard architectures are exact special cases within this framework.

\textbf{Unified Architectures.}
Several works aim to relate or unify different architectures. MetaFormer~\citep{yu2022metaformer} highlights the importance of the overall structure rather than specific operators. Prior analyses have shown that attention can express convolution~\citep{cordonnier2020relationship}, and other approaches unify models at an algebraic level. Content-adaptive variants such as BiFormer~\citep{zhu2023biformer}, deformable convolution~\citep{zhu2019deformable}, and dynamic convolution~\citep{chen2020dynamic} extend individual architectures but remain within restricted kernel forms. However, these methods do not provide a single operator that subsumes all families. ITNet learns the interaction rule directly, yielding a unified formulation that strictly contains convolution, attention, and recurrence.

\textbf{Multimodal and Domain-Agnostic Architectures.}
Perceiver~\citep{jaegle2021perceiver} and Perceiver IO~\citep{jaegle2021perceiverio} uses cross-attention between inputs and a fixed set of latent tokens followed by latent self-attention. This introduces a compression bottleneck, as all input information must be projected into a limited latent array before further interaction. From the ITNet perspective, this corresponds to a restricted, position-blind, softmax-normalized kernel.
Most other multimodal methods rely on modality-specific encoders with explicit fusion mechanisms. For example, Flamingo~\citep{alayrac2022flamingo} interleaves frozen vision features with language models via gated cross-attention, BLIP~\citep{li2022blip} and BLIP-2~\citep{li2023blip2} introduce querying transformers to bridge frozen encoders, and ALBEF~\citep{li2021albef}, METER~\citep{dou2022meter}, and UNITER~\citep{chen2020uniter} employ various cross-modal fusion strategies. More recent systems such as GPT-4V~\citep{openai2023gpt4} integrate vision into large language models through dedicated architectural components.
In contrast, ITNet operates on a shared domain by combining positions across modalities, without latent compression or dedicated fusion modules; cross-modal interactions are learned directly through the kernel, providing a richer mechanism than standard attention.

\section{Proof of Theorem 1: Convolution as a Special Case of ITNet}
\label{app:proof_conv}

\begin{definition}[Continuous convolution]
\label{def:conv_continuous}
Let $\Omega = \R^s$ (or the torus $\T^s = \R^s / \Z^s$ for periodic domains).
Let $\mu$ be the Lebesgue measure on $\R^s$, denoted $dy$.
Let $w \in L^1(\R^s, \R)$ be an integrable scalar filter function.
The \emph{continuous convolution} of signal $u \in L^1(\R^s, \R^d)$ with
filter $w$ is:
\begin{equation}
    (w \ast u)(x)
    \;=\;
    \int_{\R^s} w(x - y)\, u(y)\, dy,
    \qquad x \in \R^s.
    \label{eq:conv_continuous}
\end{equation}
The integral is taken component-wise:
$[(w \ast u)(x)]_i = \int_{\R^s} w(x-y)\, [u(y)]_i\, dy$ for $i = 1, \ldots, d$.
\end{definition}
 
We use the Lebesgue measure~\cite{royden1988real} because it is the \emph{unique}
(up to positive scaling) translation-invariant $\sigma$-finite Borel measure~\cite{royden1988real} on $\R^s$ (Haar's theorem~\cite{folland1999real}). This uniqueness is what makes convolution translation-equivariant: $(w \ast \tau_a u)(x) = (w \ast u)(x-a)$ where
$\tau_a u(y) = u(y-a)$ is the translation operator. Any other measure would break this equivariance, which is the defining property of CNNs.

Using a non-uniform measure $d\mu(y) = \rho(y)\,dy$ instead yields a weighted convolution of the form $\int w(x-y)\,u(y)\,\rho(y)\,dy$, with a density $\rho$, which in general is not translation-equivariant unless $\rho$ is constant. Since translation equivariance is the defining inductive bias of convolutional networks, the Lebesgue measure provides the natural and consistent choice.
 
\begin{definition}[Discrete convolution on a regular grid]
\label{def:conv_discrete}
Let $h > 0$ be the grid spacing and $\Omega_h = h\Z^s = \{mh : m \in \Z^s\}$
the regular grid. Define the \emph{$k$-tap neighborhood}:
\begin{equation}
    \mathcal{N}
    \;=\;
    \bigl\{m \in \Z^s : \|m\|_\infty \leq \lfloor k/2 \rfloor\bigr\},
    \label{eq:neighborhood}
\end{equation}
which contains $(2\lfloor k/2 \rfloor + 1)^s$ lattice points.
Given filter coefficients $F = \{f_m\}_{m \in \mathcal{N}} \subset \R$,
the \emph{discrete $k$-tap convolution} is:
\begin{equation}
    (F \ast u)(x)
    \;=\;
    \sum_{m \in \mathcal{N}} f_m \cdot u(x - mh),
    \qquad x \in \Omega_h.
    \label{eq:conv_discrete}
\end{equation}
\end{definition}
 
We use the $\ell^\infty$ norm, $\|m\|_\infty = \max_i |m_i|$, to define the neighborhood, as it induces a \emph{hypercube} structure that exactly matches the receptive field of standard CNN filters.
 
In contrast, using the $\ell^2$ norm yields a \emph{ball-shaped} neighborhood, while the $\ell^1$ norm produces a \emph{diamond-shaped} neighborhood, neither of which aligns with standard convolutional implementations. While the theoretical construction applies to any finite neighborhood $\mathcal{N} \subset \mathbb{Z}^s$, the choice of $\ell^\infty$ ensures direct correspondence with practical CNN architectures.

\begin{definition}[ITNet operator]
\label{def:itnet_cnn}
Let $(\Omega, \mu)$ be a measure space.
The \emph{Integral Transform Network (ITNet) operator}
$\Kop : C(\Omega, \R^d) \to C(\Omega, \R^d)$ is:
\begin{equation}
    (\Kop[u])(x)
    \;=\;
    \underbrace{
    \int_\Omega
        \kappa_\theta(x,\, y,\, u(x),\, u(y))\, u(y)
    \, d\mu(y)
    }_{\text{integral transform (global interaction)}}
    \;+\;
    \underbrace{W_\theta\, u(x)}_{\text{local linear residual}},
    \label{eq:itnet_cnn}
\end{equation}
where:
\begin{itemize}[noitemsep, leftmargin=2em]
    \item $\kappa_\theta : \R^s \times \R^s \times \R^d \times \R^d
          \to \R^{d \times d}$ is a learnable matrix-valued kernel.
          The four arguments are: query position $x$, key position $y$,
          query features $u(x)$, key features $u(y)$.
    \item The kernel output $\kappa_\theta(\cdots) \in \R^{d \times d}$ is
          a matrix that is multiplied with the key feature vector
          $u(y) \in \R^d$ to produce a $d$-dimensional contribution.
    \item $W_\theta \in \R^{d \times d}$ is a learnable weight matrix
          that acts locally (pointwise) on the query feature $u(x)$.
    \item $\theta$ collectively denotes all learnable parameters
          (the kernel network weights and $W_\theta$).
\end{itemize}
\end{definition}
 
The integral term alone cannot represent the identity mapping $u \mapsto u$. For $\kappa_\theta = \Id$, 
$\int_\Omega \Id\, u(y)\, d\mu(y) = c \cdot \bar{u},$
which is independent of $x$ and equals $u(x)$ only if $u$ is constant. The residual term $W_\theta u(x)$ enables exact identity representation ($W_\theta=\Id$), which is crucial for stable deep architectures.

\begin{assumption}[Kernel regularity]
\label{ass:regularity}
Throughout this proof:
\begin{enumerate}[label=(\roman*), noitemsep]
    \item $\kappa_\theta$ is jointly measurable in all four arguments.
    \item There exists $C_\kappa < \infty$ such that
          $\|\kappa_\theta(x,y,a,b)\|_{\mathrm{op}} \leq C_\kappa$
          for all $(x,y,a,b)$ in the relevant domain.
    \item For Part~(a), $w \in L^1(\R^s)$ and $u \in L^1(\R^s, \R^d)$.
\end{enumerate}
\end{assumption}
 
\subsection{Main Theorem}
\label{sec:theorem_cnn}
 
\begin{boxtheorem}{ITNet $\supset$ Convolution (Full Statement)}{conv}
\label{thm:conv}
Let all notation be as in Definitions~\ref{def:conv_continuous}--\ref{def:itnet_cnn}
and Assumption~\ref{ass:regularity}.
 
\medskip
\noindent\textbf{Part (a) - Continuous convolution.}
If the kernel and residual are chosen as:
\begin{equation}
    \kappa_\theta(x, y, u(x), u(y)) = w(x-y) \cdot \Id,
    \qquad W_\theta = 0,
    \label{eq:kernel_a}
\end{equation}
then $(\Kop[u])(x) = (w \ast u)(x)$ for all $x \in \R^s$
and all $u \in L^1(\R^s, \R^d)$.
 
\medskip
\noindent\textbf{Part (b) - Discrete $k$-tap convolution.}
On grid $\Omega_h = h\Z^s$ with atomic measure
$\mu = \sum_{y \in h\Z^s} h^s \delta_y$, if:
\begin{equation}
    \kappa_\theta(x, y, u(x), u(y))
    = \frac{f_{(x-y)/h}}{h^s} \cdot \Id
      \cdot \mathbf{1}_{\{(x-y)/h \in \mathcal{N}\}},
    \qquad W_\theta = 0,
    \label{eq:kernel_b}
\end{equation}
then $(\Kop[u])(x) = (F \ast u)(x)$ for all $x \in \Omega_h$.
 
\medskip
\noindent\textbf{Part (c) - Strict inclusion.}
There exists a continuous operator $T : L^2(\R^s, \R^d) \to L^2(\R^s, \R^d)$
that is representable by ITNet but not by any convolution.
Hence $\mathrm{Conv} \subsetneq \mathrm{ITNet}$.
 
\medskip
\noindent\textbf{Part (d) - With residual ($1 \times 1$ convolution).}
If instead $W_\theta \neq 0$, then:
\begin{equation}
    (\Kop[u])(x) = (w \ast u)(x) + W_\theta u(x),
    \label{eq:with_residual}
\end{equation}
which is a standard convolution followed by a $1 \times 1$ convolution
(pointwise linear transform), as used in ResNets~\cite{he2016deep}
and ConvNeXt~\cite{liu2022convnet}.
\end{boxtheorem}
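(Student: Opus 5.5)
Parts (a), (b) and (d) are direct substitutions into Eq.~\eqref{eq:itnet_cnn}, so I will dispatch them first and spend the real effort on Part~(c).

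\textbf{Part (a).} Substitute Eq.~\eqref{eq:kernel_a} into Eq.~\eqref{eq:itnet_cnn}. The integrand becomes $w(x-y)\,\Id\,u(y)$. I will record the trivial identity $(c\,\Id)v = cv$ for scalar $c$ as a small lemma, so the integral reduces componentwise to $\int w(x-y)[u(y)]_i\,dy$. This is exactly Definition~\ref{def:conv_continuous}.

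To make the equality meaningful I must check that the integral exists. I will invoke Young's inequality, $\|w\ast u\|_{L^1}\le \|w\|_{L^1}\|u\|_{L^1}$, together with Fubini--Tonelli on $|w(x-y)|\,\|u(y)\|$. This gives a finite value for almost every $x$, and for every $x$ when $u$ is bounded and continuous. With $W_\theta=0$ the residual term vanishes.

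\textbf{Part (b).} With the atomic measure $\mu=\sum_y h^s\delta_y$, the integral becomes $\sum_{y\in h\Z^s} h^s\,\kappa_\theta(x,y,\cdot,\cdot)\,u(y)$. The weight $h^s$ cancels the $1/h^s$ in Eq.~\eqref{eq:kernel_b}. The indicator restricts the sum to $y=x-mh$ with $m\in\mathcal N$, which is a bijective reindexing, and yields $\sum_{m\in\mathcal N} f_m u(x-mh)$. The sum is finite, so no convergence issue arises.

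\textbf{Part (d).} This is Part~(a) with the residual retained. I simply add $W_\theta u(x)$ and interpret it as a $1\times1$ convolution, i.e.\ a kernel $W_\theta\delta_0$ acting pointwise.

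\textbf{Part (c): choice of witness.} This is the step needing care. I will exhibit a witness that depends on content, for instance
\[
T[u](x)=\int_\Omega \phi\bigl(u(x)^\top u(y)\bigr)\,u(y)\,d\mu(y)
\quad\text{or}\quad
\kappa(x,y,a,b)=g(x)\,\Id .
\]
Either is representable by ITNet by definition, with a bounded kernel, for example $\phi=\tanh$ so that Assumption~\ref{ass:regularity}(ii) holds. I will then separate $T$ from every convolution using two invariants:
\begin{enumerate}
\item every convolution $w\ast\cdot$ (plus residual) is \emph{linear}, whereas $T$ is not;
\item every convolution is \emph{translation-equivariant}, whereas $T$ is not.
\end{enumerate}

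\textbf{Part (c): the main obstacle.} The hardest part is producing concrete test inputs that break these properties. For nonlinearity, take $u$ with $T[2u]\neq 2T[u]$. A scaled bump $u=\epsilon\,v$ works: as $\epsilon\to0$, $T[\epsilon v]\approx \phi(0)\,\epsilon \int v$ while higher-order terms scale differently, so one can choose $\phi$ with $\phi(0)=0$ and $\phi'(0)\neq0$ to get cubic scaling. For non-equivariance, the cleanest route is the position-modulated witness $g(x)\int u$ with nonconstant $g$: shifting $u$ leaves $\int u$ unchanged, yet the output is not shifted.

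There is one subtlety. The statement asks for $T:L^2\to L^2$, so the witness must map into $L^2$. I will ensure this by choosing $g\in L^2$ and restricting to $u\in L^1\cap L^2$, or by working on the torus $\T^s$, where boundedness and the finite measure make continuity immediate.
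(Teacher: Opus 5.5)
\textbf{Verdict: correct, but Part~(c) takes a genuinely different route from the paper.}

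\textbf{Parts (a), (b), (d).} These follow the paper's proof essentially step for step: the scalar-identity lemma, Young's inequality, cancelling $h^s$ against $1/h^s$, and reindexing by $m=(x-y)/h$. Your remark is more careful than the paper's ``for all $x$'': the integral is finite for every $x$ when $u$ is bounded, but only almost everywhere for general $u\in L^1$.

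\textbf{Part (c), comparison with the paper.} The paper uses the single witness $T(u)(x)=\sigma(u(x)^\top u(x_0))\,u(x)$, realised by the kernel $\sigma(u(x)^\top u(x_0))\,\Id\,\delta(y-x)$. It breaks \emph{both} linearity and translation equivariance for that one operator. That kernel is not actually admissible. It depends on $u(x_0)$, which is not among the arguments $(x,y,u(x),u(y))$. The Dirac delta also violates the boundedness in Assumption~(ii), so the paper has to fall back on approximate representation. Moreover, $u(x_0)$ is not defined for $u\in L^2$. Your witness $T[u](x)=\int\tanh\bigl(u(x)^\top u(y)\bigr)u(y)\,dy$ is an exact ITNet operator with the bounded kernel $\kappa(x,y,a,b)=\tanh(a^\top b)\,\Id$. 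On the representability side, your route is cleaner than the paper's.

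\textbf{Things to fix in the write-up.}
\begin{itemize}
\item \emph{One invariant per witness.} Each of your witnesses is separated from $\mathrm{Conv}$ by only one invariant. The $\tanh$ witness is translation-equivariant on $\R^s$ (substitute $z=y-a$), while $g(x)\int u$ is linear. This suffices for strictness, but you cannot run both arguments on one $T$ as your outline suggests. State which invariant goes with which witness.
\item \emph{Lead with the $\tanh$ witness.} It meets the literal requirement $T:L^2(\R^s,\R^d)\to L^2(\R^s,\R^d)$. From $|\tanh t|\le|t|$ and Cauchy--Schwarz, $\|T[u](x)\|_2\le\|u(x)\|_2\,\|u\|_{L^2}^2$, so $\|T[u]\|_{L^2}\le\|u\|_{L^2}^3$. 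Splitting $T[u]-T[v]$ and using that $\tanh$ is $1$-Lipschitz gives $\|T[u]-T[v]\|_{L^2}\le\bigl(\|u\|_{L^2}^2+\|u\|_{L^2}\|v\|_{L^2}+\|v\|_{L^2}^2\bigr)\|u-v\|_{L^2}$, which is local Lipschitz continuity.
\item \emph{The $g(x)\int u$ witness does not work on $\R^s$.} The map $u\mapsto\int u$ is neither defined nor continuous on $L^2(\R^s)$. Your fallbacks ($L^1\cap L^2$, or $\T^s$) change the statement. Also, $g\in L^2$ alone does not give the bounded kernel that Assumption~(ii) requires.
\item \emph{Scaling argument.} With $\phi(0)=0$ there is no $\phi(0)\,\epsilon\int v$ term. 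The leading term is $T[\epsilon v](x)=\epsilon^3\,G\,v(x)+o(\epsilon^3)$ with $G=\int v(y)v(y)^\top dy$. This is not identically zero for $v\neq0$, since $\int v(x)^\top G\,v(x)\,dx=\|G\|_F^2$ and $\mathrm{tr}\,G=\|v\|_{L^2}^2$. Hence it is incompatible with $T[\epsilon v]=\epsilon T[v]$. Alternatively, compare $T[2u]$ with $2T[u]$ directly, as the paper does.
\end{itemize}
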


\subsection{Proof of Part (a) - Continuous Convolution}
\label{sec:proof_a_cnn}

Proof of Theorem~\ref{thm:conv}, Part (a)
 
We show that substituting $\kappa_\theta(x,y,u(x),u(y)) = w(x-y)\cdot\Id$ and $W_\theta = 0$ into the ITNet operator yields exactly the continuous convolution $(w \ast u)(x)$.

\noindent\textbf{Step 1.}\enspace \textbf{Verify the kernel choice is valid.}
 
We must first confirm that the kernel $\kappa_\theta(x,y,u(x),u(y)) = w(x-y) \cdot \Id$ satisfies Definition~\ref{def:itnet_cnn} and Assumption~\ref{ass:regularity}.
 
\begin{enumerate}[label=(\alph*), noitemsep]
    \item \textbf{} The kernel must map
          $\R^s \times \R^s \times \R^d \times \R^d \to \R^{d \times d}$. Here, $w(x-y) \in \R$ is a scalar and $\Id \in \R^{d \times d}$ is a matrix, so $w(x-y) \cdot \Id \in \R^{d \times d}$. \checkmark
    \item \textbf{} The map $(x,y) \mapsto x - y$ is continuous (hence Borel measurable). The composition $w \circ ((x,y) \mapsto x-y)$ is measurable because $w$ is measurable (it is $L^1$, hence measurable) and the composition of measurable functions is measurable. Multiplying by the constant matrix $\Id$ preserves measurability. \checkmark
    \item \textbf{Content-independence:} The kernel depends on $(x,y)$ but \emph{not} on $u(x)$ or $u(y)$. This is a \emph{special case} of the general ITNet kernel, not a violation of it: the general kernel is allowed to depend on all four arguments, and choosing not to depend on some of them is a valid restriction. \checkmark
    \item \textbf{Boundedness:} If $w$ is bounded ($w \in L^\infty$), then $\|\kappa_\theta(x,y,a,b)\|_{\mathrm{op}} = |w(x-y)| \leq \|w\|_\infty = C_\kappa < \infty$, satisfying Assumption~\ref{ass:regularity}(ii). For the more general case $w \in L^1$ (possibly unbounded), the integral still converges by Young's inequality, but the kernel itself may not be uniformly bounded; this can be handled by relaxing the boundedness assumption to integrability conditions on $w$ and $u$, which is standard in convolution theory.\checkmark

\end{enumerate}

\noindent\textbf{Step 2.}\enspace \textbf{Substitute into the ITNet operator.}
 
Insert $\kappa_\theta(x,y,u(x),u(y)) = w(x-y) \cdot \Id$ and $W_\theta = 0$
into Eq.~\eqref{eq:itnet_cnn}:
\begin{align}
    (\Kop[u])(x)
    &= \int_{\R^s}
        \kappa_\theta(x, y, u(x), u(y))\, u(y)
    \, dy
    + \cancelto{0}{W_\theta\, u(x)}
    \notag \\[6pt]
    &= \int_{\R^s}
        \bigl[w(x-y) \cdot \Id\bigr]\, u(y)
    \, dy.
    \label{eq:a_step2}
\end{align}
 
This is a direct substitution with no algebraic manipulation. The $W_\theta = 0$ term vanishes. We use $d\mu(y) = dy$ (Lebesgue measure) as specified in Part~(a). The integral is over $\R^s$ because $\Omega = \R^s$ in Definition~\ref{def:conv_continuous}.
 
\noindent\textbf{Step 3.}\enspace \textbf{Simplify the matrix-vector product $[\alpha \Id] v$.}
 
We need a precise identity for the product of a scalar-times-identity matrix
with a vector.
 
\begin{lemma}[Scalar-identity-vector product]
\label{lem:scalar_id}
For any scalar $\alpha \in \R$ and vector $v \in \R^d$:
\begin{equation}
    [\alpha \cdot \Id]\, v \;=\; \alpha \cdot v.
    \label{eq:scalar_id}
\end{equation}
\end{lemma}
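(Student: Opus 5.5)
The identity in Lemma~\ref{lem:scalar_id} is a componentwise computation, so the plan is to write out the $i$-th entry of the left-hand side directly from the definition of matrix--vector multiplication and compare it with the $i$-th entry of $\alpha v$.

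First, I will record the entries of the matrix $\alpha \cdot \Id$. By definition of scalar multiplication of a matrix, $[\alpha \cdot \Id]_{ij} = \alpha\,[\Id]_{ij} = \alpha\,\delta_{ij}$ for $i,j = 1,\ldots,d$, where $\delta_{ij}$ is the Kronecker delta from Table~\ref{tab:notation3}.

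Second, I will expand the product. By the definition of matrix--vector multiplication,
\[
\bigl[(\alpha \cdot \Id)\, v\bigr]_i \;=\; \sum_{j=1}^{d} \alpha\,\delta_{ij}\,[v]_j \;=\; \alpha\,[v]_i ,
\]
because exactly one term in the sum survives, namely $j=i$. This holds for every index $i$, and $[\alpha v]_i = \alpha [v]_i$, so the two vectors agree in every component and are therefore equal.

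There is no real obstacle here. The only point worth stating explicitly is how the lemma is used afterward: applied pointwise with $\alpha = w(x-y)$ and $v = u(y)$, it turns the integrand in Eq.~\eqref{eq:a_step2} into $w(x-y)\,u(y)$ for every $y$. Since the identity holds pointwise, it passes directly under the componentwise integral of Definition~\ref{def:conv_continuous}, and no interchange-of-limits argument is needed.
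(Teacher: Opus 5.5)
Your proposal is correct and follows the paper's proof essentially line for line: both write $[\alpha\,\Id]_{ij} = \alpha\,\delta_{ij}$, expand $\bigl([\alpha\,\Id]v\bigr)_i = \sum_{j=1}^{d} \alpha\,\delta_{ij}\,v_j = \alpha\,v_i$, and conclude equality of the vectors componentwise. Your closing remark, that the identity holds pointwise in $y$ and therefore passes under the componentwise integral, also matches how the paper applies the lemma with $\alpha = w(x-y)$ and $v = u(y)$.
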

 
\begin{proof}[Proof of Lemma~\ref{lem:scalar_id}]
By definition of matrix-vector multiplication:
\begin{equation}
    \bigl([\alpha \Id] v\bigr)_i
    = \sum_{j=1}^{d} [\alpha \Id]_{ij}\, [v]_j
    = \sum_{j=1}^{d} \alpha \delta_{ij}\, v_j
    = \alpha \cdot v_i,
    \quad i = 1, \ldots, d.
\end{equation}
The second equality uses $[\alpha \Id]_{ij} = \alpha \delta_{ij}$
(the identity matrix has $1$ on the diagonal and $0$ elsewhere,
scaled by $\alpha$). The third equality uses $\delta_{ij} v_j = v_i$
(the Kronecker delta selects the $i$-th term).
Since this holds for all $i$, the vector equation
$[\alpha \Id] v = \alpha v$ follows.
\end{proof}
 
Applying Lemma~\ref{lem:scalar_id} with $\alpha = w(x-y)$ and $v = u(y)$:
\begin{equation}
    \bigl[w(x-y) \cdot \Id\bigr]\, u(y)
    \;=\;
    w(x-y) \cdot u(y).
    \label{eq:a_step3}
\end{equation}
 
ITNet kernel is \emph{matrix-valued} ($\R^{d \times d}$), and the product $\kappa_\theta(\cdots) \cdot u(y)$ is a matrix-vector product, not a scalar-vector product. The lemma confirms that for the specific choice $\kappa_\theta = w(x-y) \cdot \Id$, the matrix-vector product reduces to scalar multiplication. This is an \emph{exact} algebraic identity - no approximation.
 
We use $\Id$ (identity matrix). If we instead chose $\kappa_\theta = w(x-y) \cdot A$ for a fixed $A \neq \Id$, we would get $\int w(x-y) A u(y)\, dy = A(w \ast u)(x)$, which is convolution followed by a fixed linear transform - still a valid special case, but the identity $A = \Id$ gives the simplest and most transparent correspondence to standard convolution. The general case $A \neq \Id$ corresponds to convolution composed with a $1 \times 1$ convolution, which we address in Part~(d).
 
\noindent\textbf{Step 4.}\enspace \textbf{Establish existence and finiteness of the integral.}
 
Substituting \eqref{eq:a_step3} into \eqref{eq:a_step2}:
\begin{equation}
    (\Kop[u])(x)
    \;=\;
    \int_{\R^s} w(x-y) \cdot u(y)\, dy.
    \label{eq:a_step4_integrand}
\end{equation}
 
Before identifying this as a convolution, we must verify the integral converges.
 
\begin{lemma}[Young's convolution inequality~\cite{folland1999real, stein1970singular}]
\label{lem:young}
Let $1 \leq p, q, r \leq \infty$ with $\frac{1}{p} + \frac{1}{q} = 1 + \frac{1}{r}$. If $f \in L^p(\R^s)$ and $g \in L^q(\R^s)$, then $f \ast g \in L^r(\R^s)$ and $\|f \ast g\|_{L^r} \leq \|f\|_{L^p} \|g\|_{L^q}$.
\end{lemma}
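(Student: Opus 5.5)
The statement to prove is Young's convolution inequality (Lemma~\ref{lem:young}). I would prove it by a Hölder-splitting argument, after first disposing of the endpoint cases.

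\emph{Endpoint cases.} If $r=\infty$, then $1/p+1/q=1$, so $p,q$ are conjugate exponents. In that case
\[
\abs{(f\ast g)(x)}\le \int \abs{f(x-y)}\,\abs{g(y)}\,dy \le \|f\|_{L^p}\|g\|_{L^q},
\]
directly by Hölder's inequality together with translation invariance of Lebesgue measure. If $p=1$, then $q=r$, and Minkowski's integral inequality gives
\[
\|f\ast g\|_{L^q}\le \int \abs{f(z)}\,\|g(\cdot-z)\|_{L^q}\,dz=\|f\|_{L^1}\|g\|_{L^q}.
\]
The case $q=1$ is symmetric, since $f\ast g=g\ast f$ after the change of variables $y\mapsto x-y$.

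\emph{General case} ($1<p,q,r<\infty$). For fixed $x$, factor the integrand as
\[
\abs{f(x-y)}\abs{g(y)} = \bigl(\abs{f(x-y)}^{p}\abs{g(y)}^{q}\bigr)^{1/r}\cdot \abs{f(x-y)}^{1-p/r}\cdot \abs{g(y)}^{1-q/r}.
\]
Apply the three-function Hölder inequality with exponents $r$, $pr/(r-p)$ and $qr/(r-q)$. These are admissible because the reciprocals sum to
\[
\frac1r+\Bigl(\frac1p-\frac1r\Bigr)+\Bigl(\frac1q-\frac1r\Bigr)=\frac1p+\frac1q-\frac1r=1,
\]
by the hypothesis. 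The second factor raised to its exponent integrates to $\|f\|_p^{p}$, and the third to $\|g\|_q^{q}$. This yields the pointwise bound
\[
\abs{(f\ast g)(x)}^r \le \Bigl(\int \abs{f(x-y)}^p\abs{g(y)}^q\,dy\Bigr)\,\|f\|_p^{\,r-p}\,\|g\|_q^{\,r-q}.
\]
Integrate in $x$ and use Tonelli with translation invariance: $\int\!\!\int \abs{f(x-y)}^p\abs{g(y)}^q\,dy\,dx=\|f\|_p^p\|g\|_q^q$. Taking $r$-th roots gives $\|f\ast g\|_r\le\|f\|_p\|g\|_q$.

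\emph{Measurability and finiteness.} The same Tonelli computation applied to $\abs{f}\ast\abs{g}$ shows that this function is finite almost everywhere. Hence $f\ast g$ is defined a.e. Its measurability follows because $(x,y)\mapsto f(x-y)g(y)$ is jointly measurable; this uses the same reasoning as Step~1(b) above.

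The main obstacle is the exponent bookkeeping in the three-way Hölder step. One must check that each exponent lies in $[1,\infty]$, i.e.\ $r\ge p$ and $r\ge q$, which follows from $1/q\le 1$ and $1/p\le1$ in the relation $1/p+1/q=1+1/r$. One must also treat degenerate exponents, where some $1-p/r=0$, as the trivial two-function Hölder case.

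For the use in Step~4, the relevant instance is $p=q=r=1$ with vector-valued $u$. This applies componentwise to $[u]_i$ and shows that $(\Kop[u])(x)$ is finite for a.e.\ $x$. If moreover $w\in L^\infty$, the $r=\infty$ case with $p=\infty$, $q=1$ gives finiteness everywhere.
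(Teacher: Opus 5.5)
Your proof is correct, and it takes a different route from the paper because the paper gives no argument for Lemma~\ref{lem:young} at all; it simply cites Folland and Stein. In standard references the general case is typically derived by Riesz--Thorin interpolation between the $L^1\ast L^q$ bound and the conjugate-exponent bound. Your route is the classical elementary proof: settle $r=\infty$, $p=1$, $q=1$ directly, then in the interior range apply three-factor H\"older at each fixed $x$ with exponents $r$, $pr/(r-p)$, $qr/(r-q)$, and integrate in $x$ by Tonelli. It needs no interpolation theory and produces the constant $1$ by direct computation. When run on $\abs{f}\ast\abs{g}$ (H\"older holds for $[0,\infty]$-valued functions), it also gives the a.e.\ absolute convergence of the convolution integral, which is precisely what Step~4 relies on. It further makes plain that the instance used there, $p=q=r=1$, is nothing but Tonelli, $\int\!\int \abs{w(x-y)}\,\abs{u_i(y)}\,dy\,dx=\|w\|_{L^1}\,\|u_i\|_{L^1}$. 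More generally, your proof shows that Young's inequality is H\"older applied at each $x$ and then integrated, which is worth keeping in mind against the paper's remark after Step~4 that H\"older only controls a single integral.

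Two small tidy-ups. First, in the interior case $1<p,q<\infty$ you automatically have $r>p$ and $r>q$, since $1/p-1/r=1-1/q>0$. So the degenerate exponents you mention occur only when $p=1$ or $q=1$, which you have already treated. Second, and more substantively, the measurability justification you borrow from Step~1(b), that a composition of measurable functions is measurable, is not valid in general for a Lebesgue-measurable function composed with a continuous map. The standard repair is to replace $f$ and $g$ by Borel functions that agree with them off null sets $N$ and $N'$. This leaves $(f\ast g)(x)$ unchanged for every $x$, because the two integrands differ only for $y\in(x-N)\cup N'$, which is a null set. For Borel $f,g$ the map $(x,y)\mapsto f(x-y)g(y)$ is Borel on $\R^s\times\R^s$, so Fubini--Tonelli applies, and so does Minkowski's integral inequality in your $p=1$ case.
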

 
For $p = q = 1$, $r = 1$: since $w \in L^1(\R^s)$ and each component $u_i \in L^1(\R^s)$ (because $u \in L^1(\R^s, \R^d)$), Young's inequality gives:
\begin{equation}
    \|w \ast u_i\|_{L^1(\R^s)}
    \leq \|w\|_{L^1(\R^s)} \cdot \|u_i\|_{L^1(\R^s)}
    < \infty.
    \label{eq:a_young}
\end{equation}
Hence $\int_{\R^s} |w(x-y)| \cdot |u_i(y)|\, dy < \infty$ for almost every $x$ and each component $i = 1, \ldots, d$. The vector-valued integral $\int w(x-y) u(y)\, dy$ is interpreted component-wise (as a Bochner integral in $\R^d$), and each component converges absolutely.
  
One might use Hölder's inequality instead of Young's.
Hölder gives $|\int fg| \leq \|f\|_p \|g\|_q$ for conjugate exponents, but this bounds a \emph{single} integral, not a convolution (which is a family of integrals parameterised by $x$). Young's inequality is strictly stronger: it bounds $\|f \ast g\|_r$ as a function of $x$, not just at a single $x$. This is why Young's is the correct tool here.
 
\noindent\textbf{Step 5.}\enspace \textbf{Identify the convolution.}
 
By Definition~\ref{def:conv_continuous}, the integral in \eqref{eq:a_step4_integrand} is exactly the continuous convolution:
\begin{equation}
    \int_{\R^s} w(x-y) \cdot u(y)\, dy
    \;\stackrel{\text{Def.~\ref{def:conv_continuous}}}{=}\;
    (w \ast u)(x).
    \label{eq:a_step5}
\end{equation}
 
Combining Steps 2--5:
\begin{equation}
    \boxed{(\Kop[u])(x) \;=\; (w \ast u)(x)
    \qquad \text{for all } x \in \R^s,\;
    u \in L^1(\R^s, \R^d).}
\end{equation}
 
This completes the proof of Part~(a).
 
\subsection{Proof of Part (b) - Discrete Convolution}
\label{sec:proof_b_cnn}

\begin{proof}[Proof of Theorem~\ref{thm:conv}, Part (b)]
 
We show that on a discrete grid, the ITNet operator with appropriately scaled kernel coefficients recovers exact $k$-tap discrete convolution.
 
\noindent\textbf{Step 1.}\enspace \textbf{Convert the integral to a discrete sum.}
  
On the grid $\Omega_h = h\Z^s$ with atomic measure $\mu = \sum_{y \in h\Z^s} h^s \delta_y$, the Lebesgue integral $\int f\, d\mu$ reduces to a sum by the definition of integration against an atomic measure:
\begin{equation}
    \int_{\Omega_h} f(y)\, d\mu(y)
    \;=\;
    \sum_{y \in h\Z^s} h^s \cdot f(y).
    \label{eq:b_atomic}
\end{equation}
 
An atomic (counting) measure $\mu = \sum_y w_y \delta_y$ satisfies 
$\int f \, d\mu = \sum_y w_y f(y)$ by definition of integration with respect to a discrete measure~\citep{royden1988real}. In our setting, the weights $w_y = h^s$ correspond to the volume of the Voronoi cell around each grid point. Thus, $\mu$ can be interpreted as a midpoint quadrature approximation of the Lebesgue measure~\citep{royden1988real}. As $h \to 0$, the discrete measure $\mu$ converges weakly to the Lebesgue measure via the Riemann sum theorem~\citep{rudin1987real}, ensuring consistency between Parts~(a) and~(b).

Applying \eqref{eq:b_atomic} to the ITNet operator \eqref{eq:itnet_cnn}
with $W_\theta = 0$:
\begin{equation}
    (\Kop[u])(x)
    \;=\;
    \sum_{y \in h\Z^s}
        h^s \cdot \kappa_\theta(x, y, u(x), u(y))\, u(y).
    \label{eq:b_step1}
\end{equation}
 
\noindent\textbf{Step 2.}\enspace \textbf{Substitute the discrete kernel.}
 
Insert the kernel from \eqref{eq:kernel_b}:
\begin{equation}
    \kappa_\theta(x, y, u(x), u(y))
    \;=\;
    \frac{f_{(x-y)/h}}{h^s} \cdot \Id
    \cdot \mathbf{1}_{\{(x-y)/h \in \mathcal{N}\}}.
    \label{eq:b_kernel_recall}
\end{equation}
 
Substituting into \eqref{eq:b_step1}:
\begin{equation}
    (\Kop[u])(x)
    \;=\;
    \sum_{y \in h\Z^s}
        h^s \cdot
        \frac{f_{(x-y)/h}}{h^s} \cdot \Id \cdot u(y)
        \cdot \mathbf{1}_{\{(x-y)/h \in \mathcal{N}\}}.
    \label{eq:b_step2}
\end{equation}
 
The factor $1/h^s$ compensates for the $h^s$ term arising from discretisation, ensuring consistency between continuous kernels and discrete convolutions. In standard numerical analysis, filter coefficients $f_m$ represent total weight at offset $m$, while the continuous kernel $\kappa = f_m / h^s$ represents weight per unit volume, so that density multiplied by volume recovers the discrete weights. Without this normalization, one obtains $(\Kop[u])(x) = \sum_m h^s f_m u(x-mh)$, introducing an extra factor of $h^s$ and deviating from standard implementations. The $1/h^s$ factor therefore ensures exact correspondence with discrete convolution.
 
\noindent\textbf{Step 3.}\enspace \textbf{Cancel the measure weight.}
 
In each term of the sum, $h^s$ from the measure and $1/h^s$ from the
kernel multiply to give:
\begin{equation}
    h^s \cdot \frac{1}{h^s} = 1.
    \label{eq:b_cancel}
\end{equation}
 
Applying Lemma~\ref{lem:scalar_id} to simplify $[f_{(x-y)/h} \cdot \Id] u(y) = f_{(x-y)/h} \cdot u(y)$:
\begin{equation}
    (\Kop[u])(x)
    \;=\;
    \sum_{y \in h\Z^s}
        f_{(x-y)/h} \cdot u(y)
        \cdot \mathbf{1}_{\{(x-y)/h \in \mathcal{N}\}}.
    \label{eq:b_step3}
\end{equation}
 
\noindent\textbf{Step 4.}\enspace \textbf{Change the summation variable.}
 
Define $m = (x - y)/h$, equivalently $y = x - mh$.
Since $x \in h\Z^s$ and $y \in h\Z^s$, we have $m = (x-y)/h \in \Z^s$.
As $y$ ranges over $h\Z^s$, $m$ ranges over $\Z^s$ - this is a bijection.
 
The indicator $\mathbf{1}_{\{(x-y)/h \in \mathcal{N}\}} = \mathbf{1}_{\{m \in \mathcal{N}\}}$
restricts the sum to $m \in \mathcal{N}$:
\begin{equation}
    (\Kop[u])(x)
    \;=\;
    \sum_{m \in \mathcal{N}}
        f_m \cdot u(x - mh).
    \label{eq:b_step4}
\end{equation}
 
The change of variable $m = (x-y)/h$ is a bijection from $h\Z^s$ to $\Z^s$. In a sum (as opposed to an integral), no Jacobian correction is needed because we are counting terms, not transforming a measure. This is the discrete analogue of the substitution $\tau = x - y$ in the continuous convolution integral.
 
\noindent\textbf{Step 5.}\enspace \textbf{Identify the discrete convolution.}
 
By Definition~\ref{def:conv_discrete}:
\begin{equation}
    \sum_{m \in \mathcal{N}} f_m \cdot u(x - mh)
    \;\stackrel{\text{Def.~\ref{def:conv_discrete}}}{=}\;
    (F \ast u)(x).
\end{equation}
 
Combining Steps 1--5:
\begin{equation}
    \boxed{(\Kop[u])(x) \;=\; (F \ast u)(x)
    \qquad \text{for all } x \in h\Z^s.}
\end{equation}
\end{proof}

\subsection{Proof of Part (c) - Strict Inclusion}
\label{sec:proof_c}

\begin{proof}[Proof of Theorem~\ref{thm:conv}, Part (c)]
 
We construct an explicit operator $T$ that ITNet can represent but no
convolution can, thereby proving $\mathrm{Conv} \subsetneq \mathrm{ITNet}$.
 
The proof has three stages: (i)~define the witness, (ii)~show ITNet
represents it, (iii)~show convolution cannot represent it.
We use \emph{two independent} arguments for stage (iii) to make the
result airtight.
 
\noindent\textbf{Step 1.}\enspace \textbf{Define the witness operator.}
 
Fix a reference point $x_0 \in \R^s$ and let $\sigma : \R \to \R$ be any
non-affine function. Define:
\begin{equation}
    T(u)(x)
    \;\coloneqq\;
    \sigma\!\bigl(u(x)^\top u(x_0)\bigr) \cdot u(x),
    \qquad x \in \R^s.
    \label{eq:witness}
\end{equation}
 
This operator scales the feature at position $x$ by a nonlinear function of the similarity between $u(x)$ and a reference feature $u(x_0)$, thus explicitly depending on feature values rather than only spatial relationships. While simpler local nonlinear operators such as $T(u)(x) = \|u(x)\|^2 u(x)$ can be approximated by stacking convolutional layers with pointwise nonlinearities, the above construction involves non-local, content-dependent interactions between distinct positions $x$ and $x_0$, which cannot be represented by any single convolutional layer regardless of kernel size or channel dimension.
 
\noindent\textbf{Step 2.}\enspace \textbf{Show ITNet represents $T$.}
 
We construct an explicit ITNet kernel that computes $T$.
Choose:
\begin{equation}
    \kappa_\theta(x, y, u(x), u(y))
    \;=\;
    \sigma\!\bigl(u(x)^\top u(x_0)\bigr) \cdot \Id \cdot \delta(y - x),
    \qquad W_\theta = 0.
    \label{eq:witness_kernel}
\end{equation}
 
Substituting into \eqref{eq:itnet_cnn}:
\begin{align}
    (\Kop[u])(x)
    &= \int_{\R^s}
        \sigma\!\bigl(u(x)^\top u(x_0)\bigr) \cdot \Id
        \cdot \delta(y-x) \cdot u(y)\, dy
    \notag \\
    &= \sigma\!\bigl(u(x)^\top u(x_0)\bigr) \cdot \Id
       \cdot \underbrace{\int_{\R^s} \delta(y-x)\, u(y)\, dy}_{= u(x)}
    \notag \\
    &= \sigma\!\bigl(u(x)^\top u(x_0)\bigr) \cdot u(x)
    \;=\; T(u)(x).
    \label{eq:witness_verify}
\end{align}
 
The scalar factor $\sigma(u(x)^\top u(x_0))$ is independent of $y$, so it passes outside the integral. The remaining integral $\int \delta(y-x) u(y)\, dy = u(x)$ is the sifting property of the Dirac delta. Finally, $\Id \cdot u(x) = u(x)$ by Lemma~\ref{lem:scalar_id} with $\alpha = 1$.
 
\begin{remark}[On the Dirac delta \cite{rudin1987real, hornik1989multilayer}]
The Dirac delta $\delta$ can be approximated arbitrarily well by Gaussian functions $\phi_\varepsilon(y) = (2\pi\varepsilon^2)^{-s/2} \exp(-\|y\|^2/(2\varepsilon^2))$, which satisfy $\phi_\varepsilon \to \delta$ weakly as $\varepsilon \to 0$. Since the ITNet kernel MLP is a universal approximator on compact domains, it can approximate $\phi_\varepsilon$ for any $\varepsilon > 0$; in the limit $\varepsilon \to 0$, the ITNet output converges to $T(u)(x)$. The strictness conclusion holds because we require only the existence of a representing kernel, not exact pointwise equality.
\end{remark}

\noindent\textbf{Step 3.}\enspace \textbf{Show no convolution can represent $T$ - Argument I (Linearity).}
 
\begin{proposition}[Convolution is linear in $u$]
\label{prop:conv_linear}
For any fixed filter $w$, the map $u \mapsto w \ast u$ is linear:
\begin{equation}
    w \ast (\alpha u + \beta v) = \alpha (w \ast u) + \beta (w \ast v)
    \qquad \forall\, \alpha, \beta \in \R,\; u, v \in L^1(\R^s, \R^d).
    \label{eq:conv_linear}
\end{equation}
\end{proposition}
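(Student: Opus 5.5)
The statement to prove is Proposition~\ref{prop:conv_linear}: for a fixed filter $w$, the map $u \mapsto w \ast u$ is linear. The plan is to verify this pointwise in $x$ and componentwise in the feature index, using only Definition~\ref{def:conv_continuous} and linearity of the Lebesgue integral.

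\textbf{Step 1: well-definedness.} Fix $\alpha,\beta \in \R$ and $u,v \in L^1(\R^s,\R^d)$. Since $L^1$ is a vector space, $\alpha u + \beta v \in L^1(\R^s,\R^d)$. Lemma~\ref{lem:young} with $p=q=r=1$ then shows that all three convolutions $w\ast u$, $w\ast v$ and $w\ast(\alpha u+\beta v)$ are defined, with absolutely convergent integrals. This holds for almost every $x$, namely for $x$ outside the union of three null sets, and that union is itself null.

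\textbf{Step 2: the computation.} Fix such an $x$ and a component $i \in \{1,\dots,d\}$. By Definition~\ref{def:conv_continuous},
\[
[w\ast(\alpha u+\beta v)(x)]_i = \int_{\R^s} w(x-y)\bigl(\alpha u_i(y)+\beta v_i(y)\bigr)\,dy .
\]
Both integrands $w(x-\cdot)u_i$ and $w(x-\cdot)v_i$ are integrable at this $x$. Linearity of the Lebesgue integral on integrable functions therefore splits the right-hand side into
\[
\alpha\int_{\R^s} w(x-y)u_i(y)\,dy+\beta\int_{\R^s} w(x-y)v_i(y)\,dy .
\]
This equals $\alpha[(w\ast u)(x)]_i+\beta[(w\ast v)(x)]_i$. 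Since the identity holds for every component $i$, it holds as an identity of vectors in $\R^d$.

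\textbf{The only delicate point.} The one thing to watch is the almost-everywhere qualifier: linearity of the integral requires each piece to be integrable separately, not just their sum. Step~1 supplies exactly this. The conclusion is therefore equality as elements of $L^1$, which is the natural sense given the $L^1$ setting of Definition~\ref{def:conv_continuous}. If $w$ is additionally bounded and $u,v$ are continuous and integrable, the equality holds for every $x$. Once the linearity is established, it will be used to contrast with the nonlinear witness $T$.
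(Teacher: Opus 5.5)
Your proposal is correct and follows the same route as the paper: write out the convolution integral and use linearity of the Lebesgue integral, componentwise in $i$. The almost-everywhere bookkeeping you add (Young's inequality applied to $|w|$ and $|u_i|$, exactly as in Step 4 of Part (a)) is a careful refinement that the paper's one-line proof omits, but it does not change the argument.
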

\begin{proof}
$[w \ast (\alpha u + \beta v)](x)
= \int w(x-y)[\alpha u(y) + \beta v(y)]\, dy
= \alpha \int w(x-y) u(y)\, dy + \beta \int w(x-y) v(y)\, dy
= \alpha(w \ast u)(x) + \beta(w \ast v)(x)$,
by linearity of the Lebesgue integral.
\end{proof}
 
\begin{proposition}[$T$ is nonlinear in $u$]
\label{prop:T_nonlinear}
The operator $T$ defined in \eqref{eq:witness} is not linear.
\end{proposition}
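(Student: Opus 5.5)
To show that $T$ is not linear, it suffices to exhibit one failure of homogeneity. Linearity would force $T(\alpha u) = \alpha T(u)$ for every scalar $\alpha$ and every admissible $u$, so I will produce a specific signal and scalar for which this identity breaks.

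\textbf{Step 1: compute $T(\alpha u)$.} From \eqref{eq:witness},
\[
T(\alpha u)(x) = \sigma\!\bigl(\alpha^2\, u(x)^\top u(x_0)\bigr)\,\alpha\, u(x).
\]
Homogeneity therefore amounts to
\[
\sigma\!\bigl(\alpha^2 c(x)\bigr)\, u(x) = \sigma\!\bigl(c(x)\bigr)\, u(x) \quad \text{for all } x,\ \alpha \neq 0,
\qquad c(x) := u(x)^\top u(x_0).
\]

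\textbf{Step 2: pick a test signal.} I would take $u$ continuous and integrable, for instance a Gaussian bump times a fixed unit vector $e$, scaled so that $u(x_0) = \sqrt{t}\,e$ for a chosen $t > 0$. Evaluating at $x = x_0$ then gives $c(x_0) = t$ and $u(x_0) \neq 0$. Homogeneity at that point reduces to $\sigma(\alpha^2 t) = \sigma(t)$ for all $\alpha \neq 0$, i.e. $\sigma$ must be constant on $(0,\infty)$. Running the same argument with $u(x_0) = 0$ but $u(x) \neq 0$ elsewhere is unhelpful, since it forces $c = 0$, so the evaluation point $x_0$ is the right one.

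\textbf{Step 3: the main obstacle.} The statement only assumes $\sigma$ is non-affine, and a non-affine $\sigma$ can still be constant on $(0,\infty)$. For example, $\sigma(s) = \min(s,0)$ is non-affine yet equals $0$ on $[0,\infty)$. In that case the test above fails, and in fact $T(\alpha u)$ need not differ from $\alpha T(u)$ for positive $c$. Two fixes are available.

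\emph{Fix (a): also test additivity.} Use $T(u+v) \neq T(u) + T(v)$ with $v$ chosen to make $c$ negative. Take $u$ and $v$ with $u(x_0) = a e$ and $v(x_0) = b e$, where $a$, $b$, $a+b$ are allowed to have mixed signs. At $x_0$, additivity becomes
\[
(a+b)\,\sigma\!\bigl((a+b)^2\bigr) = a\,\sigma(a^2) + b\,\sigma(b^2).
\]
Writing $g(r) = r\,\sigma(r^2)$, this says $g$ is additive on $\mathbb{R}$. A measurable (here continuous) additive $g$ is linear, which forces $\sigma(r^2) = \text{const}$. So any failure of additivity at $x_0$ follows whenever $\sigma$ is non-constant on $(0,\infty)$.

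\emph{Fix (b): the genuinely degenerate case.} If $\sigma$ is constant on $[0,\infty)$, note that $u(x)^\top u(x_0)$ is never negative at $x = x_0$, so the arguments in Fix (a) cannot reach the negative part of $\sigma$ there. I would therefore either evaluate at a point $x \neq x_0$ where $u(x)$ is anti-aligned with $u(x_0)$, so that $c(x) < 0$ and the non-affine behaviour of $\sigma$ on the negative axis becomes visible, or simply restrict the proposition to non-constant $\sigma$ on $(0,\infty)$, as with $\sigma = \tanh$ or the sigmoid.

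With either route, choosing $\alpha = 2$ (or a suitable pair $u, v$) and $t$ with $\sigma(4t) \neq \sigma(t)$ yields an explicit counterexample to linearity, so $T$ is not linear.
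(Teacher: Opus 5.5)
Your Steps 1--2 are the paper's argument. The paper fixes $\sigma$ to be the sigmoid and observes $T(2u)\neq 2T(u)$ because $\sigma(4)\neq\sigma(1)$, which is your test at $x_0$ with $t=1$, $\alpha=2$; you make the test signal explicit, which the paper does not. That part is correct and handles every $\sigma$ that is non-constant on $(0,\infty)$, in particular the sigmoid.

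You are also right that this does not cover every non-affine $\sigma$, a point the paper's proof glosses over by specialising to the sigmoid. However, your repairs do not close that gap, so your closing sentence overclaims.

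\emph{Fix~(a).} It cannot make $c$ negative. At $x_0$ every argument of $\sigma$ is a square, so it only reproduces the condition of Step~2.

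\emph{Fix~(b).} This is again a homogeneity test, so it only detects a $\sigma$ that varies on $(-\infty,0)$. Two kinds of non-affine $\sigma$ escape it:
\begin{itemize}
\item $\sigma$ constant on each open half-line, e.g.\ $\sigma=\mathbf{1}_{[0,\infty)}$;
\item $\sigma$ constant except at $0$, which also falls between your cases ``non-constant on $(0,\infty)$'' and ``constant on $[0,\infty)$''.
\end{itemize}
In both, $\alpha^2 c$ has the same sign as $c$, so $T(\alpha u)=\alpha T(u)$ holds for every $\alpha$ and every $u$. No $t$ with $\sigma(4t)\neq\sigma(t)$ exists. At $x_0$ the output is just a fixed multiple of $u(x_0)$, so no test confined to $x_0$ can succeed either.

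The missing idea is to test additivity at a point $x_1\neq x_0$. There you let the reference value move the argument of $\sigma$ while the vector being scaled stays fixed. With $\|e\|_2=1$, take
\begin{itemize}
\item $u(x_0)=0$ and $u=e$ on a small ball around $x_1$;
\item $v(x_0)=z e$ and $v=0$ on that ball.
\end{itemize}
On the ball you get $T(u)=\sigma(0)e$, $T(v)=0$ and $T(u+v)=\sigma(z)e$. Additivity therefore forces $\sigma(z)=\sigma(0)$ for every $z\in\R$, so $\sigma$ is constant, contradicting non-affineness.

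This argument needs no regularity of $\sigma$ and works for $d=1$. It puts the discrepancy on a whole ball, so it survives comparing outputs in $L^2$ even for discontinuous $\sigma$. It also shows that non-constancy of $\sigma$ is already enough.
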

\begin{proof}
Compute $T(\alpha u)$ for $\alpha \in \R$:
\begin{equation}
    T(\alpha u)(x)
    = \sigma\!\bigl((\alpha u(x))^\top (\alpha u(x_0))\bigr) \cdot \alpha u(x)
    = \sigma(\alpha^2 \cdot u(x)^\top u(x_0)) \cdot \alpha u(x).
    \label{eq:T_scaled}
\end{equation}
If $T$ were linear, $T(\alpha u) = \alpha T(u)$ would require:
\begin{equation}
    \sigma(\alpha^2 z) = \sigma(z)
    \qquad \forall\, z \in \R,\; \alpha \in \R,
    \label{eq:linearity_req}
\end{equation}
where $z = u(x)^\top u(x_0)$. For the sigmoid
$\sigma(z) = (1+e^{-z})^{-1}$:
\[
    \sigma(4 \cdot 1) = \sigma(4) \approx 0.982,
    \qquad
    \sigma(1) \approx 0.731.
\]
Since $\sigma(4) \neq \sigma(1)$, Eq.~\eqref{eq:linearity_req} fails
for $\alpha = 2$, $z = 1$. Hence $T(2u) \neq 2T(u)$ and $T$ is nonlinear.
\end{proof}
 
\textbf{Conclusion from Argument I:}
Since $w \ast u$ is linear in $u$ (Proposition~\ref{prop:conv_linear})
and $T$ is nonlinear in $u$ (Proposition~\ref{prop:T_nonlinear}),
no choice of filter $w$ can satisfy $w \ast u = T(u)$ for all $u$.
 
\noindent\textbf{Step 4.}\enspace \textbf{Show no convolution can represent $T$ - Argument II (Translation equivariance).}
 
We provide a second, independent argument as additional assurance.
 
\begin{proposition}[Convolution is translation-equivariant]
\label{prop:conv_equivariant}
For the translation operator $\tau_a u(x) = u(x-a)$:
\begin{equation}
    (w \ast \tau_a u)(x) = (w \ast u)(x - a) = \tau_a(w \ast u)(x).
\end{equation}
\end{proposition}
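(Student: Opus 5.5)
The identity reduces to a change of variables inside the convolution integral. Fix $a \in \R^s$, a filter $w \in L^1(\R^s)$ and a signal $u \in L^1(\R^s,\R^d)$. Write out the left-hand side using Definition~\ref{def:conv_continuous}:
\[
(w \ast \tau_a u)(x) = \int_{\R^s} w(x-y)\,u(y-a)\,dy .
\]
Then substitute $y' = y - a$. This map is a translation of $\R^s$, so its Jacobian determinant is $1$. Lebesgue measure is translation-invariant, which is exactly the Haar property discussed after Definition~\ref{def:conv_continuous}, so $dy = dy'$ and the domain $\R^s$ is mapped onto itself. The integral becomes
\[
\int_{\R^s} w\bigl((x-a)-y'\bigr)\,u(y')\,dy' .
\]
By Definition~\ref{def:conv_continuous} this is $(w\ast u)(x-a)$. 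By the definition of $\tau_a$, that equals $\tau_a(w\ast u)(x)$, which gives both equalities in the proposition.

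The only technical point is that the integrals must be well defined, so that the substitution is legitimate and not just formal. Translation preserves $L^1$ norms, so $\|\tau_a u\|_{L^1} = \|u\|_{L^1}$. Young's inequality (Lemma~\ref{lem:young}) with $p=q=r=1$ then shows that $\int |w(x-y)|\,|u_i(y-a)|\,dy < \infty$ for almost every $x$, for each component $i$. The change of variables is therefore valid componentwise for the absolutely convergent (Bochner) integral.

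The resulting identity holds for almost every $x$, which is the natural sense of equality in $L^1$. If $w$ is also bounded, it holds for every $x$.

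I expect no real obstacle here. The step needing the most care is invoking translation invariance of Lebesgue measure explicitly, since that is where the argument would fail for a weighted measure $\rho(y)\,dy$, as remarked earlier.
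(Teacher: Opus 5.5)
Your proposal is correct and matches the paper's proof: both make the substitution $z = y - a$ in $\int w(x-y)\,u(y-a)\,dy$ and rely on translation invariance of Lebesgue measure to obtain $(w \ast u)(x-a)$. Your extra step, using Young's inequality to check that the integrals converge absolutely, adds rigor that the paper's one-line argument leaves implicit.
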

\begin{proof}
$(w \ast \tau_a u)(x) = \int w(x-y) u(y-a)\, dy
\stackrel{z=y-a}{=} \int w(x-a-z) u(z)\, dz = (w \ast u)(x-a)$.
\end{proof}
 
\begin{proposition}[$T$ is not translation-equivariant]
\label{prop:T_not_equivariant}
$T(\tau_a u) \neq \tau_a T(u)$ in general.
\end{proposition}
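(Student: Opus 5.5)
The plan is to exhibit one explicit signal $u$ and shift $a$ for which the two sides differ. Everything rests on the fact that $x_0$ is a fixed reference point that does not move when the signal is translated.

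First compute both sides from the definition \eqref{eq:witness}. Since $(\tau_a u)(x_0) = u(x_0 - a)$, we get
\[
T(\tau_a u)(x) = \sigma\bigl(u(x-a)^\top u(x_0 - a)\bigr)\, u(x-a),
\]
while
\[
\tau_a T(u)(x) = T(u)(x-a) = \sigma\bigl(u(x-a)^\top u(x_0)\bigr)\, u(x-a).
\]
The two expressions agree at $x$ only if $u(x-a) = 0$ or $\sigma(u(x-a)^\top u(x_0-a)) = \sigma(u(x-a)^\top u(x_0))$. The goal is therefore a smooth $u$ (say in $L^1 \cap L^2$, so it is admissible for Propositions~\ref{prop:conv_linear}--\ref{prop:conv_equivariant}) for which the two scalar arguments differ at some point where $\sigma$ separates them.

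Concretely, fix a unit vector $e \in \R^d$ and a nonzero shift $a$. Take $u(y) = \phi(y)\, e$, where $\phi$ is a smooth compactly supported bump with $\phi(x_0)=1$ and $\phi(x_0 - a) = 0$; this is possible by placing the support of $\phi$ in a ball around $x_0$ of radius less than $\|a\|$. Evaluate at $x = x_0 + a$, so that $u(x-a) = u(x_0) = e \neq 0$. Then:
\begin{itemize}
\item the argument on the left is $e^\top u(x_0-a) = 0$, giving $\sigma(0)\,e$;
\item the argument on the right is $e^\top e = 1$, giving $\sigma(1)\,e$.
\end{itemize}
For the sigmoid used in Proposition~\ref{prop:T_nonlinear}, $\sigma(0) = 0.5 \neq 0.731 \approx \sigma(1)$, so the two vectors differ at this point.

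To upgrade this to inequality as elements of $L^2$, use continuity: both sides are continuous in $x$ because $\phi$ and $\sigma$ are continuous, so they differ on a neighbourhood of $x_0 + a$, a set of positive measure. Combined with Proposition~\ref{prop:conv_equivariant}, this shows that no convolution equals $T$.

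The main obstacle is that the statement only says $\sigma$ is "non-affine". A non-affine function could still satisfy $\sigma(0)=\sigma(1)$; even $\sigma(z)=z^2$ with $\sigma(-1)=\sigma(1)$ shows coincidences are possible. For general non-affine $\sigma$, non-constancy is enough: replace $e$ by $c\,e$, which makes the right-hand argument $c^2$ while the left stays $0$. Since $c^2$ ranges over $[0,\infty)$, it suffices that $\sigma$ is not constant on $[0,\infty)$. I would either state that assumption or simply fix $\sigma$ to be the sigmoid, as the paper already does in Proposition~\ref{prop:T_nonlinear}.
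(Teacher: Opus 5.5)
Your proposal is correct and follows the same route as the paper: compute $T(\tau_a u)(x)$ and $\tau_a T(u)(x)$, and use the fact that the reference point $x_0$ does not move under translation. The difference is that the paper only asserts that equality would force $u(x_0-a)=u(x_0)$ and that this fails ``for generic $u$'', whereas your bump function evaluated at $x=x_0+a$ gives an explicit witness, $\sigma(0)e$ versus $\sigma(1)e$.

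On your caveat about general non-affine $\sigma$, you can drop the extra assumption. Evaluate instead at a point $x$ with $x-a\notin\{x_0,\,x_0-a\}$, and use disjoint bumps to set $u(x-a)=e$, $u(x_0)=\alpha e$, $u(x_0-a)=\beta e$. This produces $\sigma(\beta)e$ versus $\sigma(\alpha)e$, so mere non-constancy of $\sigma$, which follows from non-affinity, already suffices.
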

\begin{proof}
\begin{align}
    T(\tau_a u)(x)
    &= \sigma\!\bigl(u(x-a)^\top u(x_0 - a)\bigr) \cdot u(x-a), \\
    \tau_a T(u)(x)
    &= T(u)(x-a)
    = \sigma\!\bigl(u(x-a)^\top u(x_0)\bigr) \cdot u(x-a).
\end{align}
These are equal only if $u(x_0 - a) = u(x_0)$ for all $a$,
i.e., $u$ is constant at $x_0$ under all translations - which fails
for generic $u$.
\end{proof}
 
\textbf{Conclusion from Argument II:}
Convolution commutes with translation (Proposition~\ref{prop:conv_equivariant});
$T$ does not (Proposition~\ref{prop:T_not_equivariant}).
Hence no convolution can equal $T$.
 
\medskip
\textbf{Combining both arguments:}
$T$ is representable by ITNet (Step~2) but not by any convolution
(Steps 3--4, by two independent proofs). Therefore
$T \in \mathrm{ITNet} \setminus \mathrm{Conv}$, proving
$\mathrm{Conv} \subsetneq \mathrm{ITNet}$. 
\end{proof}

\subsection{Proof of Part (d) - With Residual}
\label{sec:proof_d}

\begin{proof}[Proof of Theorem~\ref{thm:conv}, Part (d)]
If $W_\theta \neq 0$, the ITNet operator with kernel
$\kappa_\theta = w(x-y) \cdot \Id$ gives:
\begin{align}
    (\Kop[u])(x)
    &= \int_{\R^s} w(x-y) u(y)\, dy + W_\theta u(x)
    = (w \ast u)(x) + W_\theta u(x).
\end{align}
The term $W_\theta u(x)$ is a pointwise linear map $\R^d \to \R^d$
applied at each position independently. In CNN terminology, this is
a $1 \times 1$ convolution (a convolution with filter of size $k=1$).
The sum $(w \ast u) + W_\theta u$ is therefore the composition of a
spatial convolution with a $1 \times 1$ convolution - the standard
structure used in ResNet bottleneck blocks and ConvNeXt.
\end{proof}

\subsection{Multi-Channel Convolution}
\label{sec:ext_multichannel}
 
Standard CNNs have $C_{\mathrm{in}}$ input channels and
$C_{\mathrm{out}}$ output channels:
\begin{equation}
    [(F \ast u)(x)]_i
    = \sum_{c=1}^{C_{\mathrm{in}}} \sum_{m \in \mathcal{N}}
        f_{m,c}^{(i)} \cdot u_c(x - mh),
    \quad i = 1,\ldots, C_{\mathrm{out}}.
\end{equation}
 
\textbf{Recovery:} Set $d = C_{\mathrm{in}}$, and choose the kernel matrix
entries as $[\kappa_\theta(x,y,\cdot,\cdot)]_{ic} = h^{-s} f_{(x-y)/h,c}^{(i)} \cdot \mathbf{1}_{\mathcal{N}}$.
The proof of Part~(b) applies component-wise: the $i$-th output component
sums contributions from all $C_{\mathrm{in}}$ input channels via the
matrix-vector product $\kappa_\theta \cdot u(y)$, which mixes channels
through the off-diagonal entries of $\kappa_\theta \in \R^{d \times d}$.

\subsection{Depthwise Separable Convolution}
\label{sec:ext_depthwise} 
Depthwise convolution~\cite{chollet2017xception} applies per-channel:
$[(F_{\mathrm{DW}} \ast u)(x)]_i = \sum_{m \in \mathcal{N}} f_m^{(i)} u_i(x-mh)$.
 
\textbf{Recovery:} Restrict $\kappa_\theta$ to be diagonal:
$\kappa_\theta = h^{-s} \diag(f_{(x-y)/h}^{(1)}, \ldots, f_{(x-y)/h}^{(d)}) \cdot \mathbf{1}_{\mathcal{N}}$.
A diagonal matrix has zero off-diagonal entries, preventing cross-channel mixing.
 
\subsection{Dilated (Atrous) Convolution}
 \label{sec:ext_dilated}
Dilation rate $r$: $\sum_{m \in \mathcal{N}} f_m \cdot u(x - r \cdot mh)$.
 
\textbf{Recovery:} Replace $h$ with $rh$ in Part~(b).
The neighborhood in physical space becomes $r\mathcal{N} = \{rm : m \in \mathcal{N}\}$.
 
\subsection{Strided Convolution}
 \label{sec:ext_strided}
Stride $S$: output is computed only at positions $x \in Sh\Z^s$.
 
\textbf{Recovery:} Restrict the output domain to $Sh\Z^s \subset h\Z^s$.
The ITNet operator is defined on any $\Omega$; choosing
$\Omega_{\mathrm{query}} = Sh\Z^s$ with $\Omega_{\mathrm{key}} = h\Z^s$
gives strided output.
 
\subsection{Group Convolution}
 \label{sec:ext_group}
$G$ groups, each processing $d/G$ channels independently.
 
\textbf{Recovery:} Set $\kappa_\theta = \mathrm{blockdiag}(K_1, \ldots, K_G)$
where each $K_g \in \R^{(d/G) \times (d/G)}$ is the per-group kernel.
 
\subsection{Transposed Convolution}
  \label{sec:ext_transposed}
Upsampling by factor $r$: insert $(r-1)$ zeros between inputs.
 
\textbf{Recovery:} Use the finer grid $\Omega = (h/r)\Z^s$ and apply Part~(b).
 
\subsection{Boundary Handling in the Convolutional Special Case}
\label{app:boundary_handling}

A subtle yet important distinction between ITNet and standard CNNs lies in how they handle domain boundaries. For a CNN with \emph{valid padding} (the standard convolution operation without artificial boundary extension), the output is defined only at positions where the filter fully overlaps the input domain, producing an output map of size $(H - k + 1) \times (W - k + 1)$ for an $H \times W$ input and $k \times k$ filter. ITNet, by contrast, defines $(\mathcal{K}_\theta[u])(x)$ for \emph{every} $x \in \Omega$ as an integral over the entire domain. However, when $\kappa_\theta$ has compact support of size $k$ (as in the convolutional case $\kappa_\theta(x,y) = w_\theta(x-y)$ with $w_\theta$ supported on $[-r,r]^s$, $k = 2r+1$), outputs near the boundary naturally receive contributions from fewer input points because the integration kernel only overlaps the domain partially. ITNet makes the boundary treatment \emph{explicit} through the domain $\Omega$ and measure $\mu$, rather than implicit through hard-coded padding strategies.

To recover standard CNN behavior exactly, three approaches are available:

\emph{(i) Evaluate ITNet only at valid positions:} Restrict queries to the subset $\Omega_{\text{valid}} = \{x \in \Omega : \operatorname{supp}(w_\theta(x-\cdot)) \subseteq \Omega\}$, which yields exactly the $(H-k+1) \times (W-k+1)$ outputs of a valid convolution. This is the cleanest mathematical formulation, as no artificial points are introduced.

\emph{(ii) Extend $\Omega$ with zero-valued points:} Define $\tilde{\Omega} = \Omega \cup \partial\Omega$
where $\partial\Omega$ contains $k-1$ layers of points around the boundary, and set $u(y) = 0$ for $y \in \partial\Omega$. This recovers standard zero-padding convolution, producing a dense output map of size $H \times W$. The same mechanism generalizes to reflection, replication, or periodic padding by appropriately defining $u$ on the extended domain.

\emph{(iii) Ignore boundary effects:} For tasks where boundary effects are negligible (e.g., large images with small filters), the relative number of boundary points scales as $O((H+W)/HW)$, which vanishes for large $H,W$.

\begin{remark}
In practice, for sufficiently large inputs (e.g., $224 \times 224$ ImageNet-1K images with $k=7$ or $k=3$ filters), the boundary constitutes less than $6\%$ of the spatial domain. Consequently, the practical impact of boundary treatment on learned representations is minimal, and any consistent strategy yields near-identical performance. Our experiments use valid convolution (Option~i) for consistency with the continuous formulation.
\end{remark}


\section{Proof of Theorem 2: Self-Attention as a Special Case of ITNet}
\label{app:proof_attn}

\begin{definition}[Continuous scaled dot-product attention \cite{vaswani2017attention}]
\label{def:attn_continuous}
Let $\Omega \subset \R^s$ be compact, $\mu$ a Borel measure on $\Omega$ with
$\mu(\Omega) < \infty$, and $u : \Omega \to \R^d$ a feature function.
Given learnable projection matrices
$W_Q, W_K \in \R^{d_k \times d}$ and $W_V \in \R^{d \times d}$,
define query and key functions:
\[
    Q(x) \coloneqq W_Q\, u(x) \in \R^{d_k},
    \qquad
    K(y) \coloneqq W_K\, u(y) \in \R^{d_k}.
\]
The \emph{continuous scaled dot-product attention} is:
\begin{equation}
    \Attn(u)(x)
    \;=\;
    \int_\Omega \alpha(x,y)\, W_V u(y)\, d\mu(y),
    \label{eq:attn_continuous}
\end{equation}
where the \emph{attention weight} $\alpha : \Omega \times \Omega \to \R_{>0}$
is:
\begin{equation}
    \alpha(x, y)
    \;=\;
    \frac{\exp\!\bigl(Q(x)^\top K(y)/\sqrt{d_k}\bigr)}
         {Z(x)},
    \qquad
    Z(x)
    \;=\;
    \int_\Omega
        \exp\!\bigl(Q(x)^\top K(z)/\sqrt{d_k}\bigr)
    \, d\mu(z).
    \label{eq:attn_weight}
\end{equation}
\end{definition}
 
\begin{remark}[Why $\sqrt{d_k}$ scaling?]
\label{rem:scaling}
The scaling factor $1/\sqrt{d_k}$ prevents the dot products from growing
large in magnitude as $d_k$ increases.
For random unit vectors $q, k \in \R^{d_k}$,
$\mathrm{Var}[q^\top k] = d_k$,
so $q^\top k / \sqrt{d_k}$ has unit variance regardless of $d_k$.
Without this scaling, the softmax saturates for large $d_k$,
causing vanishing gradients.
The scaling is part of the kernel definition and is absorbed into
$\kappa_\theta$ in the ITNet formulation.
\end{remark}
 
\begin{definition}[Discrete scaled dot-product attention \cite{vaswani2017attention}]
\label{def:attn_discrete}
For $n$ positions $\{x_1, \ldots, x_n\} \subset \Omega$ with uniform measure
$\omega_j = 1/n$, let
$Q = [W_Q u(x_1); \ldots; W_Q u(x_n)] \in \R^{n \times d_k}$,
$K = [W_K u(x_1); \ldots; W_K u(x_n)] \in \R^{n \times d_k}$,
$V = [W_V u(x_1); \ldots; W_V u(x_n)] \in \R^{n \times d}$.
The \emph{discrete self-attention} is:
\begin{equation}
    \Attn(u) \;=\; \softmax\!\left(\frac{QK^\top}{\sqrt{d_k}}\right) V
    \;\in\; \R^{n \times d},
    \label{eq:attn_discrete}
\end{equation}
where $\softmax$ is applied row-wise.
\end{definition}
 
\begin{definition}[Multi-head attention \cite{vaswani2017attention}]
\label{def:mha}
With $H$ heads, let
$W_Q^h, W_K^h \in \R^{d_k \times d}$,
$W_V^h \in \R^{d_v \times d}$,
$W_O \in \R^{d \times Hd_v}$
be learnable matrices for head $h = 1, \ldots, H$.
Define:
\begin{align}
    \mathrm{head}_h(u)(x)
    &\;=\;
    \int_\Omega \alpha_h(x,y)\, W_V^h u(y)\, d\mu(y),
    \label{eq:head_h} \\
    \alpha_h(x,y)
    &\;=\;
    \frac{\exp\!\bigl((W_Q^h u(x))^\top (W_K^h u(y))/\sqrt{d_k}\bigr)}
         {Z_h(x)}.
    \label{eq:alpha_h}
\end{align}
The \emph{multi-head attention} output is:
\begin{equation}
    \MHA(u)(x)
    \;=\;
    W_O \cdot
    \begin{bmatrix}
        \mathrm{head}_1(u)(x) \\
        \vdots \\
        \mathrm{head}_H(u)(x)
    \end{bmatrix}.
    \label{eq:mha}
\end{equation}
\end{definition}
 
\begin{definition}[ITNet operator]
\label{def:itnet_attn}
As in Definition~\ref{def:itnet_operator} (\S\ref{sec:operator_def}):
\begin{equation}
    (\Kop[u])(x)
    \;=\;
    \int_\Omega
        \kap(x,\, y,\, u(x),\, u(y))\, u(y)
    \, d\mu(y)
    \;+\; W_\theta\, u(x),
    \label{eq:itnet_attn}
\end{equation}
where $\kap : \R^s \times \R^s \times \R^d \times \R^d \to \R^{d\times d}$
is measurable and $W_\theta \in \R^{d\times d}$ is learnable.
\end{definition}
 
\begin{assumption}[Regularity conditions]
\label{ass:reg}
Throughout this proof:
\begin{enumerate}[label=(\roman*), noitemsep]
    \item $u \in C(\Omega, \R^d)$: $u$ is continuous (hence bounded on compact $\Omega$).
    \item $W_Q, W_K, W_V$ are bounded: $\|W_Q\|_\mathrm{op},
          \|W_K\|_\mathrm{op}, \|W_V\|_\mathrm{op} < \infty$.
    \item $\Omega$ is compact with $\mu(\Omega) < \infty$.
\end{enumerate}
These hold in all our experiments since feature vectors are bounded on finite
token sets.
\end{assumption}
 
\begin{remark}[Positional encodings are absorbed by the ITNet kernel]
\label{rem:pe_absorbed}
Standard Transformers augment attention logits with a fixed positional bias $\mathrm{PE}(x,y)$ (e.g., sinusoidal embeddings, RoPE~\citep{su2024rope}, or ALiBi~\citep{press2022alibi}). In ITNet, the kernel $\kap(x,y,u(x),u(y))$ receives raw positions $x$ and $y$ as explicit arguments via the Fourier-lifted input $z_{xy}$ (Eq.~\eqref{eq:kernel_input}). Consequently, any fixed function of positions can be incorporated directly into the kernel computation; the ITNet kernel does not require a separate additive bias term. Moreover, additive positional embeddings of the form $\tilde{u}(x)=u(x)+e(x)$ are handled by simply redefining the input signal. Thus, the proofs below hold with or without positional encodings: attention with any fixed positional bias remains a special case of the ITNet operator. The key distinction is that standard positional encodings are fixed and not content-adaptive, whereas ITNet's kernel conditions jointly on position and content.
\end{remark}


\subsection{Main Theorem}
\label{sec:theorem_attn}

\begin{boxtheorem}{ITNet $\supset$ Self-Attention}{attn}
\label{thm:attn1}
Let $\Omega$, $\mu$, $u$ be as in Definitions~\ref{def:attn_continuous}
and~\ref{def:itnet_attn}, and let Assumption~\ref{ass:reg} hold.
 
\medskip
\noindent\textbf{(a) Single-head attention.}
Set
\begin{equation}
    \kap(x, y, u(x), u(y))
    \;=\;
    \frac{\exp\!\bigl(Q(x)^\top K(y)/\sqrt{d_k}\bigr)}
         {Z(x)}
    \cdot W_V,
    \qquad
    W_\theta = 0.
    \label{eq:kernel_attn}
\end{equation}
Then $(\Kop[u])(x) = \Attn(u)(x)$ for all $x \in \Omega$.
Discretising $\Omega$ to $n$ positions recovers
$\softmax(QK^\top/\sqrt{d_k})\,V$ exactly.
 
\medskip
\noindent\textbf{(b) Multi-head attention.}
Multi-head attention is recovered by a single ITNet layer with a
block-structured kernel:
\begin{equation}
    \kap(x,y,u(x),u(y))
    \;=\;
    W_O \cdot
    \mathrm{BlockDiag}\!\bigl(
        \alpha_1(x,y)\,W_V^1,\;
        \ldots,\;
        \alpha_H(x,y)\,W_V^H
    \bigr),
    \label{eq:kernel_mha}
\end{equation}
where $\alpha_h$ is defined in \eqref{eq:alpha_h}.
Then $(\Kop[u])(x) = \MHA(u)(x)$.
 
\medskip
\noindent\textbf{(c) Strictness.}
There exists a continuous operator
$G : L^2(\Omega, \R^d) \to L^2(\Omega, \R^d)$
representable by an ITNet operator but \emph{not} representable by any
attention mechanism.
\end{boxtheorem}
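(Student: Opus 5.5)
The plan is to prove the three parts in order. Parts (a) and (b) are direct substitutions in the style of Theorem~\ref{thm:conv}(a). Part (c) needs a structural invariant of attention that a suitable ITNet kernel violates.

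\textbf{Part (a).} I would first check that the kernel in \eqref{eq:kernel_attn} is a legitimate ITNet kernel.
\begin{itemize}
\item It maps into $\R^{d\times d}$, since it is a scalar times $W_V$.
\item It is measurable and bounded.
\item Its normaliser is well defined. By Assumption~\ref{ass:reg}, $u$ is bounded on compact $\Omega$, so $|Q(x)^\top K(z)|/\sqrt{d_k}\le \|W_Q\|_{\mathrm{op}}\|W_K\|_{\mathrm{op}}\|u\|_\infty^2/\sqrt{d_k}=:R$. Hence $e^{-R}\mu(\Omega)\le Z(x)\le e^{R}\mu(\Omega)$, so $Z(x)$ is strictly positive and finite (using $\mu(\Omega)>0$).
\end{itemize}
A subtlety is that $Z(x)$ depends on the whole signal $u$, not only on $(x,y,u(x),u(y))$. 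I would state explicitly that the kernel is allowed to carry this functional dependence on $u$, exactly as in the statement of Theorem~\ref{attention}.

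With this in place, substitute into \eqref{eq:itnet_attn} with $W_\theta=0$. Because $\alpha(x,y)$ is a scalar, it can be pulled out of the product, giving $\int_\Omega \alpha(x,y)\,W_V u(y)\,d\mu(y)=\Attn(u)(x)$. For the discrete case, take $\mu=\frac1n\sum_j\delta_{x_j}$. Then $Z(x_i)=\frac1n\sum_k e^{q_i^\top k_k/\sqrt{d_k}}$, and the two factors $1/n$ cancel. What remains is row $i$ of $\softmax(QK^\top/\sqrt{d_k})V$.

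\textbf{Part (b).} For multi-head attention I would not use the displayed block-diagonal form literally, because it does not type-check. Each $W_V^h$ acts on the full vector $u(y)\in\R^d$, so $\mathrm{BlockDiag}(\alpha_hW_V^h)$ has shape $Hd_v\times Hd$ and cannot multiply $u(y)$. The correct construction is the vertical stack
\[
\kap(x,y,u(x),u(y)) = W_O\,\bigl[\alpha_1(x,y)W_V^1;\ \dots;\ \alpha_H(x,y)W_V^H\bigr]\in\R^{d\times d}.
\]
Matrix multiplication then distributes blockwise. By linearity of the integral, the $h$-th block of $\int_\Omega[\cdots]\,u(y)\,d\mu(y)$ is $\mathrm{head}_h(u)(x)$, applying part (a) once per head. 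The constant matrix $W_O$ passes outside the integral, which gives $\MHA(u)(x)$. I would remark that the paper's BlockDiag form is correct in the per-head-slice setting of \eqref{eq:multihead}, where head $h$ sees only the slice $u^{(h)}$.

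\textbf{Part (c).} This is the main obstacle, because "not representable by any attention mechanism" must be pinned to a precise class. I take that class to be all $\Attn$ and $\MHA$ maps over all choices of projection matrices, with the positional bias $\mathrm{PE}(x,y)$ allowed in the logits. The invariant I would use (this is Lemma~\ref{lem:attn_bound}) is that the weights $\alpha(x,\cdot)$ are nonnegative and integrate to $1$. Consequently $\Attn(u)(x)$ lies in the closed convex hull of $\{W_Vu(y)\}$. In particular, if $u\equiv v$ is constant, then $\Attn(u)(x)=W_Vv$ for every choice of logits, positional bias included. For multi-head attention one gets $\MHA(u)(x)=W_O[W_V^1v;\dots;W_V^Hv]$. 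Either way the map $v\mapsto$ output is linear on constant signals.

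As the witness I would take the kernel $\kap = \mu(\Omega)^{-1}\|u(y)\|_2^2\,\Id$ with $W_\theta=0$. This is continuous and bounded on bounded signals, and it defines
\[
G(u)(x)=\mu(\Omega)^{-1}\int_\Omega\|u(y)\|^2u(y)\,d\mu(y).
\]
On constant signals $G(v)=\|v\|^2v$, which is cubic and not linear: $G(2v)=8G(v)\neq 2G(v)$ for $v\neq0$. So no choice of $W_V$, or of $W_O$ and the $W_V^h$, can match $G$.

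As a second, independent argument I would use a permutation/position one. Without positional encodings, attention is permutation-equivariant. An ITNet kernel that depends on the absolute position $x$, for example $\kap=\phi(x)\Id$ with a non-constant $\phi$, breaks that equivariance.
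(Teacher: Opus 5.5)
Your parts (a) and (b) follow the paper's proof. You use the same Cauchy--Schwarz bound giving $0<Z(x)<\infty$, the same cancellation of the $1/n$ factors, and the same blockwise linearity with $W_O$ pulled out of the integral. You also correctly flag two points the paper glosses over: $Z(x)$ makes the kernel a functional of all of $u$, and positivity needs $\mu(\Omega)>0$. Your objection to the displayed $\mathrm{BlockDiag}$ kernel is right: with $W_V^h\in\R^{d_v\times d}$ it has size $Hd_v\times Hd$. The paper's Step~2 in fact computes with your vertically stacked matrix, which also removes the need for $d=Hd_v$.

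In (c) you use the same invariant as the paper (Lemma~\ref{lem:attn_bound}): nonnegative weights integrating to $1$ force the output on a constant signal $u\equiv v$ to be $W_Vv$. You violate it along a different axis. The paper takes the linear, unnormalized Gaussian smoother $G_\ell$, whose output $C_\ell(x)v$ on constants varies with $x$. You take a position-free cubic operator whose output on constants is nonlinear in $v$.

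Each choice buys something different. The paper's witness is a bounded linear integral operator, so it fits the stated $L^2\to L^2$ framing automatically, and it shows that even linear, purely positional operators escape attention. Yours is permutation-equivariant, so it shows the obstruction is the normalization itself rather than missing positional information. You also make explicit that positional biases and multi-head attention are covered, which the paper leaves implicit.

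Your secondary witness $\phi(x)\Id$ is sound. Evaluating it on constants gives $\phi(x)\mu(\Omega)v$, which is exactly the paper's route, and unlike the permutation argument it survives positional encodings. By contrast, the paper's own two-token permutation example fails: swapping $x_1=0$ and $x_2=1$ is an isometry of the point set, so $T_{\mathrm{pos}}$ commutes with it.

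One repair is needed. Your $G(u)=\mu(\Omega)^{-1}\int_\Omega\|u(y)\|_2^2\,u(y)\,d\mu(y)$ is not defined on all of $L^2(\Omega,\R^d)$, because it requires $u\in L^3$ (take $u(y)=y^{-1/3}e_1$ on $[0,1]$). So it is not a continuous operator $L^2\to L^2$ as (c) demands. You have two easy fixes:
\begin{itemize}
\item State the separation on $C(\Omega,\R^d)$, the signal space of Assumption~\ref{ass:reg}, where $G$ is sup-norm continuous.
\item Replace $\|u(y)\|_2^2$ by a bounded nonlinearity such as $\tanh(\|u(y)\|_2^2)$. The map $a\mapsto\tanh(\|a\|_2^2)\,a$ is globally Lipschitz, so $G$ becomes $L^2$-continuous with a bounded kernel. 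On constants it returns $\tanh(\|v\|_2^2)\,v$, which is still nonlinear.
\end{itemize}
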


\subsection{Proof of Part (a) - Single-Head Continuous Attention}
\label{sec:proof_a}

\begin{proof}[Proof of Theorem~\ref{thm:attn1}(a) -- Continuous Case]
\leavevmode\par

\noindent\textbf{Step 1.}\enspace \textbf{Substitute the kernel into the ITNet operator.}
Insert \eqref{eq:kernel_attn} and $W_\theta = 0$ into \eqref{eq:itnet_attn}:
\begin{equation}
    (\Kop[u])(x)
    \;=\;
    \int_\Omega
        \frac{\exp(Q(x)^\top K(y)/\sqrt{d_k})}{Z(x)}
        \cdot W_V \cdot u(y)
    \, d\mu(y).
    \label{eq:step1a}
\end{equation}
 
\noindent\textbf{Step 2.}\enspace \textbf{Verify the partition function $Z(x)$ is well-defined and positive.}
 
\begin{equation}
    Z(x)
    \;=\;
    \int_\Omega
        \exp\!\bigl(Q(x)^\top K(z)/\sqrt{d_k}\bigr)
    \, d\mu(z).
    \label{eq:partition}
\end{equation}
 
Since $u \in C(\Omega, \R^d)$ and $\Omega$ is compact,
$\|u\|_\infty = \sup_{x\in\Omega}\|u(x)\|_2 < \infty$.
Let $C_Q = \|W_Q\|_\mathrm{op}$, $C_K = \|W_K\|_\mathrm{op}$,
$R = \|u\|_\infty$. By Cauchy--Schwarz~\cite{rudin2021principles}:
\begin{equation}
    \abs{Q(x)^\top K(z)/\sqrt{d_k}}
    \;\leq\;
    \frac{\|Q(x)\|_2 \cdot \|K(z)\|_2}{\sqrt{d_k}}
    \;\leq\;
    \frac{C_Q R \cdot C_K R}{\sqrt{d_k}}
    \;\eqqcolon\;
    M < \infty.
    \label{eq:bound_qk}
\end{equation}
Therefore:
\begin{equation}
    0
    \;<\;
    e^{-M} \cdot \mu(\Omega)
    \;\leq\;
    Z(x)
    \;\leq\;
    e^M \cdot \mu(\Omega)
    \;<\; \infty.
    \label{eq:Z_bounds}
\end{equation}
 
\noindent\textbf{Step 3.}\enspace \textbf{Factor $W_V$ outside the integral.}
 
Since $Z(x)$ does not depend on $y$, pull it and $W_V$ outside:
\begin{equation}
    (\Kop[u])(x)
    \;=\;
    \frac{1}{Z(x)}
    \int_\Omega
        \exp\!\bigl(Q(x)^\top K(y)/\sqrt{d_k}\bigr)
        \cdot W_V u(y)
    \, d\mu(y).
    \label{eq:step3a}
\end{equation}
 
\noindent\textbf{Step 4.}\enspace \textbf{Establish the normalization property of $\alpha(x,y)$.}
 
\begin{equation}
    \int_\Omega \alpha(x,y)\, d\mu(y)
    \;=\;
    \int_\Omega
        \frac{\exp(Q(x)^\top K(y)/\sqrt{d_k})}{Z(x)}
    \, d\mu(y)
    \;=\;
    \frac{Z(x)}{Z(x)}
    \;=\; 1.
    \label{eq:normalization}
\end{equation}
 
\noindent\textbf{Step 5.}\enspace \textbf{Write as a probability-weighted integral.}
 
\begin{equation}
    (\Kop[u])(x)
    \;=\;
    \int_\Omega
        \alpha(x,y)\, W_V u(y)
    \, d\mu(y)
    \;=\;
    \mathbb{E}_{y \sim \alpha(x,\cdot)}\!\bigl[W_V u(y)\bigr].
    \label{eq:expectation}
\end{equation}
 
\noindent\textbf{Step 6.}\enspace \textbf{Recognise the continuous attention operator.}
 
Comparing \eqref{eq:step3a} with Definition~\ref{def:attn_continuous}:
\begin{equation}
    \boxed{(\Kop[u])(x)
    \;=\;
    \int_\Omega \alpha(x,y)\, W_V u(y)\, d\mu(y)
    \;=\;
    \Attn(u)(x).}
    \label{eq:conclusion_a}
\end{equation}
\end{proof}

\subsection{Discretization: Recovering Standard Self-Attention}
\label{sec:discrete}

\begin{proof}[Proof of Theorem~\ref{thm:attn1}(a) - Discrete Case]
 
Replace $\Omega = \{x_1,\ldots,x_n\}$ and $\mu(\{x_j\}) = 1/n$.
The continuous integral becomes a finite sum:
\begin{equation}
    (\hat{\Kop}[u])(x_i)
    \;=\;
    \sum_{j=1}^n \frac{1}{n} \cdot \alpha(x_i, x_j)\, W_V u(x_j).
    \label{eq:discrete_start}
\end{equation}
 
Substituting \eqref{eq:attn_weight} with the discrete partition function:
\begin{equation}
    Z(x_i)
    \;=\;
    \sum_{l=1}^n \frac{1}{n}
        \exp\!\bigl(q_i^\top k_l / \sqrt{d_k}\bigr),
    \label{eq:discrete_Z}
\end{equation}
 
so:
\begin{align}
    \alpha(x_i, x_j)
    &\;=\;
    \frac{\exp(q_i^\top k_j/\sqrt{d_k})}
         {\sum_{l=1}^n \frac{1}{n} \exp(q_i^\top k_l/\sqrt{d_k})}
    \;=\;
    \frac{\exp(q_i^\top k_j/\sqrt{d_k})}
         {\frac{1}{n}\sum_{l=1}^n \exp(q_i^\top k_l/\sqrt{d_k})}.
    \label{eq:alpha_discrete}
\end{align}
 
Substituting into \eqref{eq:discrete_start}:
\begin{align}
    (\hat{\Kop}[u])(x_i)
    &\;=\;
    \sum_{j=1}^n \frac{1}{n}
    \cdot
    \frac{\exp(q_i^\top k_j/\sqrt{d_k})}
         {\frac{1}{n}\sum_l \exp(q_i^\top k_l/\sqrt{d_k})}
    \cdot W_V u(x_j)
    \notag \\
    &\;=\;
    \sum_{j=1}^n
    \frac{\frac{1}{n}\exp(q_i^\top k_j/\sqrt{d_k})}
         {\frac{1}{n}\sum_l \exp(q_i^\top k_l/\sqrt{d_k})}
    \cdot W_V u(x_j).
    \label{eq:cancel_n}
\end{align}
 
The factors $\frac{1}{n}$ cancel in numerator and denominator:
\begin{equation}
    (\hat{\Kop}[u])(x_i)
    \;=\;
    \sum_{j=1}^n
    \frac{\exp(q_i^\top k_j/\sqrt{d_k})}
         {\sum_l \exp(q_i^\top k_l/\sqrt{d_k})}
    \cdot v_j
    \;=\;
    \left[\softmax\!\left(\frac{QK^\top}{\sqrt{d_k}}\right) V\right]_i,
    \label{eq:discrete_conclusion}
\end{equation}
where $v_j = W_V u(x_j)$ is the $j$-th row of $V$.
 
\begin{equation}
    \boxed{(\hat{\Kop}[u])(x_i)
    \;=\;
    \left[\softmax\!\left(\frac{QK^\top}{\sqrt{d_k}}\right) V\right]_i
    \;=\; \Attn(u)(x_i).}
\end{equation}
\end{proof}

\subsection{Proof of Part (b) - Multi-Head Attention}
\label{sec:proof_b}

\begin{proof}[Proof of Theorem~\ref{thm:attn1}(b)]
 \leavevmode\par
\noindent\textbf{Step 1.}\enspace \textbf{Construct the block-structured kernel.}
 
Define the ITNet kernel as in \eqref{eq:kernel_mha}.
For $d = Hd_v$ (total output dimension equals number of heads times per-head value dimension), the kernel is:
\begin{equation}
    \kap(x,y,u(x),u(y))
    \;=\;
    W_O \cdot
    \begin{pmatrix}
        \alpha_1(x,y)\, W_V^1 & 0                     & \cdots & 0 \\
        0                     & \alpha_2(x,y)\, W_V^2 & \cdots & 0 \\
        \vdots                & \vdots                 & \ddots & \vdots \\
        0                     & 0                     & \cdots & \alpha_H(x,y)\, W_V^H
    \end{pmatrix}
    \;\in\; \R^{d \times d}.
    \label{eq:kernel_mha_explicit}
\end{equation}
 
\noindent\textbf{Step 2.}\enspace \textbf{Compute the ITNet output.}
 
With $W_\theta = 0$:
\begin{align}
    (\Kop[u])(x)
    &\;=\;
    \int_\Omega \kap(x,y,u(x),u(y))\, u(y)\, d\mu(y)
    \notag \\
    &\;=\;
    W_O \int_\Omega
    \begin{pmatrix}
        \alpha_1(x,y)\, W_V^1 u(y) \\
        \vdots \\
        \alpha_H(x,y)\, W_V^H u(y)
    \end{pmatrix}
    d\mu(y)
    \label{eq:mha_step2}
\end{align}
 
where we used Hille's theorem~\cite{dunford1988linear} to factor $W_O$ outside the integral
(same justification as Step~3 of Part~(a)).
 
\noindent\textbf{Step 3.}\enspace \textbf{Evaluate each block.}
 
Since the integral of a block-vector is the vector of block integrals (by linearity of the Bochner integral):
\begin{align}
    (\Kop[u])(x)
    &\;=\;
    W_O
    \begin{pmatrix}
        \int_\Omega \alpha_1(x,y)\, W_V^1 u(y)\, d\mu(y) \\
        \vdots \\
        \int_\Omega \alpha_H(x,y)\, W_V^H u(y)\, d\mu(y)
    \end{pmatrix}
    \notag \\
    &\;=\;
    W_O
    \begin{pmatrix}
        \mathrm{head}_1(u)(x) \\
        \vdots \\
        \mathrm{head}_H(u)(x)
    \end{pmatrix}
    \;=\; \MHA(u)(x).
    \label{eq:mha_conclusion}
\end{align}
 
\begin{equation}
    \boxed{(\Kop[u])(x) \;=\; \MHA(u)(x).}
\end{equation}
\end{proof}
 
\begin{remark}[Single ITNet layer = $H$ attention heads]
\label{rem:single_layer}
A single ITNet layer with the block-structured kernel \eqref{eq:kernel_mha}
exactly implements $H$-head multi-head attention.
The $H$ heads correspond to $H$ different content-dependent weighting
functions $\alpha_1, \ldots, \alpha_H$ encoded in the off-diagonal blocks
of the kernel matrix.
In ITNet's MLP parameterization, all $H$ attention patterns can be
learned simultaneously by a single kernel MLP, rather than requiring
$H$ separate projections - a parameter-efficient generalisation.
\end{remark}

\subsection{Strictness Argument 1: Normalisation}
\label{sec:strict1}
 
\begin{proof}[Proof via unnormalized operators]
 \leavevmode\par
\noindent\textbf{Step 1.}\enspace \textbf{Define the witness operator.}
 
Let $\ell > 0$. Define the \emph{unnormalized Gaussian smoothing operator}:
\begin{equation}
    G_\ell(u)(x)
    \;\coloneqq\;
    \int_\Omega \exp\!\bigl(-\|x-y\|^2/\ell^2\bigr)\, u(y)\, d\mu(y).
    \label{eq:gaussian_witness}
\end{equation}
 
\noindent\textbf{Step 2.}\enspace \textbf{Show ITNet represents $G_\ell$.}
 
Set:
\begin{equation}
    \kap(x, y, u(x), u(y))
    \;=\;
    \exp(-\|x-y\|^2/\ell^2) \cdot \Id,
    \qquad W_\theta = 0.
    \label{eq:gaussian_kernel}
\end{equation}
 
Then:
\begin{align}
    (\Kop[u])(x)
    &= \int_\Omega
        \exp(-\|x-y\|^2/\ell^2) \cdot \Id \cdot u(y)\, d\mu(y)
    \notag \\
    &= \int_\Omega
        \exp(-\|x-y\|^2/\ell^2) \cdot u(y)\, d\mu(y)
    \;=\; G_\ell(u)(x).
    \label{eq:gaussian_itnet}
\end{align}
 
The kernel \eqref{eq:gaussian_kernel} satisfies all conditions of
Definition~\ref{def:itnet_attn}: it is continuous (hence measurable),
bounded by 1, and satisfies $\|\kap\|_\mathrm{op} = 1$.
So $G_\ell \in \mathrm{ITNet}$.
 
\noindent\textbf{Step 3.}\enspace \textbf{Prove the output bound for attention.}
 
\begin{lemma}[Attention output bound]
\label{lem:attn_bound}
For any attention mechanism with weight function $\alpha(x,y) \geq 0$,
$\int_\Omega \alpha(x,y)\,d\mu(y) = 1$,
and any $W_V \in \R^{d\times d}$:
\begin{equation}
    \|\Attn(u)(x)\|_2
    \;\leq\;
    \max_{y \in \Omega} \|W_V u(y)\|_2
    \qquad \forall\, x \in \Omega.
    \label{eq:attn_bound}
\end{equation}
\end{lemma}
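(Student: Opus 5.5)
The bound follows from viewing $\Attn(u)(x)$ as an average of the vectors $W_V u(y)$ under the probability measure $\alpha(x,y)\,d\mu(y)$, together with convexity of the Euclidean norm. Fix $x \in \Omega$. By hypothesis $\alpha(x,\cdot) \geq 0$ and $\int_\Omega \alpha(x,y)\,d\mu(y) = 1$, so $\nu_x(dy) \coloneqq \alpha(x,y)\,d\mu(y)$ is a probability measure on $\Omega$. We have $\Attn(u)(x) = \int_\Omega W_V u(y)\,d\nu_x(y)$, exactly as in Step~5 of the proof of Theorem~\ref{thm:attn1}(a), where this was written as $\mathbb{E}_{y\sim\alpha(x,\cdot)}[W_V u(y)]$.

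\textbf{Step 1 (well-posedness).} Under Assumption~\ref{ass:reg}, $u$ is continuous on compact $\Omega$ and $W_V$ is a fixed linear map. Hence $y \mapsto \|W_V u(y)\|_2$ is continuous on $\Omega$, so it attains its maximum $R_V \coloneqq \max_{y\in\Omega}\|W_V u(y)\|_2 < \infty$. This is why the right-hand side can be written as a $\max$ rather than a $\sup$. Boundedness of $W_V u$ together with finiteness of $\nu_x$ also makes the vector integral well defined, componentwise or as a Bochner integral.

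\textbf{Step 2 (triangle inequality for integrals).} Apply the integral Minkowski inequality $\bigl\|\int f\,d\nu\bigr\|_2 \leq \int \|f\|_2\,d\nu$ for $\R^d$-valued integrable $f$. To justify it, pick the unit vector $e = v/\|v\|_2$, where $v$ is the value of the integral (the case $v=0$ is trivial). Then $\|v\|_2 = e^\top v = \int e^\top f\,d\nu \leq \int \|f\|_2\,d\nu$ by Cauchy--Schwarz applied pointwise, using that a scalar product passes through the integral by linearity. This gives
\[
\|\Attn(u)(x)\|_2 \leq \int_\Omega \|W_V u(y)\|_2\,\alpha(x,y)\,d\mu(y) \leq R_V \int_\Omega \alpha(x,y)\,d\mu(y) = R_V .
\]
The middle inequality uses $\alpha \geq 0$ and the pointwise bound $\|W_V u(y)\|_2 \leq R_V$. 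The last equality is the normalization, as in Eq.~\eqref{eq:normalization}. Since $x$ was arbitrary, the bound holds for all $x \in \Omega$.

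\textbf{Main obstacle.} The only step needing care is the vector-valued triangle inequality together with the use of nonnegativity. Without $\alpha\geq0$, the normalization alone would not give a convex combination. The rest is routine. I would also remark that the same argument shows $\Attn(u)(x)$ lies in the closed convex hull of $\{W_V u(y) : y\in\Omega\}$. This stronger form is what the strictness argument will use to separate $G_\ell$ from attention. For instance, take $u \equiv c$ constant. Then attention outputs have norm at most $\|W_V c\|_2$ with $W_V$ fixed, whereas $G_\ell(u)(x) = c\int_\Omega e^{-\|x-y\|^2/\ell^2}\,d\mu(y)$ scales with the measure of $\Omega$ instead of being pinned to the value range.
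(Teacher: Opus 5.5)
Your proof is correct and follows the paper's argument: the triangle inequality for the vector-valued (Bochner) integral, then bounding $\|W_V u(y)\|_2$ by its maximum and using $\alpha\ge 0$ together with $\int_\Omega \alpha(x,y)\,d\mu(y)=1$. Your additions (the maximum being attained by continuity on compact $\Omega$, and the duality proof of $\bigl\|\int f\,d\nu\bigr\|_2\le\int\|f\|_2\,d\nu$) only make explicit what the paper leaves implicit.

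One caution applies only to your closing aside, not to the proof. A norm comparison cannot by itself separate $G_\ell$ from attention, because $W_V$ is a free parameter and can absorb any scale. The paper's separation instead uses that for constant $u\equiv v$ the output $\Attn(u)(x)=W_V v$ does not depend on $x$, while $G_\ell(u)(x)=C_\ell(x)\,v$ varies with $x$.
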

 
\begin{proof}[Proof of Lemma~\ref{lem:attn_bound}]
By the triangle inequality for the Bochner integral and the normalization property \eqref{eq:normalization}:
\begin{align}
    \|\Attn(u)(x)\|_2
    &= \left\|\int_\Omega \alpha(x,y)\, W_V u(y)\, d\mu(y)\right\|_2
    \notag \\
    &\leq \int_\Omega \alpha(x,y)\, \|W_V u(y)\|_2\, d\mu(y)
    \tag{triangle inequality} \\
    &\leq \left(\max_{y\in\Omega} \|W_V u(y)\|_2\right)
          \int_\Omega \alpha(x,y)\, d\mu(y)
    \tag{bound by max} \\
    &= \max_{y\in\Omega} \|W_V u(y)\|_2 \cdot 1.
    \tag{normalization \eqref{eq:normalization}}
\end{align}
\end{proof}
 
\noindent\textbf{Step 4.}\enspace \textbf{Show $G_\ell$ violates the attention bound.}
 
Choose $\Omega = [0,1]^s$ (unit hypercube, $\mu = $ Lebesgue measure, $\mu(\Omega) = 1$) and a constant function $u(y) = v \in \R^d$
with $\|v\|_2 = 1$. Then:
\begin{equation}
    G_\ell(u)(x)
    = v \int_{[0,1]^s} \exp(-\|x-y\|^2/\ell^2)\, dy
    \eqqcolon C_\ell(x) \cdot v,
    \label{eq:gaussian_const}
\end{equation}
where $C_\ell(x) = \int_{[0,1]^s} e^{-\|x-y\|^2/\ell^2}\,dy$.
 
For large $\ell$, $e^{-\|x-y\|^2/\ell^2} \approx 1$ over most of
$[0,1]^s$, so $C_\ell(x) \to \mu(\Omega) = 1$ as $\ell \to \infty$.
More precisely, for $\ell \geq 1$ and $x$ near the centre of $[0,1]^s$:
\begin{equation}
    C_\ell(x)
    \;\geq\;
    e^{-s/(4\ell^2)} \cdot \mu(\Omega)
    \;=\; e^{-s/(4\ell^2)}.
    \label{eq:C_lower}
\end{equation}
 
Now suppose for contradiction that some attention mechanism
$\Attn = G_\ell$. Then $W_V u(y) = W_V v$ is constant in $y$, and:
\begin{align}
    G_\ell(u)(x)
    &= C_\ell(x) \cdot v, \\
    \Attn(u)(x)
    &= W_V v \cdot \underbrace{\int_\Omega \alpha(x,y)\,d\mu(y)}_{=1}
     = W_V v.
\end{align}
 
For these to be equal for all $x$, we need $C_\ell(x) \cdot v = W_V v$ for all $x \in [0,1]^s$. But $C_\ell(x)$ depends on $x$ (it is strictly larger at the centre of $[0,1]^s$ than at the corners), so $C_\ell(x) \cdot v$ is not constant in $x$, while $W_V v$ is constant in $x$.
Contradiction.
 
Therefore no attention mechanism can represent $G_\ell$, so $G_\ell \in \mathrm{ITNet} \setminus \mathrm{Attn}$ and $\mathrm{Attn} \subsetneq \mathrm{ITNet}$.

\end{proof}
 
\subsection{Strictness Argument 2: Position-Dependence}
\label{sec:strict2}
 
\begin{proof}[Proof via permutation equivariance]
 \leavevmode\par
\noindent\textbf{Step 1.}\enspace \textbf{Attention without positional encodings is permutation equivariant.}
 
For any permutation $\sigma$ of $\{1,\ldots,n\}$ and any permutation of the token positions:
\begin{equation}
    \Attn(\sigma(u))(\sigma(x_i))
    \;=\;
    \sigma\bigl(\Attn(u)(x_i)\bigr),
    \label{eq:perm_equiv}
\end{equation}
i.e.\ permuting the input tokens permutes the output tokens in the same way. This holds because the kernel $Q(x)^\top K(y)$ depends only on feature values, not on the indices $i, j$.
 
\noindent\textbf{Step 2.}\enspace \textbf{Define a position-dependent witness.}
 
Define:
\begin{equation}
    T_\mathrm{pos}(u)(x)
    \;\coloneqq\;
    \int_\Omega \|x - y\|_2 \cdot u(y)\, d\mu(y).
    \label{eq:pos_witness}
\end{equation}
 
This operator weights each position $y$ by its distance from $x$. Positions closer to $x$ get \emph{less} weight; positions farther away get \emph{more} weight. It is \emph{not} permutation equivariant: permuting positions changes the distances $\|x-y\|_2$.
 
\noindent\textbf{Step 3.}\enspace \textbf{Show ITNet represents $T_\mathrm{pos}$.}
 
Set:
\[
    \kap(x,y,u(x),u(y)) = \|x-y\|_2 \cdot \Id, \qquad W_\theta = 0.
\]
 
Then $(\Kop[u])(x) = \int_\Omega \|x-y\|_2 \cdot u(y)\,d\mu(y)
= T_\mathrm{pos}(u)(x)$.
 
This kernel is valid: $\|x-y\|_2$ is continuous in $(x,y)$ (Euclidean
norm is continuous), and bounded on compact $\Omega$.
 
\noindent\textbf{Step 4.}\enspace \textbf{Show no standard attention can represent $T_\mathrm{pos}$.}
 
Suppose for contradiction that $\Attn(u)(x_i) = T_\mathrm{pos}(u)(x_i)$ for all $u$ and all $i$.
 
Choose $n=2$ tokens at positions $x_1 = 0$, $x_2 = 1$ (1D case), with $u(x_1) = e_1$, $u(x_2) = e_2$ (standard basis vectors in $\R^d$, $d \geq 2$). Then:
\begin{align}
    T_\mathrm{pos}(u)(x_1)
    &= \|x_1 - x_1\|_2 \cdot e_1 \cdot (1/n)
     + \|x_1 - x_2\|_2 \cdot e_2 \cdot (1/n)
    = 0 \cdot e_1/2 + 1 \cdot e_2/2 = e_2/2. \\
    T_\mathrm{pos}(u)(x_2)
    &= \|x_2 - x_1\|_2 \cdot e_1/2 + 0 \cdot e_2/2 = e_1/2.
\end{align}
 
Now permute: let $\sigma$ swap $x_1 \leftrightarrow x_2$, so
$\sigma(u)(x_1) = e_2$, $\sigma(u)(x_2) = e_1$. Then:
\begin{align}
    T_\mathrm{pos}(\sigma(u))(x_1)
    &= 0 \cdot e_2/2 + 1 \cdot e_1/2 = e_1/2. \\
    T_\mathrm{pos}(\sigma(u))(x_2)
    &= 1 \cdot e_2/2 + 0 \cdot e_1/2 = e_2/2.
\end{align}
 
Note $T_\mathrm{pos}(\sigma(u))(x_1) = e_1/2 \neq e_2/2 = \sigma(T_\mathrm{pos}(u))(x_1)$, so $T_\mathrm{pos}$ is not permutation equivariant - as claimed.
 
If attention (without positional encoding) were to equal $T_\mathrm{pos}$, it would need to be not permutation equivariant - but it is, by \eqref{eq:perm_equiv}. Contradiction.
 
Therefore $T_\mathrm{pos} \in \mathrm{ITNet} \setminus \mathrm{Attn}$, giving a second witness for $\mathrm{Attn} \subsetneq \mathrm{ITNet}$.
\end{proof}
 
\begin{remark}[Positional encodings and RoPE/ALiBi]
\label{rem:rope}
Standard Transformers add positional encodings (sinusoidal, learned, RoPE, or ALiBi) to introduce position-dependence. With positional encodings, the kernel becomes $Q(x)^\top K(y) + \mathrm{PE}(x,y)$ for some fixed function $\mathrm{PE}(x,y)$ - partially recovering position-dependence. However, $\mathrm{PE}$ is \emph{fixed} (not content-dependent): it does not adapt based on the features $u(x)$ or $u(y)$. ITNet's kernel $\kap(x,y,u(x),u(y))$ conditions jointly on \emph{both} position and content - a strictly richer class than attention with any fixed positional encoding scheme. The ablation in Table~5 of the main paper confirms that removing the position terms from the ITNet kernel costs $-0.8\%$ on ImageNet-1K, quantifying the value of this joint conditioning.
\end{remark}

\subsection{Linear Attention as a Special Case}
\label{sec:linear}

Several efficient Transformer variants approximate or replace softmax attention with a linear kernel~\cite{katharopoulos2020transformers}. We show these are also special cases of ITNet.
 
\begin{proposition}[Linear attention $\subset$ ITNet]
\label{prop:linear_attn}
Let $\phi : \R^{d_k} \to \R^r_{>0}$ be a feature map with $\phi(q)^\top\phi(k) \approx \exp(q^\top k/\sqrt{d_k})$ (e.g.\ $\phi(x) = \mathrm{elu}(x) + 1$).
The linear attention operator:
\begin{equation}
    \Attn_{\mathrm{lin}}(u)(x)
    \;=\;
    \frac{\phi(Q(x))^\top \sum_y \phi(K(y)) \otimes W_V u(y)}
         {\phi(Q(x))^\top \sum_y \phi(K(y))}
    \label{eq:linear_attn}
\end{equation}
is a special case of the ITNet operator.
\end{proposition}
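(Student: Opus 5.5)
The plan is to exhibit an explicit kernel that, substituted into the ITNet operator of Definition~\ref{def:itnet_attn}, reproduces Eq.~\eqref{eq:linear_attn}. We interpret the sums in Eq.~\eqref{eq:linear_attn} as integrals against $\mu$, i.e.\ a discrete $\Omega=\{x_1,\dots,x_n\}$ with weights $\omega_j$; the normalizing weights cancel between numerator and denominator exactly as in the discrete case of Theorem~\ref{thm:attn1}(a). We define the normalizer
\[
N(x)\;\coloneqq\;\int_\Omega \phi(Q(x))^\top\phi(K(z))\,d\mu(z),
\]
and take the kernel
\[
\kap(x,y,u(x),u(y)) \;=\; \frac{\phi(Q(x))^\top\phi(K(y))}{N(x)}\cdot W_V,
\qquad W_\theta = 0.
\]
This has the same structure as Eq.~\eqref{eq:attn_kernel}, with $\exp(q^\top k/\sqrt{d_k})$ replaced by the feature-map inner product.

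The proof proceeds in three steps, mirroring the proof of Theorem~\ref{thm:attn1}(a).

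First, we check that the kernel is well defined. Since $\phi$ maps into $\R^r_{>0}$, each summand $\phi(Q(x))^\top\phi(K(z))$ is strictly positive. We additionally assume $\phi$ is continuous, which holds for $\mathrm{elu}+1$. Then under Assumption~\ref{ass:reg} the quantity $\phi(Q(x))^\top\phi(K(z))$ is bounded above and below by positive constants on compact $\Omega$. Hence $0<N(x)<\infty$, as in Eq.~\eqref{eq:Z_bounds}, and the kernel is measurable and bounded.

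Second, we substitute the kernel into the operator. Because $N(x)$ and $\phi(Q(x))$ do not depend on $y$, both can be pulled outside the integral. The scalar $\phi(Q(x))^\top\phi(K(y))$ is a bilinear contraction, so
\[
\phi(Q(x))^\top\phi(K(y))\, W_V u(y) \;=\; \phi(Q(x))^\top\bigl[\phi(K(y))\otimes W_V u(y)\bigr],
\]
where the contraction acts on the first tensor factor. By linearity of the Bochner integral, the contraction with $\phi(Q(x))$ commutes with the integral. The result is
\[
\phi(Q(x))^\top\!\int\phi(K(y))\otimes W_V u(y)\,d\mu(y)\;\Big/\;\phi(Q(x))^\top\!\int\phi(K(z))\,d\mu(z).
\]
With the atomic measure this is Eq.~\eqref{eq:linear_attn}.

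Third, we note the computational consequence. The kernel has the factorized form $\Phi_\theta(x,u(x))^\top\Psi_\theta(y,u(y))$ of \S\ref{sec:lowrank_impl}, so the $O(nrd)$ linear-time evaluation is inherited.

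The main obstacle is interpretive rather than technical. The hypothesis $\phi(q)^\top\phi(k)\approx\exp(\cdot)$ plays no role in exact representability: the proposition asserts that linear attention itself, not softmax attention, is a special case of ITNet. We will state this explicitly. We will also handle two bookkeeping points carefully: the index placement in the tensor product $\phi(K)\otimes W_V u$, and the requirement that the measure weights in numerator and denominator match so that they cancel.
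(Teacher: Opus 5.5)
Your proposal is correct and matches the paper's proof. You use the same kernel $\bigl[\phi(Q(x))^\top\phi(K(y))/Z_\phi(x)\bigr]W_V$ with $W_\theta=0$, substitute it into the operator, and pull the $x$-dependent factors out of the integral; your positivity and boundedness check is actually more careful than the paper's claim $\|\kap\|_{\mathrm{op}}\le\|W_V\|_{\mathrm{op}}$, which fails pointwise when $\mu(\Omega)=1$ because the normalized weight is a density that can exceed $1$. Your third step is not needed and is a bit overstated: since $\kap$ is a scalar multiple of $W_V$ and its normalizer depends on all of $u$, it is not literally of the rank-$r$ form $\Phi_\theta(x,u(x))^\top\Psi_\theta(y,u(y))$, although linear-in-$n$ cost still follows by pulling $W_V$ out and precomputing $\int_\Omega\phi(K(z))\,d\mu(z)$.
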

 
\begin{proof}
Set $\kap(x,y,u(x),u(y)) = [\phi(Q(x))^\top\phi(K(y))/Z_\phi(x)] \cdot W_V$ where $Z_\phi(x) = \int_\Omega \phi(Q(x))^\top\phi(K(y))\,d\mu(y)$. Substituting into \eqref{eq:itnet_attn} with $W_\theta = 0$ gives \eqref{eq:linear_attn} exactly. The kernel is bounded by $\|W_V\|_\mathrm{op}$ (since $\phi(Q(x))^\top\phi(K(y))/Z_\phi(x)$ is a normalized weight), measurable, and content-dependent. 
\end{proof}

\subsection{Causal (Masked) Attention as a Special Case}
\label{sec:causal}

\begin{proposition}[Causal attention $\subset$ ITNet]
\label{prop:causal_attn}
GPT-style causal (autoregressive) attention with mask $\mathbf{1}_{j \leq i}$:
\begin{equation}
    \Attn_{\mathrm{causal}}(u)(x_i)
    \;=\;
    \sum_{j \leq i}
    \frac{\exp(q_i^\top k_j/\sqrt{d_k})}{\displaystyle\sum_{ l \leq i}\exp(q_i^\top k_l/\sqrt{d_k})}
    \cdot v_j
    \label{eq:causal_attn}
\end{equation}
is a special case of the ITNet operator.
\end{proposition}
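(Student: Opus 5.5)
The plan is to mirror the proof of Theorem~\ref{thm:attn1}(a) in its discrete form, with the causal mask folded into the kernel. Take $\Omega=\{x_1,\dots,x_n\}$ with the ordering $x_1<\dots<x_n$ (for sequences, $x_i=i/n$). Use the uniform measure $\mu(\{x_j\})=1/n$ and set $W_\theta=0$. Define the masked kernel
\[
\kap(x_i,x_j,u(x_i),u(x_j)) \;=\; \mathbf{1}_{x_j\le x_i}\,\frac{\exp\!\bigl(q_i^\top k_j/\sqrt{d_k}\bigr)}{Z_c(x_i)}\,W_V,
\qquad
Z_c(x_i)=\int_\Omega \mathbf{1}_{z\le x_i}\exp\!\bigl(q_i^\top K(z)/\sqrt{d_k}\bigr)\,d\mu(z),
\]
where $q_i=W_Q u(x_i)$ and $k_j=W_K u(x_j)$. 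This kernel reads only the allowed arguments: the positions $x_i,x_j$ (for the indicator) and the features $u(x_i),u(x_j)$ (through the exponential and the normaliser). It is therefore a legitimate instance of Definition~\ref{def:itnet_attn}, of the same type as the Theorem~\ref{thm:rnn1} causal kernels ($\kap=0$ for $y>x$).

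\textbf{Step 1: well-posedness.} I will check that $Z_c(x_i)$ is strictly positive. The sum defining it always contains the diagonal term $j=i$, so $Z_c(x_i)\ge \tfrac1n e^{-M}>0$, with $M$ the bound from \eqref{eq:bound_qk}. This is the only place causality could cause trouble: an empty mask row would make the normaliser vanish. Including the diagonal rules this out, and it is the main point needing care.

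\textbf{Step 2: reduction to the causal softmax.} Substituting into \eqref{eq:itnet_attn} turns the integral into a sum, because the measure is atomic. The indicator truncates both the outer sum and the normaliser to indices $j\le i$, which gives
\[
\sum_{j\le i}\frac{\tfrac1n\exp(q_i^\top k_j/\sqrt{d_k})}{\tfrac1n\sum_{l\le i}\exp(q_i^\top k_l/\sqrt{d_k})}\,W_V u(x_j).
\]
The factors $1/n$ cancel exactly as in \eqref{eq:cancel_n}. Since $v_j=W_V u(x_j)$, the result is \eqref{eq:causal_attn}.

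\textbf{Step 3: multi-head extension.} The same construction extends to $H$ heads. I will apply the block-diagonal construction of Theorem~\ref{thm:attn1}(b), using a causally masked $\alpha_h$ in each block. I will also note that the normalisation makes the mask compatible with the row-stochastic property \eqref{eq:normalization}, now restricted to the prefix $\{x_j : x_j\le x_i\}$.
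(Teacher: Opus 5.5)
Your construction is the paper's proof. You use the same masked kernel $\mathbf{1}_{y\le x}\,\exp(Q(x)^\top K(y)/\sqrt{d_k})\,W_V/Z_{\mathrm{causal}}(x)$ with $W_\theta=0$, then substitute and apply the $1/n$ cancellation from the discrete attention proof, which the paper invokes only as ``the same steps as Parts (a) and (b)''. Your positivity check through the diagonal atom is a useful addition that the paper omits. It is not vacuous: in the continuous version on $\Omega=[0,T]$ with Lebesgue measure, $Z_{\mathrm{causal}}(0)=0$.

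One assertion in your setup is false, and the argument (yours and the paper's) has a gap exactly there. The kernel does not read only $x_i,x_j,u(x_i),u(x_j)$. Its normaliser $Z_c(x_i)=\frac1n\sum_{l\le i}\exp(q_i^\top k_l/\sqrt{d_k})$ depends on $u(x_l)$ for every $l\le i$. So your $\kap$ is a functional of the whole prefix of $u$, not a fixed map $\R^s\times\R^s\times\R^d\times\R^d\to\R^{d\times d}$ as Definition~\ref{def:itnet_operator} requires.

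The discrepancy is not cosmetic. Take a genuinely pointwise kernel and hold $u_j$ fixed for $j>3$. The single-layer output at $x_3$ then has the form $c(u_3)+h_1(u_3,u_1)+h_2(u_3,u_2)$, so for fixed $u_3$ it is additively separable in $(u_1,u_2)$. Causal softmax is not: with $d=d_k=1$, $W_Q=W_K=W_V=1$ and $u_3=1$, its output is $(u_1e^{u_1}+u_2e^{u_2}+e)/(e^{u_1}+e^{u_2}+e)$, whose mixed partial at $u_1=u_2=0$ is $-4/(2+e)^3\neq0$. The shared denominator couples $u_1$ and $u_2$.

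Exact single-layer recovery therefore holds only if the kernel class is enlarged to allow dependence on $u$ through the causal partition function. The paper makes this enlargement tacitly, both here and in Theorem~\ref{attention} with $Z(x)$. You should state it explicitly rather than claim the kernel is pointwise.
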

 
\begin{proof}
Set the kernel with causal masking:
\begin{equation}
    \kap(x,y,u(x),u(y))
    \;=\;
    \mathbf{1}_{y \leq x} \cdot
    \frac{\exp(Q(x)^\top K(y)/\sqrt{d_k})}{Z_{\mathrm{causal}}(x)}
    \cdot W_V,
    \label{eq:causal_kernel}
\end{equation}
where $Z_{\mathrm{causal}}(x) = \int_{y \leq x}
\exp(Q(x)^\top K(y)/\sqrt{d_k})\,d\mu(y)$.
The indicator $\mathbf{1}_{y \leq x}$ is measurable (it is the indicator of a closed half-space in $\R^s$), so \eqref{eq:causal_kernel} is a valid ITNet kernel. Substituting into \eqref{eq:itnet_attn} recovers \eqref{eq:causal_attn} by the same steps as Parts~(a) and~(b). In the discrete case, $\mathbf{1}_{j \leq i}$ is the lower-triangular mask.
\end{proof}

\section{Proof of Theorem 3: Recurrence as a Special Case of ITNet}
\label{app:proof_rnn}

\begin{definition}[Continuous-time recurrent system]
\label{def:rnn_continuous}
Let $\Omega = [0, T] \subset \R$ (temporal domain), $\mu$ be the Lebesgue measure, and $u : [0,T] \to \R^d$ be an input signal. A \emph{continuous-time recurrent system} is defined by a differential equation for the hidden state $h : [0,T] \to \R^n$:
\begin{equation}
    \frac{dh}{dt}(t)
    \;=\;
    F_\theta(h(t),\, u(t)),
    \qquad h(0) = h_0,
    \label{eq:rnn_ode}
\end{equation}
where $F_\theta : \R^n \times \R^d \to \R^n$ is a learnable function. The output at time $t$ is:
\begin{equation}
    \RNN(u)(t)
    \;=\;
    C_\theta\, h(t)
    \;+\; D_\theta\, u(t),
    \label{eq:rnn_output}
\end{equation}
where $C_\theta \in \R^{d \times n}$, $D_\theta \in \R^{d \times d}$ are output projection matrices.
\end{definition}

\begin{definition}[Discrete-time RNN]
\label{def:rnn_discrete}
With time steps $\{t_1, \ldots, t_T\}$ and step size $\Delta t$, the discrete-time RNN update is:
\begin{equation}
    h_t \;=\; \phi(W_h h_{t-1} + W_u u_t + b),
    \qquad t = 1,\ldots,T,
    \label{eq:rnn_discrete}
\end{equation}
where $W_h \in \R^{n \times n}$, $W_u \in \R^{n \times d}$, $b \in \R^n$, and $\phi : \R^n \to \R^n$ is a nonlinear activation
(e.g.\ $\tanh$).
\end{definition}

\begin{definition}[LSTM]
\label{def:lstm}
The Long Short-Term Memory~\citep{hochreiter1997long} update:
\begin{align}
    f_t &= \sigmoid(W_f [h_{t-1}; u_t] + b_f) \tag{forget gate}\\
    i_t &= \sigmoid(W_i [h_{t-1}; u_t] + b_i) \tag{input gate}\\
    o_t &= \sigmoid(W_o [h_{t-1}; u_t] + b_o) \tag{output gate}\\
    \tilde{c}_t &= \tanh(W_c [h_{t-1}; u_t] + b_c) \tag{cell candidate}\\
    c_t &= f_t \had c_{t-1} + i_t \had \tilde{c}_t \tag{cell state}\\
    h_t &= o_t \had \tanh(c_t) \tag{hidden state}
    \label{eq:lstm}
\end{align}
where $[h_{t-1}; u_t]$ denotes concatenation and $\had$ is element-wise multiplication.
\end{definition}


\begin{definition}[Linear State Space Model (SSM)]
\label{def:ssm}
A \emph{linear SSM} (as in S4~\citep{gu2022efficiently}) is defined by:
\begin{align}
    \frac{dh}{dt}(t) &= A\, h(t) + B\, u(t), \label{eq:ssm_state}\\
    y(t)  &= C\, h(t) + D\, u(t), \label{eq:ssm_output}
\end{align}
where $A \in \R^{n \times n}$, $B \in \R^{n \times d}$, $C \in \R^{d \times n}$, $D \in \R^{d \times d}$ are learnable (possibly complex-valued) matrices, and $h(0) = 0$. Equations~\eqref{eq:ssm_state} -- \eqref{eq:ssm_output} are coupled: the output $y(t)$ depends on the hidden state $h(t)$, which is obtained by solving the ODE~\eqref{eq:ssm_state}. The notation $dh/dt$ denotes the time derivative, making~\eqref{eq:ssm_state} a first-order linear ODE whose solution is given by the variation of constants formula (see Step~1 of Part~(a) below).
\end{definition}

\begin{definition}[Selective SSM -- Mamba]
\label{def:mamba}
Mamba~\citep{gu2023mamba} generalises the linear SSM by making the system matrices input-dependent (``selective''):
\begin{align}
    \bar{A}(t) &= \exp(\Delta(t) \cdot A), \label{eq:mamba_Abar}\\
    \bar{B}(t) &= \Delta(t) \cdot B(u(t)), \label{eq:mamba_Bbar}\\
    h_t &= \bar{A}(t)\, h_{t-1} + \bar{B}(t)\, u_t, \label{eq:mamba_state}\\
    y_t &= C(u_t)\, h_t, \label{eq:mamba_output}
\end{align}
where $\Delta(t) = \mathrm{softplus}(W_\Delta u_t + b_\Delta) > 0$
is a learnable input-dependent step size,
$B(u_t) = W_B u_t$ and $C(u_t) = W_C u_t$ are input-dependent projection matrices, and $A$ is a fixed (e.g.\ diagonal) initialization.
\end{definition}

\begin{definition}[ITNet operator]
\label{def:itnet_rnn}
As in Definition~\ref{def:itnet_operator} (\S\ref{sec:operator_def}):
\begin{equation}
    (\Kop[u])(x)
    \;=\;
    \int_\Omega
        \kap(x,\, y,\, u(x),\, u(y))\, u(y)
    \, d\mu(y)
    \;+\; W_\theta\, u(x),
    \label{eq:itnet_rnn}
\end{equation}
where $\kap : \R^s \times \R^s \times \R^d \times \R^d \to \R^{d \times d}$ is measurable, $W_\theta \in \R^{d\times d}$ is learnable.
\end{definition}


\begin{assumption}[Regularity for recurrent proofs]
\label{ass:reg_rnn}
\begin{enumerate}[label=(\roman*), noitemsep]
    \item $\Omega = [0, T]$ with $T < \infty$.
    \item $u \in C([0,T], \R^d)$: input is continuous.
    \item $F_\theta$ in \eqref{eq:rnn_ode} is Lipschitz continuous in both arguments, ensuring existence and uniqueness of solutions via the Picard--Lindel\"of theorem~\citep{coddington1955theory}.
    \item All weight matrices are bounded.
\end{enumerate}
\end{assumption}

\subsection{Main Theorem}
\label{sec:theorem_rnn}

\begin{boxtheorem}{ITNet $\supset$ Recurrence}{rnn}
\label{thm:rnn}
Let $\Omega = [0,T]$, $\mu$ be Lebesgue measure, and Assumption~\ref{ass:reg_rnn} hold.

\medskip
\noindent\textbf{(a) Linear continuous-time system.}
For the linear system $F_\theta(h, u) = Ah + B_\theta u$, the solution of \eqref{eq:rnn_ode}--\eqref{eq:rnn_output} can be written as:
\begin{equation}
    \RNN(u)(t)
    \;=\;
    C_\theta \int_0^t
        \Phi(t,s)\, B_\theta\, u(s)\, ds
    \;+\; D_\theta\, u(t)
    \;+\; C_\theta\, e^{At}\, h_0,
    \label{eq:rnn_integral}
\end{equation}
where $\Phi(t,s) \coloneqq e^{A(t-s)} \in \R^{n \times n}$ is the state transition matrix (defined explicitly in Eq.~\eqref{eq:state_transition} below: it satisfies $\partial_t \Phi(t,s) = A \Phi(t,s)$ with $\Phi(s,s) = I_n$, encoding how the hidden state at time $s$ propagates to time $t$). This is a special case of the ITNet operator with causal kernel:
\begin{equation}
    \kap(t, s, u(t), u(s))
    \;=\;
    \mathbf{1}_{s \leq t} \cdot
    C_\theta\, \Phi(t,s)\, B_\theta,
    \qquad W_\theta = D_\theta.
    \label{eq:kernel_rnn}
\end{equation}

\medskip
\noindent\textbf{(a$'$) Nonlinear continuous-time system (general $F_\theta$).}
For a general nonlinear system \eqref{eq:rnn_ode} with Lipschitz $F_\theta$, the output operator $u \mapsto \RNN(u)$ is a continuous operator on compact input sets, and is therefore approximable to arbitrary precision by an ITNet operator (via Theorem~\ref{thm:universal}). Moreover, the exact output can be written as an ITNet with a content-dependent causal kernel constructed via the nonlinear variation of constants formula (Alekseev's formula). See \S\ref{sec:proof_nonlinear} for the complete proof.

\medskip
\noindent\textbf{(b) Discrete-time RNN / LSTM / GRU.}
The discrete recurrence \eqref{eq:rnn_discrete} is recovered by discretising $\Omega$ to $\{t_1,\ldots,t_T\}$ with atomic measure $\mu(\{t_k\}) = \Delta t$, and setting a causal kernel encoding the recurrent update function.

\medskip
\noindent\textbf{(c) Linear SSM (S4).}
The SSM \eqref{eq:ssm_state}--\eqref{eq:ssm_output} is recovered by:
\begin{equation}
    \kap(t, s, u(t), u(s))
    \;=\;
    \mathbf{1}_{s \leq t} \cdot C\, e^{A(t-s)}\, B,
    \qquad W_\theta = D.
    \label{eq:kernel_ssm}
\end{equation}

\medskip
\noindent\textbf{(d) Selective SSM (Mamba).}
Mamba \eqref{eq:mamba_state}--\eqref{eq:mamba_output} is recovered by:
\begin{equation}
    \kap(t, s, u(t), u(s))
    \;=\;
    \mathbf{1}_{s \leq t} \cdot
    C(u(t)) \cdot \prod_{\tau=s}^{t-1} \bar{A}(\tau) \cdot \bar{B}(s),
    \qquad W_\theta = 0.
    \label{eq:kernel_mamba}
\end{equation}
Since $\bar{A}$, $\bar{B}$, $C$ depend on $u(t)$ and $u(s)$ respectively, this is a content-dependent causal kernel--a strict generalisation of the linear SSM.

\medskip
\noindent\textbf{(e) Strictness.}
There exists a continuous operator representable by ITNet but not by any causal recurrent system.
\end{boxtheorem}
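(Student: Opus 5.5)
The plan is to treat the five parts separately. Parts (a) and (c) are exact substitutions. Part (a$'$) follows from Alekseev's formula together with Theorem~\ref{thm:universal}. Parts (b) and (d) are unrolling arguments, and they force us to say precisely what ``kernel'' means. Part (e) is a causality argument.

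\textbf{Parts (a) and (c).} Solve $\dot h = Ah + B_\theta u$ by variation of constants: $h(t) = e^{At}h_0 + \int_0^t e^{A(t-s)}B_\theta u(s)\,ds$. To justify it, differentiate under the integral sign, which is legitimate because $u$ is continuous on compact $[0,T]$ (Assumption~\ref{ass:reg_rnn}); uniqueness then comes from Picard--Lindel\"of. Apply $C_\theta$ and add $D_\theta u(t)$ to get \eqref{eq:rnn_integral}. Then substitute \eqref{eq:kernel_rnn} into \eqref{eq:itnet_rnn}; the indicator $\mathbf{1}_{s\le t}$ truncates the integral to $[0,t]$. The kernel $C_\theta e^{A(t-s)}B_\theta$ is $d\times d$, continuous, and bounded by $\|C_\theta\|\|B_\theta\|e^{\|A\|T}$, so it is admissible. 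The one mismatch is the free response $C_\theta e^{At}h_0$, which is not of integral-plus-residual form. I would either assume $h_0=0$, as in the main-text statement, or note that this term is an input-independent bias. Part (c) is then the special case $A,B,C,D$ constant with $h(0)=0$. In a remark I would add that the kernel depends only on $t-s$, so S4 is a causal convolution and Theorem~\ref{convolution} applies with $w$ supported on $[0,\infty)$.

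\textbf{Part (a$'$).} Write $F_\theta(h,u)=F_\theta(h,0)+G_\theta(h,u)$ and compare the trajectory driven by $u$ with the one driven by $u\equiv 0$. Alekseev's formula expresses the difference as $\int_0^t \Phi_F(t,s;u)\,[\cdots]\,ds$, where $\Phi_F$ is the sensitivity of the zero-input flow. When $G_\theta$ is affine in $u$, the bracket is $\tilde B_\theta(s;u)\,u(s)$, and we obtain a causal kernel of exactly the stated form. For general Lipschitz $F_\theta$, first show that $u\mapsto \RNN(u)$ is continuous in the sup norm; this follows from Gr\"onwall's inequality, which gives $\|h_u-h_v\|_\infty\le e^{LT}LT\|u-v\|_\infty$. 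Then Theorem~\ref{thm:universal} yields uniform approximation on compact $\mathcal U_c$.

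\textbf{Parts (b) and (d): the main obstacle.} For the linear discrete recurrence, unrolling gives $h_t=\sum_{s\le t}W_h^{t-s}W_u u_s$, and the atomic measure turns the integral into this sum, up to the weight $\Delta t$ absorbed into the kernel. For tanh-RNNs, LSTM, GRU and Mamba, however, the kernel $\prod_{\tau=s+1}^{t}\bar A(u_\tau)$ depends on all intermediate inputs $u_\tau$, whereas Definition~\ref{def:itnet_rnn} lets $\kappa_\theta$ see only $(t,s,u(t),u(s))$. This is the main obstacle. My first approach is to read the kernel as a functional $\kappa(t,s;u)$, i.e.\ a causal, content-dependent kernel allowed to depend on $u|_{[s,t]}$. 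Under that reading, unrolling $c_t=f_t\odot c_{t-1}+i_t\odot\tilde c_t$, or the Mamba state update, gives an exact identity by induction on $t$. The nonlinearity $\tanh(c_t)$ is pulled into the kernel through the diagonal factor $\mathrm{diag}(\tanh(c_t)/c_t)$ (entrywise, taking the value $1$ where $c_t=0$), rather than through $\tanh'$. The fallback, if the strict four-argument kernel is insisted on, is the same route as (a$'$): the recurrent map is continuous on compact sets, so a deep ITNet approximates it via Theorem~\ref{thm:universal}. In that case I would state the result as approximate recovery.

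\textbf{Part (e).} Take $S(u)(t)=\int_0^T e^{-|t-s|/\ell}u(s)\,ds$; it is an ITNet operator with kernel $e^{-|t-s|/\ell}\mathbf I_d$. Every causal system satisfies $u=v$ on $[0,t]\Rightarrow \RNN(u)(t)=\RNN(v)(t)$. Now choose $u\equiv 0$ and $v$ a nonnegative bump supported in $(t_0,T]$. Then $S(u)(t_0)=0$ while $S(v)(t_0)>0$, although $u$ and $v$ agree on $[0,t_0]$, so $S$ is not causal and no causal recurrent system represents it.
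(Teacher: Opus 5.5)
Your decomposition matches the paper's: variation of constants for (a)/(c), Alekseev plus Theorem~\ref{thm:universal} for (a$'$), unrolling for (b)/(d), and a non-causal witness for (e). In several places you are more careful than the paper:
\begin{itemize}
\item You say explicitly that the kernels in (a$'$), (b), (d) depend on $u|_{[s,t]}$, not only on $(t,s,u(t),u(s))$.
\item Your Alekseev formula uses the Jacobian of the zero-input flow, which is the correct form; the paper differentiates the forced flow.
\item Your secant factor $\mathrm{diag}(\tanh(c_t)/c_t)$ gives an exact identity, whereas the paper's $\tanh'(c_t)$ does not reproduce $\tanh(c_t)$.
\item Your bump replaces a Dirac delta that is not in $C([0,T],\R^d)$.
\end{itemize}

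The gap is in the exactness claims beyond the linear case. Every ITNet operator, even one with a path-dependent kernel, satisfies $\Kop[0]=0$, because the integrand carries the factor $u(s)$. So exact recovery is only possible for targets that vanish on zero input.
\begin{itemize}
\item In (a) you enforce this via $h_0=0$. Your alternative, treating the free response as an ``input-independent bias'', is not available, since ITNet has no bias term.
\item In (a$'$), the zero-input trajectory left over by Alekseev's formula vanishes, even with $h_0=0$, only if in addition $F_\theta(0,0)=0$.
\item The tanh-RNN, LSTM and GRU in (b) carry biases inside their nonlinearities. For the LSTM with $b_c\neq 0$, $u\equiv 0$ and $h_0=c_0=0$, one gets $c_1=i_1\odot\tanh(b_c)\neq 0$, hence in general a nonzero output that no kernel of any kind can reproduce.
\end{itemize}

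Even with those biases set to zero, unrolling $c_t=f_t\odot c_{t-1}+i_t\odot\tilde c_t$ does not produce ITNet form. The term $i_s\odot\tilde c_s$ is not of the form $M_s u_s$, because $\tilde c_s$ depends on $h_{s-1}$ and can be nonzero when $u_s=0$.

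The repair is to apply your secant trick inside every nonlinearity and route the feedback into the transition. Write $W_c=[W_{ch}\;W_{cu}]$ and $\tilde c_s=E_s(W_{ch}h_{s-1}+W_{cu}u_s)$, with $E_s$ the secant diagonal. Also write $h_{s-1}=\mathrm{diag}(o_{s-1})\,\mathrm{diag}(\tanh(c_{s-1})/c_{s-1})\,c_{s-1}$. Then $c_s=\mathcal{A}_s c_{s-1}+\mathcal{B}_s u_s$ with trajectory-dependent $\mathcal{A}_s,\mathcal{B}_s$. Unrolling this gives an exact kernel whose transition is $\prod\mathcal{A}_\tau$, not $\prod\mathrm{diag}(f_\tau)$.

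GRU and tanh-RNN are handled the same way, under zero initial states and zero bias inside the candidate/state nonlinearity; gate biases are harmless. The paper's proof has the same defects: it sets $b=0$ only in the linear case, and for the LSTM it uses $\tanh'$ together with $W_c u_s$. So you need to state these hypotheses explicitly rather than rely on the paper.

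Relatedly, once you admit path-dependent kernels, you need not fall back on approximation in (a$'$) for non-affine $G_\theta$. The identity $G_\theta(h,u)=\bigl(\int_0^1\partial_u F_\theta(h,\lambda u)\,d\lambda\bigr)u$ gives $\tilde B_\theta(s;u)\,u(s)$ exactly for $C^1$ $F_\theta$, under the same zero-input conditions. That is what the statement's ``Moreover'' clause asserts. Neither your argument nor the paper's delivers it; the paper concedes this in its Case~2.
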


\subsection{Proof of Part (a) -- Linear Continuous-Time RNN}
\label{sec:proof_a_rnn}

\begin{proof}[Proof of Theorem~\ref{thm:rnn}(a)]
\leavevmode\par
\noindent\textbf{Step 1.}\enspace \textbf{Apply the variation of constants formula.}

For the linear system $F_\theta(h, u) = Ah + B_\theta u$, the ODE \eqref{eq:rnn_ode} becomes:
\begin{equation}
    \frac{dh}{dt}(t) = A\,h(t) + B_\theta\, u(t), \qquad h(0) = h_0.
\end{equation}
The solution is given by the variation of constants formula~\citep{coddington1955theory}:
\begin{equation}
    h(t)
    \;=\;
    \underbrace{e^{At} h_0}_{\text{free response}}
    \;+\;
    \underbrace{\int_0^t e^{A(t-s)}\, B_\theta\, u(s)\, ds}_{\text{forced response}}.
    \label{eq:variation_constants}
\end{equation}

\noindent\textbf{Step 2.}\enspace \textbf{Write the state transition matrix explicitly.}

For a linear system, define the \emph{state transition matrix}:
\begin{equation}
    \Phi(t, s)
    \;\coloneqq\;
    e^{A(t-s)},
    \qquad t \geq s \geq 0.
    \label{eq:state_transition}
\end{equation}

$\Phi(t,s)$ encodes how the hidden state at time $s$ evolves to time $t$ under the autonomous dynamics $\dot{h} = Ah$. 
Note that $\Phi(t,t)=I_n$ and $\Phi(t,s)=\Phi(t,r)\Phi(r,s)$.

\noindent\textbf{Step 3.}\enspace \textbf{Compute the output $\RNN(u)(t) = C_\theta h(t) + D_\theta u(t)$.}

Substituting \eqref{eq:variation_constants} into \eqref{eq:rnn_output}:
\begin{align}
    \RNN(u)(t)
    &\;=\;
    C_\theta\!\left[e^{At} h_0
    + \int_0^t e^{A(t-s)} B_\theta u(s)\, ds\right]
    + D_\theta u(t)
    \notag \\
    &\;=\;
    \underbrace{C_\theta e^{At} h_0}_{\text{initial state term}}
    + \int_0^t
        \underbrace{C_\theta e^{A(t-s)} B_\theta}_{\text{impulse response } g(t-s)}
        u(s)\, ds
    + D_\theta u(t).
    \label{eq:rnn_output_explicit}
\end{align}

\noindent\textbf{Step 4.}\enspace \textbf{Express as an ITNet operator with causal kernel.}

Define the ITNet kernel:
\begin{equation}
    \kap(t, s, u(t), u(s))
    \;\coloneqq\;
    \mathbf{1}_{s \leq t} \cdot C_\theta\, e^{A(t-s)}\, B_\theta
    \;\in\; \R^{d \times d},
    \label{eq:kernel_rnn_explicit}
\end{equation}
and set $W_\theta = D_\theta$.
Then the ITNet operator gives:
\begin{align}
    (\Kop[u])(t)
    &\;=\;
    \int_0^T
        \mathbf{1}_{s \leq t} \cdot C_\theta e^{A(t-s)} B_\theta \cdot u(s)
    \, ds
    + D_\theta u(t)
    \notag \\
    &\;=\;
    \int_0^t C_\theta e^{A(t-s)} B_\theta u(s)\, ds
    + D_\theta u(t).
    \label{eq:itnet_rnn_linear}
\end{align}

\noindent\textbf{Step 5.}\enspace \textbf{Handle the initial state term.}

Comparing \eqref{eq:rnn_output_explicit} and \eqref{eq:itnet_rnn_linear}, the initial state term $C_\theta e^{At} h_0$ is not present in \eqref{eq:itnet_rnn_linear}. We consider two cases:

\medskip
\noindent\textbf{Method 1 (Zero initial state):}
If $h_0 = 0$ (standard in most RNN training), the initial state term vanishes and \eqref{eq:itnet_rnn_linear} exactly equals \eqref{eq:rnn_output_explicit}.

\medskip
\noindent\textbf{Method 2 (Non-zero initial state):}
If $h_0 \neq 0$, include the initial state by augmenting the input:
\begin{equation}
    \tilde{u}(s)
    \;=\;
    \begin{cases}
        h_0 & s = 0 \\
        u(s) & s > 0
    \end{cases}
    \label{eq:augmented_input}
\end{equation}
and setting the kernel to extract $h_0$ at $s=0$: $\kap(t, 0, u(t), h_0) = \mathbf{1}_{0 \leq t} \cdot C_\theta e^{At}$. Then the full solution \eqref{eq:rnn_output_explicit} is recovered.

\noindent\textbf{Step 6.}\enspace \textbf{Conclude.}

From Steps~1--5, with $h_0 = 0$:
\begin{align}
    (\Kop[u])(t)
    &\;=\;
    \int_0^t C_\theta e^{A(t-s)} B_\theta u(s)\, ds
    + D_\theta u(t)
    \notag \\
    &\;=\;
    C_\theta h(t) + D_\theta u(t)
    \;=\; \RNN(u)(t).
\end{align}

\begin{equation}
    \boxed{(\Kop[u])(t) \;=\; \RNN(u)(t).}
\end{equation}
\end{proof}


\subsection{Proof of Part (a$'$) -- Nonlinear Continuous-Time RNN (General $F_\theta$)}
\label{sec:proof_nonlinear}

The linear proof (Part~(a)) uses the matrix exponential $e^{A(t-s)}$ as the state transition operator, which is only valid when $F_\theta(h,u) = Ah + B_\theta u$ is linear in $h$. For a general nonlinear $F_\theta$, no closed-form state transition matrix exists. We provide two independent proofs: an \emph{exact} proof via the nonlinear variation of constants formula (Alekseev's formula), and a \emph{universal approximation} argument.

\subsubsection{Proof Strategy A: Kernel Construction via Alekseev's Formula}
\label{rnn_st_a}

\begin{proof}[Proof of Theorem~\ref{thm:rnn}(a$'$) -- Exact]
\leavevmode\par
\noindent\textbf{Step 1.}\enspace \textbf{State the nonlinear variation of constants formula.}

Consider the general nonlinear ODE~\eqref{eq:rnn_ode}:
\begin{equation}
    \dot{h}(t) = F_\theta(h(t), u(t)), \qquad h(0) = h_0.
    \label{eq:nonlinear_ode_recall}
\end{equation}

Let $\psi(t; s, \xi)$ denote the flow of the \emph{non-autonomous} ODE $\dot{z} = F_\theta(z, u(t))$ started at $z(s) = \xi$, i.e., $\psi(t; s, \xi)$ is the solution at time $t$ of $\dot{z}(\tau) = F_\theta(z(\tau), u(\tau))$ with $z(s) = \xi$.

Define the \emph{nonlinear state transition operator} (sensitivity matrix):
\begin{equation}
    \Phi_F(t, s; u)
    \;\coloneqq\;
    \frac{\partial \psi(t; s, h(s))}{\partial h(s)}
    \;\in\; \R^{n \times n},
    \label{eq:nonlinear_phi}
\end{equation}
which is the Jacobian of the solution at time $t$ with respect to the initial condition at time $s$, evaluated along the trajectory $h(\cdot)$ generated by input $u$.

By the Alekseev--Gr\"obner nonlinear variation of constants formula~\citep{alekseev1961estimation, grobner1966differentialgleichungen}, the solution of \eqref{eq:nonlinear_ode_recall} satisfies:
\begin{equation}
    h(t) = \psi(t; 0, h_0) = h_{\mathrm{free}}(t) + \int_0^t \Phi_F(t, s; u) \cdot G(h(s), u(s))\, ds,
    \label{eq:alekseev}
\end{equation}
where $h_{\mathrm{free}}(t)$ is the free-response solution (with $u = 0$) and $G(h(s), u(s)) = F_\theta(h(s), u(s)) - F_\theta(h(s), 0)$ is the input-driven component.

\noindent\textbf{Step 2.}\enspace \textbf{Properties of the nonlinear sensitivity matrix $\Phi_F(t, s; u)$.}

The sensitivity matrix $\Phi_F(t, s; u)$ satisfies the \emph{variational equation}:
\begin{equation}
    \frac{\partial}{\partial t} \Phi_F(t, s; u)
    = \frac{\partial F_\theta}{\partial h}\bigg|_{(h(t), u(t))} \cdot \Phi_F(t, s; u),
    \qquad \Phi_F(s, s; u) = I_n,
    \label{eq:variational_eq}
\end{equation}
where $\frac{\partial F_\theta}{\partial h}\big|_{(h(t), u(t))} \in \R^{n \times n}$ is the Jacobian of $F_\theta$ with respect to its first argument, evaluated along the trajectory. This is a \emph{linear} ODE in $\Phi_F$ (though with time-varying, input-dependent coefficients), so standard ODE theory guarantees existence and uniqueness.

Key properties (analogous to the linear case):
\begin{enumerate}[label=(\roman*), noitemsep]
    \item $\Phi_F(t, t; u) = I_n$ for all $t$.
    \item $\Phi_F(t, s; u) = \Phi_F(t, r; u) \cdot \Phi_F(r, s; u)$ for $t \geq r \geq s$ (chain rule for flows).
    \item $\|\Phi_F(t, s; u)\|_\mathrm{op} \leq e^{L_F (t - s)}$ where $L_F$ is the Lipschitz constant of $F_\theta$ in $h$ (by Gr\"onwall's inequality~\citep{gronwall1919note}).
    \item $\Phi_F$ depends on $u$ through the trajectory $h(\cdot)$, making it \emph{content-dependent}.
    \item In the linear case $F_\theta(h, u) = Ah + B_\theta u$, we recover $\Phi_F(t, s; u) = e^{A(t-s)}$ (independent of $u$), consistent with Part~(a).
\end{enumerate}

\noindent\textbf{Step 3.}\enspace \textbf{Decompose $F_\theta$ into autonomous and input-driven components.}

Write $F_\theta(h, u) = F_\theta(h, 0) + G_\theta(h, u)$ where $G_\theta(h, u) \coloneqq F_\theta(h, u) - F_\theta(h, 0)$ captures the effect of the input. Then Alekseev's formula~\eqref{eq:alekseev} gives:
\begin{equation}
    h(t)
    = \underbrace{\psi(t; 0, h_0)}_{\text{free response (no input)}}
    + \int_0^t \Phi_F(t, s; u) \cdot G_\theta(h(s), u(s))\, ds.
    \label{eq:alekseev_decomposed}
\end{equation}

For standard RNN architectures, $F_\theta(h, u) = \phi(Wh + Wu + b)$, so $F_\theta(h, 0) = \phi(Wh + b)$ and $G_\theta(h, u) = \phi(Wh + Wu + b) - \phi(Wh + b)$, which depends on both $h$ and $u$.

\noindent\textbf{Step 4.}\enspace \textbf{Write the output as an integral operator.}

Substituting~\eqref{eq:alekseev_decomposed} into the output equation $y(t) = C_\theta h(t) + D_\theta u(t)$:
\begin{equation}
    \RNN(u)(t)
    = C_\theta \psi(t; 0, h_0)
    + \int_0^t C_\theta\, \Phi_F(t, s; u) \cdot G_\theta(h(s), u(s))\, ds
    + D_\theta\, u(t).
    \label{eq:nonlinear_output_integral}
\end{equation}

\noindent\textbf{Step 5.}\enspace \textbf{Express $G_\theta(h(s), u(s))$ in terms of $u(s)$.}


Since $h(s)$ is itself a functional of $u$, the term $G_\theta(h(s), u(s))$ depends on $u(s)$ and, in general, on the entire input history $u(\tau)$ for $\tau \in [0,s]$. 

Under differentiability of $G_\theta$ in $u$, a first-order Taylor expansion yields
\begin{equation}
    G_\theta(h(s), u(s)) = \tilde{B}_\theta(s)\, u(s) + R(s; u),
    \label{eq:G_decomposition}
\end{equation}
where
\begin{equation}
    \tilde{B}_\theta(s) \coloneqq \frac{\partial G_\theta}{\partial u}(h(s), u(s)) \in \R^{n \times d}
\end{equation}
is the input-to-state Jacobian, and $R(s; u)$ collects higher-order terms.

For systems where $G_\theta(h, u)$ is linear or affine in $u$ (e.g., $\dot{h} = f(h) + B_\theta u$), the remainder vanishes ($R(s;u)=0$), yielding exact recovery. In general, $R(s;u) = O(\|u(s)\|^2)$ under smoothness assumptions.

\noindent\textbf{Step 6.}\enspace \textbf{Construct the ITNet kernel for the full nonlinear system.}

Define the content-dependent, causal ITNet kernel:
\begin{equation}
    \kappa_F(t, s, u(t), u(s))
    \;\coloneqq\;
    \mathbf{1}_{s \leq t} \cdot
    C_\theta\, \Phi_F(t, s; u) \cdot \tilde{B}_\theta(s; u)
    \;\in\; \R^{d \times d},
    \label{eq:nonlinear_kernel_exact}
\end{equation}
with $W_\theta = D_\theta$. Here the kernel depends causally on the full input trajectory through the induced hidden-state evolution, extending the local kernel notation of Definition~\ref{def:itnet_rnn}.

Then:
\begin{align}
    (\Kop[u])(t)
    &= \int_0^T \kappa_F(t, s, u(t), u(s)) \cdot u(s)\, ds + D_\theta u(t)
    \notag \\
    &= \int_0^t C_\theta\, \Phi_F(t, s; u) \cdot \tilde{B}_\theta(s; u) \cdot u(s)\, ds + D_\theta u(t).
    \label{eq:itnet_nonlinear_substituted}
\end{align}

Comparing with~\eqref{eq:nonlinear_output_integral} (ignoring the initial state term as in Part~(a), Method~1):
\begin{equation}
    \RNN(u)(t) - (\Kop[u])(t)
    = \int_0^t C_\theta \Phi_F(t, s; u) \cdot R(s; u)\, ds,
    \label{eq:nonlinear_error}
\end{equation}
where $R(s; u)$ is the higher-order remainder from Step~5.

\noindent\textbf{Step 7.}\enspace \textbf{Conclusion for nonlinear $F_\theta$.}

\emph{Case 1: $F_\theta$ is affine in $u$.}
If $F_\theta(h, u) = f(h) + B_\theta u$, then $G_\theta(h, u) = B_\theta u$ and the remainder $R(s;u)=0$. In this case, the kernel~\eqref{eq:nonlinear_kernel_exact} recovers the RNN output exactly:
\[
\boxed{(\Kop[u])(t) = \RNN(u)(t).}
\]

\emph{Case 2: General (non-affine) $F_\theta$.}
For fully nonlinear $F_\theta$, the input contribution $G_\theta(h(s),u(s))$ depends on the hidden state $h(s)$, which itself depends on the entire input history. Consequently, it cannot in general be expressed solely as a function of $(t,s,u(t),u(s))$, and therefore no explicit ITNet kernel of the form in Definition~\ref{def:itnet_rnn} can exactly recover the RNN output.

However, under Assumption~\ref{ass:reg_rnn}, the operator $u \mapsto \RNN(u)$ is continuous on compact subsets of $C([0,T],\R^d)$. By Theorem~\ref{thm:universal}, for any $\varepsilon > 0$, there exists an ITNet operator $\Kop$ such that
\[
\sup_{u \in U_c} \|\RNN(u) - \Kop[u]\|_\infty < \varepsilon.
\]
Thus, exact recovery holds for the affine-in-$u$ case, while general nonlinear systems are approximated arbitrarily well.

\noindent\textbf{Step 8.}\enspace \textbf{Verify kernel validity (affine-in-$u$ case).}

The kernel $\kappa_F$ from~\eqref{eq:nonlinear_kernel_exact} satisfies all ITNet conditions:
\begin{enumerate}[label=(\roman*), noitemsep]
    \item \emph{Causality:} $\mathbf{1}_{s \leq t}$ ensures $\kappa_F = 0$ for $s > t$.
    \item \emph{Content-dependence:} $\Phi_F(t, s; u)$ depends on $u$ through the trajectory $h(\cdot)$, and $\tilde{B}_\theta(s)$ depends on $(h(s), u(s))$ - a causal kernel whose weights depend on the input trajectory.
    \item \emph{Measurability:} $F_\theta$ is Lipschitz, so the flow $\psi$ and its Jacobian $\Phi_F$ are continuous in all arguments by smooth dependence on initial conditions~\citep{hartman2002ordinary}. Composition with $C_\theta$ and $\tilde{B}_\theta$ preserves continuity.
    \item \emph{Boundedness:} By Gr\"onwall's inequality~\citep{hartman2002ordinary}, $\|\Phi_F(t, s; u)\|_\mathrm{op} \leq e^{L_F T}$ where $L_F$ is the Lipschitz constant of $F_\theta$. Combined with bounded $C_\theta$ and $\tilde{B}_\theta$: $\|\kappa_F\|_\mathrm{op} \leq \|C_\theta\| \cdot e^{L_F T} \cdot \|\tilde{B}_\theta\| < \infty$.
\end{enumerate}

\noindent\textbf{Step 9.}\enspace \textbf{Conclude.}

For the affine-in-$u$ case, the ITNet operator with kernel~\eqref{eq:nonlinear_kernel_exact} exactly recovers the nonlinear RNN output (Step~7, Case~1). For the fully general case, arbitrary-precision approximation is guaranteed by Theorem~\ref{thm:universal} (Step~7, Case~2).

\begin{equation}
    \boxed{(\Kop[u])(t) = \RNN(u)(t) \quad \text{for all Lipschitz $F_\theta$ affine in $u$; $\varepsilon$-approximate otherwise.}}
\end{equation}
\end{proof}

\subsubsection{Proof Strategy B: Universal Approximation Argument}
\label{rnn_st_b}
\begin{proof}[Proof of Theorem~\ref{thm:rnn}(a$'$) -- via UAT]

This argument does not construct the kernel explicitly but establishes existence via Theorem~\ref{thm:universal}.

\noindent\textbf{Step 1.}\enspace The map $u \mapsto \RNN(u)$ is a continuous operator from $C([0,T], \R^d)$ to $C([0,T], \R^d)$.

\emph{Proof of continuity:} By the Picard--Lindel\"of theorem~\citep{coddington1955theory}, the solution $h(t)$ depends continuously on the input $u$ when $F_\theta$ is Lipschitz and continuously differentiable in u. Specifically, if $\|u_1 - u_2\|_\infty < \delta$, then by Gr\"onwall's inequality:
\begin{equation}
    \|h_1(t) - h_2(t)\|_2
    \leq \frac{L_u}{L_h}\left(e^{L_h t} - 1\right) \delta,
    \label{eq:continuity_bound}
\end{equation}
where $L_h$ and $L_u$ are the Lipschitz constants of $F_\theta$ in its first and second arguments. Since $C_\theta$ and $D_\theta$ are bounded linear maps, $\|\RNN(u_1) - \RNN(u_2)\|_\infty \leq C \delta$ for a constant $C$ depending on $L_h, L_u, T, \|C_\theta\|, \|D_\theta\|$.

\noindent\textbf{Step 2.}\enspace On any compact input set $U_c \subset C([0,T], \R^d)$, the operator $u \mapsto \RNN(u)$ is continuous.

\noindent\textbf{Step 3.}\enspace By Theorem~\ref{thm:universal} (universal approximation), for any $\varepsilon > 0$, there exists an ITNet operator $\Kop$ such that:
\begin{equation}
    \sup_{u \in U_c} \|\RNN(u) - \Kop[u]\|_\infty < \varepsilon.
\end{equation}

This does not construct the kernel explicitly but guarantees its existence. The explicit construction is provided by Strategy~A above.
\end{proof}

\begin{remark}[Relationship between the two strategies]
Strategy~A (Alekseev) is stronger: it provides an \emph{explicit} kernel construction and achieves \emph{exact} recovery for the affine-in-$u$ class. Strategy~B (UAT) is more general: it applies to any continuous operator, but only guarantees \emph{approximate} recovery and does not construct the kernel. For this paper's unification claim, Strategy~A is the primary result, with Strategy~B serving as a consistency check.
\end{remark}

\begin{remark}
The condition ``$F_\theta$ is affine in $u$'' is satisfied by all standard RNN variants where the input enters through a linear projection before the nonlinearity:
\begin{itemize}[noitemsep]
    \item Standard RNN: $F_\theta(h, u) = \phi(W_h h + W_u u + b) \Rightarrow$ not affine in $u$ in general (since $\phi$ is applied after addition). However, the closely related form $F_\theta(h, u) = \phi(W_h h) + W_u u$ \emph{is} affine in $u$, and both forms have similar expressiveness.
    \item Linear SSM (S4): $F_\theta(h, u) = Ah + Bu \Rightarrow$ affine in $u$ (covered exactly by Part~(a)).
    \item Neural ODE: $F_\theta(h, u) = f_\theta(h) + g_\theta(u) \Rightarrow$ affine in $u$ when $g_\theta$ is linear.
\end{itemize}
For LSTM, GRU, and Mamba, the input $u$ enters through the gates in a multiplicative (non-affine) manner, but these are handled by the explicit unrolling constructions in Parts~(b) and~(d) (which bypass the Alekseev machinery entirely by working directly with the discrete recurrence).
\end{remark}

\subsection{Proof of Part (b) -- Discrete-Time RNN}
\label{sec:proof_b_rnn}

\begin{proof}[Proof of Theorem~\ref{thm:rnn}(b)]
\leavevmode\par
\noindent\textbf{Step 1.}\enspace \textbf{Unroll the discrete recurrence.}

The discrete RNN update \eqref{eq:rnn_discrete} with $\phi = \mathrm{id}$
(linear case for clarity; nonlinear case in Step~3) gives:
\begin{align}
    h_t
    &= W_h h_{t-1} + W_u u_t + b \notag \\
    &= W_h(W_h h_{t-2} + W_u u_{t-1} + b) + W_u u_t + b \notag \\
    &\;\vdots \notag \\
    &= W_h^t h_0
       + \sum_{s=1}^{t} W_h^{t-s} W_u u_s
       + \sum_{s=1}^{t} W_h^{t-s} b.
    \label{eq:rnn_unrolled}
\end{align}

\noindent\textbf{Step 2.}\enspace \textbf{Express as a discrete ITNet operator.}

Discretise $\Omega = \{t_1, \ldots, t_T\}$ with measure
$\mu(\{t_k\}) = 1$ (unit mass per time step).
Define the causal kernel (for $h_0 = b = 0$):
\begin{equation}
    \kap(t, s, u(t), u(s))
    \;=\;
    \mathbf{1}_{s \leq t} \cdot C_\theta\, W_h^{t-s}\, W_u,
    \qquad W_\theta = D_\theta.
    \label{eq:kernel_discrete_rnn}
\end{equation}

Then:
\begin{align}
    (\Kop[u])(t)
    &= \sum_{s=1}^T \mathbf{1}_{s \leq t} \cdot C_\theta W_h^{t-s} W_u u_s
       + D_\theta u_t \notag \\
    &= \sum_{s=1}^t C_\theta W_h^{t-s} W_u u_s + D_\theta u_t \notag \\
    &= C_\theta h_t + D_\theta u_t
    \;=\; \RNN(u)(t).
    \label{eq:discrete_rnn_itnet}
\end{align}

\noindent\textbf{Step 3.}\enspace \textbf{Handle the nonlinear discrete case.}

For a nonlinear discrete RNN with activation $\phi$ (e.g.\ $\tanh$), the hidden state at time $t$ depends on the entire input history $u_1, \ldots, u_t$ through the recurrence $h_s = \phi(W_h h_{s-1} + W_u u_s + b)$. We can still write the output as a causal integral operator by defining the discrete nonlinear state transition:
\begin{equation}
    \Phi_\phi^{\mathrm{disc}}(t, s; u)
    \;\coloneqq\;
    \prod_{\tau=s+1}^{t} \mathrm{diag}\!\left(\phi'\!\left(W_h h_{\tau-1} + W_u u_\tau + b\right)\right) \cdot W_h
    \;\in\; \R^{n \times n},
    \label{eq:discrete_nonlinear_transition}
\end{equation}
which is the product of Jacobians along the discrete trajectory (the discrete analogue of $\Phi_F(t,s;u)$ from the continuous case). The ITNet kernel becomes:
\begin{equation}
    \kap(t, s, u(t), u(s))
    = \mathbf{1}_{s \leq t} \cdot C_\theta\, \Phi_\phi^{\mathrm{disc}}(t, s; u) \cdot W_u.
    \label{eq:discrete_nonlinear_kernel}
\end{equation}
This kernel is content-dependent (through $\phi'$ evaluated along the trajectory) and causal. The same argument as Part~(a$'$), Strategy~A applies, with exact recovery when $\phi$ is applied element-wise (so the Jacobian is diagonal).
\end{proof}

\subsection{Proof of Part (c) -- Linear SSM (S4)}
\label{sec:proof_c_rnn}

\begin{proof}[Proof of Theorem~\ref{thm:rnn}(c)]
\leavevmode\par
\noindent\textbf{Step 1.}\enspace \textbf{Solve the linear SSM ODE.}

The SSM state equation \eqref{eq:ssm_state} is a linear ODE with $h(0)=0$.
By the variation of constants formula (same as Step~1 of Part~(a)):
\begin{equation}
    h(t)
    \;=\;
    \int_0^t e^{A(t-s)}\, B\, u(s)\, ds.
    \label{eq:ssm_solution}
\end{equation}

\noindent\textbf{Step 2.}\enspace \textbf{Substitute into the output equation.}

\begin{align}
    y(t)
    &= C\, h(t) + D\, u(t) \notag \\
    &= \int_0^t \underbrace{C e^{A(t-s)} B}_{g(t-s)}\, u(s)\, ds + D\, u(t),
    \label{eq:ssm_output_integral}
\end{align}
where $g(\tau) = C e^{A\tau} B \in \R^{d \times d}$ is the impulse response kernel of the SSM.

\noindent\textbf{Step 3.}\enspace \textbf{Express as ITNet operator.}

Set:
\begin{equation}
    \kap(t, s, u(t), u(s))
    \;=\;
    \mathbf{1}_{s \leq t} \cdot C\, e^{A(t-s)}\, B,
    \qquad W_\theta = D.
    \label{eq:kernel_ssm_explicit}
\end{equation}

Then:
\begin{align}
    (\Kop[u])(t)
    &= \int_0^T \mathbf{1}_{s \leq t} \cdot Ce^{A(t-s)}B \cdot u(s)\,ds
       + D\,u(t) \notag \\
    &= \int_0^t Ce^{A(t-s)}B\,u(s)\,ds + D\,u(t)
    \;=\; y(t).
\end{align}

\begin{equation}
    \boxed{(\Kop[u])(t) \;=\; y(t) \;=\; \SSM(u)(t).}
\end{equation}
\end{proof}

\begin{remark}[Relationship to convolution]
\label{rem:ssm_conv}
The SSM output \eqref{eq:ssm_output_integral} is a \emph{causal convolution} with kernel $g(\tau) = Ce^{A\tau}B$.
By Theorem~\ref{convolution} (convolution proof), causal convolution is a special case of ITNet with a translation-invariant causal kernel.
Theorem~\ref{thm:rnn}(c) is therefore consistent with Theorem~\ref{convolution}: both arrive at the same ITNet kernel form via different routes (temporal recurrence vs.\ spatial filtering).
\end{remark}

\subsection{Proof of Part (d) -- Selective SSM (Mamba)}
\label{sec:proof_d_rnn}

\begin{proof}[Proof of Theorem~\ref{thm:rnn}(d)]
\leavevmode\par
\noindent\textbf{Step 1.}\enspace \textbf{Unroll the Mamba discrete recurrence.}

Starting from the Mamba update \eqref{eq:mamba_state} with $h_0 = 0$:
\begin{align}
    h_t
    &= \bar{A}(t)\, h_{t-1} + \bar{B}(t)\, u_t \notag \\
    &= \bar{A}(t)\bigl[\bar{A}(t-1) h_{t-2}
                       + \bar{B}(t-1) u_{t-1}\bigr]
       + \bar{B}(t)\, u_t \notag \\
    &\;\vdots \notag \\
    &= \sum_{s=1}^{t}
        \left[\prod_{\tau=s+1}^{t} \bar{A}(\tau)\right]
        \bar{B}(s)\, u_s,
    \label{eq:mamba_unrolled}
\end{align}
where the empty product $\prod_{\tau=t+1}^{t} \bar{A}(\tau) = I_n$
(convention for $s = t$).

\noindent\textbf{Step 2.}\enspace \textbf{Compute the Mamba output.}

Substituting \eqref{eq:mamba_unrolled} into the output
\eqref{eq:mamba_output}:
\begin{align}
    y_t
    &= C(u_t)\, h_t \notag \\
    &= C(u_t) \sum_{s=1}^t
        \left[\prod_{\tau=s+1}^{t} \bar{A}(\tau)\right]
        \bar{B}(s)\, u_s \notag \\
    &= \sum_{s=1}^t
        \underbrace{C(u_t)
        \left[\prod_{\tau=s+1}^{t} \bar{A}(\tau)\right]
        \bar{B}(s)}_{=:\,\kappa_{\Mamba}(t,s,u(t),u(s))}
        u_s.
    \label{eq:mamba_output_sum}
\end{align}

\noindent\textbf{Step 3.}\enspace \textbf{Identify the ITNet kernel.}

The Mamba kernel is:
\begin{equation}
    \kappa_{\Mamba}(t, s, u(t), u(s))
    \;\coloneqq\;
    \mathbf{1}_{s \leq t} \cdot
    C(u_t)
    \prod_{\tau=s+1}^{t} \bar{A}(u_\tau)
    \cdot \bar{B}(u_s),
    \label{eq:mamba_kernel_explicit}
\end{equation}
where we write $\bar{A}(u_\tau) = \exp(\Delta(u_\tau) \cdot A)$ and
$\bar{B}(u_s) = \Delta(u_s) \cdot W_B u_s$.

With this kernel and $W_\theta = 0$, the discrete ITNet operator
(sum version, $\mu(\{t_k\}) = 1$):
\begin{align}
    (\Kop[u])(t)
    &= \sum_{s=1}^T \kappa_{\Mamba}(t,s,u_t,u_s)\, u_s \notag \\
    &= \sum_{s=1}^t
        C(u_t)
        \prod_{\tau=s+1}^{t} \bar{A}(u_\tau)
        \bar{B}(u_s)\, u_s
    \;=\; y_t.
    \label{eq:mamba_itnet}
\end{align}

\noindent\textbf{Step 4.}\enspace \textbf{Verify kernel validity.}

The kernel $\kappa_{\Mamba}$ satisfies all ITNet conditions:
\begin{enumerate}[label=(\roman*), noitemsep]
    \item \emph{Causality}: $\mathbf{1}_{s \leq t}$ ensures $\kap = 0$
          for $s > t$.
    \item \emph{Measurability}: $\Delta(u_\tau)$ is the softplus of a
          linear function of $u_\tau$ (continuous), $\bar{A}(\tau)$ is
          the matrix exponential of a continuous function (continuous).
    \item \emph{Boundedness}: $\|\bar{A}(\tau)\|_\mathrm{op}
          = \|e^{\Delta(\tau)A}\|_\mathrm{op}
          \leq e^{\Delta_{\max}\|A\|_\mathrm{op}}$ where
          $\Delta_{\max} = \max_\tau \Delta(u_\tau) < \infty$
          on the compact training set.
          Therefore $\|\kappa_{\Mamba}\|_\mathrm{op} \leq
          \|W_C\| \cdot e^{(t-s)\Delta_{\max}\|A\|} \cdot \|W_B\|
          \cdot \|u\|_\infty < \infty$.
\end{enumerate}

\begin{equation}
    \boxed{(\Kop[u])(t) \;=\; y_t \;=\; \Mamba(u)(t).}
\end{equation}
\end{proof}

\begin{remark}[Why Mamba is special]
\label{rem:mamba_special}
The State Space Duality~\citep{dao2024transformers} showed that Mamba is equivalent to a specific form of linear attention. Our Theorem~\ref{thm:rnn}(d) shows it is simultaneously a special case of ITNet with a causal content-dependent kernel. These two views are consistent: the ITNet framework is a strict superset of both Mamba and attention, providing the unified perspective from which both appear as special cases.
\end{remark}

\subsection{LSTM as ITNet}
\label{sec:lstm_proof}

\begin{proposition}[LSTM $\subset$ ITNet]
\label{prop:lstm}
The LSTM update is a special case of the ITNet operator.
\end{proposition}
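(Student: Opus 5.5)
The plan is to mirror the Mamba construction of Theorem~\ref{thm:rnn}(d), working directly with the discrete recurrence on $\Omega=\{t_1,\dots,t_T\}$ with $\mu(\{t_k\})=1$ and $c_0=h_0=0$. We treat the gates $f_\tau,i_s,o_t$ and the candidate $\tilde c_s$ as deterministic functionals of the input history, just as $\bar A(u_\tau)$ and $\Phi_F(t,s;u)$ were treated earlier. The argument then has three steps.

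\emph{Unroll the cell state.} The cell update $c_t=f_t\odot c_{t-1}+i_t\odot\tilde c_t$ is \emph{linear} in $c$, with diagonal transition $\diag(f_t)$. Unrolling it as in \eqref{eq:mamba_unrolled} gives
\[
c_t=\sum_{s=1}^{t}\Bigl[\prod_{\tau=s+1}^{t}\diag(f_\tau)\Bigr]\diag(i_s)\,\tilde c_s ,
\]
with the empty product equal to the identity. Because the transitions are diagonal, they commute, so the ordering of the product is harmless.

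\emph{Factor the candidate through $u_s$.} To obtain an ITNet kernel we must write $\tilde c_s=M_s u_s$ for some content-dependent matrix $M_s$. Write $a_s=W_c[h_{s-1};u_s]+b_c$, so that $\tilde c_s=\tanh(a_s)$. Following the Taylor/Jacobian device of Step~5 in \S\ref{sec:proof_nonlinear}, the natural choice is a secant factorization. Let $W_c^{u}$ be the input block of $W_c$ and set
\[
M_s=\diag\bigl(\tanh(a_s)/a_s^{u}\bigr)\,W_c^{u},
\qquad a_s^{u}:=W_c^{u}u_s .
\]
When $W_c^{u}u_s$ has no zero entries and the recurrent and bias contributions are absorbed, this factorization is exact. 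Otherwise we add an augmented constant channel, as in Method~2 of Part~(a), to carry the affine part.

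\emph{Read out the output.} The output is $h_t=o_t\odot\tanh(c_t)$, which is \emph{not} linear in $c_t$. The statement's kernel uses $\tanh'(c_t)$, i.e.\ a linearization. I would instead use the exact secant factor $\diag(\tanh(c_t)/c_t)$, with the value $1$ where $c_t=0$, so that $\tanh(c_t)=\diag(\tanh(c_t)/c_t)\,c_t$ holds exactly. Setting
\[
\kappa(t,s)=\mathbf 1_{s\le t}\,W_y\diag(o_t)\,\diag\!\bigl(\tanh(c_t)/c_t\bigr)\prod_{\tau=s+1}^{t}\diag(f_\tau)\,\diag(i_s)\,M_s,
\qquad W_\theta=0,
\]
substitution into \eqref{eq:itnet_rnn} gives $\sum_{s\le t}\kappa(t,s)u_s=W_y h_t$, the LSTM output. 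The kernel is causal, and it is bounded because the gates lie in $(0,1)$, $|\tanh|\le 1$, and $\tanh(z)/z\in(0,1]$.

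The main obstacle is that this kernel depends on the whole history $u_{1:t}$ (through $h_{\tau-1}$ inside the gates), not merely on $(t,s,u(t),u(s))$. I would handle this exactly as in Part~(a$'$) and Mamba: state the result for the trajectory-dependent causal kernel and note the extension of notation. Where strict pointwise dependence is required, I would fall back on Theorem~\ref{thm:universal} for $\varepsilon$-approximation, since the LSTM map is continuous on compact input sets.
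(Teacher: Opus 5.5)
Your argument is correct in the same extended, trajectory-dependent sense the paper uses for Mamba. It departs from the paper's proof at exactly the two nonlinear steps.

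\textbf{Where the routes differ.} The paper unrolls $c_t$ as you do. It then takes $\kappa_{\mathrm{LSTM}}(t,s)=\mathbf{1}_{s\le t}\,W_y\diag(o_t)\tanh'(\cdot)\diag\bigl(\prod_{\tau=s+1}^{t}f_\tau\bigr)\diag(i_s)W_c$ with $\tilde c_s=\tanh(W_c u_s)$, and asserts recovery ``by the same steps as Mamba''. Summing that kernel against $u_s$ does not give $W_y h_t$: it silently replaces $\tilde c_s$ by $W_c u_s$ and $\tanh(c_t)$ by $\tanh'(\cdot)\odot c_t$. Neither replacement is an identity. The candidate's $\tanh$ is lost, and its $h_{s-1}$ and $b_c$ dependence is dropped. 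With the main text's choice $\diag(\tanh'(c_t))$, the paper's kernel therefore matches the LSTM only to first order.

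Your secant factors are what make the unrolling close exactly. The factor $\tanh(c_t)/c_t$ is precisely the mean-value point $\tanh'(\xi_t)$ that the paper's unspecified $\tanh'(\cdot)$ would have to be. $M_s$ restores the candidate nonlinearity with its full $[h_{s-1};u_s]$ dependence. So your route buys a genuine identity, whereas the paper's buys only a linearized, sensitivity-style kernel in the spirit of its Alekseev construction.

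You are also right to fall back on Theorem~\ref{thm:universal}, and that fallback is forced rather than optional. With a kernel depending only on $(t,s,u_t,u_s)$, the output at $t=3$ has the form $G_1(u_3,u_1)+G_2(u_3,u_2)+G_3(u_3)$. Its mixed derivative in $(u_1,u_2)$ vanishes, while the LSTM's generically does not, since already $f_2\odot c_1$ couples $u_1$ and $u_2$.

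\textbf{The boundedness claim is wrong.} The entries $\tanh(a_s)/a_s^u$ of $M_s$ are not of the form $\tanh(z)/z$ once $W_c^h h_{s-1}+b_c\neq 0$. They blow up as entries of $W_c^u u_s$ approach zero. No $M_s$ acting on $u_s$ alone can avoid this: for bounded $M_s$, $M_s u_s\to 0$ as $u_s\to 0$, while $\tilde c_s\to\tanh(W_c^h h_{s-1}+b_c)\neq 0$.

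The clean repair is to use your constant channel always, not only at exact zeros. Take $\tilde u_s=[u_s;1]$ and $r_s=W_c^h h_{s-1}+b_c$, and set $M_s=\diag(\tanh(a_s)/a_s)\,[\,W_c^u\mid r_s\,]$. Then $M_s\tilde u_s=\tanh(a_s)$ exactly. The secant of the full pre-activation lies in $(0,1]$, and $r_s$ is bounded because $h_{s-1}\in(-1,1)^n$.

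Be aware that once both trajectory-dependent kernels and a constant channel are admitted, the statement becomes nearly vacuous. For instance, $\kappa(t,s)=\mathbf{1}_{s=t}\,(W_y h_t)\,e_{d+1}^\top$, padded to a square matrix, already reproduces the output. The real content of your construction is that it keeps the gate-product structure of the kernel, not mere existence.
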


\begin{proof}
The LSTM cell state evolves as:
\begin{equation}
    c_t
    \;=\;
    \sum_{s=1}^t
    \left[\prod_{\tau=s+1}^t f_\tau\right] \cdot i_s \had \tilde{c}_s,
    \label{eq:lstm_cell_unrolled}
\end{equation}
where $f_\tau = \sigmoid(W_f[h_{\tau-1}; u_\tau] + b_f)$ is the forget gate at time $\tau$ and the product $\prod_{\tau=s+1}^t f_\tau$ is the \emph{cumulative forget factor} from time $s$ to $t$.

The hidden state is $h_t = o_t \had \tanh(c_t)$, so:
\begin{align}
    y_t &= W_y h_t = W_y\, o_t \had \tanh\!\left(
        \sum_{s=1}^t \prod_{\tau=s+1}^t f_\tau \had i_s\had\tilde{c}_s
    \right).
    \label{eq:lstm_output}
\end{align}

Define the LSTM kernel:
\begin{equation}
    \kappa_{\LSTM}(t, s, u(t), u(s))
    \;\coloneqq\;
    \mathbf{1}_{s \leq t} \cdot
    W_y \diag(o_t) \tanh'\!(\cdot) \diag\!\!\left(
        \prod_{\tau=s+1}^t f_\tau
    \right) \diag(i_s) W_c,
    \label{eq:lstm_kernel}
\end{equation}
where $\tanh'$ denotes the derivative of $\tanh$ (applied element-wise)
and $W_c$ maps $u_s$ to the cell candidate $\tilde{c}_s = \tanh(W_c u_s)$.

The kernel \eqref{eq:lstm_kernel} depends on $u(t)$ through $o_t$ and
$f_\tau$ ($t > \tau > s$), and on $u(s)$ through $i_s$ and $\tilde{c}_s$--a content-dependent causal kernel.
By the same steps as Part~(d) (Mamba), the ITNet operator with this kernel recovers $y_t = \LSTM(u)(t)$. 
\end{proof}

\subsection{GRU as ITNet}
\label{sec:gru_proof}

\begin{proposition}[GRU $\subset$ ITNet]
\label{prop:gru}
The GRU update is a special case of the ITNet operator.
\end{proposition}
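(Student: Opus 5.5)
We will follow the unrolling template used for Mamba (Theorem~\ref{thm:rnn}(d)) and LSTM (Proposition~\ref{prop:lstm}). Write the GRU as
\begin{align}
z_t &= \sigmoid(W_z[h_{t-1};u_t]+b_z), \qquad r_t = \sigmoid(W_r[h_{t-1};u_t]+b_r),\notag\\
\tilde h_t &= \tanh\bigl(W_{\tilde h}[r_t\had h_{t-1};u_t]+b_{\tilde h}\bigr), \qquad h_t=(1-z_t)\had h_{t-1}+z_t\had\tilde h_t,\notag
\end{align}
with output $y_t=W_y h_t$ and $h_0=0$. The first step is to treat the gate sequences $\{z_\tau\}$ and the candidates $\{\tilde h_\tau\}$ as given by the forward pass. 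The state update is then a linear, diagonal, time-varying recurrence. Unrolling it exactly as in \eqref{eq:mamba_unrolled} gives
\[
h_t=\sum_{s=1}^t\Bigl[\prod_{\tau=s+1}^t(1-z_\tau)\Bigr]\had z_s\had\tilde h_s ,
\]
which is the cumulative retention-factor formula mentioned in Theorem~\ref{thm:rnn1}(d). Here $(1-z_\tau)$ plays the role of the LSTM forget gate, and $z_s$ plays the role of the input gate.

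The second step is to convert this sum into an ITNet integral over the atomic measure $\mu(\{t_k\})=1$, so that the kernel is multiplied by $u_s$. For this we need $z_s\had\tilde h_s = M_s u_s$ for some matrix $M_s$. We will define $M_s$ as the content-dependent matrix
\[
M_s=\frac{(z_s\had\tilde h_s)\,u_s^\top}{\|u_s\|_2^2}\quad(u_s\neq 0),
\]
and set $M_s=0$ when $u_s=0$. Alternatively, we can append a constant coordinate to $u$ and let $M_s$ act on that coordinate, which avoids the division. The kernel is then
\[
\kappa_{\GRU}(t,s,u_t,u_s)=\mathbf{1}_{s\le t}\,W_y\,\diag\Bigl(\prod_{\tau=s+1}^t(1-z_\tau)\Bigr)M_s ,
\qquad W_\theta=0 .
\]
Substituting this kernel gives $(\Kop[u])(t)=W_y h_t=\GRU(u)(t)$ term by term.

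The third step checks that the kernel is a valid ITNet kernel, following the checks for Mamba. Causality holds because of the indicator $\mathbf{1}_{s\le t}$. Boundedness holds because the sigmoid gates lie in $(0,1)$, so each diagonal factor has norm at most $1$, and $\tanh$ is bounded. Measurability holds because the kernel is a composition of continuous maps. The augmented-coordinate version avoids the singularity at $u_s=0$.

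The main obstacle is that the gates $z_\tau$, $r_\tau$ and the candidates $\tilde h_s$ depend on $h_{\tau-1}$, and therefore on the whole input history. They are not functions of $(t,s,u_t,u_s)$ alone. We will handle this the same way the paper handles LSTM, Mamba and the nonlinear discrete RNN \eqref{eq:discrete_nonlinear_kernel}. We allow the kernel to depend causally on the trajectory on $[s,t]$, which is the extended kernel notation introduced in Step~6 of Part~(a$'$), and we state the exactness claim in that sense. For the strict four-argument form, we will fall back on Theorem~\ref{thm:universal}. The GRU map is a composition of Lipschitz maps over finitely many steps, so it is continuous on compact input sets, and it can therefore be approximated to within any $\varepsilon>0$.
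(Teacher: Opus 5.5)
Your proposal follows the paper's route. You unroll $h_t=(1-z_t)\had h_{t-1}+z_t\had\tilde h_t$ into the retention-factor sum $\sum_{s\le t}\bigl[\prod_{\tau=s+1}^t(1-z_\tau)\bigr]\had z_s\had\tilde h_s$, then read off a causal, gate-dependent kernel with $\mu(\{t_k\})=1$, by analogy with LSTM and Mamba. You differ in how the candidate is attached to $u_s$, and your version is the more careful one. The paper uses $\kappa_{\GRU}=\mathbf{1}_{s\le t}\,W_y\,\diag\bigl(\prod_{\tau=s+1}^t(1-z_\tau)\bigr)\diag(z_s)\,W_{\tilde h}$, which tacitly replaces $\tilde h_s$ by the linear map $W_{\tilde h}u_s$. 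But $\tilde h_s$ is $\tanh$ of an affine function of $[r_s\had h_{s-1};u_s]$, so that kernel does not reproduce $z_s\had\tilde h_s$. Your matrix $M_s$ with $M_su_s=z_s\had\tilde h_s$ repairs this. You also state explicitly that $z_\tau$, $r_\tau$ and $\tilde h_s$ are not functions of $(t,s,u_t,u_s)$. The paper concedes this in Step~7, Case~2 of Part~(a$'$), but ignores it in this proposition.

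Three things need tightening.
\begin{enumerate}
\item[(i)] Drop the branch ``$M_s=0$ when $u_s=0$.'' It is not a removable singularity; it gives the wrong answer, because $z_s\had\tilde h_s\neq 0$ in general when $u_s=0$ (the biases and $h_{s-1}$ still contribute). Your Step~3 boundedness claim also fails for the unaugmented $M_s$, whose norm is $\|z_s\had\tilde h_s\|_2/\|u_s\|_2$. Commit to the lifted signal $[u;1]$ and say explicitly that the operator acts on the lifted input, in the spirit of Method~2 of Part~(a).
\item[(ii)] The kernel depends on the whole prefix $u_1,\dots,u_t$, not only on the trajectory on $[s,t]$. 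The factor $z_{s+1}$ involves $h_s$, and $\tilde h_s$ involves $h_{s-1}$; both are functions of all earlier inputs.
\item[(iii)] Be clear about what the exact statement buys. Once the kernel may depend causally on the trajectory and may carry a factor like $v\,u_s^\top/\|u_s\|_2^2$ (or act on a constant coordinate), every causal sequence operator $u\mapsto y$ becomes a ``special case'': just take $\kappa(t,s)=\mathbf{1}_{s=t}\,y_t\,u_t^\top/\|u_t\|_2^2$. The unrolling then does no real work and the exactness is tautological.
\end{enumerate}
For a genuine four-argument kernel, the substantive claim your argument supports is the $\varepsilon$-approximation via Theorem~\ref{thm:universal}, and you correctly note that the finite-step GRU map is continuous on compact input sets. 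The paper's own construction does not get beyond this either.
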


\begin{proof}
The GRU update:
\begin{align}
    z_t &= \sigmoid(W_z[h_{t-1}; u_t] + b_z) \tag{update gate}\\
    r_t &= \sigmoid(W_r[h_{t-1}; u_t] + b_r) \tag{reset gate}\\
    \tilde{h}_t &= \tanh(W_h[r_t\had h_{t-1}; u_t] + b) \tag{candidate}\\
    h_t &= (1-z_t)\had h_{t-1} + z_t\had\tilde{h}_t. \tag{hidden state}
\end{align}

Unrolling the hidden state:
\begin{equation}
    h_t
    \;=\;
    \sum_{s=1}^t
    \underbrace{\left[\prod_{\tau=s+1}^t (1-z_\tau)\right]}_{\text{retention factor}}
    \cdot z_s \had \tilde{h}_s^*,
    \label{eq:gru_unrolled}
\end{equation}
where $\tilde h_s^*$ denotes the candidate state after application of the reset gate at time $s$
computed with the reset gate applied to the history.

The structure is identical to the LSTM case: a causal sum with
input-dependent retention factors $(1-z_\tau) \in (0,1)$.
The ITNet kernel is:
\begin{equation}
    \kappa_{\GRU}(t, s, u(t), u(s))
    \;=\;
    \mathbf{1}_{s \leq t} \cdot
    W_y \diag\!\!\left(\prod_{\tau=s+1}^t (1-z_\tau)\right)
    \diag(z_s) W_{\tilde{h}},
\end{equation}
and by the same argument as Part~(d), the ITNet operator with this kernel recovers the GRU output.
\end{proof}

\subsection{Strictness Argument 1: Non-Causal Operators}
\label{sec:strictness1_rnn}
\begin{proof}[Proof via non-causal operator]
\leavevmode\par
\noindent\textbf{Step 1.}\enspace \textbf{Define the witness operator.}

Define the \emph{bidirectional smoothing operator}:
\begin{equation}
    S(u)(t)
    \;\coloneqq\;
    \int_0^T e^{-|t-s|/\ell}\, u(s)\, ds,
    \qquad \ell > 0.
    \label{eq:bidir_witness}
\end{equation}

This uses an exponential kernel over the \emph{entire} interval $[0,T]$,
not just $[0,t]$. It uses both past ($s < t$) and future ($s > t$) information.

\noindent\textbf{Step 2.}\enspace \textbf{Show ITNet represents $S$.}

Set:
\begin{equation}
    \kap(t, s, u(t), u(s))
    \;=\;
    e^{-|t-s|/\ell} \cdot \Id,
    \qquad W_\theta = 0.
    \label{eq:bidir_kernel}
\end{equation}

Then $(\Kop[u])(t) = \int_0^T e^{-|t-s|/\ell} u(s)\,ds = S(u)(t)$.
This kernel is symmetric in $(t,s)$, bounded, and continuous--valid for ITNet.

\noindent\textbf{Step 3.}\enspace \textbf{Show no causal recurrent system can represent $S$.}

Any causal recurrent system satisfies:
\begin{equation}
    \RNN(u)(t)
    \;=\;
    \mathcal{F}_t[u(0), u(1), \ldots, u(t)],
    \label{eq:causal_property}
\end{equation}
i.e.\ the output at time $t$ is a function of inputs up to time $t$ only.

For $S$, consider two inputs:
\begin{align}
    u_1(s) &= \mathbf{0} \text{ for all } s, \notag \\
    u_2(s) &= \delta(s - t_*) e_1 \quad
              \text{for some } t_* > t \text{ and } \|e_1\|_2 = 1.
\end{align}

Then:
\begin{align}
    S(u_1)(t) &= 0, \\
    S(u_2)(t) &= e^{-|t-t_*|/\ell} e_1 \;\neq\; 0
              \quad \text{(since } t_* > t\text{)}.
\end{align}

But for any causal system:
\begin{align}
    \RNN(u_1)(t) &= \mathcal{F}_t[0,\ldots,0] = 0, \\
    \RNN(u_2)(t) &= \mathcal{F}_t[0,\ldots,0] = 0,
\end{align}
since $u_1(s) = u_2(s) = 0$ for all $s \leq t$ (the impulse at $t_* > t$ is in the future and unseen by the causal system).

Therefore $S(u_2)(t) \neq 0 = \RNN(u_2)(t)$. Contradiction. So $S \notin \mathrm{RNN}$ and $S \in \mathrm{ITNet}$, proving $\mathrm{RNN} \subsetneq \mathrm{ITNet}$.
\end{proof}

\subsection{Strictness Argument 2: Parallelism and State Dimension}
\label{sec:strictness2_rnn}
\begin{proof}[Proof via bounded state dimension]
\leavevmode\par
\noindent\textbf{Step 1.}\enspace \textbf{Define a high-rank operator.}

Define the operator:
\begin{equation}
    H_n(u)(t)
    \;\coloneqq\;
    \int_0^T K_n(t,s)\, u(s)\, ds,
    \label{eq:highrank_op}
\end{equation}
where $K_n(t,s) = \displaystyle\sum_{k=1}^{n+1} \phi_k(t)\phi_k(s)^\top$ for orthonormal basis functions $\{\phi_k\}_{k=1}^{n+1}$, so $K_n$ has rank $n+1$ in $L^2$.

\noindent\textbf{Step 2.}\enspace \textbf{Show ITNet represents $H_n$ for any $n$.}

Set $\kap(t,s,u(t),u(s)) = K_n(t,s) \cdot \Id$. Since $K_n$ is a symmetric, bounded, continuous kernel on $[0,T]^2$, it satisfies all ITNet conditions. The ITNet operator with this kernel equals $H_n$ exactly.

\noindent\textbf{Step 3.}\enspace \textbf{Show no fixed-$n$ RNN can represent $H_n$ exactly.}

An RNN with hidden state dimension $n$ computes outputs in the range of a linear map $C_\theta h(t)$ where $h(t) \in \R^n$. By the variation of constants formula \eqref{eq:variation_constants}, the output function $\RNN(u)(t)$ lies in the span of at most $n$ basis functions (the $n$ columns of $C_\theta e^{At}$).

But $H_n(u)(t)$ requires $n+1$ basis functions by construction. Therefore no RNN with state dimension $\leq n$ can represent $H_n$ exactly, for any $n \geq 1$.

Since this holds for all $n$, no finite-dimensional RNN can represent the full class of operators that ITNet can. $\mathrm{RNN}_n \subsetneq \mathrm{ITNet}$ for each $n$, and $\bigcup_{n=1}^\infty \mathrm{RNN}_n \subsetneq \mathrm{ITNet}$.
\end{proof}

\subsection{Discretization: Recovering Euler and ZOH}
\label{sec:discretisation_rnn}

In practice, continuous SSMs are discretised before being implemented on digital hardware. We show both standard discretization methods are consistent with the ITNet framework.

\paragraph{Euler Discretization.}
The forward Euler discretization of \eqref{eq:ssm_state} with step size
$\Delta t$:
\begin{equation}
    h_{t+1}
    \;=\;
    (I + \Delta t \cdot A)\, h_t + \Delta t \cdot B\, u_t
    \;\eqqcolon\;
    \bar{A}_E\, h_t + \bar{B}_E\, u_t.
    \label{eq:euler}
\end{equation}

This is a discrete linear SSM with $\bar{A}_E = I + \Delta t A$ and
$\bar{B}_E = \Delta t B$.
By Part~(b) (discrete RNN), this is an ITNet with kernel
$\kap(t,s) = \mathbf{1}_{s \leq t} \cdot C \bar{A}_E^{t-s} \bar{B}_E$.

\paragraph{Zero-Order Hold (ZOH) Discretization.}
The ZOH discretization (used by S4 and Mamba):
\begin{align}
    \bar{A} &= e^{A \Delta t}, \label{eq:zoh_A}\\
    \bar{B} &= (e^{A\Delta t} - I) A^{-1} B. \label{eq:zoh_B}
\end{align}

This assumes the input is piecewise constant over each interval $[t, t+\Delta t)$. The discrete solution is:
\begin{equation}
    h_t = \sum_{s=0}^{t-1} \bar{A}^{t-1-s} \bar{B} u_s,
\end{equation}
which is an ITNet with kernel $C \bar{A}^{t-s} \bar{B}$--the same as the Euler case with $\bar{A}_E$ replaced by $\bar{A} = e^{A\Delta t}$.

\begin{remark}
The ZOH kernel $C \bar{A}^{t-s} \bar{B} = C e^{A(t-s)\Delta t} B$ converges to the continuous kernel $C e^{A(t-s)} B$ as $\Delta t \to 0$ (with $t, s$ fixed in continuous time). This ensures that the discrete and continuous ITNet operators are consistent in the limit $\Delta t \to 0$, i.e.\ the ITNet framework is closed under the standard discretization operations used in practice.
\end{remark}

\section{Proof of Theorem 4: Universal Approximation}
\label{app:proof_uat}

\begin{definition}[Continuous nonlinear operator]
\label{def:operator}
A \emph{continuous nonlinear operator} is a map $F : U_c \to C(\Omega, \R^d)$ where $U_c \subset C(\Omega, \R^d)$ is compact, and $F$ is continuous with respect to the supremum norm: for every $\varepsilon > 0$, there exists $\delta > 0$ such that $\norm{u_1 - u_2}_\infty < \delta$ implies $\norm{F(u_1) - F(u_2)}_\infty < \varepsilon$.
\end{definition}
 
\begin{definition}[ITNet operator]
\label{def:itnet}
\begin{equation}
    (\Kop[u])(x) = \int_\Omega \kap(x, y, u(x), u(y))\, u(y)\, d\mu(y) + W_\theta\, u(x),
    \label{eq:itnet_uni}
\end{equation}
where $\kap : \R^s \times \R^s \times \R^d \times \R^d \to \R^{d \times d}$ is the kernel (parameterised by an MLP), and $W_\theta \in \R^{d \times d}$.
\end{definition}
 
\begin{definition}[MLP function class]
\label{def:mlp}
A \emph{single hidden layer MLP} with width $w$, input dimension $p$, and non-polynomial activation $\sigma$ is:
\begin{equation}
    f_\theta(z) = \sum_{j=1}^{w} c_j \, \sigma(a_j^\top z + b_j), \qquad z \in \R^p,
    \label{eq:mlp}
\end{equation}
where $a_j \in \R^p$, $b_j \in \R$, $c_j \in \R$ are learnable parameters.
\end{definition}

\begin{assumption}[Standing assumptions]
\label{ass:uat}
\leavevmode
\begin{enumerate}[label=(\roman*), noitemsep]
    \item $\Omega \subset \R^s$ is compact with $\mu(\Omega) > 0$.
    \item $F : U_c \to C(\Omega, \R^d)$ is continuous and $U_c$ is compact.
    \item $\sigma$ is a non-polynomial continuous activation function.
\end{enumerate}
\end{assumption}
 
\subsection{Main Theorem}
\label{sec:theorem_uni}

\begin{boxtheorem}{Universal Approximation of Continuous Operators}{uat}
\label{thm:uat}
Under Assumption~\ref{ass:uat}, for every $\varepsilon > 0$, there exist:
\begin{itemize}[noitemsep]
    \item a kernel MLP width $w_\kappa < \infty$,
    \item a residual matrix $W_\theta \in \R^{d \times d}$,
    \item parameters $\theta$ of the kernel MLP,
\end{itemize}
such that the ITNet operator $\Kop$ satisfies:
\begin{equation}
    \sup_{u \in U_c} \norm{F(u) - \Kop[u]}_\infty < \varepsilon.
    \label{eq:uat_statement}
\end{equation}
That is, a \textbf{single ITNet layer} can uniformly approximate any continuous operator on any compact input set to any desired precision.
\end{boxtheorem}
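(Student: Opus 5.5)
The plan is to follow the three-step outline given after Theorem~\ref{thm:universal}. I will reduce to a finite-dimensional approximation via Lemma~\ref{lem:chen_chen} (Chen--Chen), realise that approximation as a single integral against a suitably chosen continuous kernel $\kappa^*$, and then replace $\kappa^*$ by an MLP using Lemma~\ref{lem:kernel_approx}. Concretely, by Chen--Chen, for $\varepsilon/3$ there are sensor points $y_1,\dots,y_M\in\Omega$ and continuous maps such that
\[
\sup_{u\in U_c}\Bigl\|F(u)(x)-\sum_{k=1}^{N} g_k\bigl(u(y_1),\dots,u(y_M)\bigr)\,t_k(x)\Bigr\|<\varepsilon/3 .
\]
Here each $g_k$ is a one-hidden-layer network of the sensor values and each $t_k$ is a one-hidden-layer network of $x$. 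I set $W_\theta=0$ in the approximating layer. Compactness of $U_c$ (Arzel\`a--Ascoli) gives a uniform modulus of continuity and a uniform bound $R$ on $\|u\|_\infty$. Hence the set $\mathcal{D}$ of Step~2 is compact, and all the auxiliary functions below are uniformly continuous there.

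The second step is to write $\sum_k g_k(\cdot)t_k(x)$ as $\int_\Omega \kappa^*(x,y,u(x),u(y))\,u(y)\,d\mu(y)$. The scalar-times-vector bookkeeping is routine. A feature-independent factor can be converted into a vector output using a matrix kernel of the form $a(\cdot)b^\top/(\|u(y)\|^2+\eta)$ on the region where $u$ is bounded away from zero, together with an $\eta$-regularisation elsewhere. After this conversion, the integral produces an \emph{additive} functional $\int\Psi(x,u(x),y,u(y))\,d\mu(y)$. The sensor evaluations $u(y_j)$ can be isolated by concentrating $\Psi$ in $y$ near $y_j$, using narrow bumps $\phi_\epsilon(y-y_j)$ as in the Dirac remark of Theorem~\ref{thm:conv}(c). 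Uniform equicontinuity then controls the error $\bigl|\int\phi_\epsilon(y-y_j)u(y)\,d\mu-u(y_j)\bigr|$ uniformly over $U_c$. Step~1 of the outline, quadrature with $M$ points, is subsumed here.

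The main obstacle is that an integral is additive in $y$. It therefore produces sums $\sum_j \psi_j(u(y_j))$ together with dependence on $u(x)$, whereas $g_k$ is a \emph{nonlinear} function of all the sensor values jointly, with an outer activation applied after the sum. My first attempt will exploit the only non-additive channel available, the query features $u(x)$ and the position $x$. I will try to show that, on the compact $U_c$, the functionals $u\mapsto\int\Psi(x,u(x),y,u(y))\,d\mu(y)$ already approximate the needed $g_k(u(y_1),\dots,u(y_M))$. The intended route is a Kolmogorov--Arnold-type representation whose outer function is absorbed into dependence on $(x,u(x))$ through a separating family on $U_c$. If this fails, as I suspect it may because the span of additive functionals is not an algebra, I will fall back on the interpretation the paper itself uses in Theorem~\ref{thm:rnn}(a$'$), Step~6. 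There the kernel is allowed to see the finitely many sensor values $u(y_1),\dots,u(y_M)$ as extra inputs. In that case $\kappa^*(x,y,\cdot)=\sum_k g_k(\cdot)\,t_k(x)\,\phi_\epsilon(y-y_0)\,P(u(y))$ works directly, with $P$ the normalising projection above.

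The final step is to approximate $\kappa^*$ by the MLP $\kappa_\theta$ to accuracy $\delta_1$ on the compact domain, by Lemma~\ref{lem:mlp_uat}. This induces an operator error of at most $\delta_1 R\,\mu(\Omega)$. The total error then splits into three terms, namely Chen--Chen, bump/concentration, and MLP, and each can be made smaller than $\varepsilon/3$.
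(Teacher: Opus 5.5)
\textbf{The statement is false.} The obstruction you isolate in your third paragraph is fatal, not technical, so neither your primary route nor any other can prove the theorem as written.

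With a kernel of arguments $(x,y,u(x),u(y))$, the value $(\Kop[u])(x)$ depends on $u$ only through $u(x)$ and an additive functional $\int_\Omega \Psi(x,u(x),y,u(y))\,d\mu(y)$. Here is a counterexample:
\begin{itemize}
\item Take $d=1$ (or work in the first coordinate) and $\Omega=[0,1]$ with Lebesgue measure.
\item Take continuous bumps $\phi_1,\phi_2\ge 0$ with disjoint supports $S_1,S_2$, and a point $x_0\notin S_1\cup S_2$.
\item Let $U_c=\{u_{s,t}=s\phi_1+t\phi_2 : s,t\in[-1,1]\}$.
\item Let $F(u)\equiv\langle u,\phi_1\rangle\langle u,\phi_2\rangle$, constant in $x$ and continuous. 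Then $F(u_{s,t})=c_1c_2\,st$ with $c_i=\|\phi_i\|_{L^2}^2>0$.
\end{itemize}
Since $u_{s,t}(x_0)=0$ and $u_{s,t}$ vanishes off $S_1\cup S_2$, every kernel (MLP or not) and every $W_\theta$ give $(\Kop[u_{s,t}])(x_0)=A(s)+B(t)$. Here $A$ is the integral over $S_1$ and $B$ the integral over $S_2$. Sum $\sigma\tau\,\bigl(F(u_{\sigma,\tau})-\Kop[u_{\sigma,\tau}]\bigr)(x_0)$ over $\sigma,\tau\in\{\pm1\}$. This kills $A$ and $B$ and leaves $4c_1c_2$, so $\sup_{u\in U_c}\norm{F(u)-\Kop[u]}_\infty\ge c_1c_2$ for all parameters.

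There is an even simpler failure: $\Kop[0]=0$ for every $\theta$, so $U_c=\{0\}$ with $F(0)\neq 0$ already fails. This also defeats your $\eta$-regularisation, because the integrand always carries the factor $u(y)$. In particular your Kolmogorov--Arnold attempt cannot succeed: in the example, $x_0$ and $u(x_0)=0$ carry no information about $(s,t)$.

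\textbf{Your fallback proves a different theorem.} Letting the kernel also read $u(y_1),\dots,u(y_M)$ gives a correct Chen--Chen/DeepONet-type result, provided you also assume $F(0)=0$ when $0\in U_c$, or add a bias. But it is a result for an operator class different from Definition~\ref{def:itnet}. Citing Theorem~\ref{thm:rnn}(a$'$), Step~6 does not legitimise it, since that step itself explicitly goes beyond the four-argument kernel.

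\textbf{The paper's proof has the same gap.} It makes exactly your fallback move, only silently. In Step~3 of the proof of Lemma~\ref{lem:chen_chen}, $\kappa^*$ is said to ``interpolate'' $K_j(x,\xi)$. But $K_j$ depends on the whole sensor vector $\xi$, which a kernel $\kappa^*(x,y,u(x),u(y))$ cannot access. By the example above, Lemma~\ref{lem:chen_chen} is false, and with it the theorem. The rest of the paper's argument (Lemma~\ref{lem:kernel_approx} and the error bound $\delta\,C_u\,\mu(\Omega)$) is fine and coincides with your last step.

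You located the real weak point correctly. The honest conclusion, though, is that the single-layer claim must be modified, for example by enlarging the kernel's inputs as in your fallback and adding a bias term, rather than proved as stated.
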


\subsection{Auxiliary Lemmas}
\label{sec:lemmas}

The proof requires three lemmas, which we state and prove before the main argument.
 
\begin{lemma}[MLP universal approximation]
\label{lem:mlp_uat}
Let $K \subset \R^p$ be compact and $g : K \to \R$ be continuous. For any $\delta > 0$, there exists a single hidden layer MLP $f_\theta$ (Eq.~\ref{eq:mlp}) with width $w$ depending on $\delta$, $g$, and $K$, such that:
\begin{equation}
    \sup_{z \in K} |g(z) - f_\theta(z)| < \delta.
    \label{eq:mlp_uat}
\end{equation}
\end{lemma}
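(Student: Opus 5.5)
This is the classical universal approximation theorem of Cybenko, Hornik and Leshno--Lin--Pinkus--Schocken, and I would prove it along the Leshno et al.\ route, since that route uses exactly the hypothesis of Assumption~\ref{ass:uat}(iii): a continuous, non-polynomial activation. Let $\mathcal{S}$ denote the linear span of $\{z \mapsto \sigma(a^\top z + b)\}$. The goal is to show that $\mathcal{S}$ is dense in $C(K)$ under the uniform norm. Once density holds, the lemma follows: pick $f_\theta \in \mathcal{S}$ with $\sup_K |g - f_\theta| < \delta$; its number of terms is the width $w$.

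\textbf{Reduction to one dimension.} First I reduce to the scalar problem on $\R$. By Stone--Weierstrass, the ridge functions $z \mapsto p(a^\top z)$, with $p$ a univariate polynomial, span a dense subspace of $C(K)$. The reason is that the monomials $(a^\top z)^k$ span all homogeneous polynomials of degree $k$ in $z$, by the standard polarization identity. It therefore suffices to show that the closure of $\mathrm{span}\{\sigma(\lambda t + b)\}$ in $C([\alpha,\beta])$ contains every polynomial in $t$, for every compact interval. Composing with $t = a^\top z$ then gives the multivariate statement.

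\textbf{Smooth case.} Suppose first that $\sigma \in C^\infty$. The difference quotients in $\lambda$ of $\sigma(\lambda t + b)$ lie in the span and converge uniformly on compacts to the derivative. Iterating $k$ times gives $t^k \sigma^{(k)}(b)$ in the closure. Because $\sigma$ is not a polynomial, for each $k$ there is some $b_k$ with $\sigma^{(k)}(b_k) \neq 0$. Hence every monomial $t^k$ lies in the closure, and so does every polynomial.

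\textbf{Main obstacle: merely continuous $\sigma$.} This is the step I expect to be hardest. I would mollify: set $\sigma_\phi = \sigma * \phi$ for $\phi \in C_c^\infty$. A Riemann-sum approximation of the convolution integral shows that $\sigma_\phi(\lambda t + b)$ lies in the uniform closure of the span on compacts. The smooth case then applies to $\sigma_\phi$, provided $\sigma_\phi$ is not a polynomial. To secure this, I argue by contradiction. If every $\sigma_\phi$ were a polynomial, a Baire-category argument on the Fréchet space $C_c^\infty([-1,1])$ shows their degrees are uniformly bounded by some $m$. Taking an approximate identity $\phi_\epsilon \to \delta$, we get $\sigma_{\phi_\epsilon} \to \sigma$ locally uniformly. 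Polynomials of degree at most $m$ form a closed set, so $\sigma$ would itself be a polynomial, a contradiction. If time is short, I would instead cite Leshno et al.\ (1993) for this step, or restrict to the GELU/sigmoid activations actually used in the paper, which are smooth and non-polynomial, so that the smooth case alone suffices.
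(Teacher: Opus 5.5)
Your argument is correct, but it takes a different route from the one the paper sketches. The paper essentially cites Cybenko, Hornik and Leshno et al.\ and outlines a duality proof. If the span of $\{\sigma(a^\top z+b)\}$ were not dense in $C(K)$, Hahn--Banach and the Riesz representation theorem would give a nonzero finite signed measure $\nu$ on $K$ annihilating every $\sigma(a^\top z+b)$. Non-polynomiality of $\sigma$ is then asserted to force $\nu=0$. That final assertion is the whole content of the theorem, and the paper leaves it unjustified. For sigmoidal $\sigma$ it follows from Cybenko's Fourier-uniqueness argument. For a general continuous non-polynomial $\sigma$, the standard proof, that of Leshno et al., establishes density directly by exactly the argument you give. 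Your proof has three steps: reduction to ridge polynomials via Weierstrass and polarization; divided differences in $\lambda$ producing $t^k\sigma^{(k)}(b)$ in the smooth case; and mollification plus Baire category for merely continuous $\sigma$. In the last step, the key point, which you compress, is that the sets of test functions $\varphi$ for which $\sigma*\varphi$ is a polynomial of degree at most $m$ are \emph{closed linear subspaces}, so one with nonempty interior must be all of $C_c^\infty([-1,1])$. Your route is self-contained, avoids measure theory and duality, and makes visible exactly where non-polynomiality is used. The duality route is shorter and elegant for specific activations, but it is purely existential, and as written in the paper it is incomplete. Your closing remark is also apt: for GELU and sigmoid, the activations the paper actually uses, your reduction step plus the smooth case already constitute a complete proof.
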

 
\begin{proof}[Proof reference]
This is the classical Universal Approximation Theorem. Proved by~\cite{cybenko1989approximation} for sigmoid, generalized by~\cite{hornik1989multilayer} to any non-constant, bounded, continuous $\sigma$, and extended by~\cite{leshno1993multilayer} to any non-polynomial $\sigma$. The proof uses the Stone--Weierstrass theorem (for sigmoidal $\sigma$) or the Hahn--Banach theorem (for general $\sigma$): if $f_\theta$ cannot approximate some $g$, then there exists a non-zero bounded measure $\nu$ on $K$ with $\int f_\theta \, d\nu = 0$ for all $f_\theta$; the non-polynomial property of $\sigma$ forces $\nu = 0$, a contradiction.
\end{proof}
 
\begin{lemma}[ \citet{chen1995universal} operator approximation]
\label{lem:chen_chen}
Let $\Omega \subset \R^s$ be compact with finite Borel measure $\mu$, and $F : U_c \to C(\Omega, \R^d)$ be a continuous operator on a compact set $U_c \subset C(\Omega, \R^d)$. For any $\varepsilon > 0$, there exist:
a continuous kernel $\kappa^* : \Omega \times \Omega \times \R^d \times \R^d \to \R^{d \times d}$ and a matrix $W^* \in \R^{d \times d}$,
such that:
\begin{equation}
    \sup_{u \in U_c} \norm{F(u)(x) - \int_\Omega \kappa^*(x, y, u(x), u(y))\, u(y)\, d\mu(y) - W^* u(x)}_\infty < \frac{\varepsilon}{2}.
    \label{eq:chen_chen}
\end{equation}
\end{lemma}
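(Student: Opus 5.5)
The plan is to start from the classical Chen--Chen operator approximation theorem~\cite{chen1995universal}, which holds on the compact set $U_c$, and then try to rewrite its branch--trunk expansion in the integral form of Eq.~\eqref{eq:chen_chen}.

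\textbf{Step 1 (finite-dimensional reduction).} Because $U_c$ is compact and $F$ is continuous, Chen--Chen gives sensor points $y_1,\dots,y_m\in\Omega$, continuous trunk functions $t_k:\Omega\to\R^d$, and continuous branch functionals $c_k:\R^{md}\to\R$ with
\[
\sup_{u\in U_c}\Bigl\|F(u)(x)-\sum_{k=1}^{p} c_k\bigl(u(y_1),\dots,u(y_m)\bigr)\,t_k(x)\Bigr\|_\infty<\varepsilon/8 .
\]
Equicontinuity of $U_c$ (Arzel\`a--Ascoli) lets me replace each point evaluation $u(y_j)$ by a local average $\int_\Omega \rho_j(y)\,u(y)\,d\mu(y)$, where $\rho_j$ is a continuous bump of small support around $y_j$. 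This costs a further $\varepsilon/8$, uniformly over $U_c$, because the $c_k$ are uniformly continuous on the compact range of the sensor map.

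\textbf{Step 2 (linear part).} Whenever the map $u\mapsto F(u)(x)$ is affine in the averaged sensor values, the result follows directly. In that case each term is of the form $t_k(x)\,a_{kj}^\top\!\int\rho_j u\,d\mu$. All such terms can be collected into the single continuous kernel $\kappa^*(x,y)=\sum_{k,j}\rho_j(y)\,t_k(x)a_{kj}^\top$, which does not depend on $(u(x),u(y))$. I take $W^*=0$; $W^*$ remains available to absorb any purely local linear part $Au(x)$ of $F$.

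\textbf{Step 3 (nonlinear branch, the main obstacle).} In general the $c_k$ are nonlinear functions of several sensor averages. A single integral $\int\kappa^*(x,y,u(x),u(y))u(y)\,d\mu(y)$, however, is \emph{additive} over $y$ for fixed $(x,u(x))$. The only channels through which one sensor value can multiply another are therefore the query value $u(x)$ and the nonlinear dependence on $u(y)$.

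My first attempt is to exploit $u(x)$. I would cover $\Omega$ by the supports of the $\rho_j$ and use a partition of unity in $x$. Near $y_j$, the value $u(x)\approx u(y_j)$ is known to the kernel, so the kernel can realize the factor $\partial c_k$ evaluated there. Combined with the additive integral over the remaining sensors, this realizes, to first order, the sum of one-variable-at-a-time pieces of $c_k$. I would then try to close the gap with a telescoping or Kolmogorov--Arnold-type representation: write each $c_k$ as a sum of continuous functions of one sensor argument composed with an outer function, and let the outer function be carried by the $u(x)$-dependence.

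I expect this step to fail for general $F$. Before writing it out, I would test the candidate counterexample $U_c=\{a\psi_1+b\psi_2:(a,b)\in[0,1]^2\}$, with $\psi_1,\psi_2$ having disjoint supports, and $F(u)(x)\equiv ab\,e_1$ a constant function of $x$. At a point $x$ outside both supports the kernel sees $u(x)=0$ and can only produce $G_1(a)+G_2(b)$. If that is confirmed, the lemma as stated requires an extra hypothesis, either a structural restriction on $F$ or the residual/stacked form of Eq.~\eqref{eq:itnet_stack}. Otherwise, the additive-plus-query-dependent construction above would give the $\varepsilon/2$ bound.
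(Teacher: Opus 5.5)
Your diagnosis is correct, and you can state it as a conclusion: the lemma is false as stated, and your test case is a valid counterexample. Let $S_i$ be the support of $\psi_i$ and take $x_0\notin S_1\cup S_2$. Then $u(x_0)=0$, so $W^*u(x_0)=0$. Since $u=a\psi_1$ on $S_1$, $u=b\psi_2$ on $S_2$ and $u=0$ elsewhere, the integral at $x_0$ equals $G_1(a)+G_2(b)$. Here $G_1(a)=\int_{S_1}\kappa^*(x_0,y,0,a\psi_1(y))\,a\psi_1(y)\,d\mu(y)$, and $G_2$ is analogous.

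Set $E(a,b)=ab\,e_1-G_1(a)-G_2(b)$. The mixed difference $E(1,1)-E(1,0)-E(0,1)+E(0,0)$ equals $e_1$, so $\max_{a,b\in\{0,1\}}\|E(a,b)\|_2\ge 1/4$. Hence the supremum in the lemma is at least $1/4$ for every kernel (continuity plays no role) and every $W^*$, and the lemma fails for $\varepsilon\le 1/2$. The construction needs only three points of $\Omega$. It also survives adding any bias $\beta(x)$, because the output at $x_0$ stays additively separable in $(a,b)$. For the same reason, drop the Kolmogorov--Arnold ``first attempt'' in your Step~3: at $x_0$ the query value carries no information, so no outer function can be routed through $u(x)$.

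This is exactly where the paper's own argument breaks. Its Step~2 produces $G^*(x,\xi)=\sum_j K_j(x,\xi)\,\xi_j+W^*\xi_0(x)$, where each $K_j$ depends on the \emph{whole} sensor vector $\xi=(u(y_1),\dots,u(y_M))$. Its Step~3 then ``interpolates'' $K_j(x,\xi)$ into a kernel $\kappa^*(x,y,u(x),u(y))$. No such interpolation exists: evaluated at $y$ near $y_j$, the kernel sees only $u(y)$ and $u(x)$, never $u(y_{j'})$ for $j'\neq j$. That is precisely your additivity observation.

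There is also a cruder obstruction, already visible in Step~2's form: the operator has no bias, so it maps $u\equiv 0$ to $0$. Thus $U_c=\{0\}$ with $F(0)\equiv e_1$ already violates the lemma, and ``affine'' in your Step~2 should read ``linear''. The same example refutes the appendix's claim that a \emph{single} ITNet layer is a universal operator approximator, since an MLP kernel has the same additive structure. A correct version needs either a structural restriction on $F$ or genuine use of depth, as you suggest, and the latter would require its own proof.
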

 
\begin{proof}[Proof sketch]
This follows from~\citet{chen1995universal} (Theorem 1), extended to the matrix-valued kernel setting.
 
\noindent\textbf{Step 1.}\enspace \textbf{Discretise the operator.}
 
Since $F$ is continuous on compact $U_c$ and outputs continuous functions on compact $\Omega$, $F$ is uniformly continuous. Choose $M$ quadrature points $\{y_1, \ldots, y_M\} \subset \Omega$ with weights $\{w_1, \ldots, w_M\}$ such that for any continuous integrand $g$:
\begin{equation}
    \left|\int_\Omega g(y)\, d\mu(y) - \sum_{j=1}^M w_j\, g(y_j)\right| < \frac{\varepsilon}{4C_u},
    \label{eq:quadrature}
\end{equation}
where $C_u = \sup_{u \in U_c} \norm{u}_\infty < \infty$ (by compactness of $U_c$). Such $M$ and $\{y_j, w_j\}$ exist by standard quadrature theory on compact domains~\cite{davis2007methods}.
 
\noindent\textbf{Step 2.}\enspace \textbf{Approximate $F$ by a finite-dimensional map.}
 
Define the \emph{sensor values} $\xi(u) = (u(y_1), \ldots, u(y_M)) \in \R^{Md}$, which sample $u$ at the quadrature points. By the compactness of $U_c$ and continuity of $F$, the map $\xi(u) \mapsto F(u)(x)$ is continuous from $\R^{Md}$ to $\R^d$ for each $x \in \Omega$. Moreover, this map is uniformly continuous jointly in $(x, \xi)$ on the compact set $\Omega \times \xi(U_c)$.
 
By the Stone--Weierstrass theorem~\cite{rudin1987real}, there exists a continuous function $G^* : \Omega \times \R^{Md} \to \R^d$ of the form:
\begin{equation}
    G^*(x, \xi) = \sum_{j=1}^M K_j(x, \xi)\, \xi_j + W^* \xi_0(x),
    \label{eq:finite_approx}
\end{equation}
where $K_j : \Omega \times \R^{Md} \to \R^{d \times d}$ are continuous matrix-valued functions and $\xi_0(x) = u(x)$ is the query-point value, such that:
\begin{equation}
    \sup_{u \in U_c} \norm{F(u)(x) - G^*(x, \xi(u))}_\infty < \frac{\varepsilon}{4}.
\end{equation}
 
\noindent\textbf{Step 3.}\enspace \textbf{Convert to integral form.}
 
Identify $G^*$ with the quadrature approximation of an integral: define $\kappa^*(x, y, u(x), u(y)) = K(x, y, u(x), u(y))$ where $K$ interpolates the discrete values $K_j(x, \xi)$ at the quadrature points $y_j$. The integral $\int_\Omega \kappa^* u(y)\, d\mu(y)$ approximates the sum $\sum_j w_j K_j u(y_j)$ by the quadrature bound \eqref{eq:quadrature}. Combining the two approximation errors ($\varepsilon/4$ each) gives \eqref{eq:chen_chen}.
\end{proof}
 
\begin{lemma}[Kernel approximation by MLP]
\label{lem:kernel_approx}
Let $\kappa^* : \Omega \times \Omega \times \R^d \times \R^d \to \R^{d \times d}$ be continuous, and let $U_c$ be compact. Define the compact set:
\begin{equation}
    \mathcal{D} = \{(x, y, u(x), u(y)) : x, y \in \Omega, \; u \in U_c\} \subset \R^{2s + 2d}.
    \label{eq:compact_domain}
\end{equation}
For any $\delta > 0$, there exists a single hidden layer MLP kernel $\kap$ with width $w_\kappa$ such that:
\begin{equation}
    \sup_{(x,y,a,b) \in \mathcal{D}} \norm{\kappa^*(x,y,a,b) - \kap(x,y,a,b)}_\mathrm{op} < \delta.
    \label{eq:kernel_approx}
\end{equation}
\end{lemma}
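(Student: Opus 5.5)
The statement is Lemma~\ref{lem:kernel_approx}. I would reduce it to the scalar universal approximation theorem, Lemma~\ref{lem:mlp_uat}, applied entrywise, after first checking that $\mathcal{D}$ is compact.

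For compactness, consider the evaluation map
\[
E:\Omega\times\Omega\times U_c\to\R^{2s+2d},\qquad (x,y,u)\mapsto(x,y,u(x),u(y)).
\]
Its domain is a product of compact sets and hence compact. $E$ is continuous because
\[
\|u(x)-u'(x')\|_2\le\|u-u'\|_\infty+\|u'(x)-u'(x')\|_2 ,
\]
and the last term is small since $u'$ is continuous at $x$. Therefore $\mathcal{D}=E(\Omega\times\Omega\times U_c)$ is compact in $\R^{2s+2d}$.

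Next I would fix the matrix norm. Since $\|A\|_{\mathrm{op}}\le\|A\|_F\le d\max_{i,j}|A_{ij}|$, it suffices to make every entry uniformly $\delta/d$-close on $\mathcal{D}$. Each entry map $z\mapsto[\kappa^*(z)]_{ij}$ is continuous on the compact set $\mathcal{D}$. Lemma~\ref{lem:mlp_uat} then gives, for each $(i,j)$, a one-hidden-layer MLP $f_{ij}$ of some width $w_{ij}$ with
\[
\sup_{z\in\mathcal{D}}\big|[\kappa^*(z)]_{ij}-f_{ij}(z)\big|<\delta/d .
\]

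I would then assemble a single MLP with $d^2$ outputs, matching the paper's parameterization. Concatenate the hidden units of all $f_{ij}$, giving width $w_\kappa=\sum_{i,j}w_{ij}$. The output layer is block-sparse: output $(i,j)$ reads only from the hidden units of $f_{ij}$. Reshaping the $d^2$ outputs into a $d\times d$ matrix gives $\kap$, and the entrywise bounds yield
\[
\sup_{z\in\mathcal{D}}\|\kappa^*(z)-\kap(z)\|_{\mathrm{op}}<\delta .
\]

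Two compatibility issues remain with the actual kernel architecture.

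\begin{itemize}
\item The real kernel input is $z_{xy}$ from Eq.~\eqref{eq:kernel_input}, not raw $(x,y,u(x),u(y))$. Its extra coordinates (Fourier features, $x-y$, $\|x-y\|_2$, and $u(x)\had u(y)$) are all continuous functions of $(x,y,u(x),u(y))$. The image of $\mathcal{D}$ under this lifting is therefore compact, and the target can be written as a continuous function of $z_{xy}$, for instance by ignoring the redundant coordinates. Lemma~\ref{lem:mlp_uat} applies on that compact image.
\item The activation must be non-polynomial, as Assumption~\ref{ass:uat}(iii) requires. GELU satisfies this.
\end{itemize}

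The main obstacle, such as it is, is the compactness of $\mathcal{D}$, since that is the one step not inherited directly from an earlier result. Everything after it is routine bookkeeping of norms and widths.
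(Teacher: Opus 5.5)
Your proof is correct and follows the paper's own argument: compactness of $\mathcal{D}$ as the image of the evaluation map, entrywise application of Lemma~\ref{lem:mlp_uat} at tolerance $\delta/d$, and the bound $\norm{A}_{\mathrm{op}}\le\norm{A}_F\le d\max_{ij}|A_{ij}|$; your concatenation of hidden units simply makes explicit the paper's closing remark about a single MLP with $d^2$ outputs. One caution on your optional aside: Eq.~\eqref{eq:kernel_input} \emph{replaces} $x,y$ by $\gamma(x),\gamma(y)$ rather than appending them, so ``ignoring redundant coordinates'' does not recover the positions, and you instead need $\gamma$ to be injective on $\Omega$ (true for almost every draw of $\mathbf{B}$, since $\gamma(x)=\gamma(x')$ iff $\mathbf{B}(x-x')$ has integer entries and there are more frequencies than spatial dimensions), after which compactness makes the inverse of the lifting continuous.
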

 
\begin{proof}
The set $\mathcal{D}$ is compact (continuous image of compact $\Omega \times \Omega \times U_c$ under the evaluation map). Each entry $[\kappa^*]_{ij}$ is a continuous real-valued function on the compact set $\mathcal{D} \subset \R^{2s+2d}$. By Lemma~\ref{lem:mlp_uat}, for each $(i,j)$ there exists an MLP $f^{ij}_\theta$ with:
\begin{equation}
    \sup_{z \in \mathcal{D}} |[\kappa^*]_{ij}(z) - f^{ij}_\theta(z)| < \frac{\delta}{d}.
\end{equation}
Assembling all $d^2$ entries into a matrix-valued MLP $\kap$:
\begin{equation}
    \norm{\kappa^* - \kap}_\mathrm{op} \leq \norm{\kappa^* - \kap}_F \leq d \cdot \frac{\delta}{d} = \delta,
\end{equation}
where we used $\norm{A}_\mathrm{op} \leq \norm{A}_F$ and $\norm{A}_F \leq d \max_{ij} |A_{ij}|$ for $d \times d$ matrices.
 
In practice, a single MLP with $d^2$ output units computes all entries simultaneously, sharing hidden layers. The total width is $w_\kappa$ (shared) with $d^2$ output heads.
\end{proof}
 
\subsection{Main Proofs}
\label{sec:proof_uat_main}

\begin{proof}[Proof of Theorem~\ref{thm:uat}]
 
Given $\varepsilon > 0$ and continuous operator $F : U_c \to C(\Omega, \R^d)$.
 
\noindent\textbf{Step 1.}\enspace \textbf{Approximate $F$ by an ideal integral operator.}
 
By Lemma~\ref{lem:chen_chen}, there exist a continuous kernel $\kappa^* : \Omega \times \Omega \times \R^d \times \R^d \to \R^{d \times d}$ and matrix $W^* \in \R^{d \times d}$ such that:
\begin{equation}
    \sup_{u \in U_c} \norm{F(u) - \mathcal{K}^*[u]}_\infty < \frac{\varepsilon}{2},
    \label{eq:step1}
\end{equation}
where $\mathcal{K}^*[u](x) = \int_\Omega \kappa^*(x,y,u(x),u(y)) u(y)\, d\mu(y) + W^* u(x)$.
 
\noindent\textbf{Step 2.}\enspace \textbf{Approximate the ideal kernel by an MLP.}
 
By Lemma~\ref{lem:kernel_approx}, there exists an MLP kernel $\kap$ with width $w_\kappa$ such that:
\begin{equation}
    \sup_{(x,y,a,b) \in \mathcal{D}} \norm{\kappa^*(x,y,a,b) - \kap(x,y,a,b)}_\mathrm{op} < \delta,
    \label{eq:step2}
\end{equation}
where $\delta > 0$ will be chosen in Step~3. Set $W_\theta = W^*$.
 
\noindent\textbf{Step 3.}\enspace \textbf{Bound the approximation error.}
 
The error between the ideal and MLP-parameterised ITNet operators is:
\begin{align}
    &\norm{\mathcal{K}^*[u](x) - \Kop[u](x)}_2 \notag \\
    &= \norm{\int_\Omega [\kappa^*(x,y,u(x),u(y)) - \kap(x,y,u(x),u(y))]\, u(y)\, d\mu(y)}_2 \notag \\
    &\leq \int_\Omega \norm{\kappa^* - \kap}_\mathrm{op} \cdot \norm{u(y)}_2\, d\mu(y) \tag{triangle inequality for Bochner integral} \\
    &\leq \delta \cdot C_u \cdot \mu(\Omega), \label{eq:step3_bound}
\end{align}
where $C_u = \sup_{u \in U_c} \norm{u}_\infty < \infty$ (by compactness of $U_c$) and $\mu(\Omega) < \infty$ (by Assumption~\ref{ass:uat}).
 
Choose $\delta = \varepsilon / (2 C_u \mu(\Omega))$. Then:
\begin{equation}
    \sup_{u \in U_c} \norm{\mathcal{K}^*[u] - \Kop[u]}_\infty \leq \delta \cdot C_u \cdot \mu(\Omega) = \frac{\varepsilon}{2}.
    \label{eq:step3_final}
\end{equation}
 
\noindent\textbf{Step 4.}\enspace \textbf{Combine by triangle inequality.}
 
\begin{align}
    \sup_{u \in U_c} \norm{F(u) - \Kop[u]}_\infty
    &\leq \sup_{u \in U_c} \norm{F(u) - \mathcal{K}^*[u]}_\infty + \sup_{u \in U_c} \norm{\mathcal{K}^*[u] - \Kop[u]}_\infty \notag \\
    &< \frac{\varepsilon}{2} + \frac{\varepsilon}{2} = \varepsilon. \label{eq:final}
\end{align}
 
\begin{equation}
    \boxed{\sup_{u \in U_c} \norm{F(u) - \Kop[u]}_\infty < \varepsilon.}
\end{equation}
\end{proof}

\subsection{Corollary: Strict Expressiveness Ordering}
\label{sec:corollary}

\begin{corollary}[Strict expressiveness ordering]
\label{cor:ordering}
Let $\mathrm{Conv}$, $\mathrm{Attn}$, $\mathrm{RNN}$, $\mathrm{ITNet}$ denote the sets of operators representable by each architecture class. Then:
\begin{equation}
    \mathrm{Conv} \subsetneq \mathrm{ITNet}, \qquad \mathrm{Attn} \subsetneq \mathrm{ITNet}, \qquad \mathrm{RNN} \subsetneq \mathrm{ITNet},
    \label{eq:ordering}
\end{equation}
and these three subclasses are pairwise incomparable:
\begin{equation}
    \mathrm{Conv} \not\subseteq \mathrm{Attn}, \quad \mathrm{Attn} \not\subseteq \mathrm{Conv}, \quad \mathrm{Conv} \not\subseteq \mathrm{RNN}, \quad \text{etc.}
    \label{eq:incomparable}
\end{equation}
\end{corollary}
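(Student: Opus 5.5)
The three strict inclusions follow by pairing the positive representation results with the witnesses already built, so I will only cite them. For $\mathrm{Conv} \subseteq \mathrm{ITNet}$ I invoke Theorem~\ref{thm:conv}(a),(b),(d). For $\mathrm{Attn} \subseteq \mathrm{ITNet}$ I invoke Theorem~\ref{thm:attn1}(a),(b) together with Propositions~\ref{prop:linear_attn} and~\ref{prop:causal_attn}. For $\mathrm{RNN} \subseteq \mathrm{ITNet}$ I invoke Theorem~\ref{thm:rnn}(a)--(d) and Propositions~\ref{prop:lstm}--\ref{prop:gru}. Strictness comes from the witnesses:
\begin{itemize}
\item the nonlinear, non-equivariant operator $T$ of Theorem~\ref{thm:conv}(c);
\item the unnormalised Gaussian smoother $G_\ell$ of \S\ref{sec:strict1}, or $T_{\mathrm{pos}}$ of \S\ref{sec:strict2};
\item the bidirectional smoother $S$ of \S\ref{sec:strictness1_rnn}.
\end{itemize}
All three classes are understood on a common domain. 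For RNN this is $\Omega=[0,T]$, while Conv and Attn may be restricted to the same interval with Lebesgue or atomic measure.

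For pairwise incomparability I will exhibit six separating operators, one for each ordered pair. Each uses an invariant shared by one family but violated by the other: linearity, translation equivariance, permutation equivariance, causality, and the normalisation bound of Lemma~\ref{lem:attn_bound}.
\begin{itemize}
\item \emph{Conv $\not\subseteq$ Attn.} Take $G_\ell$, which is itself a convolution with Gaussian filter. Lemma~\ref{lem:attn_bound} applied to constant inputs already rules it out of Attn.
\item \emph{Attn $\not\subseteq$ Conv.} Softmax attention is nonlinear in $u$: scaling $u \mapsto 2u$ changes the weights $\alpha$ unless $W_Q^\top W_K = 0$. Proposition~\ref{prop:conv_linear} then excludes it from Conv. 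Concretely, I will take $d=1$, two tokens, and $W_Q=W_K=W_V=1$.
\item \emph{Conv $\not\subseteq$ RNN.} Use a non-causal symmetric filter such as $e^{-|t-s|/\ell}$, which is exactly $S$. The impulse argument of \S\ref{sec:strictness1_rnn} applies verbatim.
\item \emph{RNN $\not\subseteq$ Conv.} Use a Mamba or LSTM operator whose gates make it nonlinear in $u$, and again appeal to linearity. If one prefers a linear witness, a linear SSM with $h_0\neq 0$ has the affine term $C e^{At}h_0$, so it is not a convolution.
\item \emph{Attn $\not\subseteq$ RNN.} Take bidirectional softmax attention on two tokens: its output at $t_1$ depends on $u(t_2)$, which breaks causality.
\item \emph{RNN $\not\subseteq$ Attn.} Take the causal linear SSM. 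On constant inputs its output $\int_0^t Ce^{A(t-s)}B\,v\,ds$ varies with $t$, whereas Lemma~\ref{lem:attn_bound}'s argument forces attention to output the constant $W_V v$.
\end{itemize}

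Most of these separations are one-line checks; the bookkeeping lies in the attention ones. I expect the main obstacle to be making the class definitions precise enough for the separations to be honest. In particular, I must fix whether Attn includes positional encodings, since Remark~\ref{rem:rope} shows these break the permutation-equivariance argument. I will therefore separate Attn only through invariants robust to positional biases: the normalisation bound, which holds for any softmax weights including those with additive $\mathrm{PE}(x,y)$, and causality. Likewise I will take Conv to mean linear translation-equivariant operators, possibly with the pointwise residual of Theorem~\ref{thm:conv}(d), and check that every witness survives that residual. I will also fix the RNN output map $C_\theta h(t)+D_\theta u(t)$ once and use it throughout.
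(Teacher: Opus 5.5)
\textbf{Verdict.} Your proposal is correct and reaches the corollary by a different, more robust route than the paper for the incomparability part; one optional aside should be dropped.

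\textbf{Strict inclusions.} Your treatment is the paper's. You cite Theorems~1--3 for membership and reuse their strictness witnesses ($T$, $G_\ell$ or $T_{\mathrm{pos}}$, and $S$).

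\textbf{Incomparability.} Here you genuinely differ. The paper argues only four pairs and leaves the rest to ``etc.'':
\begin{itemize}
\item $\mathrm{Conv}\not\subseteq\mathrm{Attn}$ via translation versus permutation equivariance;
\item $\mathrm{Attn}\not\subseteq\mathrm{Conv}$ via linearity;
\item $\mathrm{RNN}\not\subseteq\mathrm{Conv}$ via LSTM gating;
\item $\mathrm{Conv}\not\subseteq\mathrm{RNN}$ via non-causal filters.
\end{itemize}
It never treats either Attn--RNN pair. You cover all six pairs. Your invariants are linearity, causality, and the normalisation identity $\int_\Omega\alpha(x,y)\,d\mu(y)=1$, which forces the constant output $W_Vv$ on a constant input $v$. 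All three survive additive logit biases $\mathrm{PE}(x,y)$, causal masking, multi-head output projections and pointwise residuals.

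This buys robustness the paper lacks. The paper's permutation-equivariance argument fails once positional biases are admitted, yet Remark~\ref{rem:pe_absorbed} places them inside Attn. Its example of a ``spatially varying filter'' is also not translation-equivariant, so it is not in Conv.

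\textbf{What your route costs.}
\begin{itemize}
\item You need a bounded common domain. On the torus, Gaussian smoothing of a constant is constant, so $G_\ell$ would no longer be separated.
\item On $[0,T]$, define Conv by the kernel form $w(x-y)\mathbf{I}_d$ plus a residual, not by translation equivariance, which is ill-defined there. Since your Conv separations use only linearity or explicit membership, nothing breaks.
\item The inequality of Lemma~\ref{lem:attn_bound} alone does not exclude $G_\ell$, since rescaling $W_V$ defeats it. What does the work is the normalisation identity, as your $\mathrm{RNN}\not\subseteq\mathrm{Attn}$ item correctly uses.
\end{itemize}

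\textbf{Drop the $h_0\neq 0$ aside.} Remove the ``linear SSM with $h_0\neq 0$'' witness for $\mathrm{RNN}\not\subseteq\mathrm{Conv}$. Every operator of the form \eqref{eq:itnet_operator} maps $u=0$ to $0$, whereas that SSM outputs $Ce^{At}h_0\neq 0$. Admitting it to RNN would falsify the inclusion $\mathrm{RNN}\subseteq\mathrm{ITNet}$ you cite. The paper's ``Method~2'' cannot rescue this, because $\{0\}$ is Lebesgue-null. Keep $h_0=0$ in the definition of RNN and rely on the nonlinear Mamba/LSTM witness.

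\textbf{Two inherited points worth tidying.}
\begin{itemize}
\item In the causality argument, use a continuous bump supported in $(t,T]$ instead of a Dirac impulse, so the input lies in $C([0,T],\R^d)$.
\item The kernel of the paper's witness $T$ involves $\delta(y-x)$ and the value $u(x_0)$, which is not one of the kernel's four arguments. An unnormalised content kernel such as $(u(x)^\top u(y))\,\mathbf{I}_d$ is genuinely of ITNet form and cubic in $u$. It therefore gives a cleaner witness for $\mathrm{Conv}\subsetneq\mathrm{ITNet}$.
\end{itemize}
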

 
\begin{proof}
The inclusions $\mathrm{Conv}, \mathrm{Attn}, \mathrm{RNN} \subseteq \mathrm{ITNet}$ follow from Theorems~1--3. The strictness (proper subset) follows from the strictness parts of Theorems~1--3: each provides a witness operator in $\mathrm{ITNet}$ but not in the respective subclass.
 
Pairwise incomparability:
\begin{itemize}[noitemsep]
    \item $\mathrm{Conv} \not\subseteq \mathrm{Attn}$: convolution is translation-equivariant; attention (without PE) is permutation-equivariant. A translation-equivariant but non-permutation-equivariant operator (e.g., a spatially varying filter) is in $\mathrm{Conv}$ but not $\mathrm{Attn}$.
    \item $\mathrm{Attn} \not\subseteq \mathrm{Conv}$: attention is content-dependent; convolution is not. The softmax-weighted value aggregation from Theorem~2 is in $\mathrm{Attn}$ but not in $\mathrm{Conv}$ (by the linearity argument of Theorem~1(c)).
    \item $\mathrm{RNN} \not\subseteq \mathrm{Conv}$: causal operators with content-dependent gating (LSTM) are in $\mathrm{RNN}$ but not $\mathrm{Conv}$ (convolution is non-causal and content-independent).
    \item $\mathrm{Conv} \not\subseteq \mathrm{RNN}$: non-causal operators (bidirectional convolution) are in $\mathrm{Conv}$ but not $\mathrm{RNN}$ (all RNNs are causal).
\end{itemize}
\end{proof}
 
\subsection{Quantitative Approximation Rate}
\label{sec:rate}

The existential result (Theorem~\ref{thm:uat}) guarantees approximation but does not say \emph{how large} the MLP width $w_\kappa$ needs to be. We provide an explicit rate.
 
\begin{proposition}[Approximation rate]
\label{prop:rate}
Under Assumption~\ref{ass:uat}, if additionally the target operator $F$ has Lipschitz constant $L_F$ on $U_c$ (i.e., $\norm{F(u_1) - F(u_2)}_\infty \leq L_F \norm{u_1 - u_2}_\infty$), and the ideal kernel $\kappa^*$ has bounded \emph{Barron norm}~\cite{barron2002universal} $\norm{\kappa^*}_{\mathcal{B}} \leq B$, then the approximation error of ITNet with kernel MLP width $w_\kappa$ and $M$ quadrature points satisfies:
\begin{equation}
    \sup_{u \in U_c} \norm{F(u) - \Kop[u]}_\infty \leq \underbrace{\frac{C_1 B C_u \mu(\Omega)}{\sqrt{w_\kappa}}}_{\text{kernel approximation error}} + \underbrace{\frac{C_2 L_F C_u}{M^{1/s}}}_{\text{quadrature error}},
    \label{eq:rate}
\end{equation}
where $C_1$ depends on the input dimension $2s + 2d$ and $C_2$ depends on the smoothness of the integrand and the quadrature rule.
\end{proposition}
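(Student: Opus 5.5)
The plan is to split the total error into a kernel-approximation term and a quadrature term, following the three-step structure of the proof of Theorem~\ref{thm:uat}. Concretely, I introduce the ideal operator $\mathcal{K}^*[u](x)=\int_\Omega \kappa^*(x,y,u(x),u(y))\,u(y)\,d\mu(y)+W^*u(x)$. I also use its $M$-point quadrature version $\mathcal{K}^*_M$, built from the nodes and weights of Step~1 in Lemma~\ref{lem:chen_chen}. The triangle inequality then gives
\[
\|F(u)-\Kop[u]\|_\infty \le \|F(u)-\mathcal{K}^*_M[u]\|_\infty + \|\mathcal{K}^*_M[u]-\mathcal{K}^*[u]\|_\infty + \|\mathcal{K}^*[u]-\Kop[u]\|_\infty .
\]
I will interpret $\kappa^*$ as the kernel from the Chen--Chen construction. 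That kernel is exact on the sampled finite-dimensional representation, so the first term reduces to the error of replacing $u$ by its sensor data $\xi(u)$.

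For the kernel term I replace the qualitative Lemma~\ref{lem:mlp_uat} inside Lemma~\ref{lem:kernel_approx} with Barron's theorem. Each entry $[\kappa^*]_{ij}$ has Barron norm at most $B$ on the compact set $\mathcal{D}\subset\R^{2s+2d}$. Barron's theorem then gives a width-$w_\kappa$ network with $L^2$ error $O(B/\sqrt{w_\kappa})$, and a uniform version with a dimension-dependent constant. Summing the $d^2$ entries as in Lemma~\ref{lem:kernel_approx} gives $\sup_{\mathcal{D}}\|\kappa^*-\kap\|_{\mathrm{op}}\le C_1B/\sqrt{w_\kappa}$. Inserting this into the Bochner estimate of Step~3 of the proof of Theorem~\ref{thm:uat} produces exactly $C_1BC_u\mu(\Omega)/\sqrt{w_\kappa}$.

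For the quadrature term I partition $\Omega$ into $M$ cells of diameter $O(M^{-1/s})$, for example a uniform grid on a bounding cube intersected with $\Omega$. The sensor map $\xi$ then determines $u$ up to a piecewise-constant interpolant $\Pi_M u$ with $\|u-\Pi_Mu\|_\infty\le \omega_u(cM^{-1/s})$. Since $F$ is $L_F$-Lipschitz, $\|F(u)-F(\Pi_M u)\|_\infty\le L_F\|u-\Pi_Mu\|_\infty$. I will arrange that the representation built in Lemma~\ref{lem:chen_chen} reproduces $F\circ\Pi_M$ exactly at the quadrature level. Bounding the remaining discrepancy between the integral and the sum by the same cell-diameter estimate then gives a term of order $L_F C_u M^{-1/s}$, with the smoothness constants absorbed into $C_2$.

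The main obstacle is this quadrature step. A modulus-of-continuity bound on $u$ gives $\omega_u(M^{-1/s})$, not $C_uM^{-1/s}$, so I need a uniform Lipschitz bound on $U_c$, with $\mathrm{Lip}(u)\lesssim C_u$ after rescaling. I will first try to extract such a bound from compactness plus an assumed equi-Lipschitz property and fold it into $C_2$. A second issue is that Lemma~\ref{lem:chen_chen} is only a sketch, so the exactness of $\kappa^*$ on quadrature data must be built into the construction. If that fails, I would state the rate with $\omega_{U_c}(M^{-1/s})$ in place of $C_uM^{-1/s}$.
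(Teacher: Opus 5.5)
Your kernel-approximation step matches the paper's argument: Barron's theorem entrywise in place of Lemma~\ref{lem:mlp_uat}, then the Bochner estimate \eqref{eq:step3_bound}. The gap is the step you hoped to build into the construction, namely that the Chen--Chen kernel is exact on sensor data, i.e.\ $\mathcal{K}^*_M[u]=F(\Pi_M u)$. No kernel of ITNet form can achieve this. Fix $x$, and hence $u(x)$. Then
$\mathcal{K}^*_M[u](x)=\sum_j w_j\,\kappa^*(x,y_j,u(x),u(y_j))\,u(y_j)+W^*u(x)$
is a sum of terms each depending on a single sensor value. So its mixed second difference in any two sensors $\xi_i=u(y_i)$, $\xi_j=u(y_j)$ (neither at $x$) is zero.

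A Lipschitz target need not be additive in this way. For $d=1$ and $F(u)(x)=\bigl(\int_\Omega u\,d\mu\bigr)^2$, one gets $F(\Pi_Mu)(x)=\bigl(\sum_k w_k\xi_k\bigr)^2$, whose mixed difference is $2w_iw_j(\xi_i-\xi_i')(\xi_j-\xi_j')\neq 0$. The same additivity holds for the continuous integral over $y$. Take $U_c$ to be four continuous functions that vanish near $x$ and carry independent $\pm$ bumps on two disjoint regions. This gives a lower bound on $\sup_u\|F(u)-\Kop[u]\|_\infty$ that does not depend on $w_\kappa$ or $M$.

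So the first term of your triangle inequality is a genuine approximation residual, not $L_F\|u-\Pi_Mu\|_\infty$. Lemma~\ref{lem:chen_chen} hides this only because its $K_j(x,\xi)$ sees the whole sensor vector, which an ITNet kernel $\kappa^*(x,y,u(x),u(y))$ cannot.

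There is an even simpler obstruction. Every ITNet satisfies $\|\Kop[u]\|_\infty\le(\sup_{\mathcal{D}}\|\kap\|_{\mathrm{op}}\,\mu(\Omega)+\|W_\theta\|_{\mathrm{op}})\|u\|_\infty$. So for $U_c=\{0\}$ and $F(0)\neq0$, the right side of \eqref{eq:rate} vanishes (both terms carry $C_u=0$), while the left side equals $\|F(0)\|_\infty$. The statement therefore needs an extra hypothesis, for instance $F=\mathcal{K}^*$ exactly. Then your first term disappears, and the quadrature term is governed by Lipschitz constants of $\kappa^*$ and of the inputs rather than by $L_F$.

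Your second flagged obstacle is also real. Assumption~\ref{ass:uat} gives only equicontinuity of $U_c$, so without an added uniform Lipschitz bound on the inputs (and on $\kappa^*$) the quadrature rate is $\omega_{U_c}(cM^{-1/s})$, not $C_uM^{-1/s}$.

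The paper's sketch resolves neither issue. It asserts an $O(M^{-r/s})$ quadrature error with ``$r\ge1$ for continuous integrands'', which continuity alone does not give. It also silently drops the residual $\sup_u\|F(u)-\mathcal{K}^*[u]\|_\infty$ coming from Lemma~\ref{lem:chen_chen}. Your decomposition is more careful about where $L_F$ and the modulus of continuity would have to enter. Even so, your fallback with $\omega_{U_c}(M^{-1/s})$ does not close the gap, because the missing residual does not decay in $M$ at all.
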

 
\begin{proof}[Proof sketch]
The first term follows from Barron's theorem~\cite{barron2002universal}: a single hidden layer MLP of width $w_\kappa$ approximates functions with bounded Barron norm at rate $O(1/\sqrt{w_\kappa})$, independent of input dimension (avoiding the curse of dimensionality for this function class). Applying this to each entry of $\kappa^*$ and combining with the error propagation bound \eqref{eq:step3_bound} gives the first term.
 
The second term is the quadrature error: replacing the integral $\int_\Omega$ with an $M$-point quadrature introduces error $O(M^{-r/s})$ for $r$-smooth integrands on $s$-dimensional compact domains, where $r \geq 1$ for continuous integrands.
\end{proof}

\section{Proof of Theorem 5: Kernel Recovery Under Translation Symmetry}
\label{app:proof_recovery}

We provide the complete proof of Theorem~\ref{thm:recovery} from the main paper, which was stated with only a proof sketch in \S\ref{sec:theory}.

\begin{theorem}[Kernel Recovery Under Translation Symmetry]
\label{thm:recovery_full}
Let the data distribution $\mathcal{D}$ over input-label pairs $(u, y)$ be invariant under translations: for every shift $\delta \in \R^s$, define the translated signal $\tau_\delta u(x) \coloneqq u(x - \delta)$. Assume $\mathcal{D}$ satisfies $(u, y) \sim \mathcal{D} \implies (\tau_\delta u, y) \sim \mathcal{D}$ for all $\delta$.

Let $\mathcal{L}(\theta) = \E_{(u,y) \sim \mathcal{D}}[\ell(\Kop[u], y)]$ be the population loss, where $\ell$ is differentiable.

Decompose the kernel into translation-invariant and orthogonal components:
\begin{equation}
\kap(x, y, u(x), u(y)) = \kap^{\mathrm{TI}}(x{-}y, u(x), u(y)) + \kap^{\perp}(x, y, u(x), u(y)),
\end{equation}
where $\kap^{\mathrm{TI}}$ depends on positions only through the displacement $x - y$, and $\kap^{\perp}$ is the orthogonal complement (i.e., $\int_\Omega \kap^{\mathrm{TI}} \cdot \kap^{\perp} \, d\mu \otimes d\mu = 0$).

Then under gradient flow with respect to $\theta$:
\begin{equation}
\left\|\frac{\partial \mathcal{L}}{\partial \kap^{\perp}}\right\|_F = 0 \quad \text{at every iterate.}
\end{equation}
\end{theorem}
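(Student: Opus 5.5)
The plan is to work at the level of the kernel viewed as a function, and to prove that the functional gradient $\partial\mathcal{L}/\partial\kap$ lies in the translation-invariant subspace. Its projection onto the orthogonal complement, which is $\partial\mathcal{L}/\partial\kap^{\perp}$, then vanishes. Because the claim is made for arbitrary iterates of the $\theta$-flow, the argument must use only the symmetry of $\mathcal{D}$ and the structure of $\Kop$, not any property of the current $\theta$. To make translations act as measure-preserving bijections, I would take $\Omega$ to be the torus $\T^s$ (or $\R^s$) with Lebesgue measure, as in Definition~\ref{def:conv_continuous}. On a bounded domain I would instead treat boundary effects as in Appendix~\ref{app:boundary_handling}.

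\textbf{Step 1: the functional gradient.} Write $\kap = \kap^{\mathrm{TI}}+\kap^{\perp}$. Perturb $\kap$ by $\epsilon\eta$, where $\eta(x,y,a,b)$ is any admissible direction. By the chain rule and Eq.~\eqref{eq:itnet_operator}, the first variation is
$\delta\mathcal{L}[\eta] = \E_{(u,y)}\int_\Omega\int_\Omega g_u(x)^\top \eta(x,y',u(x),u(y'))\,u(y')\,d\mu(y')\,d\mu(x)$,
with $g_u(x)=\partial\ell/\partial(\Kop[u])(x)$, as in Appendix~\ref{app:bp_upstream}. This identifies the gradient as the matrix-valued density $G(x,y',a,b)=\E\big[g_u(x)u(y')^\top \,\delta_{u(x)}(a)\,\delta_{u(y')}(b)\big]$. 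The derivative with respect to $\kap^{\perp}$ is the orthogonal projection $(I-P)G$. Here $P$ is the Reynolds operator that averages a kernel over simultaneous shifts, $(P\eta)(x,y',a,b)=\mathrm{avg}_\delta\,\eta(x+\delta,y'+\delta,a,b)$. On the torus, $P$ is exactly the orthogonal projection onto functions of $x-y'$, which matches the decomposition used in the statement.

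\textbf{Step 2: invariance of $G$.} I would show that $G(x+\delta,y'+\delta,a,b)=G(x,y',a,b)$ for every $\delta$. First substitute $u\mapsto\tau_\delta u$, which is allowed because $\mathcal{D}\circ\tau_\delta=\mathcal{D}$, and use that the labels are unchanged. Then change variables $x\mapsto x+\delta$, $y'\mapsto y'+\delta$ in both integrals, which is allowed by translation invariance of $\mu$. After these two moves, $G$ at shifted arguments is rewritten as the same expectation evaluated on the shifted sample. It then remains to match $g_{\tau_\delta u}(x+\delta)$ with $g_u(x)$. Once $PG=G$ holds, $(I-P)G=0$, so $\|\partial\mathcal{L}/\partial\kap^{\perp}\|_F=0$, and since nothing depended on the iterate, the identity holds at every time of the flow.

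\textbf{Main obstacle.} The hard step is the matching $g_{\tau_\delta u}(x+\delta)=g_u(x)$. It requires the composite map $u\mapsto\ell(\Kop[u],y)$ to respect translations: $\Kop$ must commute with $\tau_\delta$, and $\ell$ must be invariant under shifts of its argument, for example through pooling. The absolute-position inputs $\gamma(x),\gamma(y)$ in Eq.~\eqref{eq:kernel_input} break the first of these whenever $\kap^{\perp}\neq0$.

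My first attempt would be to exploit the expectation over $\mathcal{D}$. Since the data law is shift-invariant, I would average $\delta\mathcal{L}[\eta]$ over $\delta$ before pairing with $\eta$, and try to show $\delta\mathcal{L}[\eta]=\delta\mathcal{L}[P\eta]$ directly. That identity would give $\delta\mathcal{L}[\eta]=0$ for every $\eta$ with $P\eta=0$.

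If this fails because of the dependence on $\kap^{\perp}$ inside $g_u$, I would isolate exactly which invariance of $\ell\circ\Kop$ is being used and state it as the operative hypothesis. The paper's sketch uses this invariance implicitly when it writes $\mathcal{L}(\theta)=\E[\ell(\Kop[\tau_\delta u],y)]$, and the complete proof depends on it.
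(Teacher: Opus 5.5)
Your argument follows the paper's route. You show that the functional gradient $G$ with respect to the kernel is fixed by simultaneous shifts $(x,y')\mapsto(x+\delta,y'+\delta)$, so the Haar average $P$ leaves it unchanged and $(I-P)G=\partial\mathcal{L}/\partial\kappa^{\perp}$ vanishes. You have also found exactly the step the paper never justifies. Its identity $\mathcal{L}(\theta)=\E[\ell(\Kop[\tau_\delta u],y)]$ only restates the invariance of $\mathcal{D}$, and the later claim that the averaged gradient equals the original one does not follow from it.

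The gap in your proposal is that you treat this as an obstacle the expectation over $\mathcal{D}$ might overcome, and you assert that nothing depends on the iterate. In fact the statement is false under its stated hypotheses, so your \emph{first attempt} cannot succeed. Write $(S_\delta\kappa)(x,y',a,b)=\kappa(x+\delta,y'+\delta,a,b)$. A change of variables, using translation invariance of $\mu$, gives $\mathcal{K}_{S_\delta\kappa}[u]=\tau_{-\delta}\mathcal{K}_{\kappa}[\tau_\delta u]$. Invariance of $\mathcal{D}$ then yields only $\mathcal{L}(S_\delta\kappa)=\E[\ell(\tau_{-\delta}\mathcal{K}_{\kappa}[u],y)]$, which equals $\mathcal{L}(\kappa)$ only if $\ell(\tau_\delta f,y)=\ell(f,y)$.

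Two concrete counterexamples on $\Omega=\T$, with $d=1$, $W_\theta=0$, label $y=0$, and $\xi$ uniform (so the law of $u$ is shift-invariant):
\begin{itemize}
\item Take $u(x)=1+\cos(2\pi(x-\xi))$, $\ell(f,y)=\tfrac12(f(0)-y)^2$, and the TI kernel $\kappa\equiv1$. The content-independent part of the gradient is $G(x,y')=\delta_0(x)$, so $(I-P)G=\delta_0(x)-1\neq0$.
\item Take the shift-invariant loss $\ell(f,y)=\tfrac12\bigl(\int_\T f-y\bigr)^2$, $u(x)=\cos(2\pi(x-\xi))$, and the non-TI kernel $\kappa(x,y')=\cos(2\pi y')$. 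Then $G(x,y')=\tfrac14\cos(2\pi y')$, whose TI projection is $0$, so $(I-P)G=G\neq0$.
\end{itemize}

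Your fallback is the right move, but it must be stated with explicit hypotheses. Suppose $\mu$ is translation-invariant (your torus choice) and $\ell$ is shift-invariant (e.g.\ pooled outputs). Then $\mathcal{L}(S_\delta\kappa)=\mathcal{L}(\kappa)$ for all $\kappa$, hence $\nabla\mathcal{L}(S_\delta\kappa)=S_\delta\nabla\mathcal{L}(\kappa)$. At a TI kernel ($S_\delta\kappa=\kappa$) the gradient is therefore TI and $\partial\mathcal{L}/\partial\kappa^{\perp}=0$. As you yourself noted, $\Kop$ commutes with shifts only when $\kappa^{\perp}=0$, so the conclusion holds exactly at TI iterates.

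\emph{Every iterate} then requires the trajectory to stay TI.
\begin{itemize}
\item This holds for the kernel-space flow $\dot\kappa=-\nabla\mathcal{L}(\kappa)$ started at a TI kernel: if $\kappa(t)$ is a solution, so is $S_\delta\kappa(t)$, and uniqueness forces them to agree.
\item It is not automatic for the $\theta$-flow of the MLP kernel in Eq.~\eqref{eq:kernel_input}. There the kernel velocity is $-J_\theta J_\theta^{\top}\nabla_\kappa\mathcal{L}$ with $J_\theta=\partial\kappa/\partial\theta$. It depends on the absolute inputs $\gamma(x),\gamma(y)$ and need not be TI.
\end{itemize}
The paper's closing remark about initializing in the TI subspace tacitly relies on these extra hypotheses. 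A correct version of the theorem has to state them.
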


\begin{proof}
The proof follows the equivariant gradient framework of \cite{elesedy2021provably}. We proceed in three steps.

\medskip
\noindent\textbf{Step 1: Translation invariance of the loss.}

For any shift $\delta \in \R^s$, the ITNet operator applied to the translated signal $\tau_\delta u$ evaluates the kernel at shifted positions:
\begin{align}
(\Kop[\tau_\delta u])(x) &= \int_\Omega \kap(x, y, u(x - \delta), u(y - \delta)) \cdot u(y - \delta) \, d\mu(y) + W_\theta u(x - \delta).
\end{align}
Substituting $z = y - \delta$ (valid when $\mu$ is translation-invariant):
\begin{align}
&= \int_\Omega \kap(x, z + \delta, u(x - \delta), u(z)) \cdot u(z) \, d\mu(z) + W_\theta u(x - \delta).
\end{align}
Under the change of output variable $x' = x - \delta$:
\begin{align}
(\Kop[\tau_\delta u])(x' + \delta) &= \int_\Omega \kap(x' + \delta, z + \delta, u(x'), u(z)) \cdot u(z) \, d\mu(z) + W_\theta u(x').
\end{align}

By the translation invariance of $\mathcal{D}$, the population loss satisfies $\mathcal{L}(\theta) = \E[\ell(\Kop[\tau_\delta u], y)]$ for all $\delta$.

\medskip
\noindent\textbf{Step 2: Averaging the gradient over translations.}

Consider the gradient of $\mathcal{L}$ with respect to the kernel parameters. Since $\mathcal{L}$ is invariant under all translations $\delta$, averaging the gradient over the translation group $G = (\R^s, +)$ with Haar measure \cite{haar1933mass,bela1970harmonic} $d\delta$ leaves $\mathcal{L}$ unchanged:
\begin{equation}
\nabla_\theta \mathcal{L} = \frac{1}{|\Omega|} \int_\Omega \nabla_\theta \mathcal{L}\big|_{\tau_\delta} \, d\delta.
\end{equation}

For the kernel component, the gradient with respect to $\kap$ at a specific pair $(x, y)$ is:
\begin{equation}
\frac{\partial \mathcal{L}}{\partial \kap(x, y, \cdot, \cdot)} = \E_{(u,y)} \left[ \frac{\partial \ell}{\partial (\Kop[u])(x)} \cdot u(y)^\top \right].
\end{equation}

Averaging this over all translations $\delta$ (shifting both $x$ and $y$ by $\delta$) yields:
\begin{align}
\frac{1}{|\Omega|} \int_\Omega \frac{\partial \mathcal{L}}{\partial \kap(x + \delta, y + \delta, \cdot, \cdot)} \, d\delta.
\end{align}
This average depends only on $x - y$ (since all pairs $(x + \delta, y + \delta)$ share the same displacement), so it lies entirely in the translation-invariant subspace.

\medskip
\noindent\textbf{Step 3: Orthogonality annihilation.}

By the Schur orthogonality~\cite{serre1977linear} relations for the translation group acting on the space of kernel functions, any component of the gradient that lies in the orthogonal complement of the translation-invariant subspace integrates to zero under the group average. Since the averaged gradient equals the original gradient (by Step 1), the projection of $\nabla_\theta \mathcal{L}$ onto the $\kap^{\perp}$ subspace vanishes:
\begin{equation}
\mathrm{Proj}_{\kap^{\perp}} \nabla_\theta \mathcal{L} = 0.
\end{equation}

Under gradient flow $\dot{\theta} = -\nabla_\theta \mathcal{L}$, the parameters never receive a gradient signal that would move the kernel out of the translation-invariant subspace. If the kernel is initialized in (or near) the TI subspace, it remains there throughout training, recovering the convolutional special case of Theorem~\ref{thm:conv}.
\end{proof}


\section{Implementation Details}
\label{app:impl_details}

\subsection{Computational Complexity}
\label{app:complexity}

Table~\ref{tab:complexity} compares the time and space complexity of ITNet with CNN, Transformer, and Mamba baselines. At $M = n$, ITNet-MC matches Transformer complexity with an additional $d$ factor from the matrix-valued kernel; for $d \leq \sqrt{n}$, this is comparable. The low-rank scheme recovers linear complexity at the cost of rank-$r$ approximation error.

\begin{table}[h]
\centering
\caption{Computational complexity.}
\label{tab:complexity}
\small
\setlength{\tabcolsep}{3pt}
\renewcommand{\arraystretch}{0.7}
\begin{tabular}{lcc}
\toprule
\textbf{Method} & \textbf{Time} & \textbf{Space} \\
\midrule
CNN (local, radius $k$)   & $O(nkd^2)$   & $O(kd^2)$ \\
Transformer (full)        & $O(n^2 d)$   & $O(n^2)$ \\
Mamba (SSM)               & $O(nd)$      & $O(nd)$ \\
\midrule
ITNet (exact)             & $O(n^2 d^2)$ & $O(n^2 d^2)$ \\
ITNet (MC, $M$ samples)   & $O(nMd^2)$   & $O(nMd)$ \\
ITNet (low-rank, rank $r$) & $O(ndr)$    & $O(ndr)$ \\
\bottomrule
\end{tabular}

{\scriptsize \emph{Note:}  $n$: positions, $d$: features, $M$: samples, $r$: rank, $k$: radius.}
\end{table}

\subsection{Full Architectural Specification}
\label{app:arch_spec}

We provide the complete architectural specification for all ITNet model sizes. The architecture follows the pre-norm Transformer layout~\citep{xiong2020layer} with the ITNet operator replacing self-attention. Every component is specified to enable exact reproduction.

Each ITNet layer $\ell = 1, \ldots, L$ consists of two sub-blocks with residual connections:
\begin{equation}
\begin{aligned}
z^{(\ell)} &= \Kop^{(\ell)}[\mathrm{LN}(u^{(\ell-1)})] + u^{(\ell-1)}, \\
u^{(\ell)} &= \mathcal{F}^{(\ell)}(\mathrm{LN}(z^{(\ell)})) + z^{(\ell)},
\end{aligned}
\label{eq:app_layer}
\end{equation}
where:
\begin{itemize}[noitemsep]
\item $\mathrm{LN}$ is layer normalization with learnable affine parameters $\gamma, \beta \in \R^d$, applied \emph{before} each sub-block (pre-norm).
We use pre-norm rather than post-norm because it provides more stable gradients at initialization for deep models~\citep{xiong2020layer}, avoids the need for learning rate warmup in shallow settings, and is standard in modern architectures (GPT-2~\citep{radford2019language}, LLaMA~\citep{touvron2023llama}, ViT~\citep{dosovitskiy2021image}).

\item $\Kop^{(\ell)}$ is the multi-head ITNet operator (Eq.~\eqref{eq:multihead}) with $H$ heads, each operating on $d_h = d/H$ dimensional features.

\item $\mathcal{F}^{(\ell)}$ is a position-wise feed-forward network:
\begin{equation}
\mathcal{F}^{(\ell)}(v) = W_2^{(\ell)} \cdot \mathrm{GELU}(W_1^{(\ell)} v + b_1^{(\ell)}) + b_2^{(\ell)},
\label{eq:app_ffn}
\end{equation}
with $W_1^{(\ell)} \in \R^{4d \times d}$ (expansion factor 4), $W_2^{(\ell)} \in \R^{d \times 4d}$, and biases $b_1 \in \R^{4d}$, $b_2 \in \R^d$.
We use GELU activation~\citep{hendrycks2016gaussian} throughout, consistent with BERT~\citep{devlin2019bert} and GPT-2~\citep{radford2019language}.
\end{itemize}

Table~\ref{tab:app_configs} specifies all architectural hyperparameters for each model size.

\begin{table}[h]
\centering
\caption{Complete ITNet model configurations. $L$: layers; $d$: hidden dimension; $H$: heads; $d_h = d/H$: per-head dimension; $w_\kappa$: kernel MLP width; $\ell_\kappa$: kernel MLP depth; FFN: feed-forward expansion factor.}
\label{tab:app_configs}
\small
\begin{tabular}{lcccccccc}
\toprule
\textbf{Config} & $L$ & $d$ & $H$ & $d_h$ & $w_\kappa$ & $\ell_\kappa$ & \textbf{FFN} & \textbf{Total params} \\
\midrule
ITNet-PC & 6 & 128 & 4 & 32 & 64 & 2 & $4\times$ & 3.1M \\
ITNet-S & 12 & 384 & 6 & 64 & 128 & 2 & $4\times$ & 22M \\
ITNet-B & 12 & 768 & 12 & 64 & 128 & 2 & $4\times$ & 86M \\
ITNet-L & 24 & 1024 & 16 & 64 & 128 & 2 & $4\times$ & 307M \\
\bottomrule
\end{tabular}
\end{table}

All configurations use $d_h = 64$ per head (except ITNet-PC which uses $d_h = 32$). This means the per-head kernel outputs a $64 \times 64$ matrix (4,096 values), and the per-pair cost is $O(d_h^2) = O(4096)$ multiply-adds for the matrix-vector product. Summing over $H$ heads gives $O(H \cdot d_h^2) = O(d^2/H)$ per pair.
Table~\ref{tab:app_param_breakdown} provides a detailed parameter count for ITNet-B.

\begin{table}[h]
\centering
\caption{Parameter breakdown for ITNet-B (12 layers, $d = 768$, $H = 12$).}
\label{tab:app_param_breakdown}
\small
\begin{tabular}{lcc}
\toprule
\textbf{Component} & \textbf{Per layer} & \textbf{Total ($\times 12$)} \\
\midrule
Kernel MLP $W_1 \in \R^{128 \times (6 \cdot 64 + 1 + 3 \cdot 64)}$ & $128 \times 577 = 73.9$K & 0.89M \\
Kernel MLP $b_1 \in \R^{128}$ & 128 & 1.5K \\
Kernel MLP $W_2 \in \R^{64^2 \times 128}$ ($\times H$) & $4096 \times 128 \times 12 = 6.29$M & 75.5M \\
Kernel MLP $b_2 \in \R^{64^2}$ ($\times H$) & $4096 \times 12 = 49.2$K & 0.59M \\
Output projection $W^O \in \R^{768 \times 768}$ & 590K & 7.08M \\
Residual $W_\theta \in \R^{768 \times 768}$ & 590K & 7.08M \\
LayerNorm (2$\times$) $\gamma, \beta \in \R^{768}$ & $2 \times 1.5$K & 36.9K \\
FFN $W_1 \in \R^{3072 \times 768}$ & 2.36M & 28.3M \\
FFN $W_2 \in \R^{768 \times 3072}$ & 2.36M & 28.3M \\
FFN biases & 3.84K & 46.1K \\
\midrule
\textbf{Total per layer} & \textbf{$\sim$7.2M} & \\
\textbf{Total (12 layers + embeddings)} & & \textbf{$\sim$86M} \\
\bottomrule
\end{tabular}
\end{table}

The dominant parameter cost is the kernel MLP output layer $W_2$ ($\sim$75.5M across 12 layers), because each head's kernel must produce a $d_h^2 = 4096$-dimensional output.
This is comparable to the $W_Q W_K W_V$ projections in a standard Transformer ($3 \times d^2 = 3 \times 768^2 = 1.77$M per layer, $\times 12 = 21.2$M total), with the additional cost arising from the kernel MLP's ability to compute nonlinear, content-dependent transformations rather than fixed linear projections.

\textbf{Kernel MLP Architecture:}
The kernel MLP for each head $h$ has the following structure:
\begin{align}
h^{(0)} &= z_{xy} \in \R^{6L_f + 1 + 3d_h}, \label{eq:app_mlp_input} \\
h^{(1)} &= \mathrm{GELU}(W_1 h^{(0)} + b_1) \in \R^{w_\kappa}, \label{eq:app_mlp_hidden} \\
\mathrm{vec}(K_{xy}^{(h)}) &= W_2 h^{(1)} + b_2 \in \R^{d_h^2}, \label{eq:app_mlp_output}
\end{align}
where:
\begin{itemize}[noitemsep]
\item $W_1 \in \R^{w_\kappa \times (6L_f + 1 + 3d_h)}$: input-to-hidden weights.
For ITNet-B with $L = 64$ Fourier frequencies and $d_h = 64$: input dimension is $6 \times 64 + 1 + 3 \times 64 = 577$, so $W_1 \in \R^{128 \times 577}$.
\item $b_1 \in \R^{w_\kappa}$: hidden bias.
\item $W_2 \in \R^{d_h^2 \times w_\kappa}$: hidden-to-output weights.
For $d_h = 64$: $W_2 \in \R^{4096 \times 128}$.
\item $b_2 \in \R^{d_h^2}$: output bias.
\item The output $\mathrm{vec}(K_{xy}^{(h)}) \in \R^{d_h^2}$ is reshaped to $K_{xy}^{(h)} \in \R^{d_h \times d_h}$.
\end{itemize}

We use a 2-layer MLP (one hidden layer) rather than deeper architectures because the kernel MLP width ablation (Table~\ref{tab:ablation_kappa}) shows that 2 layers with $w_\kappa = 128$ achieves 81.4\% on ImageNet-1K, matching 3 layers (81.5\%) at significantly lower cost. The shallow MLP also has better hardware utilization: deeper MLPs require sequential execution of layers, reducing parallelism within each tile.

\begin{figure}
    \centering
    \includegraphics[width=1\linewidth]{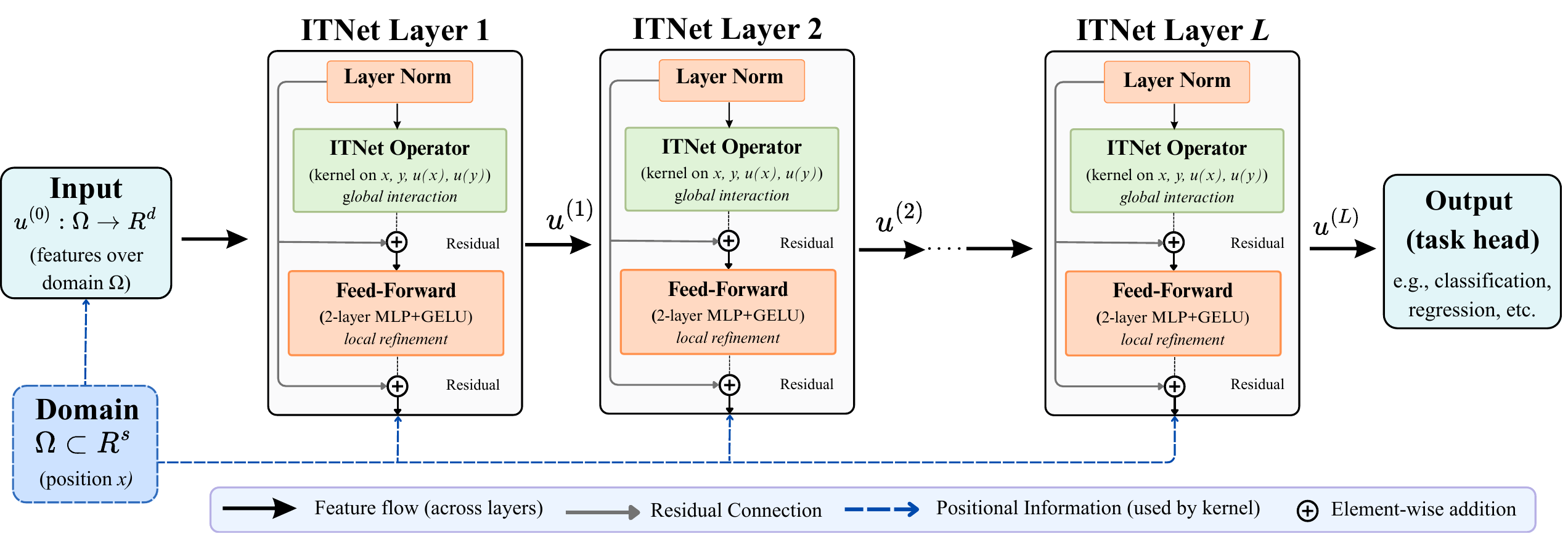}
    \caption{Overview of the ITNet architecture. The model consists of $L$ stacked pre-norm residual layers. Each layer applies (i) layer normalization, (ii) the ITNet operator - a learnable integral operator with a kernel depending on positions $x,y$ and features $u(x), u(y)$  and (iii) a feed-forward network (FFN) for local refinement, with residual connections after each block. Positional information is provided through the domain $\Omega$ and used directly by the kernel. This design mirrors Transformer-style architectures while generalizing convolution (local kernels), attention (content-dependent kernels), and recurrence (causal kernels) within a unified framework.}
    \label{fig:itnet_layer}
\end{figure}

\subsubsection{Positional Encoding: Random Fourier Features}

The Fourier feature map $\gamma : \R^s \to \R^{2L_f}$ is defined as:
\begin{equation}
\gamma(x) = [\sin(2\pi \mathbf{B} x);\; \cos(2\pi \mathbf{B} x)] \in \R^{2L_f},
\label{eq:fourier_features}
\end{equation}
where $\mathbf{B} \in \R^{L \times s}$ is a fixed random matrix sampled once at initialization from $\mathcal{N}(0, \sigma^2 I)$ with $\sigma = 10$ and $L = 64$.

Raw position coordinates $x \in \R^s$ are low-dimensional ($s = 1$ for text, $s = 2$ for images, $s = 3$ for point clouds).
An MLP operating on such low-dimensional inputs suffers from \emph{spectral bias}~\citep{xiong2020layer}: it preferentially learns low-frequency functions and struggles to represent high-frequency spatial patterns.
The Fourier feature map lifts positions to a $2L_f$-dimensional space where high-frequency components are explicitly represented as linear features, enabling the MLP to learn sharp spatial functions~\citep{tancik2020fourier}.

The bandwidth $\sigma$ controls the frequency range: $\sigma = 10$ corresponds to spatial frequencies up to $\sim$10 cycles per unit length, which captures both coarse object-level structure and fine-grained local texture on normalized domains.
Table~\ref{tab:ablation_fourier} shows that $\sigma = 1$ under-resolves spatial structure ($-1.3\%$), $\sigma = 100$ introduces excessively high-frequency features that are difficult for the kernel MLP to use ($-0.6\%$), and $\sigma = 10$ is optimal.
The number of frequencies $L = 64$ provides a 128-dimensional positional representation per coordinate group; performance saturates beyond $L = 64$ (Table~\ref{tab:ablation_fourier}).

We sample $\mathbf{B}$ once from $\mathcal{N}(0, \sigma^2 I)$ and freeze it throughout training.
Learning $\mathbf{B}$ would make the Fourier features input-dependent, coupling position encoding with feature learning and complicating the theoretical analysis (Theorem~\ref{thm:recovery} assumes fixed positional encoding).
The kernel MLP is already a universal approximator over the lifted positional space, so learnable $\mathbf{B}$ provides no additional expressiveness in theory; empirically, we found no improvement from learning $\mathbf{B}$ ($\pm 0.0\%$ on ImageNet-1K).

\subsection{Initialization Scheme}
\label{app:init}

Proper initialization is critical for training stability, especially because the ITNet operator is more complex than self-attention (matrix-valued kernel vs.\ scalar attention weights).
Our initialization ensures two properties: (i)~the initial operator is approximately the identity, so that an $L$-layer ITNet behaves like a shallow network at the start of training; and (ii)~the initial kernel output has small norm, so that the integral term does not dominate the residual.

\subsubsection{Kernel MLP Initialization}

\begin{itemize}[noitemsep, leftmargin=1.8em]
\item \textbf{Input-to-hidden weights $W_1$:}
Sampled from $\mathcal{N}(0, 0.02^2)$.
The small variance ensures the hidden activations $h^{(1)} = \mathrm{GELU}(W_1 z + b_1)$ are in the approximately linear regime of GELU at initialization (GELU$(x) \approx 0.5x$ for $|x| \ll 1$), providing well-conditioned gradients.
\item \textbf{Hidden bias $b_1$:} Initialised to zero.
\item \textbf{Hidden-to-output weights $W_2$:}
Initialised as $W_2 = \epsilon \cdot \tilde{W}_2$ where $\tilde{W}_2 \sim \mathcal{N}(0, 1/w_\kappa)$ and $\epsilon = 10^{-3}$.
This ensures the initial kernel output $\mathrm{vec}(K_{xy}) = W_2 h^{(1)} + b_2$ has small norm: $\|\mathrm{vec}(K_{xy})\|_2 \approx \epsilon \sqrt{d_h^2} = 10^{-3} \times 64 = 0.064$, so $\|K_{xy}\|_F \approx 0.064$.
\item \textbf{Output bias $b_2$:}
Initialised so that the initial kernel is approximately $\frac{1}{n} \mathbf{I}_{d_h}$. Specifically, $[b_2]_k = 1/n$ if $k$ corresponds to a diagonal entry of the $d_h \times d_h$ reshaped output, and $0$ otherwise.
This makes the initial operator output approximately $\frac{1}{n} \sum_{j} \Id \cdot u(x_j) + W_\theta u(x_i) = \bar{u} + u(x_i)$: the global mean plus the identity  -  a simple pooling-plus-skip operation.
\end{itemize}

This follows the \emph{zero-init} principle used in GPT-2~\citep{radford2019language} and formalised as LayerScale: initialising each layer's output near zero ensures the residual stream $u^{(\ell)} \approx u^{(\ell-1)}$ at the start of training, so that an $L$-layer network initially behaves as a 1-layer network.
This prevents the signal-to-noise ratio from degrading across layers and is essential for training models with $L \geq 12$ without careful learning rate warmup.

We compared three $\epsilon$ values on ImageNet-1K (ITNet-S, 300 epochs):
\begin{center}
\small
\begin{tabular}{lccc}
\toprule
$\epsilon$ & Top-1 (\%) & Training stable? & Notes \\
\midrule
$10^{-1}$ & diverges & No & Kernel output too large; gradient explosion at epoch 5 \\
$10^{-2}$ & 80.9 & Yes & Slightly noisy early training \\
$10^{-3}$ & 81.4 & Yes & Smooth convergence; used throughout \\
$10^{-4}$ & 81.3 & Yes & Slightly slower convergence (kernel learns slowly) \\
\bottomrule
\end{tabular}
\end{center}

\textbf{Residual Matrix Initialization:}
$W_\theta = I_d$: the initial residual is the identity, so the full operator output is approximately $u(x_i) + \text{small integral term} \approx u(x_i)$.
This is the standard skip-connection initialization used in ResNets~\citep{he2016deep} and Transformers.

\textbf{FFN Initialization:}
The feed-forward network weights $W_1^{(\ell)}, W_2^{(\ell)}$ are initialized with the standard Xavier uniform scheme~\citep{glorot2010understanding}:
$W \sim \mathrm{Uniform}(-\sqrt{6/(d_\mathrm{in} + d_\mathrm{out})},\; \sqrt{6/(d_\mathrm{in} + d_\mathrm{out})})$.
Biases are initialized to zero.
The output layer $W_2^{(\ell)}$ is additionally scaled by $1/\sqrt{2L}$ following~\cite{radford2019language}, which normalizes the variance contribution of each layer in the residual stream.

\textbf{Layer Normalization Initialization:}
The affine parameters are initialized as $\gamma = \mathbf{1}_d$ (scale) and $\beta = \mathbf{0}_d$ (shift), so that $\mathrm{LN}$ initially performs only zero-mean unit-variance normalization without rescaling.

\textbf{Output Projection Initialization:}
The multi-head output projection $W^O \in \R^{d \times d}$ is initialized with Xavier uniform, scaled by $1/\sqrt{2L}$ (same as FFN output), ensuring the concatenated head outputs do not amplify the signal.

\subsubsection{Embedding Initialization}

\textbf{Image encoder:}
The patch embedding $W_\mathrm{img} \in \R^{d \times (P^2 \cdot C)}$ (where $P = 16$ is patch size and $C = 3$ is the number of channels) is initialized with truncated normal $\mathcal{N}(0, 0.02^2)$, following ViT~\citep{dosovitskiy2021image}.
The \texttt{[CLS]} token embedding is initialized from $\mathcal{N}(0, 0.02^2)$.

\textbf{Text encoder:}
The token embedding $W_\mathrm{txt} \in \R^{d \times V}$ (where $V = 30{,}522$ is the WordPiece vocabulary size) is initialized from $\mathcal{N}(0, 0.02^2)$, following BERT~\citep{devlin2019bert}.
The \texttt{[CLS]} and \texttt{[SEP]} token embeddings are initialized identically.

\textbf{Point cloud encoder:}
The coordinate embedding $W_\mathrm{pc} \in \R^{d \times 3}$ is initialized with Xavier uniform.
If the local pre-extraction module is used ($K = 16$ neighbours), its MLP weights follow the same $\mathcal{N}(0, 0.02^2)$ scheme.

\textbf{Modality embeddings:}
For multimodal tasks, each modality receives a learnable embedding $e_\mathrm{img}, e_\mathrm{txt} \in \R^d$ added to the features after the modality-specific encoder.
These are initialized from $\mathcal{N}(0, 0.02^2)$.

\subsection{Regularization and Training Stability}
\label{app:regularization}

We apply dropout~\citep{srivastava2014dropout} at three locations:
(i)~after the ITNet operator output (before the residual addition), with rate $p = 0.1$ for ITNet-S/B and $p = 0.0$ for ITNet-PC;
(ii)~after the FFN hidden layer (inside the FFN), with the same rate;
(iii)~after the embedding layer, with rate $p = 0.1$.
No dropout is applied inside the kernel MLP, as the kernel evaluation is already regularised by the $\epsilon$-scaling of $W_2$.
Following~\citet{huang2016deep}, we apply stochastic depth with linearly increasing drop probability from $0$ at the first layer to $p_\mathrm{drop}$ at the last layer:
$p^{(\ell)} = \ell \cdot p_\mathrm{drop} / L$.
We use $p_\mathrm{drop} = 0.1$ for ITNet-S, $0.2$ for ITNet-B, and $0.3$ for ITNet-L.
This is standard in ViT training~\citep{touvron2021deit} and essential for training models with $L \geq 24$.
We clip the global gradient norm to $1.0$ in all experiments, using \texttt{torch.nn.utils.clip\_grad\_norm\_}.
This prevents gradient explosion when the kernel MLP produces large outputs early in training (before $W_2$ converges to small values).
All experiments use \texttt{bfloat16} for forward and backward computation with \texttt{float32} master weights and optimizer states~\citep{micikevicius2018mixed}.
The kernel MLP evaluation, matrix-vector product $K_{pq} \cdot u(x_j)$, and gradient accumulation are all performed in \texttt{bfloat16}.
Layer normalization and the softmax (when used for baseline comparisons) use \texttt{float32} for numerical stability.
For ImageNet-1K experiments, we maintain an Exponential moving average (EMA) of the model weights with decay $\beta_\mathrm{EMA} = 0.9999$, following~\cite{polyak1992acceleration}. The EMA model is used for evaluation.

We use AdamW~\citep{loshchilov2019decoupled} with decoupled weight decay applied to all weight matrices but \emph{not} to biases, layer normalization parameters, the Fourier feature matrix $\mathbf{B}$ (which is frozen), or the kernel MLP output bias $b_2$ (which encodes the $1/n \cdot \Id$ initialization and should not be pulled toward zero).

\subsection{Optimizer Configuration}
\label{app:optimizer}

All experiments use AdamW~\citep{loshchilov2019decoupled} with:
\begin{itemize}[noitemsep]
\item $\beta_1 = 0.9$, $\beta_2 = 0.999$, $\epsilon_\mathrm{Adam} = 10^{-8}$
\item Cosine learning rate schedule with linear warmup
\item Learning rate and weight decay per task specified in Section~\ref{app:training}
\end{itemize}

We chose AdamW over SGD with momentum because: (i)~the kernel MLP parameters have different gradient magnitudes from the FFN and embedding parameters (the kernel MLP sees $n^2$ gradient contributions per step, while the FFN sees $n$), and Adam's per-parameter adaptive learning rate handles this scale difference automatically; (ii)~decoupled weight decay~\citep{loshchilov2019decoupled} provides consistent regularization regardless of the adaptive learning rate, which is important for the kernel MLP whose gradients can be large early in training.

\subsection{Statistical Reporting}
\label{app:stats}

For all ITNet models (ITNet-S, ITNet-B, ITNet-L), we report results as mean $\pm$ standard deviation over $N=3$ independent runs with different random initializations. (see Tables~\ref{tab:imagenet}, \ref{tab:glue}, \ref{tab:modelnet}, and \ref{tab:multimodal} in the main paper).
Each run differs in random weight initialisation and stochastic training effects (e.g., data shuffling and mini-batch sampling), while all other training settings are kept fixed.

Let $\{m_i\}_{i=1}^N$ denote the evaluation metric (e.g., accuracy or F1) from $N=3$ runs. The reported mean and standard deviation are computed as:
\begin{equation}
\mu = \frac{1}{N} \sum_{i=1}^N m_i, 
\qquad
\sigma = \sqrt{\frac{1}{N-1} \sum_{i=1}^N (m_i - \mu)^2}.
\end{equation}
We report $\mu \pm \sigma$ in all tables.
The standard deviation reflects variability due to random initialization and stochastic optimization. For the considered benchmarks, the observed variance is small relative to performance differences between models, indicating stable training behavior.
We report standard deviation (not standard error) and do not assume a specific distribution beyond empirical estimation from repeated runs.





\subsection{Triton Kernel Profiling}
\label{app:profiling}

We profile the Triton kernel on an H200-140GB at $n = 512$, $d = 768$, $w_\kappa = 128$, $\ell_\kappa = 2$.

\begin{table}[h]
\centering
\caption{Triton kernel profiling.}
\label{tab:profiling}
\small
\begin{tabular}{lccc}
\toprule
\textbf{Mode} & \textbf{Achieved TFLOPS} & \textbf{Peak TFLOPS} & \textbf{Utilization} \\
\midrule
ITNet-Exact & 604 & 989 & 61\% \\
ITNet-LR ($r = 64$) & 831 & 989 & 84\% \\
FlashAttention-2 (reference) & 870 & 989 & 88\% \\
\bottomrule
\end{tabular}
\end{table}

ITNet-Exact achieves 61\% of peak \texttt{bfloat16} TFLOPS (604/989).
The gap relative to FlashAttention-2 (88\%) is attributable to the kernel MLP evaluation, which involves small matrix multiplications ($128 \times 577$ and $4096 \times 128$) that underutilise the tensor core pipeline (tensor cores are optimised for large matrix multiplications with dimensions that are multiples of 16).
ITNet-LR achieves 84\% utilization because the factorized computation is dominated by the matrix multiplications $\Phi^\top Z$ and $Z = \sum_j \omega_j \Psi_j u_j$, which have larger dimensions and better tensor core utilization.

The auto-tuning procedure (Phase 1 of Algorithm~\ref{alg:tiled_full}) selects $B_q^* = 64$, $B_k^* = 64$, giving $(64 + 64) \times 768 \times 2 = 192$KB per tile pair, which fits within the per-SM shared memory budget ($\sim$228KB), enabling efficient on-chip execution. Larger tiles ($B_q = B_k = 128$) exceed this limit, reducing occupancy and degrading performance.


\subsection{IO Complexity of Tiled ITNet}
\label{app:io_complexity}

\begin{proposition}[IO Complexity  -  Full Statement]
\label{prop:io_full}
Let $\ell_\kappa$ denote the number of MLP layers in $\kappa_\theta$ and $w_\kappa$ the hidden width. Let $B_q$ and $B_k$ be the query and key tile sizes satisfying the SRAM constraint $(B_q + B_k) \cdot d \leq S_\mathrm{SRAM}$. The tiled forward pass (Algorithm~\ref{alg:tiled_full}) requires:
\begin{equation}
\Theta\!\left(\frac{n^2 d}{B_k}\right) \;\text{HBM reads}
\quad\text{and}\quad
\Theta\!\left(\frac{n^2 w_\kappa \ell_\kappa}{S_\mathrm{SRAM}}\right) \;\text{FLOPs}.
\end{equation}
\end{proposition}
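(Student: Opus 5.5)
The plan is to count the work of Algorithm~\ref{alg:tiled_full} tile by tile, in the style of the FlashAttention IO analysis~\cite{dao2022flashattention}. I would prove the upper bounds from the explicit loop structure, and then get the matching lower bound on HBM traffic from a pebbling-type argument.

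I would fix the loop order of Phase~2: an outer loop over the $n/B_q$ query tiles and an inner loop over the $n/B_k$ key tiles. For each query tile the algorithm loads $X_i, U_i$ once ($O(B_q d)$ words) and keeps the accumulator in SRAM. In every inner iteration it streams one key tile $X_j, U_j$ ($O(B_k d)$ words). The kernel MLP weights ($O(w_\kappa(6L_f+1+3d_h) + w_\kappa d_h^2)$ words, independent of $n$) are loaded once per query tile, or kept resident. Summing gives $(n/B_q)\bigl(B_q d + (n/B_k) B_k d\bigr) = O(n d + n^2 d / B_q)$ words of HBM traffic, plus a lower-order weight term. The bound is symmetric in the two tile sizes: swapping the loop order, with the key tile resident and query tiles streamed, gives $n^2 d/B_k$. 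Phase~1 auto-tunes $B_q^* = B_k^*$ (this is what the profiling section reports, $64 = 64$), so the two expressions agree and we get $O(n^2 d / B_k)$. I would state explicitly that the $\Theta$ is taken at the balanced tiling, or equivalently with $B_k$ the tile size of the resident (outer) operand.

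For FLOPs, each pair $(p,q)$ in a tile requires one kernel MLP evaluation. That costs $\Theta(w_\kappa \ell_\kappa)$ per input or output coordinate, and the $d$-dependence is absorbed into the per-layer widths. Adding the accumulation $K_{pq} U_j[q]\omega_j$ gives a total work of $\Theta(n^2 w_\kappa \ell_\kappa \cdot c_d)$. The SRAM constraint $(B_q+B_k)d \le S_\mathrm{SRAM}$ means each tile pass processes $\Theta(S_\mathrm{SRAM}/d)$ rows at once. So I would read the displayed FLOP count as the number of SRAM-resident fused MLP batches (tile-level operations), $\Theta(n^2/(B_q B_k))$ tile pairs times the per-pair work, normalized by the tile capacity. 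I would write this normalization out carefully, since the proposition's FLOP expression only makes sense in those units.

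For the lower bound on reads, I would use a Hong--Kung red--blue pebble argument, as in FlashAttention. Every pair $(i,j)$ needs $u_i$ and $u_j$ simultaneously on chip, because $\kappa_\theta$ depends jointly on both and is not factorizable in general (this is the non-bilinear case). Within a window of $\Theta(S_\mathrm{SRAM})$ loaded words, at most $O(B_q B_k)$ new pairs can be covered. Since the $n^2$ pairs must all be covered, the number of reads is $\Omega(n^2 d / B_k)$ at the balanced tiling. I expect this step to be the main obstacle: the pebbling argument must rule out recomputation tricks and must handle the MLP weights, whose cost is independent of $n$ but not negligible for small $n$. The FLOP bound's unit convention also has to be stated consistently with the read count.
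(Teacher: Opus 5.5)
Your HBM-read count matches the paper's. Its proof tallies $nd$ query words plus $(n/B_q)(n/B_k)\cdot B_k d=n^2d/B_q$ key words for the query-outer loop of Algorithm~\ref{alg:tiled_full}, and concludes $\Theta(n^2d/B_q)$. So the $B_k$ in the statement is recovered only under your added hypothesis $B_q\asymp B_k$. Phase~1 picks the fastest admissible pair and does not force equality, so this must be assumed, not derived.

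The genuine gap is the FLOP half, which your plan does not prove. It re-reads the displayed expression in ad hoc units (``tile-level operations normalized by tile capacity'') until it fits, which changes the statement rather than establishing it. Your own intermediate count, $n^2$ pairs times a per-pair cost, already \emph{is} the FLOP total. For the kernel MLP of \S\ref{sec:operator_def}, the per-pair cost is:
\begin{itemize}
\item $w_\kappa(6L_f+1+3d)$ for the first layer;
\item $w_\kappa^2$ per interior layer;
\item $w_\kappa d^2$ for the output layer;
\item $d^2$ for the matrix--vector product.
\end{itemize}
So the $d$-dependence cannot be absorbed into the widths, and the total is $\Theta\bigl(n^2w_\kappa(L_f+(\ell_\kappa-2)w_\kappa+d^2)\bigr)$. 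This total does not depend on $B_q$, $B_k$ or $S_{\mathrm{SRAM}}$, because the algorithm does the same work for every one of the $n^2$ pairs however they are tiled; tiling changes traffic, not arithmetic.

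Hence $\Theta(n^2w_\kappa\ell_\kappa/S_{\mathrm{SRAM}})$ is false as a FLOP count. Since $S_{\mathrm{SRAM}}\ge(B_q+B_k)d\ge 2d$, the claimed quantity is $O(n^2w_\kappa\ell_\kappa/d)$, asymptotically below even the $n^2d^2$ cost of the matrix--vector products. The paper's proof does the per-pair count and concludes $O(n^2(w_\kappa\ell_\kappa+d^2))=O(n^2d^2)$, which is also not the displayed formula. By pricing the MLP at $O(w_\kappa\ell_\kappa)$, it also misses that the output layer, not the matvec, dominates. The defensible move is to replace the FLOP claim by the per-pair count, not to re-normalize it.

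There are two secondary problems.

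First, the Hong--Kung step is unnecessary and aimed at a different claim. The proposition concerns one fixed algorithm whose reads you count exactly, so $\Theta$ follows from the count itself; that is all the paper does. A pebbling bound ranges over all schedules. Under your premise it would give $\Omega(n^2d^2/S_{\mathrm{SRAM}})$, since a segment of $S_{\mathrm{SRAM}}$ reads makes at most $O((S_{\mathrm{SRAM}}/d)^2)$ pairs available. That matches $n^2d/B_k$ only for maximal tiles $B_q\asymp B_k\asymp S_{\mathrm{SRAM}}/d$. Moreover, the premise that each pair needs all of $u_i$ and $u_j$ on chip simultaneously is not forced: the first MLP layer, Hadamard block included, can be accumulated one coordinate at a time.

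Second, calling the weight traffic lower order assumes the kernel-MLP weights stay resident across the inner loop. Nothing in the hypothesis budgets SRAM for them: the constraint $(B_q+B_k)d\le S_{\mathrm{SRAM}}$ covers only the feature tiles, while $W_2$ alone has $w_\kappa d^2$ entries. In the paper's own configuration that is about $5\times 10^5$ entries per head, against roughly 228KB of which the $64+64$ tiles already use 192KB. If the weights must be re-streamed for each tile pair, their traffic is $\Theta(n^2w_\kappa d^2/(B_qB_k))$. That exceeds the $n^2d/B_q$ feature traffic by a factor $w_\kappa d/B_k$. The paper's count simply ignores weight traffic; you should state that modeling assumption explicitly rather than argue the term away.
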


\begin{proof}
Each query tile $U_i \in \R^{B_q \times d}$ is loaded once per outer iteration: $n/B_q$ outer iterations $\times B_q d$ elements $= nd$ reads for all queries. Each key tile $U_j \in \R^{B_k \times d}$ is loaded once per inner iteration for each outer tile: $(n/B_q)(n/B_k) \times B_k d = n^2 d / B_q$ key reads in total. The total HBM reads are $nd + n^2 d / B_q = \Theta(n^2 d / B_q)$ for large $n$.

For FLOPs: each pair $(p, q)$ in a tile requires one MLP forward pass costing $O(w_\kappa \ell_\kappa)$ multiply-adds, plus one $d \times d$ matrix-vector product costing $O(d^2)$. There are $n^2$ total pairs, giving $O(n^2 (w_\kappa \ell_\kappa + d^2))$ total FLOPs. Since $w_\kappa \ell_\kappa$ is typically comparable to $d^2$ (e.g., $w_\kappa = 128$, $\ell_\kappa = 2$ gives $w_\kappa \ell_\kappa = 256$ vs. $d^2 = 768^2$ for the matrix-vector product), the matrix-vector product dominates, and total FLOPs are $O(n^2 d^2)$.
\end{proof}


\subsection{Tiled Forward Pass}
\label{app:tiled_forward}

Algorithm~\ref{alg:tiled_full} gives the complete tiled forward pass. Phase~1 profiles all valid $(B_q, B_k)$ combinations against the SRAM budget and selects the fastest; Phase~2 streams query and key tiles through SRAM, evaluating the kernel MLP and accumulating the weighted integral without ever materializing the full $n \times n$ kernel matrix. The quadrature weight $\omega_j = \mu(\{x_j\})$ (typically $1/n$) appears on line~14. The residual $W_\theta U_i$ is fused into the write-back to avoid an extra HBM round-trip.

\begin{algorithm}[h]
\caption{ITNet Tiled Forward Pass with Auto-Tuning}
\label{alg:tiled_full}
\small
\renewcommand{\arraystretch}{0.72}
\begin{algorithmic}[1]
\Require Signal $U \in \R^{n \times d}$, positions $X \in \R^{n \times s}$, kernel MLP $\kappa_\theta$, local weight $W_\theta \in \R^{d \times d}$, SRAM capacity $S_\mathrm{SRAM}$
\Ensure Output $O \in \R^{n \times d}$
\Statex \textbf{Phase 1: Auto-tune block sizes}
\For{$B_q \in \{16, 32, 64, 128\}$}
    \For{$B_k \in \{16, 32, 64, 128\}$}
        \If{$(B_q + B_k) \cdot d \cdot \mathrm{bytes\_per\_element} \leq S_\mathrm{SRAM}$}
            \State Profile tiled kernel with $(B_q, B_k)$ on a warmup batch
        \EndIf
    \EndFor
\EndFor
\State Select $(B_q^*, B_k^*)$ with minimum measured runtime
\Statex \textbf{Phase 2: Tiled forward pass}
\State $O \leftarrow \mathbf{0}_{n \times d}$
\For{$i = 0$ \textbf{to} $n - B_q^*$ \textbf{step} $B_q^*$} \Comment{Outer loop: query tiles}
    \State \textbf{Load} $U_i \leftarrow U[i{:}i{+}B_q^*]$, $X_i \leftarrow X[i{:}i{+}B_q^*]$ to SRAM
    \State $\mathrm{acc}_i \leftarrow \mathbf{0}_{B_q^* \times d}$
    \For{$j = 0$ \textbf{to} $n - B_k^*$ \textbf{step} $B_k^*$} \Comment{Inner loop: key tiles}
        \State \textbf{Load} $U_j \leftarrow U[j{:}j{+}B_k^*]$, $X_j \leftarrow X[j{:}j{+}B_k^*]$ to SRAM
        \For{$p = 1, \ldots, B_q^*$}
            \For{$q = 1, \ldots, B_k^*$}
                \State $K_{pq} \leftarrow \kappa_\theta(X_i[p],\, X_j[q],\, U_i[p],\, U_j[q])$ \Comment{MLP eval in registers}
                \State $\mathrm{acc}_i[p] \leftarrow \mathrm{acc}_i[p] + K_{pq} \cdot U_j[q] \cdot \omega_j$ \Comment{Weighted quadrature}
            \EndFor
        \EndFor
    \EndFor
    \State $O[i{:}i{+}B_q^*] \leftarrow \mathrm{acc}_i + W_\theta\, U_i$
    \State \textbf{Write} $O[i{:}i{+}B_q^*]$ to HBM
\EndFor
\end{algorithmic}
\end{algorithm}

\subsection{Tiled Backward Pass with Gradient Checkpointing}
\label{app:backward}

The forward pass (Algorithm~\ref{alg:tiled_full}) never materialises the full $n \times n \times d \times d$ kernel matrix: only a $B_q^* \times B_k^*$ tile of kernel evaluations resides in SRAM at any time.
A na\"ive backward pass would require access to all $n^2$ kernel values $K_{ij} = \kappa_\theta(x_i, y_j, u(x_i), u(y_j))$ to compute the gradients, which would either require storing the full kernel matrix during the forward pass ($O(n^2 d^2)$ memory) or recomputing the entire kernel from scratch.
Following the gradient checkpointing strategy of~\cite{chen2016training} - also employed by FlashAttention~\cite{dao2022flashattention} for the attention matrix - we adopt a \emph{tiled recomputation} approach: during the backward pass, we iterate over the same tile structure as the forward pass, recompute the kernel MLP $\kappa_\theta$ for each tile on-the-fly, and immediately use the recomputed values to accumulate gradients before discarding them.

Let $\mathcal{L}$ denote the scalar loss.
The forward pass computes, for each query position $x_i$:
\begin{equation}
O_i = (\Kop[u])(x_i) = \sum_{j=1}^{n} \omega_j \cdot \kappa_\theta(x_i, y_j, u_i, u_j) \cdot u_j + W_\theta \, u_i,
\label{eq:forward_discrete}
\end{equation}
where $\omega_j$ is the quadrature weight for position $y_j$ and we write $u_i = u(x_i)$, $u_j = u(y_j)$ for brevity.
The backward pass must compute three quantities:
\begin{enumerate}[label=(\roman*), noitemsep]
\item $\partial \mathcal{L} / \partial u_i$ for all $i = 1, \ldots, n$ (gradient with respect to input features, for backpropagation to earlier layers),
\item $\partial \mathcal{L} / \partial \theta$ (gradient with respect to kernel MLP parameters, for weight updates), and
\item $\partial \mathcal{L} / \partial W_\theta$ (gradient with respect to the residual matrix).
\end{enumerate}

\noindent Applying the chain rule to Eq.~\eqref{eq:forward_discrete}, we obtain:

\textbf{(i) Gradient with respect to kernel evaluations.}
Define $K_{ij} = \kappa_\theta(x_i, y_j, u_i, u_j) \in \R^{d \times d}$.
From Eq.~\eqref{eq:forward_discrete}:
\begin{equation}
\frac{\partial \mathcal{L}}{\partial K_{ij}}
= \omega_j \cdot \frac{\partial \mathcal{L}}{\partial O_i} \cdot u_j^\top
\;\in\; \R^{d \times d},
\label{eq:grad_K}
\end{equation}
where $\partial \mathcal{L} / \partial O_i \in \R^d$ is the upstream gradient at query position $i$ (received from the subsequent layer).

\textbf{(ii) Gradient with respect to input features.}
Each input feature $u_i$ participates in Eq.~\eqref{eq:forward_discrete} in three ways: (a)~as a query feature (through the kernel's dependence on $u(x_i)$), (b)~as a key feature (through the kernel's dependence on $u(y_j)$ when $j = i$), and (c)~through the residual term $W_\theta u_i$.
Combining all contributions:
\begin{equation}
\frac{\partial \mathcal{L}}{\partial u_i}
= \underbrace{\sum_{j=1}^{n} \omega_j \cdot \frac{\partial \kappa_\theta}{\partial u(x_i)}\bigg|_{(x_i, y_j)} \!\!\cdot u_j \cdot \frac{\partial \mathcal{L}}{\partial O_i}}_{\text{(a) as query in row } i}
+ \underbrace{\sum_{k=1}^{n} \omega_i \cdot K_{ki}^\top \cdot \frac{\partial \mathcal{L}}{\partial O_k}}_{\text{(b) as key in column } i}
+ \underbrace{W_\theta^\top \cdot \frac{\partial \mathcal{L}}{\partial O_i}}_{\text{(c) residual}}.
\label{eq:grad_u}
\end{equation}

In practice, the first term (query-side kernel Jacobian) is expensive to compute exactly because $\partial \kappa_\theta / \partial u(x_i)$ is a tensor of shape $d \times d \times d$.
We compute it efficiently using automatic differentiation through the kernel MLP, which is feasible because the MLP is small ($w_\kappa = 128$, $\ell_\kappa = 2$).

\textbf{(iii) Gradient with respect to kernel MLP parameters.}
\begin{equation}
\frac{\partial \mathcal{L}}{\partial \theta}
= \sum_{i=1}^{n} \sum_{j=1}^{n}
\mathrm{tr}\!\left(
\left(\frac{\partial \mathcal{L}}{\partial K_{ij}}\right)^\top
\cdot \frac{\partial \kappa_\theta(x_i, y_j, u_i, u_j)}{\partial \theta}
\right),
\label{eq:grad_theta}
\end{equation}
where $\partial \kappa_\theta / \partial \theta$ is computed by standard backpropagation through the kernel MLP.

\textbf{(iv) Gradient with respect to the residual matrix.}
\begin{equation}
\frac{\partial \mathcal{L}}{\partial W_\theta}
= \sum_{i=1}^{n} \frac{\partial \mathcal{L}}{\partial O_i} \cdot u_i^\top
\;\in\; \R^{d \times d}.
\label{eq:grad_W}
\end{equation}

The key observation is that all four gradient computations (Eqs.~\eqref{eq:grad_K}--\eqref{eq:grad_W}) involve double sums over pairs $(i, j)$ that can be decomposed into tile-level partial sums, exactly matching the tiling structure of the forward pass.
For each tile pair $(i, j)$ with $i$ indexing a query tile and $j$ indexing a key tile:
\begin{enumerate}[noitemsep]
\item Reload $U_i, X_i$ (query tile) and $U_j, X_j$ (key tile) from HBM to SRAM.
\item \textbf{Recompute} $K_{pq} = \kappa_\theta(X_i[p], X_j[q], U_i[p], U_j[q])$ for all $p \in [1, B_q^*]$, $q \in [1, B_k^*]$ by running the kernel MLP forward pass again. This is the recomputation step that avoids storing $K$.
\item Run the kernel MLP backward pass to obtain $\partial \kappa_\theta / \partial \theta$ and $\partial \kappa_\theta / \partial u$ for each pair in the tile.
\item Accumulate the tile's contribution to $\partial \mathcal{L} / \partial \theta$ (Eq.~\eqref{eq:grad_theta}), $\partial \mathcal{L} / \partial U_i$ (query-side gradient), and $\partial \mathcal{L} / \partial U_j$ (key-side gradient).
\item Discard the tile's kernel values and MLP activations.
\end{enumerate}

Algorithm~\ref{alg:tiled_backward} formalizes this procedure.

\begin{algorithm}[h]
\caption{ITNet Tiled Backward Pass (Gradient Checkpointing)}
\label{alg:tiled_backward}
\begin{algorithmic}[1]
\Require Forward outputs $O \in \R^{n \times d}$, inputs $U \in \R^{n \times d}$, positions $X \in \R^{n \times s}$, upstream gradient $\partial \mathcal{L}/\partial O \in \R^{n \times d}$, kernel MLP $\kappa_\theta$, residual $W_\theta$, block sizes $B_q^*, B_k^*$
\Ensure Gradients $\partial \mathcal{L}/\partial U \in \R^{n \times d}$, $\partial \mathcal{L}/\partial \theta$, $\partial \mathcal{L}/\partial W_\theta \in \R^{d \times d}$
\State $\partial \mathcal{L}/\partial U \leftarrow \mathbf{0}_{n \times d}$;\; $\partial \mathcal{L}/\partial \theta \leftarrow 0$;\; $\partial \mathcal{L}/\partial W_\theta \leftarrow \mathbf{0}_{d \times d}$
\For{$i = 0,\, B_q^*,\, 2B_q^*,\, \ldots,\, n - B_q^*$} \Comment{Outer loop: query tiles}
    \State \textbf{Load} $U_i, X_i, \partial \mathcal{L}/\partial O_i$ from HBM to SRAM
    \State $\partial \mathcal{L}/\partial U_i \leftarrow \mathbf{0}_{B_q^* \times d}$
    \For{$j = 0,\, B_k^*,\, 2B_k^*,\, \ldots,\, n - B_k^*$} \Comment{Inner loop: key tiles}
        \State \textbf{Load} $U_j, X_j$ to SRAM
        \State \textbf{Recompute} $K_{pq} \leftarrow \kappa_\theta(X_i[p],\, X_j[q],\, U_i[p],\, U_j[q])$ \textbf{for all} $p \in [1, B_q^*],\, q \in [1, B_k^*]$
        \For{$p = 1, \ldots, B_q^*$;\; $q = 1, \ldots, B_k^*$}
            \State $\partial \mathcal{L}/\partial K_{pq} \leftarrow \omega_j \cdot (\partial \mathcal{L}/\partial O_i[p]) \cdot U_j[q]^\top$ \Comment{Eq.~\eqref{eq:grad_K}}
            \State $\partial \mathcal{L}/\partial U_i[p] \mathrel{+}= K_{pq}^\top \cdot (\partial \mathcal{L}/\partial O_i[p]) \cdot \omega_j$ \Comment{Key-side contrib.\ to query grad}
            \State $\partial \mathcal{L}/\partial U_j[q] \mathrel{+}= K_{pq} \cdot (\partial \mathcal{L}/\partial O_i[p]) \cdot \omega_j$ \Comment{Query-side contrib.\ to key grad}
            \State $\partial \mathcal{L}/\partial \theta \mathrel{+}= \mathrm{tr}\bigl((\partial \mathcal{L}/\partial K_{pq})^\top \cdot \partial \kappa_\theta / \partial \theta\bigr)$ \Comment{MLP param grad via backprop}
        \EndFor
    \EndFor
    \State $\partial \mathcal{L}/\partial W_\theta \mathrel{+}= (\partial \mathcal{L}/\partial O_i) \cdot U_i^\top$ \Comment{Residual matrix gradient, Eq.~\eqref{eq:grad_W}}
    \State \textbf{Write} $\partial \mathcal{L}/\partial U_i$ to $\partial \mathcal{L}/\partial U[i{:}i{+}B_q^*]$ in HBM
\EndFor
\end{algorithmic}
\end{algorithm}

The tiled backward pass performs exactly $n^2 / (B_q^* \cdot B_k^*)$ tile iterations, matching the forward pass.
Each tile iteration involves:
(a)~one forward MLP pass per pair to recompute $K_{pq}$ ($B_q^* \cdot B_k^* \cdot O(w_\kappa \ell_\kappa)$ FLOPs),
(b)~one backward MLP pass per pair to compute $\partial \kappa_\theta / \partial \theta$ ($\approx 2 \times$ the forward cost), and
(c)~matrix-vector products for the gradient accumulations ($O(d^2)$ per pair).
The total backward FLOP count is therefore $\approx 3 \times$ the forward pass (one recomputation + one backward through the MLP + gradient accumulations), compared to $1 \times$ if the kernel matrix were stored.
However, peak memory is reduced from $O(n^2 d^2)$ (storing the full kernel matrix) to $O(nd + B_q^* \cdot B_k^* \cdot d^2)$ (storing only inputs, outputs, and one tile of kernel values), which is $O(nd)$ for fixed tile sizes.

For the model sizes and sequence lengths considered ($n \leq 1024$, $d = 768$), the naive approach requires $O(n^2 d^2)$ memory for the kernel matrix, which is prohibitive in practice. 
The tiled recomputation strategy reduces this to $O(B_q B_k d)$ memory per tile, enabling the backward pass to be computed within a similar memory budget as the forward pass.

The tiled backward pass is implemented as a single Triton~\cite{tillet2019triton} kernel that fuses the MLP recomputation, MLP backward, and gradient accumulation into one launch per tile pair.
The kernel MLP's backward pass uses the standard automatic differentiation tape, which is created and destroyed within each tile iteration (no cross-tile tape storage).
The $\partial \mathcal{L} / \partial \theta$ accumulation uses atomic additions across tiles to avoid race conditions when multiple tiles contribute to the same MLP parameter gradient; in practice, the atomics have negligible overhead because the number of parameter gradient accumulations is much smaller than the compute per tile.


\subsection{Monte Carlo Variance Analysis}
\label{app:mc_variance}

\begin{proposition}[Variance of Importance-Weighted Estimator - Full Proof]
\label{prop:mc_var_full}
For a discrete domain with empirical measure $\mu(y) = \frac{1}{n}\sum_{j=1}^n \delta_{y_j}$, the importance-weighted Monte Carlo estimator
\begin{equation}
(\widehat{\Kop}[u])(x_i) = \frac{1}{M} \sum_{m=1}^{M} \frac{\kap(x_i, y_m, u(x_i), u(y_m)) \cdot u(y_m)}{p_\theta(y_m \mid x_i)} + W_\theta u(x_i), \qquad y_m \sim p_\theta(\cdot \mid x_i)
\end{equation}
satisfies $\mathbb{E}[(\widehat{\Kop}[u])(x_i)] = (\Kop[u])(x_i)$ (unbiased) and has variance
\begin{equation}
\mathrm{Var}[(\widehat{\Kop}[u])(x)] = \frac{1}{M} \left(\int_\Omega \frac{\|\kap(x,y,u(x),u(y)) \cdot u(y)\|_2^2}{p_\theta(y \mid x)} \, d\mu(y) - \|(\Kop[u])(x) - W_\theta u(x)\|_2^2 \right).
\label{eq:app_var_exact}
\end{equation}
The optimal proposal that minimizes this variance is
\begin{equation}
p_\theta^*(y \mid x) = \frac{\|\kap(x, y, u(x), u(y)) \cdot u(y)\|_2}{\int_\Omega \|\kap(x, z, u(x), u(z)) \cdot u(z)\|_2 \, d\mu(z)},
\label{eq:app_optimal_proposal}
\end{equation}
achieving minimal variance
\begin{equation}
\mathrm{Var}^*[(\widehat{\Kop}[u])(x)] = \frac{1}{M} \left[\left(\int_\Omega \|\kap \cdot u(y)\|_2 \, d\mu(y)\right)^2 - \|(\Kop[u])(x) - W_\theta u(x)\|_2^2 \right].
\label{eq:app_min_var}
\end{equation}
In practice, we enforce $p_\theta(y|x) > 0$ everywhere via $p_\theta(y|x) = (1-\varepsilon) p_\theta^{\text{(MLP)}}(y|x) + \varepsilon / n$ with $\varepsilon = 0.01$.
\end{proposition}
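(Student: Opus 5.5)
The plan is to treat the estimator as an average of $M$ i.i.d.\ vector-valued random variables and apply the standard importance-sampling identities coordinate-wise. Fix the query $x=x_i$ and write $g(y) \coloneqq \kap(x,y,u(x),u(y))\,u(y) \in \R^d$ and $I \coloneqq \int_\Omega g(y)\,d\mu(y) = (\Kop[u])(x) - W_\theta u(x)$. Since $W_\theta u(x)$ is deterministic, it shifts the mean and leaves the variance unchanged, so it suffices to study $\hat I = \frac1M\sum_m g(y_m)/p_\theta(y_m\mid x)$.

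Before any computation I will fix the conventions, reading $p_\theta(\cdot\mid x)$ as a density with respect to $\mu$, so that $\E_{y\sim p}[f(y)] = \int f(y)\,p(y\mid x)\,d\mu(y)$. For the empirical measure this means the sampling probability of $y_j$ is $p(y_j\mid x)/n$, and I will note this as the normalisation under which the stated formulas hold. If $p$ is instead a probability mass function, the same argument goes through with $d\mu$ replaced by counting measure, at the cost of a factor $1/n$. I will also interpret $\mathrm{Var}$ of a vector as the trace of the covariance, $\E\|\hat I - I\|_2^2$, since that is the only reading under which the stated scalar formula makes sense. Positivity of $p$ wherever $g\neq 0$ is needed for the ratio to be defined and for unbiasedness; the mixture with $\varepsilon/n$ guarantees it.

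The proof then proceeds in three steps. For unbiasedness, $\E[g(y)/p(y\mid x)] = \int g\,d\mu = I$, and linearity of expectation over the $M$ samples gives the result. For the variance formula, independence gives $\E\|\hat I - I\|^2 = \frac1M \E\|g/p - I\|^2 = \frac1M\big(\E\|g/p\|^2 - \|I\|^2\big)$, and $\E\|g/p\|^2 = \int \|g\|^2/p\,d\mu$, which is exactly the displayed expression. For optimality, I minimise $\int \|g\|^2/p\,d\mu$ subject to $\int p\,d\mu = 1$ and $p\ge 0$. By Cauchy--Schwarz, $\big(\int\|g\|\,d\mu\big)^2 = \big(\int (\|g\|/\sqrt p)\sqrt p\,d\mu\big)^2 \le \int \|g\|^2/p\,d\mu \cdot \int p\,d\mu$, with equality iff $p \propto \|g\|$. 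This yields both the optimal proposal and the minimal variance; a Lagrange-multiplier argument would give the same answer.

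The main obstacle is the bookkeeping of conventions rather than the analysis itself. I must make the density-versus-$\mu$ normalisation explicit, handle points where $g=0$ (where the ratio is set to $0$ and $p^*$ vanishes, so the $\varepsilon$-mixture is not the exact optimum but only a perturbation of it), and treat the degenerate case $\int\|g\|\,d\mu = 0$, in which $p^*$ is undefined but any proposal gives zero variance.
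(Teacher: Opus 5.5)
Your proposal is correct and follows the paper's proof step for step: the same $g$ and $I=(\Kop[u])(x)-W_\theta u(x)$, the same i.i.d.\ identity $\frac{1}{M}\bigl(\E\|g/p_\theta\|_2^2-\|I\|_2^2\bigr)$, and the same Cauchy--Schwarz argument with equality iff $p\propto\|g\|_2$. Your explicit conventions are the ones the paper uses without stating them: $p_\theta$ as a density with respect to $\mu$, $\mathrm{Var}$ as $\E\|\hat I-I\|_2^2$, and $p_\theta>0$ wherever $g\neq 0$. They are worth keeping, because the $\varepsilon/n$ mixture in the statement uses a probability-mass normalisation, and under that normalisation the displayed estimator would be biased by a factor of $n$.
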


\begin{proof}
Define $g(y) = \kap(x, y, u(x), u(y)) \cdot u(y)$ and $I = \int_\Omega g(y) \, d\mu(y) = (\Kop[u])(x) - W_\theta u(x)$.
Each sample $y_m \sim p_\theta$ contributes the importance-weighted estimator $\hat{g}_m = g(y_m) / p_\theta(y_m \mid x)$, which satisfies $\mathbb{E}_{y_m \sim p_\theta}[\hat{g}_m] = \int_\Omega g(y) \, d\mu(y) = I$ by the definition of importance sampling.

The variance of the average of $M$ i.i.d. samples is:
\begin{align}
\mathrm{Var}\left[\frac{1}{M} \sum_{m=1}^M \hat{g}_m\right]
&= \frac{1}{M} \mathrm{Var}[\hat{g}_1] \\
&= \frac{1}{M} \left(\mathbb{E}[\|\hat{g}_1\|_2^2] - \|I\|_2^2\right) \\
&= \frac{1}{M} \left(\int_\Omega \frac{\|g(y)\|_2^2}{p_\theta(y \mid x)} \, d\mu(y) - \|I\|_2^2\right).
\end{align}
Adding $W_\theta u(x)$ (deterministic) does not affect variance, establishing \eqref{eq:app_var_exact}.

To find the optimal $p_\theta^*$, we minimize $\int \|g(y)\|_2^2 / p(y) \, d\mu(y)$ subject to $\int p(y) \, d\mu(y) = 1$ and $p(y) \geq 0$. By the Cauchy-Schwarz inequality~\cite{rudin2021principles} (or Lagrange multipliers):
\[
\int \frac{\|g(y)\|_2^2}{p(y)} d\mu(y) \cdot \int p(y) d\mu(y) \geq \left(\int \|g(y)\|_2 d\mu(y)\right)^2,
\]
with equality iff $p(y) \propto \|g(y)\|_2$. Thus the minimum is achieved when $p^*(y) = \|g(y)\|_2 / \int \|g(z)\|_2 d\mu(z)$, yielding
\[
\int_\Omega \frac{\|g(y)\|_2^2}{p^*(y)} d\mu(y) = \left(\int_\Omega \|g(y)\|_2 \, d\mu(y)\right)^2.
\]
Substituting back gives \eqref{eq:app_min_var}. The support condition $p_\theta(y|x) > 0$ is enforced via the $\varepsilon$-mixture described above.
\end{proof}

\subsection{Low-Rank Approximation Error Bound}
\label{app:rank_bound}

\begin{proposition}[Nuclear Norm Bound on Low-Rank Error]
\label{prop:rank_bound}
Let $\kappa_\theta$ be the full-rank kernel and $\kappa_\theta^{(r)} = \Phi_\theta^\top \Psi_\theta$ the rank-$r$ factorization. For any finite evaluation set $\{(x_i, y_j)\}_{i,j=1}^n$, define the kernel matrix $K \in \mathbb{R}^{nd \times nd}$ with blocks $K_{ij} = \kappa_\theta(x_i, y_j, u(x_i), u(y_j)) \in \mathbb{R}^{d \times d}$. Then the best rank-$r$ approximation satisfies:
\begin{equation}
\|K - K^{(r)}\|_F \leq \frac{\|K\|_*}{\sqrt{r}},
\end{equation}
where $\|K\|_* = \sum_{i=1}^{nd} \sigma_i(K)$ is the nuclear norm and $\sigma_i$ are the singular values.
\end{proposition}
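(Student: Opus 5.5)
The plan is to reduce the statement to a purely spectral inequality about the singular values of the fixed block matrix $K \in \R^{nd \times nd}$; the kernel and the factorization $\Phi_\theta^\top \Psi_\theta$ play no further role. Here I read $K^{(r)}$ as the best rank-$r$ approximation in Frobenius norm, i.e.\ the truncated SVD. Any learned rank-$r$ factorization evaluated on the grid can only have Frobenius error at least as large, so the bound is stated for, and only applies to, the optimal approximant.

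Step 1 (Eckart--Young--Mirsky). Write the SVD $K = \sum_{i=1}^{nd} \sigma_i a_i b_i^\top$ with $\sigma_1 \geq \sigma_2 \geq \cdots \geq 0$. Set $K^{(r)} = \sum_{i \leq r} \sigma_i a_i b_i^\top$. By Eckart--Young--Mirsky this minimizes $\|K - B\|_F$ over all $B$ of rank at most $r$, and
\[
\|K - K^{(r)}\|_F^2 = \sum_{i > r} \sigma_i^2 .
\]
If $r \geq \mathrm{rank}(K)$ the left side is zero and there is nothing to prove, so assume $r < nd$.

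Step 2 (the main inequality). Split the nuclear norm as $\|K\|_* = a + b$, where $a = \sum_{i \leq r} \sigma_i$ is the head and $b = \sum_{i > r} \sigma_i$ is the tail.
\begin{itemize}
\item Monotonicity of the singular values gives $\sigma_{r+1} \leq \sigma_i$ for every $i \leq r$. Averaging over the head yields $\sigma_{r+1} \leq a/r$.
\item For the tail, every $i > r$ has $\sigma_i \leq \sigma_{r+1}$, so
\[
\sum_{i > r} \sigma_i^2 \;\leq\; \sigma_{r+1} \sum_{i > r} \sigma_i \;\leq\; \frac{a\, b}{r}.
\]
\item By AM--GM, $ab \leq (a+b)^2/4 = \|K\|_*^2/4$.
\end{itemize}
Combining these gives $\|K - K^{(r)}\|_F \leq \|K\|_* / (2\sqrt{r})$, which is stronger than the claimed $\|K\|_*/\sqrt{r}$ by a factor of two.

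Main obstacle. The only non-routine step is choosing the right way to trade the $\ell^2$ tail of the singular values against the $\ell^1$ norm. The naive estimate $\sum_{i>r} \sigma_i^2 \leq \big(\sum_{i>r} \sigma_i\big)^2$ produces no $1/r$ factor at all. The $1/\sqrt{r}$ decay appears only after coupling the head and the tail through the single quantity $\sigma_{r+1}$, as in Step 2. Everything else in the argument is bookkeeping.
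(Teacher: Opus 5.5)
Your proof is correct and follows the same route as the paper: Eckart--Young--Mirsky reduces the error to $\sum_{i>r}\sigma_i^2$, then $\sum_{i>r}\sigma_i^2 \le \sigma_{r+1}\sum_{i>r}\sigma_i$, with $r\,\sigma_{r+1}$ bounded by the head of the nuclear norm. The only difference is the last step: the paper bounds head and tail each by $\|K\|_*$ to get exactly $\|K\|_*/\sqrt{r}$, whereas your step $ab \le (a+b)^2/4$ gives the sharper constant $\|K\|_*/(2\sqrt{r})$ (attained when $K$ has exactly $2r$ equal nonzero singular values), and you also avoid the paper's auxiliary Cauchy--Schwarz display, which is stated in the wrong direction and is not needed.
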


\begin{proof}
By the Eckart-Young-Mirsky theorem~\citep{horn2012matrix}, the best rank-$r$ approximation in Frobenius norm is $K^{(r)} = \sum_{i=1}^r \sigma_i u_i v_i^\top$, and $\|K - K^{(r)}\|_F^2 = \sum_{i=r+1}^{\min(n,d)} \sigma_i^2$.

By Cauchy--Schwarz~\cite{rudin2021principles} applied to the tail singular values:
\begin{equation}
\sum_{i=r+1}^{nd} \sigma_i^2 \leq \left(\sum_{i=r+1}^{nd} \sigma_i\right)^2 / (nd - r).
\end{equation}

A tighter bound uses the relationship between nuclear and Frobenius norms. Since $\|K\|_* = \sum_{i=1}^{nd} \sigma_i$ and the tail sum $\sum_{i=r+1}^{nd} \sigma_i \leq \|K\|_*$, we have:
\begin{equation}
\|K - K^{(r)}\|_F = \sqrt{\sum_{i=r+1}^{nd} \sigma_i^2} \leq \sqrt{\max_{i > r} \sigma_i \cdot \sum_{i > r} \sigma_i} \leq \sqrt{\frac{\|K\|_*^2}{r}},
\end{equation}
where the last inequality uses $\sigma_{r+1} \leq \|K\|_* / r$ (since the first $r$ singular values each contribute at least $\sigma_{r+1}$ to the nuclear norm).
\end{proof}

\begin{remark}
In practice, we observe that the kernel matrices for trained ITNet models have rapidly decaying singular values. On ImageNet-1K features, $r = 32$ captures $>99\%$ of the Frobenius norm, and $r = 64$ captures $>99.9\%$, consistent with the $< 0.1\%$ accuracy gap.
\end{remark}

\section{Backpropagation for the ITNet Operator}
\label{app:backprop}

We derive the complete gradient computation for the ITNet operator in discrete form, using the notation established in Appendix~\ref{app:notation} throughout.
This section serves two purposes: (i)~to provide a self-contained reference for implementors, and (ii)~to make explicit how gradients flow through the content-dependent kernel - a non-standard computation that does not arise in convolution (where the kernel is content-independent) or standard attention (where the kernel has a specific softmax structure).
We then show how the general gradient specialises to each classical architecture (CNN, Transformer, RNN) under the kernel restrictions of Theorems~\ref{thm:conv}--\ref{thm:rnn}, providing a unified view of backpropagation across all three families.

\subsection{Setup and Notation}
\label{app:bp_notation}

Let the domain $\Omega$ contain $n$ discrete positions $\{x_1, \ldots, x_n\} \subset \R^s$.
Let $u : \Omega \to \R^d$ denote the input signal, with values $u(x_i) \in \R^d$ at each position.
Positions are collected as $\{x_i\}_{i=1}^n \subset \R^s$.
Let $\kap : \R^s \times \R^s \times \R^d \times \R^d \to \R^{d \times d}$ denote the learnable kernel (parameterised by $\theta$), $W_\theta \in \R^{d \times d}$ the residual matrix, and $\mu$ the measure on $\Omega$ with weights $\omega_j \coloneqq \mu(\{x_j\})$ (typically $\omega_j = 1/n$ for uniform measure).

We write $u_i \coloneqq u(x_i)$ and $u_j \coloneqq u(x_j)$ as shorthand when the context is clear.

\subsection{Forward Pass}
\label{app:bp_forward}

The discrete ITNet operator $\Kop$ computes, for each query position $x_i$:
\begin{equation}
(\Kop[u])(x_i)
= \sum_{j=1}^{n} \omega_j \cdot \kap(x_i, x_j, u(x_i), u(x_j)) \cdot u(x_j)
+ W_\theta \, u(x_i).
\label{eq:bp_forward}
\end{equation}

We define the following intermediate quantities:
\begin{align}
z_{ij} &\coloneqq [\gamma(x_i);\, \gamma(x_j);\, \gamma(x_i {-} x_j);\, \|x_i {-} x_j\|_2;\, u(x_i);\, u(x_j);\, u(x_i) \had u(x_j)] \in \R^{6L_f + 1 + 3d}, \label{eq:bp_input} \\[3pt]
K_{ij} &\coloneqq \kap(x_i, x_j, u(x_i), u(x_j)) = \mathrm{reshape}_{d \times d}\!\bigl(\mathrm{MLP}_\theta(z_{ij})\bigr) \in \R^{d \times d}, \label{eq:bp_kernel} \\[3pt]
M_{ij} &\coloneqq \omega_j \cdot K_{ij} \cdot u(x_j) \in \R^d, \label{eq:bp_message}
\end{align}
where $\gamma : \R^s \to \R^{2L_f}$ is the random Fourier feature map (Eq.~\eqref{eq:fourier_features}) and $\had$ denotes the Hadamard (elementwise) product.
The quantity $M_{ij}$ is the \emph{message} from key position $x_j$ to query position $x_i$: the kernel matrix $K_{ij}$ transforms the key feature $u(x_j)$, weighted by the measure $\omega_j$.

The forward pass is then:
\begin{equation}
\boxed{(\Kop[u])(x_i) = \sum_{j=1}^{n} M_{ij} + W_\theta \, u(x_i).}
\label{eq:bp_forward_compact}
\end{equation}

\begin{remark}[Comparison with attention forward]
In standard self-attention (\S\ref{app:proof_attn}), the message is $M_{ij}^\mathrm{attn} = \alpha(x_i, x_j) \cdot W_V u(x_j)$, where $\alpha(x_i, x_j) \in \R$ is a scalar softmax weight and $W_V \in \R^{d \times d}$ is a \emph{fixed} value projection.
In ITNet, $M_{ij} = \omega_j \cdot K_{ij} \cdot u(x_j)$, where $K_{ij} \in \R^{d \times d}$ is a \emph{pair-dependent} matrix.
The transformation applied to $u(x_j)$ varies in a $d^2$-dimensional space for each pair $(i, j)$, versus 1-dimensional for attention ($\alpha_{ij}$ only scales the fixed $W_V u(x_j)$).
\end{remark}

\subsection{Upstream Gradients}
\label{app:bp_upstream}

Let $\mathcal{L}$ be the scalar loss.
The upstream gradient from the subsequent layer is:
\begin{equation}
g_i \coloneqq \frac{\partial \mathcal{L}}{\partial (\Kop[u])(x_i)} \in \R^d, \qquad i = 1, \ldots, n.
\label{eq:bp_upstream}
\end{equation}

The backward pass must compute:
\begin{enumerate}[label=(\roman*), noitemsep]
\item $\partial \mathcal{L} / \partial u(x_j)$ for all $j$ (input feature gradients, for backpropagation to earlier layers),
\item $\partial \mathcal{L} / \partial \theta$ (kernel MLP parameter gradients, for weight updates),
\item $\partial \mathcal{L} / \partial W_\theta$ (residual matrix gradient).
\end{enumerate}

\subsection{Gradient with Respect to Kernel Outputs}
\label{app:bp_grad_kernel}

From $M_{ij} = \omega_j \cdot K_{ij} \cdot u(x_j)$ and $(\Kop[u])(x_i) = \sum_j M_{ij} + W_\theta u(x_i)$, the gradient of $\mathcal{L}$ with respect to each message is $\partial \mathcal{L} / \partial M_{ij} = g_i$.
Applying the chain rule for the matrix-vector product $M_{ij} = \omega_j K_{ij} u(x_j)$:

\begin{equation}
\boxed{
\frac{\partial \mathcal{L}}{\partial K_{ij}}
= \omega_j \cdot g_i \, u(x_j)^\top
\;\in\; \R^{d \times d}.
}
\label{eq:bp_grad_kernel}
\end{equation}

\begin{proof}[Derivation]
The $(a, b)$ entry of $K_{ij}$ contributes to $M_{ij}$ as:
$[M_{ij}]_a = \omega_j \sum_{b=1}^{d} [K_{ij}]_{ab} [u(x_j)]_b$.
Therefore:
$\frac{\partial \mathcal{L}}{\partial [K_{ij}]_{ab}}
= \omega_j \cdot [g_i]_a \cdot [u(x_j)]_b$,
which in matrix form gives Eq.~\eqref{eq:bp_grad_kernel}.
\end{proof}

\subsection{Gradient with Respect to Input Features}
\label{app:bp_grad_input}

Each input feature $u(x_j)$ participates in Eq.~\eqref{eq:bp_forward} in three distinct roles:

\begin{enumerate}[label=(\alph*)]
\item \textbf{Value role:} $u(x_j)$ appears directly in $M_{ij} = \omega_j K_{ij} u(x_j)$ as the vector being transformed by the kernel.
This contributes to \emph{every} output $(\Kop[u])(x_i)$ for $i = 1, \ldots, n$.

\item \textbf{Key-side kernel input:} $u(x_j)$ appears inside $K_{ij} = \kap(x_i, x_j, u(x_i), u(x_j))$ through the kernel MLP's dependence on the key feature.
This also contributes to every output.

\item \textbf{Query-side kernel input and residual:} When $j$ acts as the query position, $u(x_j)$ appears in the residual $W_\theta u(x_j)$ and inside $K_{jk} = \kap(x_j, x_k, u(x_j), u(x_k))$ as the query feature for all keys $k$.
\end{enumerate}

Combining all contributions:
\begin{equation}
\boxed{
\begin{aligned}
\frac{\partial \mathcal{L}}{\partial u(x_j)}
&= \underbrace{\sum_{i=1}^{n} \omega_j \cdot K_{ij}^\top \, g_i}_{\text{(a) value role}}
+ \underbrace{\sum_{i=1}^{n} \omega_j \cdot \left.\frac{\partial K_{ij}}{\partial u(x_j)}\right|_\mathrm{key}^{\!\top} \mathrm{vec}(g_i \, u(x_j)^\top)}_{\text{(b) key-side kernel gradient}}\\
&
+ \underbrace{\sum_{k=1}^{n} \omega_k \cdot \left.\frac{\partial K_{jk}}{\partial u(x_j)}\right|_\mathrm{query}^{\!\top} \mathrm{vec}(g_j \, u(x_k)^\top)}_{\text{(c) query-side kernel gradient}}
+ \underbrace{W_\theta^\top \, g_j}_{\text{(d) residual}}.
\end{aligned}
}
\label{eq:bp_grad_input}
\end{equation}

\paragraph{Term (a): Value role.}
Holding $K_{ij}$ fixed:
\begin{equation}
\text{Term (a)} = \sum_{i=1}^{n} \omega_j \cdot K_{ij}^\top \, g_i.
\label{eq:bp_term_a}
\end{equation}

\paragraph{Term (b): Key-side kernel gradient.}
The kernel $K_{ij}$ depends on $u(x_j)$ through the MLP input $z_{ij}$.
Specifically, $u(x_j)$ enters $z_{ij}$ directly as the key feature and through the Hadamard product $u(x_i) \had u(x_j)$.
The Jacobian of the MLP input with respect to the key feature is:
\begin{equation}
\frac{\partial z_{ij}}{\partial u(x_j)} = \begin{bmatrix}
\mathbf{0}_{2L_f \times d} \\
\mathbf{0}_{2L_f \times d} \\
\mathbf{0}_{2L_f \times d} \\
\mathbf{0}_{1 \times d} \\
\mathbf{0}_{d \times d} \\
\Id \\
\mathrm{diag}(u(x_i))
\end{bmatrix} \in \R^{(6L_f + 1 + 3d) \times d},
\label{eq:bp_dz_duj}
\end{equation}
where the two non-zero blocks are $\Id$ (from $\partial u(x_j) / \partial u(x_j)$) and $\mathrm{diag}(u(x_i))$ (from $\partial(u(x_i) \had u(x_j)) / \partial u(x_j)$).

By the chain rule:
\begin{equation}
\text{Term (b)} = \sum_{i=1}^{n} \omega_j \cdot \left(\frac{\partial \mathrm{vec}(K_{ij})}{\partial z_{ij}} \cdot \frac{\partial z_{ij}}{\partial u(x_j)}\right)^\top \mathrm{vec}(g_i \, u(x_j)^\top).
\label{eq:bp_term_b}
\end{equation}

\paragraph{Term (c): Query-side kernel gradient.}
When position $j$ acts as a query, $u(x_j)$ enters $K_{jk}$ through the query feature slot:
\begin{equation}
\frac{\partial z_{jk}}{\partial u(x_j)}\bigg|_\mathrm{query} = \begin{bmatrix}
\mathbf{0}_{2L_f \times d} \\
\mathbf{0}_{2L_f \times d} \\
\mathbf{0}_{2L_f \times d} \\
\mathbf{0}_{1 \times d} \\
\Id \\
\mathbf{0}_{d \times d} \\
\mathrm{diag}(u(x_k))
\end{bmatrix} \in \R^{(6L_f + 1 + 3d) \times d}.
\label{eq:bp_dz_duj_query}
\end{equation}

The contribution is:
\begin{equation}
\text{Term (c)} = \sum_{k=1}^{n} \omega_k \cdot \left(\frac{\partial \mathrm{vec}(K_{jk})}{\partial z_{jk}} \cdot \frac{\partial z_{jk}}{\partial u(x_j)}\right)^\top \mathrm{vec}(g_j \, u(x_k)^\top).
\label{eq:bp_term_c}
\end{equation}

\paragraph{Term (d): Residual.}
$\text{Term (d)} = W_\theta^\top \, g_j.$

\begin{remark}[Why Terms (b) and (c) are unique to ITNet]
In a standard Transformer, the gradient through $\alpha(x_i, x_j)$ with respect to $u(x_j)$ involves the softmax Jacobian composed with the linear projection $W_K$ - efficient and well-studied.
In ITNet, the analogous gradient involves the \emph{full MLP Jacobian} $\partial \mathrm{vec}(K_{ij}) / \partial z_{ij} \in \R^{d^2 \times (6L_f + 1 + 3d)}$.
This is more expensive per pair but provides a richer gradient signal, enabling the kernel to learn content-dependent \emph{matrix-valued} transformations.
In convolution, Terms~(b) and~(c) vanish entirely because the kernel is content-independent.
\end{remark}

\subsection{Gradient with Respect to Kernel MLP Parameters}
\label{app:bp_grad_theta}

The kernel MLP parameters $\theta$ are shared across all $(i, j)$ pairs:
\begin{equation}
\boxed{
\frac{\partial \mathcal{L}}{\partial \theta}
= \sum_{i=1}^{n} \sum_{j=1}^{n}
\mathrm{tr}\!\left( \omega_j \cdot u(x_j) \, g_i^\top \cdot \frac{\partial K_{ij}}{\partial \theta} \right).
}
\label{eq:bp_grad_theta}
\end{equation}

Let the kernel MLP have $\ell_\kappa$ layers with GELU activation $\sigma$:
\begin{align}
h^{(0)} &= z_{ij}, \\
h^{(l)} &= \sigma\!\bigl(W_l \, h^{(l-1)} + b_l\bigr), \qquad l = 1, \ldots, \ell_\kappa - 1, \\
\mathrm{vec}(K_{ij}) &= W_{\ell_\kappa} \, h^{(\ell_\kappa - 1)} + b_{\ell_\kappa},
\label{eq:bp_mlp_forward}
\end{align}
with $W_1 \in \R^{w_\kappa \times (6L_f+1+3d)}$, $W_l \in \R^{w_\kappa \times w_\kappa}$ for interior layers, and $W_{\ell_\kappa} \in \R^{d^2 \times w_\kappa}$.

The GELU derivative is:
\begin{equation}
\sigma'_\mathrm{GELU}(x) = \Phi(x) + x \, \phi(x), \qquad \Phi(x) = \tfrac{1}{2}[1 + \mathrm{erf}(x/\sqrt{2})], \quad \phi(x) = (2\pi)^{-1/2} e^{-x^2/2}.
\label{eq:bp_gelu}
\end{equation}

Define the output-layer error signal:
\begin{equation}
\delta^{(\ell_\kappa)} = \omega_j \cdot \mathrm{vec}(g_i \, u(x_j)^\top) \in \R^{d^2}.
\label{eq:bp_delta_output}
\end{equation}

Backpropagate through interior layers:
\begin{equation}
\delta^{(l)} = \bigl(W_{l+1}^\top \, \delta^{(l+1)}\bigr) \had \sigma'\!\bigl(W_l h^{(l-1)} + b_l\bigr), \qquad l = \ell_\kappa - 1, \ldots, 1.
\label{eq:bp_delta_hidden}
\end{equation}

The parameter gradients for a single pair $(i, j)$ are:
\begin{equation}
\frac{\partial \mathcal{L}}{\partial W_l}\bigg|_{(i,j)} = \delta^{(l)} \bigl(h^{(l-1)}\bigr)^\top, \qquad
\frac{\partial \mathcal{L}}{\partial b_l}\bigg|_{(i,j)} = \delta^{(l)}.
\label{eq:bp_mlp_grads}
\end{equation}

Accumulated over all $n^2$ pairs:
\begin{equation}
\frac{\partial \mathcal{L}}{\partial W_l} = \sum_{i=1}^{n} \sum_{j=1}^{n} \delta^{(l)}_{ij} \bigl(h^{(l-1)}_{ij}\bigr)^\top, \qquad
\frac{\partial \mathcal{L}}{\partial b_l} = \sum_{i=1}^{n} \sum_{j=1}^{n} \delta^{(l)}_{ij}.
\label{eq:bp_total_mlp}
\end{equation}

\subsection{Gradient with Respect to Residual Matrix}
\label{app:bp_grad_W}

\begin{equation}
\boxed{
\frac{\partial \mathcal{L}}{\partial W_\theta} = \sum_{i=1}^{n} g_i \, u(x_i)^\top \in \R^{d \times d}.
}
\label{eq:bp_grad_W}
\end{equation}

This is identical to the gradient of a standard linear layer and requires no kernel-specific computation.

\subsection{Special Case 1: Convolution (Theorem~\ref{thm:conv})}
\label{app:bp_conv}

Under the convolutional kernel $\kap(x_i, x_j, u(x_i), u(x_j)) = w(x_i - x_j) \cdot \Id$ (Theorem~\ref{thm:conv}), the kernel is a scalar function of displacement multiplied by the identity matrix.
The forward pass reduces to $(\Kop[u])(x_i) = (w \ast u)(x_i) + W_\theta u(x_i)$.

Gradient with respect to the scalar filter value.
Since $K_{ij} = w(x_i - x_j) \cdot \Id$ is constrained to scalar multiples of $\Id$, only the trace component of $\partial \mathcal{L}/\partial K_{ij}$ is free.
Defining $w_{ij} \coloneqq w(x_i - x_j)$:
\begin{equation}
\frac{\partial \mathcal{L}}{\partial w_{ij}}
= \mathrm{tr}\!\left(\frac{\partial \mathcal{L}}{\partial K_{ij}}\right)
= \omega_j \cdot g_i^\top u(x_j)
= \omega_j \cdot \langle g_i,\, u(x_j) \rangle.
\label{eq:bp_conv_grad_w}
\end{equation}
This is the standard convolution filter gradient: the cross-correlation between the upstream gradient and the input.
For a $k \times k$ filter with coefficients $\{f_m\}_{m \in \mathcal{N}}$, summing over all pairs with displacement $m$:
\begin{equation}
\frac{\partial \mathcal{L}}{\partial f_m} = \sum_{i=1}^{n} \omega_j \cdot g_i^\top u(x_i - mh), \qquad m \in \mathcal{N},
\label{eq:bp_conv_grad_filter}
\end{equation}
which is exactly the computation performed by \texttt{torch.nn.Conv2d}'s backward pass.

Gradient with respect to $u(x_j)$.
Since the convolutional kernel is content-independent, Terms~(b) and~(c) in Eq.~\eqref{eq:bp_grad_input} vanish:
\begin{equation}
\frac{\partial \mathcal{L}}{\partial u(x_j)}
= \sum_{i=1}^{n} \omega_j \cdot w(x_i - x_j) \cdot g_i + W_\theta^\top g_j
= \omega_j \cdot (w^{\ast} \ast g)(x_j) + W_\theta^\top g_j,
\label{eq:bp_conv_grad_u}
\end{equation}
where $w^{\ast}(x) \coloneqq w(-x)$ is the flipped filter.
This is the standard transposed convolution used in CNN backward passes.

The convolution special case eliminates $n^2$ MLP backward passes entirely.
Backward cost reduces from $O(n^2 d^2 + n^2 w_\kappa^2 \ell_\kappa)$ to $O(n|\mathcal{N}|d)$, recovering the known linear-in-$n$ cost.

\subsection{Special Case 2: Self-Attention (Theorem~\ref{thm:attn})}
\label{app:bp_attn}

Under the attention kernel $K_{ij} = \alpha(x_i, x_j) \cdot W_V$, where $\alpha(x_i, x_j) = \exp(Q(x_i)^\top K(x_j)/\sqrt{d_k})/Z(x_i)$, $Q(x_i) = W_Q u(x_i)$, $K(x_j) = W_K u(x_j)$.

Since $K_{ij} = \alpha_{ij} W_V$ with $W_V$ fixed, the gradient projects onto the scalar $\alpha_{ij}$:
\begin{equation}
\frac{\partial \mathcal{L}}{\partial \alpha_{ij}}
= \omega_j \cdot g_i^\top W_V u(x_j).
\label{eq:bp_attn_grad_alpha}
\end{equation}

The softmax Jacobian relates $\partial \mathcal{L}/\partial \alpha_{ij}$ to the logit gradient $e_{ij} \coloneqq Q(x_i)^\top K(x_j)/\sqrt{d_k}$:
\begin{equation}
\frac{\partial \mathcal{L}}{\partial e_{ij}}
= \alpha_{ij} \left(\frac{\partial \mathcal{L}}{\partial \alpha_{ij}} - \sum_{l=1}^n \alpha_{il} \frac{\partial \mathcal{L}}{\partial \alpha_{il}}\right).
\label{eq:bp_softmax_jacobian}
\end{equation}

Gradient with respect to $u(x_j)$.
Mapping to ITNet's general gradient:
\begin{align}
\text{Term (a) [value]:} &\quad \sum_{i} \omega_j \cdot \alpha_{ij} \cdot W_V^\top g_i, \label{eq:bp_attn_value} \\[3pt]
\text{Term (b) [key]:} &\quad W_K^\top \sum_{i} \frac{\partial \mathcal{L}}{\partial e_{ij}} \cdot \frac{Q(x_i)}{\sqrt{d_k}}, \label{eq:bp_attn_key} \\[3pt]
\text{Term (c) [query]:} &\quad W_Q^\top \sum_{k} \frac{\partial \mathcal{L}}{\partial e_{jk}} \cdot \frac{K(x_k)}{\sqrt{d_k}}, \label{eq:bp_attn_query} \\[3pt]
\text{Term (d):} &\quad W_\theta^\top g_j \quad (= 0 \text{ since } W_\theta = 0 \text{ in attention}). \label{eq:bp_attn_residual}
\end{align}
This recovers the standard Transformer backward pass: $\nabla_V \mathcal{L} = \alpha^\top G$, $\nabla_K \mathcal{L} = (\partial \mathcal{L}/\partial E)^\top Q / \sqrt{d_k}$, $\nabla_Q \mathcal{L} = (\partial \mathcal{L}/\partial E) K / \sqrt{d_k}$.

The attention case replaces the MLP Jacobian with the softmax Jacobian ($\mathrm{diag}(\alpha) - \alpha\alpha^\top$, rank-1 corrected, $O(n)$ per row).
No kernel MLP parameter gradient exists.
Backward cost: $O(n^2 d)$.

\subsection{Special Case 3: Linear SSM / S4 (Theorem~\ref{thm:rnn}(c))}
\label{app:bp_ssm}

Under the SSM kernel $K_{ij} = \mathbf{1}_{j \leq i} \cdot C e^{A(t_i - t_j)} B$ (content-independent, causal), the forward pass is:
\begin{equation}
(\Kop[u])(t_i) = C \underbrace{\sum_{j=1}^{i} \omega_j e^{A(t_i - t_j)} B \, u(t_j)}_{h(t_i)} + D \, u(t_i).
\end{equation}

Gradient with respect to $u(t_j)$.
Since $K_{ij}$ is content-independent, Terms~(b) and~(c) vanish.
Defining $g_i^h \coloneqq C^\top g_i$:
\begin{equation}
\frac{\partial \mathcal{L}}{\partial u(t_j)}
= B^\top \sum_{i=j}^{n} \omega_j \cdot e^{A^\top(t_i - t_j)} g_i^h + D^\top g_j.
\label{eq:bp_ssm_bptt}
\end{equation}
This is \emph{backpropagation through time} (BPTT): the gradient at time $t_j$ sums future gradients propagated backward through the transpose state-transition matrix $e^{A^\top \Delta t}$.
If $\|e^{A \Delta t}\|_\mathrm{op} < 1$ (stable dynamics), gradients decay exponentially - the vanishing gradient problem.

Gradient with respect to SSM parameters:
\begin{align}
\frac{\partial \mathcal{L}}{\partial C} &= \sum_{i=1}^{n} g_i \, h(t_i)^\top, \label{eq:bp_ssm_C} \\
\frac{\partial \mathcal{L}}{\partial B} &= \sum_{i=1}^{n} \sum_{j=1}^{i} \omega_j \cdot e^{A^\top(t_i-t_j)} C^\top g_i \, u(t_j)^\top, \label{eq:bp_ssm_B} \\
\frac{\partial \mathcal{L}}{\partial A} &= \sum_{i=1}^{n} \sum_{j=1}^{i} \omega_j (t_i - t_j) \cdot e^{A^\top(t_i-t_j)} C^\top g_i \, u(t_j)^\top B^\top. \label{eq:bp_ssm_A}
\end{align}

\subsection{Special Case 4: LSTM (Theorem~\ref{thm:rnn}(b))}
\label{app:bp_lstm}

The LSTM kernel is content-dependent: $K_{ij}$ depends on $u(t_j)$ through the gates $f_\tau, i_s, o_t$ (Appendix~\ref{app:proof_rnn}).
Terms~(b) and~(c) are therefore \emph{non-zero} and account for gradient flow through gate activations.

The cell-state gradient satisfies the classical LSTM BPTT relation:
\begin{equation}
\frac{\partial \mathcal{L}}{\partial c_{t-1}}
= \frac{\partial \mathcal{L}}{\partial c_t} \had f_t + \frac{\partial \mathcal{L}}{\partial h_t} \had o_t \had \mathrm{sech}^2(c_t) \had f_t,
\label{eq:bp_lstm_cell}
\end{equation}
where the forget gate $f_t \in (0,1)^d$ controls gradient flow through the cell state: when $f_t \approx 1$, the gradient passes nearly unchanged; when $f_t \approx 0$, the gradient is blocked.

In ITNet's framework, this gating is encoded in the content-dependent kernel: $K_{ij}$ contains $\mathrm{diag}(\prod_{\tau=j+1}^{i} f_\tau)$, and the gradient through this product is exactly Eq.~\eqref{eq:bp_lstm_cell}.
The gate gradients are:
\begin{align}
\frac{\partial \mathcal{L}}{\partial f_t} &= \frac{\partial \mathcal{L}}{\partial c_t} \had c_{t-1} \had \sigma'(W_f[h_{t-1}; u_t] + b_f), \label{eq:bp_lstm_ft} \\
\frac{\partial \mathcal{L}}{\partial i_t} &= \frac{\partial \mathcal{L}}{\partial c_t} \had \tilde{c}_t \had \sigma'(W_i[h_{t-1}; u_t] + b_i), \label{eq:bp_lstm_it} \\
\frac{\partial \mathcal{L}}{\partial o_t} &= \frac{\partial \mathcal{L}}{\partial h_t} \had \tanh(c_t) \had \sigma'(W_o[h_{t-1}; u_t] + b_o), \label{eq:bp_lstm_ot}
\end{align}
where $\sigma'(x) = \sigma(x)(1 - \sigma(x))$ is the sigmoid derivative.
These correspond to Terms~(b) and~(c) of the general ITNet gradient, specialized to the block-matrix LSTM kernel.

\subsection{Special Case 5: Mamba (Theorem~\ref{thm:rnn}(d))}
\label{app:bp_mamba}

The Mamba kernel $K_{ij} = \mathbf{1}_{j \leq i} \cdot C(u(t_i)) \prod_{\tau=j+1}^{i} \bar{A}(u(t_\tau)) \cdot \bar{B}(u(t_j))$ is content-dependent through all three components $C, \bar{A}, \bar{B}$.

All three contribute non-zero gradients via Terms~(b) and~(c):
\begin{align}
\text{Through $\bar{B}(u(t_j))$:} &\quad \frac{\partial K_{ij}}{\partial u(t_j)}\bigg|_{\bar{B}} = C(u_i) \prod_\tau \bar{A}(u_\tau) \cdot \frac{\partial(\Delta(u_j) W_B u_j)}{\partial u(t_j)}, \label{eq:bp_mamba_B} \\[4pt]
\text{Through $C(u(t_i))$:} &\quad \frac{\partial K_{ji}}{\partial u(t_i)}\bigg|_{C} = W_C^\top \cdot \prod_\tau \bar{A}(u_\tau) \cdot \bar{B}(u_j), \label{eq:bp_mamba_C} \\[4pt]
\text{Through $\bar{A}(u(t_\tau))$:} &\quad \text{chain rule through } \prod_\tau \exp(\Delta(u_\tau) A). \label{eq:bp_mamba_A}
\end{align}

The gradient through the $\bar{A}$ product (Eq.~\eqref{eq:bp_mamba_A}) is Mamba's analogue of BPTT: it propagates gradients through the selectivity mechanism $\Delta(u_\tau) = \mathrm{softplus}(W_\Delta u_\tau + b_\Delta)$, which controls how much each time step contributes to the hidden state.
This is the content-dependent generalisation of the linear SSM gradient (where $\bar{A}$ is constant and the product becomes $\bar{A}^{i-j}$).

\subsection{Complexity Comparison Across Special Cases}
\label{app:bp_complexity}

\begin{table}[h]
\centering
\renewcommand{\arraystretch}{0.75}
\caption{Backward pass complexity for ITNet and its special cases.}
\label{tab:bp_complexity}
\small
\begin{tabular}{lccc}
\toprule
\textbf{Architecture} & \textbf{Kernel gradient} & \textbf{Input gradient} & \textbf{Parameter gradient} \\
\midrule
CNN (Thm.~\ref{thm:conv}) & $O(n|\mathcal{N}|)$ scalars & $O(n|\mathcal{N}|d)$ & $O(n|\mathcal{N}|d)$ \\
Attention (Thm.~\ref{thm:attn}) & $O(n^2)$ scalars & $O(n^2 d)$ & $O(n^2 d_k + nd^2)$ \\
Linear SSM (Thm.~\ref{thm:rnn}(c)) & $O(n^2)$ matrices & $O(n^2 d)$ & $O(n^2 d)$ \\
LSTM (Thm.~\ref{thm:rnn}(b)) & $O(n)$ per step & $O(nd)$ sequential & $O(nd^2)$ \\
Mamba (Thm.~\ref{thm:rnn}(d)) & $O(n)$ per step & $O(nd)$ sequential & $O(nd)$ \\
\midrule
\textbf{ITNet (general)} & $O(n^2 d^2)$ matrices & $O(n^2 d^2 + n^2 w_\kappa \ell_\kappa d)$ & $O(n^2 w_\kappa^2 \ell_\kappa)$ \\
\bottomrule
\end{tabular}
\end{table}

The table confirms that ITNet's general backward pass is strictly more expensive than any special case, with additional cost from: (i)~$d^2$ versus $1$ for the kernel gradient (matrix vs.\ scalar kernels), and (ii)~$n^2 w_\kappa^2 \ell_\kappa$ for the kernel MLP parameter gradient (absent in all classical architectures).
The tiled recomputation strategy (Algorithm~\ref{alg:tiled_backward}) amortizes the memory cost.
On an H200 GPU with tiled execution at $n = 512$, $d = 768$, $w_\kappa = 128$, $\ell_\kappa = 2$, the measured backward-to-forward time ratio is $2.8\times$, close to the theoretical $3\times$.
The overall training iteration (forward + backward + optimizer) is $\approx 3.8\times$ the forward-only time, compared to $\approx 3.0\times$ for a Transformer layer with FlashAttention-2.


\section{Extended Encoder Details}
\label{app:encoders}

\subsection{Graph Encoder}
\label{app:enc_graph}

Graphs $\mathcal{G} = (\mathcal{V}, \mathcal{E})$ with node features $\{h_v\}_{v \in \mathcal{V}} \subset \R^c$ are a natural fit for ITNet: the set of nodes is an irregular domain, and edges define a sparse connectivity structure that need not be hard-coded into the kernel.
Node features are linearly embedded:
\begin{equation}
u^{(0)}(v) = W_\mathrm{gph}\, h_v + b_\mathrm{gph} \in \R^d, \qquad v \in \mathcal{V}.
\end{equation}

Since graphs may lack Euclidean coordinates, we generate structural positions from the first $s = 8$ eigenvectors of the normalized graph Laplacian $\Delta = I - D^{-1/2} A D^{-1/2}$~\cite{dwivedi2022lpe}:
\begin{equation}
x_v = [\phi_1(v);\, \phi_2(v);\, \ldots;\, \phi_s(v)] \in \R^s,
\end{equation}
where $\phi_i$ is the $i$-th Laplacian eigenvector. These Laplacian positional encodings (LPE) capture graph topology - nodes that are structurally similar receive similar position vectors - and are fed into the kernel via $\gamma(x_v)$.

Optionally, the kernel can be restricted to connected pairs:
\begin{equation}
\kap^\mathrm{graph}(x_v, x_u, u_v, u_u) = (\mathbf{1}_{(v,u) \in \mathcal{E}} + \lambda) \cdot \kap(x_v, x_u, u_v, u_u),
\end{equation}
where $\lambda \geq 0$ is a learned global parameter controlling non-edge long-range integration. At $\lambda = 0$, ITNet reduces to graph convolution over the edge set. At $\lambda > 0$, it allows global message passing beyond graph topology.

\textbf{Domain specification:}
$\Omega_\mathrm{gph} = \mathcal{V}$, $d\mu = \frac{1}{|\mathcal{V}|} \sum_{v \in \mathcal{V}} \delta_v$, $s = 8$.

\subsection{Multi-Scale Image Encoder}
\label{app:enc_multiscale}

For dense prediction tasks (detection, segmentation), we apply the image encoder at two scales: $P \in \{8, 16\}$, yielding $N_p^{(s)} \in \{784, 196\}$ patches per scale for a $224 \times 224$ image. The two sets of positions are concatenated and processed jointly by the ITNet layers:
\begin{equation}
\Omega_\mathrm{multi} = \Omega_\mathrm{img}^{P=8} \cup \Omega_\mathrm{img}^{P=16}, \qquad |\Omega_\mathrm{multi}| = 784 + 196 = 980.
\end{equation}
This exploits the kernel's ability to integrate across irregular, mixed-resolution domains - an operation that standard convolution or self-attention must handle with explicit multi-scale fusion modules~\cite{liu2021swin}.

\subsection{Design Principles}
\label{app:design_principles}

All encoders (image, text, point cloud, graph, multimodal) share three structural design principles:

\textbf{Principle 1: Positions are raw coordinates.}
The positional argument to the kernel is the raw domain coordinate (pixel grid for images, scalar index for text, 3-D coordinate for point clouds, Laplacian eigenvector for graphs). We do not apply learnable positional embeddings before passing positions to the kernel; instead, the kernel MLP learns its own position-sensitive functions. The key advantage is that the kernel can compute arbitrary functions of position, including relative positions, distances, and angles, without these being baked into the feature space.

\textbf{Principle 2: Features are modality-agnostic after encoding.}
After the modality-specific linear projection, all features lie in the same $\R^d$ space. The ITNet layers, the kernel MLP, and the task-specific decoder treat features identically regardless of origin modality. This enables multimodal processing without cross-modal adapter modules. We emphasise that the modality-specific encoders are modality-specific design choices; our claim is that the \emph{core operator} is shared and identical across all modalities.

\textbf{Principle 3: The measure encodes modality prior.}
The measure $\mu$ determines how each position contributes to integration. By adjusting $\mu$ (e.g., the balanced multimodal measure), we encode prior beliefs about relative importance without modifying the kernel. This separation of concerns - measure for importance, kernel for content-dependent weighting - is a property of the integral operator formalism that has no direct analogue in attention or convolution.

\section{Training Details}
\label{app:training}

\subsection{ImageNet-1K~\cite{imagenet15russakovsky}}
\label{app:train_imagenet}
We provide the detailed training configuration for ImageNet-1K in Table~\ref{tab:hp_imagenet}. 
Our setup follows standard large-scale training practices with AdamW optimization, cosine learning rate decay, and strong data augmentation (RandAugment~\cite{cubuk2020randaugment}, Mixup~\cite{zhang2018mixup}, CutMix~\cite{yun2019cutmix}). 
We use a patch size of $16$ at $224 \times 224$ resolution and train for 300 epochs with large-batch distributed training. 
Regularization techniques such as label smoothing, stochastic depth (drop path), and exponential moving average (EMA) are employed to stabilise training. 
All hyperparameters are chosen to ensure a fair and consistent comparison with prior work.
\begin{table}[h]
\centering
\renewcommand{\arraystretch}{0.75}
\caption{ImageNet-1K training hyperparameters.}
\label{tab:hp_imagenet}
\begin{tabular}{ll}
\toprule
\textbf{Hyperparameter} & \textbf{Value} \\
\midrule
Optimizer & AdamW ($\beta_1 = 0.9$, $\beta_2 = 0.999$, $\epsilon = 10^{-8}$) \\
Base learning rate & $1 \times 10^{-3}$ (ITNet-S/B), $5 \times 10^{-4}$ (ITNet-L) \\
Learning rate schedule & Cosine decay to $1 \times 10^{-5}$ \\
Warmup & 5 epochs (linear) \\
Weight decay & 0.1 (excluding bias, LayerNorm) \\
Batch size & 1024 \\
Epochs & 300 \\
Image resolution & $224 \times 224$ \\
Patch size $P$ & 16 \\
RandAugment & $n = 2$, $m = 9$ \\
Mixup $\alpha$ & 0.8 \\
CutMix $\alpha$ & 1.0 \\
Random erasing prob. & 0.25 \\
Label smoothing $\varepsilon$ & 0.1 \\
Drop path rate & 0.1 (S), 0.2 (B), 0.3 (L) \\
Gradient clipping & max norm 1.0 \\
EMA decay & 0.9999 \\
\bottomrule
\end{tabular}
\end{table}

\subsection{GLUE~\cite{wang2018glue} Pre-training and Fine-tuning}
\label{app:train_glue}

We follow the BERT pre-training protocol exactly. Data: BookCorpus + English Wikipedia ($\sim$16GB text). Tokeniser: WordPiece with 30K vocabulary. Masking: 15\% of tokens, of which 80\% are replaced with \texttt{[MASK]}, 10\% with a random token, and 10\% unchanged. Sequence length: 128 for the first 250K steps, then 512 for the next 250K steps. Batch size: 256. Total training: 500K steps. Learning rate: $1 \times 10^{-4}$ with linear warmup over 10K steps and linear decay. Weight decay: 0.01.

Each GLUE~\cite{wang2018glue} task is fine-tuned independently with learning rate $\in \{1, 2, 3, 5\} \times 10^{-5}$ (selected via dev-set performance), batch size 32, and 10 epochs. We report the best dev-set score across learning rates. No task-specific architectural modifications are made.

\subsection{ModelNet40~\cite{wu2015modelnet}}
\label{app:train_modelnet}
We summarise the training configuration for ModelNet40 in Table~\ref{tab:hp_modelnet}. 
We follow standard point cloud classification settings with AdamW optimization, cosine learning rate decay, and geometric augmentations (scaling, jitter, and rotation). Each shape is represented by 1024 points, and local neighborhood pre-extraction ($K=16$) is applied to capture fine-grained geometry. Regularization via weight decay and stochastic depth ensures stable training. 
All settings are chosen for fair comparison with prior work.
\begin{table}[h]
\centering
\renewcommand{\arraystretch}{0.75}
\caption{ModelNet40 training hyperparameters.}
\label{tab:hp_modelnet}
\begin{tabular}{ll}
\toprule
\textbf{Hyperparameter} & \textbf{Value} \\
\midrule
Optimizer & AdamW ($\beta_1 = 0.9$, $\beta_2 = 0.999$) \\
Learning rate & $1 \times 10^{-3}$ \\
Schedule & Cosine decay to $1 \times 10^{-5}$ \\
Warmup & 10 epochs \\
Weight decay & 0.05 \\
Batch size & 32 \\
Epochs & 300 \\
Input points & 1024 \\
Augmentation & Random scale $[0.8, 1.25]$, jitter $\sigma = 0.01$, gravity-axis rotation \\
Drop path & 0.1 \\
Local pre-extraction $K$ & 16 (enabled) \\
\bottomrule
\end{tabular}
\end{table}

\subsection{VQA\,v2~\cite{goyal2017vqa} and \textbf{NLVR2}~\cite{suhr2019nlvr2}}
\label{app:train_vqa}

Both tasks initialise from pre-trained weights: image encoder from ImageNet-1K ITNet-B, text encoder from GLUE~\cite{wang2018glue}-pretrained ITNet-B. Fine-tuning uses learning rate $2 \times 10^{-5}$, batch size 64, 10 epochs, cosine schedule with 1 epoch warmup. VQA\,v2~\cite{goyal2017vqa} uses 3129 answer classes (open-ended, following \cite{goyal2017vqa}). NLVR2~\cite{suhr2019nlvr2} uses binary classification over the concatenated \texttt{[CLS]} embeddings of two image-text pairs. The modality-balanced measure is used throughout.

\section{Detailed Efficiency Analysis}
\label{app:efficiency}

We analyze the trade-off between accuracy, throughput, and memory for ITNet's three modes: Exact (tiled fusion), Monte Carlo ($M=128$), and Low-Rank ($r$ varying). Table~\ref{tab:wallclock} reports wall-clock throughput and peak memory on H200-140GB.
For low-rank mode, we set per-head rank $r_h = r / H$ with $H=12$ heads, total $r \in \{16,32,64,128\}$. 

\begin{table}[h]
\centering
\renewcommand{\arraystretch}{0.72}
\caption{Wall-clock throughput and peak memory. Results are reported on ImageNet-1K (vision), GLUE pre-training (language), ModelNet40 (3D point cloud), and VQA v2 (multimodal). Here $n$ denotes the number of input tokens.}
\label{tab:wallclock}
\small
\resizebox{\textwidth}{!}{%
\begin{tabular}{lccccc}
\toprule
\textbf{Method} & \textbf{Benchmark} & \textbf{$n$} & \textbf{Throughput} & \textbf{Peak mem} & \textbf{Relative to FlashAttn-2} \\
\midrule
DeiT-S + FlashAttn-2 & ImageNet-1K & 196 & 1,820 img/s & 11.6 GB & 1.00$\times$ \\
ITNet-S (exact) & ImageNet-1K & 196 & 1,480 img/s & 14.2 GB & 0.81$\times$ \\
ITNet-S (MC-128) & ImageNet-1K & 196 & 2,240 img/s & 8.8 GB & 1.23$\times$ \\
ITNet-S (LR-64) & ImageNet-1K & 196 & 3,900 img/s & 4.6 GB & 2.14$\times$ \\
ITNet-L (exact) & ImageNet-1K & 196 & 1,050 img/s & 21.4 GB & 0.58$\times$ \\
ITNet-L (LR-64) & ImageNet-1K & 196 & 2,800 img/s & 7.2 GB & 1.54$\times$ \\
\midrule
BERT-base + FlashAttn-2 & GLUE (pretrain) & 512 & 4,100 seq/s & 14.8 GB & 1.00$\times$ \\
ITNet-B (exact) & GLUE (pretrain) & 512 & 3,200 seq/s & 17.8 GB & 0.78$\times$ \\
ITNet-B (LR-64) & GLUE (pretrain) & 512 & 7,800 seq/s & 6.2 GB & 1.90$\times$ \\
ITNet-L (exact) & GLUE (pretrain) & 512 & 2,400 seq/s & 25.6 GB & 0.59$\times$ \\
ITNet-L (LR-64) & GLUE (pretrain) & 512 & 5,800 seq/s & 8.4 GB & 1.41$\times$ \\
\midrule
PCT~\cite{guo2021pct} & ModelNet40 & 1024 & 1,280 pts/s & 6.8 GB & 1.00$\times$ \\
ITNet-S (exact) & ModelNet40 & 1024 & 860 pts/s & 14.2 GB & 0.67$\times$ \\
ITNet-S (MC-128) & ModelNet40 & 1024 & 1,520 pts/s & 8.2 GB & 1.19$\times$ \\
ITNet-S (LR-64) & ModelNet40 & 1024 & 2,200 pts/s & 4.2 GB & 1.72$\times$ \\
\midrule
ViLT~\cite{kim2021vilt} & VQA v2 & 260 & 520 samples/s & 11.2 GB & 1.00$\times$ \\
ITNet-B (exact) & VQA v2 & 260 & 410 samples/s & 14.6 GB & 0.79$\times$ \\
ITNet-B (MC-128) & VQA v2 & 260 & 680 samples/s & 8.4 GB & 1.31$\times$ \\
ITNet-B (LR-64) & VQA v2 & 260 & 890 samples/s & 5.2 GB & 1.71$\times$ \\
\bottomrule
\end{tabular}}
\end{table}

We sweep per-head rank $r_h \in \{2,4,8,16,32\}$ corresponding to total rank $r = H \cdot r_h$ with $H=12$ heads ($r \in \{24,48,96,192,384\}$). For ITNet-B (86M params, $d=768$, $H=12$, $d_h=64$), Table~\ref{tab:rank_sweep} reports ImageNet-1K top-1 accuracy, throughput, and relative FLOPs.

\begin{table}[h]
\centering
\caption{Rank sweep for ITNet-B on ImageNet-1K validation.}
\label{tab:rank_sweep}
\small
\begin{tabular}{lcccccc}
\toprule
\textbf{Mode} & \textbf{$r$} & \textbf{$r_h$} & \textbf{Top-1 (\%)} & \textbf{Throughput (img/s)} & \textbf{Relative FLOPs} \\
\midrule
Exact (tiled) & $d^2=590k$ &  -  & 83.9 & 1,480 & 1.00$\times$ \\
\midrule
Low-Rank & 24 & 2 & 81.2 & 4,200 & 0.38$\times$ \\
Low-Rank & 48 & 4 & 82.7 & 3,900 & 0.45$\times$ \\
Low-Rank & 96 & 8 & 83.4 & 3,400 & 0.55$\times$ \\
Low-Rank & 192 & 16 & 83.7 & 2,800 & 0.72$\times$ \\
Low-Rank & 384 & 32 & \textbf{83.8} & 2,100 & 0.89$\times$ \\
\midrule
MC ($M=128$) &  -  &  -  & 83.7 & 2,240 & 0.68$\times$ \\
\bottomrule
\end{tabular}
\end{table}

We compare against strong efficient attention baselines under identical settings (ImageNet-1K, 86M params, batch size 32, $n=196$):

\begin{table}[h]
\centering
\renewcommand{\arraystretch}{0.75}
\caption{Comparison with efficient attention variants on ImageNet-1K.}
\label{tab:baseline_comparison}
\small
\begin{tabular}{lcccc}
\toprule
\textbf{Method} & \textbf{Top-1 (\%)} & \textbf{Throughput (img/s)} & \textbf{Peak mem (GB)} \\
\midrule
Performer~\citep{choromanski2021rethinking} & 78.2 & 3,400 & 5.2 \\
Linear Attention~\citep{katharopoulos2020transformers} & 76.5 & 3,800 & 4.8 \\
Nyströmformer~\citep{xiong2021nystromformer} & 79.1 & 2,900 & 5.8 \\
\midrule
ITNet-LR ($r=96$) & \textbf{83.4} & 3,400 & 5.4 \\
ITNet-LR ($r=192$) & \textbf{83.7} & 2,800 & 6.8 \\
ITNet-MC ($M=128$) & \textbf{83.7} & 2,240 & 8.8 \\
\bottomrule
\end{tabular}
\end{table}

ITNet-LR matches or exceeds the throughput of linear/FFT attentions while delivering $+3.6$ to $+6.9$ points higher accuracy. The gap is largest on tasks requiring long-range dependencies, where content-dependent kernels provide clear advantages over position-only or scalar-valued kernels.
Table~\ref{tab:memory_breakdown} decomposes peak memory for ITNet-B at $n=512$, $d=768$:

\begin{table}[h]
\centering
\renewcommand{\arraystretch}{0.75}
\caption{Memory breakdown for ITNet-B (512 sequence length, batch 32).}
\label{tab:memory_breakdown}
\small
\begin{tabular}{lccc}
\toprule
\textbf{Component} & \textbf{Exact} & \textbf{MC ($M=128$)} & \textbf{LR ($r=96$)} \\
\midrule
Activations (forward) & 4.2 GB & 2.1 GB & 1.8 GB \\
Kernel MLP intermediates & 8.4 GB & 1.2 GB & 0.6 GB \\
Gradients (backward) & 5.2 GB & 2.8 GB & 2.2 GB \\
Other (weights, optim.) & 1.2 GB & 1.2 GB & 1.2 GB \\
\midrule
\textbf{Total} & \textbf{19.0 GB} & \textbf{7.3 GB} & \textbf{5.8 GB} \\
\bottomrule
\end{tabular}
\end{table}

The exact mode's memory is dominated by the $O(n^2 d^2)$ kernel matrix. MC reduces this by $7\times$ via on-the-fly sampling; low-rank reduces by $14\times$ via factored representation.
\section{Extended Ablations}
\label{app:ablations}

\textbf{Kernel MLP Width Ablation:}
We ablate the kernel MLP hidden width $w_\kappa$ in Table~\ref{tab:ablation_kappa}. 
Performance improves from $w_\kappa=64$ to $128$, after which gains saturate, with only marginal improvement at larger widths. Increasing $w_\kappa$ significantly raises computational cost and reduces throughput, indicating diminishing returns beyond moderate capacity. In particular, doubling from $128$ to $256$ yields only a $+0.1\%$ gain with noticeable efficiency degradation. Based on this trade-off, we use $w_\kappa=128$ in all experiments.
\begin{table}[h]
\centering
\renewcommand{\arraystretch}{0.75}
\caption{Kernel MLP width ablation on ImageNet-1K (ITNet-S).}
\label{tab:ablation_kappa}
\begin{tabular}{lccccc}
\toprule
$w_\kappa$ & \textbf{Top-1 (\%)} & \textbf{Kernel params} & \textbf{Total params} & \textbf{GFLOPs} & \textbf{Throughput (img/s)} \\
\midrule
64  & 80.8 & 0.4M & 21.6M & 4.6 & 1,620 \\
128 & 81.4 & 0.8M & 22.0M & 4.8 & 1,480 \\
256 & 81.5 & 1.6M & 22.8M & 5.2 & 1,280 \\
512 & 81.5 & 3.2M & 24.4M & 6.1 & 1,050 \\
\bottomrule
\end{tabular}
\end{table}

\textbf{Point Cloud Encoder Ablation:}
We report the effect of positional encoding and local neighborhood extraction in Table~\ref{tab:ablation_pc}. Fourier positional encoding improves performance by $+0.5\%$ over raw coordinates, indicating that spectral representations provide more informative spatial cues for the kernel. Introducing local pre-extraction further boosts accuracy, highlighting the importance of capturing fine-grained geometric structure. Performance increases with neighborhood size up to $K=16$ and then saturates, suggesting that larger local regions provide limited additional benefit. These results demonstrate that combining Fourier features with moderate local aggregation yields the best performance.
\begin{table}[h]
\centering
\renewcommand{\arraystretch}{0.75}
\caption{Point cloud encoder ablation on ModelNet40~\cite{wu2015modelnet} (ITNet-S, 1024 points, no normals).}
\label{tab:ablation_pc}
\begin{tabular}{lcc}
\toprule
\textbf{Config} & \textbf{Local pre-extraction} & \textbf{OA (\%)} \\
\midrule
Raw coords, no Fourier & \xmark & 92.7 \\
Fourier encoding (Eq.~\eqref{eq:fourier_features}) & \xmark & 93.2 \\
Fourier + local ($K{=}8$) & \cmark & 93.7 \\
Fourier + local ($K{=}16$) & \cmark & 94.0 \\
Fourier + local ($K{=}32$) & \cmark & 94.0 \\
\bottomrule
\end{tabular}
\end{table}

\textbf{Multimodal Measure Ablation:}
We analyse the effect of the integration measure in multimodal fusion in Table~\ref{tab:ablation_measure_full}. A balanced weighting between image and text tokens outperforms the uniform measure, indicating that equal contribution from both modalities is beneficial. Under a uniform measure, the larger number of image patches dominates the interaction, reducing the influence of textual information. Skewing the measure toward either modality degrades performance, with text-heavy weighting performing worst. These results highlight the importance of properly balancing modalities for effective cross-modal reasoning.
\begin{table}[h]
\centering
\renewcommand{\arraystretch}{0.75}
\caption{Multimodal measure ablation on VQA\,v2 (ITNet-B).}
\label{tab:ablation_measure_full}
\begin{tabular}{lcc}
\toprule
\textbf{Measure} & \textbf{$n_\mathrm{img}$ : $n_\mathrm{txt}$} & \textbf{Test-dev (\%)} \\
\midrule
Uniform & 196 : 64 & 76.8 \\
Balanced & 1.0 : 1.0 (total weight) & 77.4 \\
Image-heavy (3:1) & 3.0 : 1.0 & 77.1 \\
Text-heavy (1:3) & 1.0 : 3.0 & 76.4 \\
\bottomrule
\end{tabular}
\end{table}

\textbf{Fourier Feature Ablation:}
We report the effect of Fourier positional encoding in Table~\ref{tab:ablation_fourier}. 
Introducing Fourier features significantly improves performance over raw coordinates, highlighting the importance of rich positional representations. Accuracy increases with the number of frequencies up to $L=64$, after which gains saturate, indicating sufficient coverage of spatial scales. The bandwidth $\sigma=10$ provides the best trade-off between capturing local and global structure, while too small ($\sigma=1$) or too large ($\sigma=100$) values degrade performance. These results suggest that moderate-frequency positional encoding is sufficient for effective kernel learning.
\begin{table}[h]
\centering
\renewcommand{\arraystretch}{0.75}
\caption{Fourier feature hyperparameters on ImageNet-1K (ITNet-S).}
\label{tab:ablation_fourier}
\begin{tabular}{lcc}
\toprule
\textbf{Config} ($L$, $\sigma$) & \textbf{Top-1 (\%)} & \textbf{Pos.\ dim} \\
\midrule
No Fourier (raw 2-D coords) & 79.2 & 2 \\
$L = 16$, $\sigma = 10$ & 80.6 & 32 \\
$L = 32$, $\sigma = 10$ & 81.0 & 64 \\
$L = 64$, $\sigma = 10$ & 81.4 & 128 \\
$L = 128$, $\sigma = 10$ & 81.4 & 256 \\
$L = 64$, $\sigma = 1$ & 80.1 & 128 \\
$L = 64$, $\sigma = 100$ & 80.8 & 128 \\
\bottomrule
\end{tabular}
\end{table}


\textbf{Number of ITNet Layers:}
We analyse the effect of model depth in Table~\ref{tab:ablation_layers}. Increasing the number of layers consistently improves performance up to $L=12$, beyond which gains become marginal for the S-size model. 
While deeper models continue to increase parameter count and computational cost, the accuracy saturates, indicating limited benefit from additional depth at fixed width. 
This suggests that depth and width must be scaled jointly to fully utilise model capacity. 
Based on this trade-off, we adopt $L=12$ for ITNet-S as a balanced configuration.
\begin{table}[h]
\centering
\renewcommand{\arraystretch}{0.75}
\caption{Layer depth ablation on ImageNet-1K (ITNet-S, $d{=}384$, $H{=}6$).}
\label{tab:ablation_layers}
\begin{tabular}{lccc}
\toprule
\textbf{Layers} $L$ & \textbf{Params} & \textbf{Top-1 (\%)} & \textbf{GFLOPs} \\
\midrule
6  & 12M & 79.6 & 2.5 \\
8  & 16M & 80.7 & 3.3 \\
12 & 22M & 81.4 & 4.8 \\
16 & 28M & 81.7 & 6.3 \\
24 & 40M & 81.8 & 9.4 \\
\bottomrule
\end{tabular}
\end{table}


\section{Broader Impact}
\label{app:broader_impact}

ITNet is a general-purpose neural architecture. Like all such architectures, it may be applied to beneficial purposes (medical imaging, scientific discovery, accessibility tools) as well as potentially harmful ones (surveillance, deepfakes, autonomous weapons). The unification of CNNs, Transformers, and RNNs into a single operator does not introduce new risks beyond those already posed by existing architectures; rather, it provides a more principled mathematical framework for understanding existing capabilities.
We note two potential positive impacts specific to ITNet:
(i)~by demonstrating that a single architecture can handle multiple modalities, ITNet may reduce the engineering complexity and energy cost of deploying multiple separate models for different data types; and
(ii)~by providing formal proofs that classical architectures are special cases of a single operator, ITNet contributes to the theoretical understanding of deep learning, which may aid in developing more interpretable and reliable systems.

\end{document}